\newcommand{\E}{\mathbb{E}}
\newcommand{\R}{\mathbb{R}}
\newcommand{\edit}[1]{{\color{black}#1}}
\newcommand{\ie} {{\em i.e.\/}, }
\newcommand{\eg} {{\em e.g.\/}, }
\newcommand{\gammah}{\gamma^{\textup{H}}}
\newcommand{\gammal}{\gamma^{\textup{L}}}
\newcommand{\Dh}{D^{\textup{H}}}
\newcommand{\Dl}{D^{\textup{L}}}
\newcommand{\ul}{\underline{u}}
\newcommand{\uh}{\overline{u}}
\colorlet{responsecolor}{black}
\definecolor{shadecolor}{rgb}{0.54, 0.81, 0.94}
\begin{document}
\RUNTITLE{Deep Reinforcement Learning for Inventory Networks: Toward Reliable Policy Optimization}

\TITLE{Deep Reinforcement Learning for Inventory Networks: Toward Reliable Policy Optimization}
\ARTICLEAUTHORS{%
\AUTHOR{Matias Alvo}

\AFF{Graduate School of Business, Columbia University, 
\EMAIL{malvo26@gsb.columbia.edu}}

\AUTHOR{Daniel Russo}

\AFF{Graduate School of Business, Columbia University, 
\EMAIL{djr2174@gsb.columbia.edu}}

\AUTHOR{Yash Kanoria}

\AFF{Graduate School of Business, Columbia University, 
\EMAIL{yk2577@gsb.columbia.edu}}

\AUTHOR{Minuk Lee}
\AFF{Columbia University, 
\EMAIL{ml4723@columbia.edu}}
}

\ABSTRACT{
We argue that inventory management presents unique opportunities for the reliable application of deep reinforcement learning (DRL). To enable this, we emphasize and test two complementary techniques. The first is \textit{Hindsight Differentiable Policy Optimization (HDPO)}, \edit{which uses pathwise gradients from offline counterfactual simulations to directly and efficiently optimize policy performance. Unlike standard policy gradient methods that rely on high-variance score-function estimators, HDPO computes gradients by differentiating through the known system dynamics.}
Via extensive benchmarking, we show that HDPO recovers near-optimal policies in settings with known or bounded optima, is more robust than variants of the REINFORCE algorithm, and significantly outperforms generalized newsvendor heuristics on problems using real time series data. Our second technique aligns neural policy architectures with the topology of the inventory network. We exploit Graph Neural Networks (GNNs) as a natural inductive bias for encoding supply chain structure, demonstrate that they can represent optimal and near-optimal policies in two theoretical settings, and empirically show that they reduce data requirements across six diverse inventory problems.
A key obstacle to progress in this area is the lack of standardized benchmark problems. To address this gap, we open-source a suite of benchmark environments, along with our full codebase, to promote transparency and reproducibility. All resources are available at \url{https://github.com/MatiasAlvo/Neural_inventory_control}.
}

\KEYWORDS{deep reinforcement learning; inventory theory and control; supply chain management}

\maketitle

\section{Introduction}
\label{sec: introduction}

Inventory management deals with designing replenishment policies that minimize costs related to holding, selling, purchasing, and transporting goods. Such problems are typically modeled as Markov Decision Processes (MDPs), but the triple curse of dimensionality \citep{powell2007approximate} (\ie exponentially large state, action, and outcome spaces) usually renders them computationally intractable. Since the 1950s, researchers have attempted to circumvent this curse by formulating specialized models in which the optimal policy has a simple structure \citep{arrow1958studies, scarf1960optimality, clark1960optimal, federgruen1984computational}. These policies then form the basis of simple heuristics that work well in slightly broader – but still very narrow – problem settings \citep[see \eg][]{federgruen1984approximations,xin2021understanding}. Practical inventory management problems are often far more complex due to the presence of multiple inventory storage locations, dynamic demand patterns, and complexities in customer behavior during stock-outs.

{\bf The Promise and Pitfalls of Deep Reinforcement Learning.}
Deep reinforcement learning (DRL) offers a fundamentally different approach. Rather than deriving policies through mathematical analysis or human intuition, DRL ``learns'' effective policies through interaction with a simulator grounded in real data. Practitioners focus on ensuring simulator fidelity, and solution quality then improves with increased computation—larger neural networks (NNs), more training time, and more data. This approach has produced remarkable successes in domains once thought intractable: arcade games \citep{mnih2015human}, board games \citep{silver2017mastering}, and robotics \citep{levine2016end}.

For inventory management, DRL is particularly appealing. A high-fidelity simulator can capture real demand patterns, complex network structures, and operational constraints that would overwhelm analytical methods. Recent works have begun exploring DRL for operations problems including queueing control \citep{dai2022queueing}, ride-hailing  \citep{feng2021scalable, oda2018movi, tang2019deep}, and inventory management itself  \citep{gijsbrechts2021can}.

However, the DRL literature itself reveals concerning fragility. One standard approach employs policy gradient methods like REINFORCE \citep{williams1992simple}, where an NN policy generates actions, and parameters are updated to increase the probability of actions that lead to good outcomes. Successful applications require extensive modifications: natural gradient methods to handle ill-conditioning \citep{kakade2001natural}, entropy regularization to prevent premature convergence \citep{haarnoja2018soft}, variance-reducing baselines \citep{greensmith2004variance}, and actor-critic architectures \citep{konda1999actor}. Even then, many ``tricks'' \citep{huang202237} and careful hyperparameter tuning \citep{henderson2018deep} are essential. Skeptics have argued these methods are little better than random search \citep{mania2018simple}.

{\bf The Scale Challenge in Inventory Networks.} 
These generic difficulties are dramatically amplified in large-scale inventory networks.  Consider a network with 50 stores and 5 warehouses—the state vector must track inventory levels and outstanding orders across all locations, as well as demand histories for all 50 stores. The action space specifies replenishment quantities for potentially hundreds of edges. State and action spaces become very high dimensional and these dimensions are not artificial ones that can be ignored through representation learning: each store's inventory level genuinely matters for its own replenishment decisions.

In this high-dimensional setting, we identify two fundamental problems: 

\begin{itemize}
    \item {\bf Problem 1: Weak and diffused learning signals.} REINFORCE-style methods learn through trial and error—when a store stocks out, it does not immediately learn to order more. That insight only emerges if random variations in its ordering happen to correlate with better system-wide outcomes over many periods. Credit assignment becomes much harder as networks grow, with the reward signal dispersed across thousands of decisions over time.
    \item {\bf Problem 2: Inability to exploit network structure in standard neural architectures.} Generic fully-connected policy architectures attempt to learn the entire state-to-action mapping as one monolithic function. They easily overfit by learning spurious correlations (e.g., where store 7's orders depend on details of store 42's inventory) and cannot transfer insights across similar parts of the network—learning to manage one store does not help with managing an identical store elsewhere.
\end{itemize}

\subsection{Our Approach}

We address these problems through an integrated framework:

\begin{itemize}
    \item {\bf For Problem 1, we employ Hindsight Differentiable Policy Optimization (HDPO).} Unlike REINFORCE, which requires random exploration to discover good actions, HDPO exploits a key property of inventory networks: the physics is known. Shipping one unit increases the receiver's inventory by exactly one unit (after a fixed lead time). Combined with historical demand data, we can simulate—and differentiate through—entire trajectories. This is enabled by purposeful modeling choices that ensure action spaces are continuous and total costs are differentiable. While a scalar reward signal only indicates how a policy performed on a given scenario, gradient computation evaluates all local counterfactuals—how total cost would have changed under any infinitesimal change to policy parameters—and reveals the direction of steepest improvement. This appears to have especially important benefits in large-scale inventory networks, where the learning signal otherwise becomes hopelessly diffuse. The result is stable, efficient learning without the high variance and extensive training tricks common in pure trial-and-error methods (like REINFORCE).
     \item {\bf For Problem 2, we propose to train Graph Neural Network (GNN) \citep{prince2023understanding} policy architectures with HDPO.} Under GNNs, the policy's computation graph—the directed graph of relationships captured in the neural architecture—mirrors the physical inventory network along which goods flow. By default each location bases decisions primarily on local state information, while learnable message passing enables coordination when locations share resources or constraints. This design naturally decomposes the problem while allowing information flow where it is needed. Shared parameters across similar nodes enable transfer learning: what the network learns about managing one store automatically applies to similar stores elsewhere.
\end{itemize}

Together, these techniques comprise a careful effort to bring massive advances in compute infrastructure (GPUs), automatic differentiation tools (TensorFlow \citep{tensorflow2015-whitepaper}, PyTorch \citep{paszke2019pytorch}, and JAX \citep{jax2018github}), and neural architecture design (GNNs) to bear on inventory network control problems. While differentiable simulation has existed in operations research for decades \citep[see e.g.][]{glasserman1995sensitivity}, it is the confluence of these modern tools that makes it practical to train sophisticated neural policies that achieve near-optimal performance without extensive tuning. 

 \subsection{Benchmarking and Empirical Findings}

We develop a comprehensive suite of benchmark problems spanning various network topologies, demand distributions, and structural assumptions. These include instances where optimal costs are known or tightly bounded, enabling comparison to certifiable optima—a rare opportunity in deep RL evaluation. Other problems use publicly available sales data from Corporación Favorita, one of Ecuador's largest grocery retailers, introducing realistic nonstationarity. We open-source these benchmarks to promote reproducible evaluation of data-driven techniques in inventory control. In contrast to machine learning, where standardized benchmarks such as ImageNet \citep{krizhevsky2012imagenet} serve as widely accepted datasets and tasks for evaluating various approaches, inventory management lacks such universally recognized benchmarks against which numerous policies can be rigorously tested. Creating new ones is of particular importance.

The paper presents three main empirical findings:

\begin{enumerate}
    \item {\bf Vanilla HDPO achieves near-optimal performance with impressive consistency.} Section \ref{sec:vanilla-hdpo} evaluates ``vanilla'' HDPO—using standard NNs without structural modifications—across diverse inventory problems. In settings with known optima, HDPO achieves average optimality gaps of just 0.03\%, 0.25\%, 0.35\%, and 0.15\% across four benchmark classes (single-store backlogged, single-store lost demand, serial networks, and one-warehouse multi-store systems). The performance can also be unusually robust. To show this, we revisit a single-store setting studied by \citet{gijsbrechts2021can}: HDPO achieves gaps as low as detectable across all six instances we tested for each of twelve different hyperparameter choices. Using A3C (a REINFORCE-based DRL algorithm), \citet{gijsbrechts2021can} instead reported the need for extensive hyperparameter tuning (testing $\sim$250 hyperparameter configurations per instance) and still attained optimality gaps that were an order of magnitude larger (3.0-6.7\%).

    Once we turn to settings with realistic demand data, we can no longer benchmark against the optimum itself. We instead compare against sophisticated heuristics that also leverage deep learning: these methods first train an NN to forecast demand, then apply newsvendor-type ordering policies that restock up to an inventory level corresponding to a (trainable) quantile of the forecasted demand distribution. HDPO's end-to-end optimization—directly mapping historical demand windows to decisions—outperforms these two-stage forecast-then-optimize methods by up to 22\%.

    While these results are encouraging for problems of moderate scale, Section~\ref{sec:vanilla-unrelated-stores} reveals fundamental limitations as networks grow larger.
    
    \item {\bf Vanilla architectures are blind to inventory network structure, causing poor scaling.} Section~\ref{sec:vanilla-unrelated-stores} reveals this through a diagnostic experiment with many unrelated stores that have no need to coordinate. Vanilla fully-connected NNs cannot see this independence—they are blind to inventory network structure and cannot even distinguish which state variables correspond to which locations. This makes them prone to learning spurious correlations, like policies where store 7's orders depend on complex functions of stores 37 and 42's inventory levels.

    Our experiments show performance degrades markedly with the number of locations unless the volume of data-per-store scales with the number of stores. Yet the underlying problem has not become harder—it is the same single-store problem replicated many times. An architecture that recognized this structure would actually improve with more stores, using each as additional training data. This dichotomy motivates our introduction of GNNs as a general architecture that exploits the structure of the physical inventory network.

    \item {\bf GNN architectures provide dramatic sample efficiency gains at scale.} Section \ref{sec:gnn-section} demonstrates how GNNs address these scaling challenges. Across six diverse inventory settings—including serial systems, one-warehouse and many-warehouse networks with up to 56 locations—GNNs consistently outperform vanilla architectures when data is limited. The improvements can be substantial: 30\% performance gains with just 128 training samples in 50-store networks, and up to 64× reduction in data requirements to achieve target performance levels.
    
    Section~\ref{sec: GNN_performance_drivers} conducts focused experiments that isolate three key mechanisms underlying GNNs' success: automatic decomposition of independent decisions, coordination through message passing when locations interact, and dynamic exploitation of network flexibility (e.g., choosing which warehouse supplies each store based on current conditions).
\end{enumerate}
  
The remainder of this paper presents these techniques and empirical findings in detail.

 \subsection{Summary of contributions.}
 Our work makes several contributions. While the building blocks of HDPO---differentiable simulation \citep{glasserman1995sensitivity} and recognition of exogenous noise in MDPs \citep{powell2007approximate}---have long existed in operations research, and while \citet{madeka2022deep} recently applied HDPO to single-location inventory problems at Amazon, we are the first to: (i) systematically evaluate HDPO across complex multi-location networks  (up to 56 locations)  where coordination is essential, representing a massive increase in problem scale compared to prior work; (ii) introduce and train GNN policy architectures with pathwise gradients in reinforcement learning (RL) ---a novel combination that prevents data requirements from exploding with inventory network size, achieving up to 64× reduction compared to standard policy architectures; and (iii) provide comprehensive, reproducible benchmarks spanning both stylized problems with known optima and realistic scenarios constructed from open datasets. Through extensive empirical validation, we demonstrate that this integrated framework of HDPO and GNNs provides a promising and scalable approach to DRL for large-scale inventory network control.

\subsection{Further related literature 
\label{subsec: further-related-literature} }

\paragraph{\bf Inventory management.}

The field of inventory management boasts a rich and extensive history. In seminal research, \cite{arrow1958studies} demonstrated that in a setting involving a single location with positive vendor lead times and assuming backlogged demand (\ie customers are willing to wait for unavailable items), an optimal one-dimensional base-stock policy exists. Specifically, they illustrated that the state, which initially has the same dimension as the lead time, can be condensed into a single parameter. Conversely, in situations where demand is considered lost (\ie customers do not wait for unavailable products), the optimal policy may rely on the entire inventory pipeline, encompassing the quantity of units arriving each day until the lead time. This assumption may better reflect reality, particularly in brick-and-mortar retail, where only 9-22\% of customers are inclined to postpone their purchases when a specific item is unavailable in-store \citep{corsten2004stock}. \cite{zipkin2008old} highlighted that a base-stock policy may perform poorly in such contexts. Given the computational complexity associated with solving this problem, numerous heuristics have been proposed, including myopic policies \citep{morton1971near, zipkin2008old}, constant-order policies \citep{reiman2004new}, capped base-stock policies \citep{xin2021understanding}, and linear programming-based techniques \citep{sun2014quadratic}.

In multi-echelon inventory networks, inventory can be held across numerous locations. \cite{clark1960optimal} demonstrated that within a serial system, where inventory progresses linearly through sequential locations, a simple extension of base-stock policies proves to be optimal under a backlogged demand assumption. Similarly, \cite{federgruen1984approximations} introduced a well-performing policy for a setting in which a central ``transshipment'' warehouse, unable to hold inventory, supplies multiple stores. Yet, the task of determining the optimal policy, even in some straightforward inventory network setups, remains daunting. For a comprehensive review of works addressing multi-echelon inventory problems, we direct readers to \cite{de2018typology}.

\paragraph{\bf DRL algorithms.}
To describe the literature, we will differentiate between two types of algorithms. Similar to value-iteration methods in dynamic programming, Q-learning \citep{watkins1992q} makes iterative updates to an estimate of the optimal state-action value function (usually denoted as $Q^*$). DRL implementations employ NNs to approximate the value function. Influential work by \citep{mnih2015human} introduces a variant of this idea called deep Q-networks (DQN).

Closer in spirit to policy-iteration methods in dynamic programming, policy gradient methods make iterative improvements to an initial policy. The REINFORCE algorithm by \citep{williams1992simple} relies on the use of randomized policies and inverse propensity weighting to produce unbiased policy gradients. See Appendix \ref{appendix:score-function-gradient} for a discussion. Most successful implementations use additional techniques to reduce the variance of gradient estimates, improve optimization geometry, and prevent convergence to deterministic policies. Actor-critic methods maintain an estimate of the value function under the current policy and leverage this to reduce the variance of gradient estimates. The A3C algorithm by \cite{mnih2016asynchronous} is an influential variant that exploits distributed computation. Building off of works by \cite{kakade2001natural} and \cite{schulman2015trust}, the Proximal Policy Optimization (PPO) algorithm \citep{schulman2017proximal} also uses regularization intended to stabilize policy changes and improve optimization geometry. Researchers sometimes achieve fantastic results with these algorithms, but as mentioned in the introduction, critical inspections suggest that subtle implementation details \citep{huang202237} and careful hyperparameter choices \citep{henderson2018deep} have an enormous impact on reported results.

As discussed below, we use a policy gradient type algorithm, but use deterministic, rather than randomized policies, and rely on the use of a differentiable simulator to calculate gradients. 

\paragraph{\bf HDPO and differentiable simulators.}

HDPO builds on two well-established ideas in the stochastic simulation literature: using offline historical data to evaluate policy performance, and exploiting pathwise gradients when cost functions are smooth with respect to actions. Rather than relying on REINFORCE-style randomized estimators, these properties allow for deterministic policies and model-based computation of gradients with respect to policy parameters. See \cite[Chapter 7]{glasserman2004monte} for a comparison of likelihood ratio and pathwise derivative estimators, and \cite{glasserman1995sensitivity} for an early application to inventory policies.

Recent software developments make it practical to scale these ideas to high-dimensional policy classes. Modern frameworks such as TensorFlow \citep{tensorflow2015-whitepaper}, PyTorch \citep{paszke2019pytorch}, and JAX \citep{jax2018github} allow researchers to construct complex NN policies, compute gradients via automatic differentiation, and train models efficiently on GPUs. Our use of HDPO is also inspired by recent progress in robotics, where differentiable physics simulators have enabled high-performance control policies \citep{freeman2021brax, hu2019difftaichi}. We believe this paradigm holds great promise for solving large-scale inventory management problems.

The recent work of \cite{madeka2022deep} also applies HDPO to inventory management. We became aware of their study while our numerical experiments were already underway and view it as complementary to our own. Their work demonstrates that HDPO can outperform internal baselines at Amazon, providing a compelling case study of its real-world potential. 
Our paper makes three contributions not present in theirs. First, while their focus is on single-location problems, we consider networks with many locations—introducing added complexity in both state and action spaces and motivating the GNN architecture described in Section~\ref{sec:gnn-section}. Second, to our knowledge, we provide the first systematic evaluation of HDPO across diverse inventory control settings, including comparisons against certifiable lower bounds. Third, we develop and open-source a suite of benchmark problems and algorithms, supporting reproducibility and enabling standardized comparisons across methods.

\paragraph{\bf Policy-gradient methods for inventory control.}

Several works apply policy gradient type algorithms to inventory control. With the exception of \cite{madeka2022deep}, which we discussed above, all works apply advanced variants of REINFORCE algorithms; these are fundamentally different from HDPO.
A notable example is the work of \cite{gijsbrechts2021can}, which applies the A3C algorithm to address three classical inventory control problems. They put great effort into getting A3C to work well —including tuning hyperparameters separately for each set of problem primitives— and report the resulting performance on well-documented benchmarks. In some problems, they include comparisons against the true optimal cost. For these reasons, their work serves as a crucial benchmark in Section \ref{subsec:hdrl-and-hdpo-vs-reinforce}. Unlike A3C, we find HDPO performs  robustly well without requiring extensive parameter tuning or other tricks.  

Some follow up works have applied the PPO algorithm, another REINFORCE style policy gradient algorithm that is a successor to A3C \citep{vanvuchelen2020use, van2023using, vanvuchelen2023use, kaynov2024deep}. 
Unfortunately, it is difficult to rigorously benchmark HDPO's performance by comparing it against our own implementation of PPO; research suggests that hundreds of implementation tricks impact the performance of PPO \citep{huang202237}.  (Note that the importance of these tricks already points to the benefits of HDPO.)
We chose not to invest substantial effort in implementing and tinkering with PPO. Similar to A3C, PPO involves consideration of many tunable hyperparameters, contrasting with the simplicity of HDPO. Additionally, the results of \citep{vanvuchelen2020use, van2023using, vanvuchelen2023use, kaynov2024deep} all demonstrate cases where it is sometimes surpassed by hand-coded heuristics. For instance, \cite{kaynov2024deep} applied PPO to problems involving one warehouse and multiple locations, finding that PPO was often outperformed by an echelon-stock heuristic and emphasized the need for "better DRL algorithms tailored to inventory decision making". In a similar problem setup,  HDPO achieved an average optimality gap smaller than $0.15\%$ (refer to the last row of Table \ref{table:optimal-exps-summary} in Section \ref{subsec:optimal_exps}). 
Based on this evidence, we do not expect PPO to offer fundamentally different performance from the results in \cite{gijsbrechts2021can}, against which we carefully compare.

{\bf Further related literature.} Appendix~\ref{appendix:extended-lit-review} provides an extended review of related work on GNNs in RL, as well as other ML approaches for inventory management. 

\section{A Differentiable Framework for Inventory Network Control}
\label{sec: problem formulation}

This section formulates a rich class of inventory control problems in a manner that facilitates application of HDPO, our primary solution approach.  One key feature of inventory control is that there is, to a certain extent, a ``known physics'' to the problem: shipping an additional unit of inventory along an edge increases the receiving node's inventory by exactly one unit. This contrasts with many RL domains where system dynamics must be learned from scratch. Additionally, with purposeful but plausible modeling choices, we assume that (1) historical demand scenarios are observable and exogenous, (2) it is possible to adjust inventory flows in tiny (``continuous'') increments and (3) costs are smooth functions of the inventory flows across time. Together, these features enable us to backtest a policy's performance across a set of demand scenarios (whether historical or generated) and to directly improve policy performance by differentiating through \edit{a} simulator.

We begin with a general description of the inventory network control problems we study (Sections \ref{sec: inventory_network_control} and \ref{sec:structural-features}). We then turn to the abstract class of policies over which we optimize—emphasizing the state and action representations for, respectively, inputs and outputs of our policies (Section~\ref{sec: policies}). Section \ref{sec:hdpo} details HDPO and the key modeling assumptions that enable its application,  with Section \ref{sec:scope-and-modeling-assumptions} providing further discussion of these key assumptions.

\subsection{Inventory network representation and control}\label{sec: inventory_network_control}
The inventory network is modeled as a directed graph $\mathcal{G} = (\mathcal{N}, \mathcal{E})$, where $\mathcal{N}$ is the set of nodes (locations), and $\mathcal{E}$ contains a directed edge for each supplier-receiver relationship. For each node $k$, we define its set of \textit{suppliers} as $\mathcal{N}_{\text{sup}}^k = \{ j \in \mathcal{N} \mid (j,k) \in \mathcal{E} \}$, and its set of \textit{receivers} as $\mathcal{N}_{\text{rec}}^k = \{ j \in \mathcal{N} \mid (k,j) \in \mathcal{E} \}$.  Figures  \ref{fig:serial-system} and \ref{fig:one-warehouse-many-stores} and \ref{fig:many-warehouses-many-stores} depict examples of inventory networks we study.

\begin{figure}[htbp!]
\captionsetup[subfigure]{justification=centering}
\centering
\begin{subfigure}{.33\textwidth}
  \centering
  \includegraphics[width=1.0\linewidth, trim={0 -2.3cm 0 0.7cm},clip]{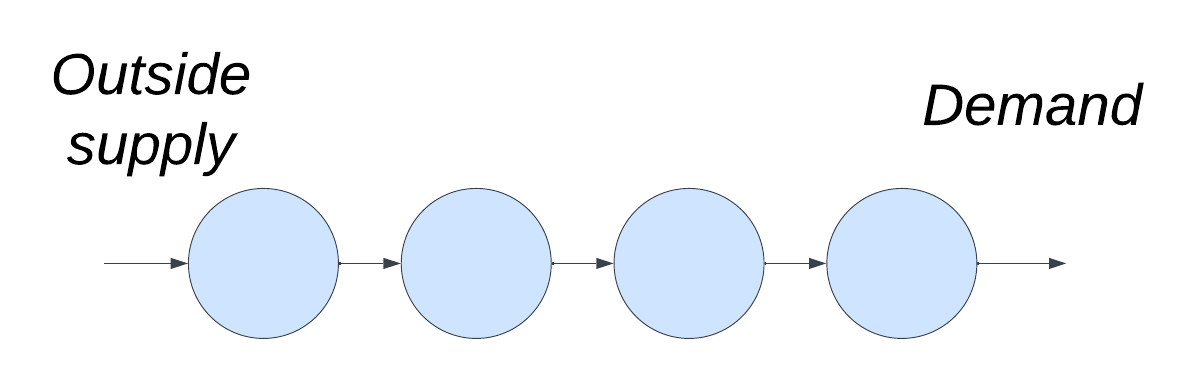}
  \caption{Serial \newline}
  \label{fig:serial-system}
\end{subfigure}%
\begin{subfigure}{.33\textwidth}
  \centering
  \includegraphics[width=0.8\linewidth, trim={0 0.0cm 0 0.0cm},clip]{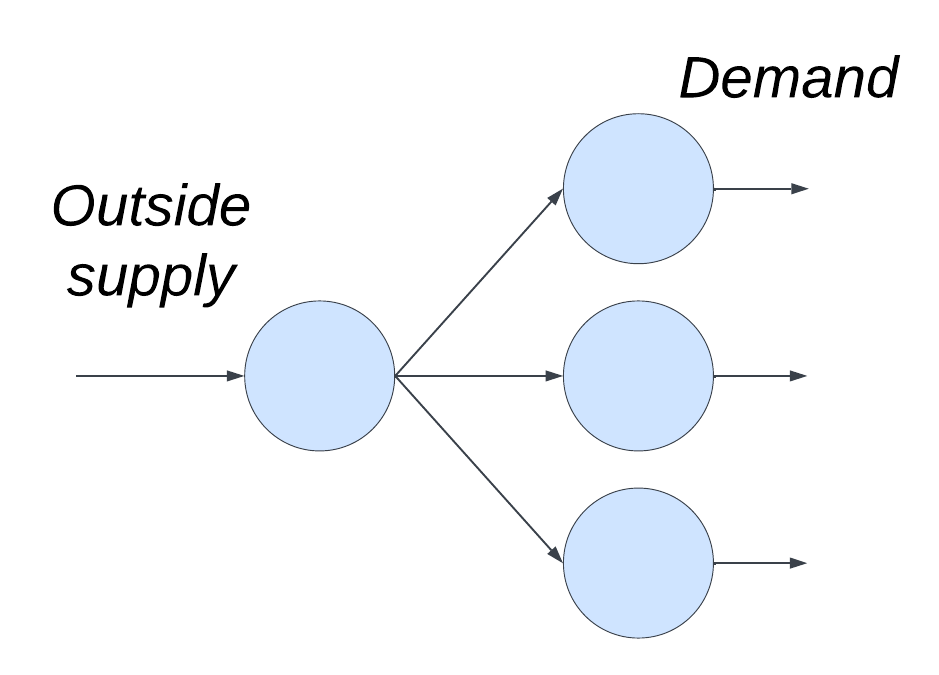}
  \caption{One-warehouse many-stores.\newline}
  \label{fig:one-warehouse-many-stores}
\end{subfigure}%
\begin{subfigure}{.33\textwidth}
  \centering
  \includegraphics[width=0.9\linewidth, trim={0 0.0cm 0 0.0cm},clip]{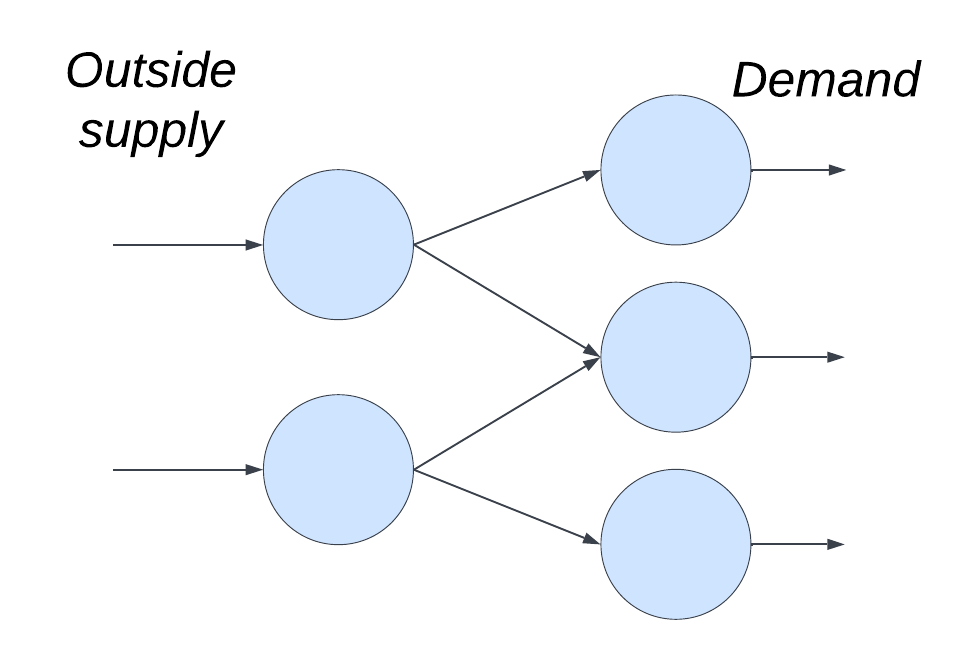}
  \caption{Many-warehouses many-stores.}
  \label{fig:many-warehouses-many-stores}
\end{subfigure}%

\caption{Diagrams of network topologies for inventory management problems. Arrows indicate the direction in which inventory flows.}
\label{fig:inventory-networks}
\end{figure}

We classify nodes into one of three types. First, a special node, labeled $0$, represents an \textit{external supply source} with unlimited inventory; this node does not hold inventory or make decisions beyond supplying connected nodes. Any edge of the form $(0, k) \in \mathcal{E}$ indicates that node $k$ can order from this external source.  Second, \emph{stores} are nodes where demand arises. These are denoted $\mathcal{N}_{\text{st}} \subseteq \mathcal{N}_+$, where $\mathcal{N}_+ := \mathcal{N} \setminus \{0\}$ represent the set of all physical locations in the network. Third, \emph{distribution centers} (DCs)  are physical locations that supply inventory to other nodes, defined as $\mathcal{N}_{\text{dc}} = \{ k \in \mathcal{N}_+ \mid \mathcal{N}_{\text{rec}}^k \neq \emptyset \}$. We will refer to a node that supplies a store as a \textit{warehouse}. 

\paragraph{\bf Inventory control dynamics.} The control problem unfolds over discrete time periods $t\in \{1,\ldots, T\}$. At each period, the decision-maker must determine how much inventory to allocate along each directed edge of the graph. Specifically, the action $a_t$ specifies an allocation $a^e_t \geq 0$ on each edge $e$, representing inventory flows between supplier-receiver node pairs. Allocations are only feasible if a supplier has enough inventory on hand to fulfill them.

Once allocated, inventory flows through the network. When $a^e_t$ units are allocated along edge $e$ at time $t$, exactly $a^e_t$ units arrive at the receiving node after a fixed edge-dependent lead time $L^e$. This arrival increases the inventory on hand at that node by exactly the allocated amount. We require $L^e \geq 2$ to avoid degenerate cases and simplify notation\footnote{Requiring $L^e \geq 1$ ensures that inventory orders placed at time $t$ cannot arrive and be used to fulfill demand at time $t$. The stricter requirement that $L^e \geq 2$ means that the set of outstanding orders (later denoted $Q_t$) is not empty, an edge case that requires slightly different notation.}.   

Simultaneously, demand from customers $\xi_t^k$ depletes inventory at store locations. Define $\xi_t = (\xi^k_t)_{k \in \mathcal{N}_{\text{st}}}$. We model two possibilities for unmet demand. Under a \emph{lost demand} assumption, customers who arrive when inventory is unavailable leave immediately and do not return—the sale is permanently lost. Under a \emph{backlogged demand} assumption, customers are willing to wait for unavailable items, creating a queue of unfulfilled orders that must be satisfied when inventory becomes available.

The decision-maker incurs costs based on inventory positions and flows. There are fixed per-period per-unit holding costs $h^k$ at each node $k$ for maintaining inventory, underage costs $p^k$ for unmet demand (stockouts in the lost demand case, or backorders in the backlogged case), and edge-level costs $\lambda^e$ representing per-unit shipment costs or procurement costs when the origin node is the outside supplier.

While this problem is complicated due to high-dimensionality and, potentially, complex patterns in demand, it clearly exhibits the natural "known physics" we highlighted at the outset.

\begin{figure}[ht]
    \centering
    \includegraphics[width=0.5\textwidth]{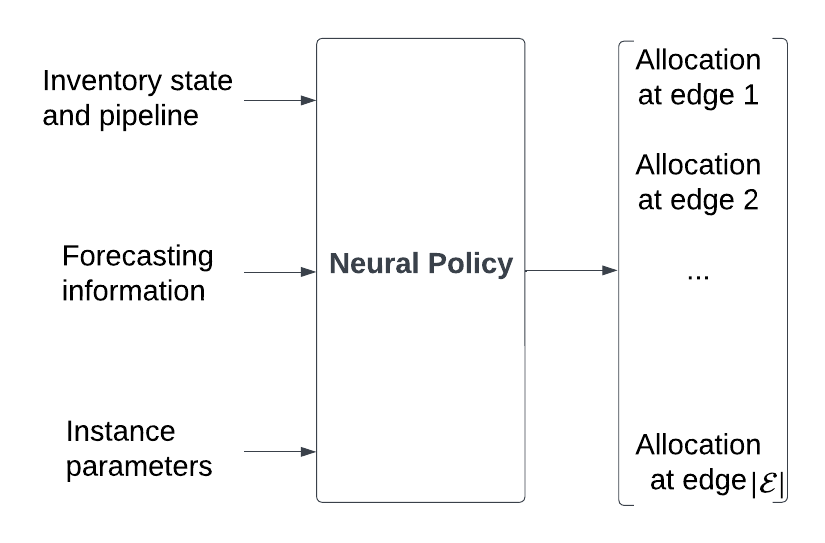}
    \caption{Inputs and outputs of a neural policy.}
    \label{fig:neural-policies}
\end{figure}

\subsection{Structural Features Defining a Setting.} \label{sec:structural-features}
Our study constructs a range of inventory control settings by varying four structural features:

\begin{enumerate}
\item {\bf Network topology.} Ranges from single-location systems to complex multi-echelon networks. Our experiments include the following network structures: single-store, serial (Figure~\ref{fig:serial-system}), one-warehouse many-stores (OWMS; Figure~\ref{fig:one-warehouse-many-stores}), and many-warehouses many-stores (MWMS; Figure~\ref{fig:many-warehouses-many-stores}) topologies.
\item {\bf Unmet demand assumption.} Determines whether unmet demand is (i) \textit{backlogged}, meaning customers wait until the product is restocked, or (ii) \textit{lost}, meaning unmet demand disappears.
\item {\bf Demand process.} Specifies how demand traces are generated. Importantly, the policy does not assume knowledge of the demand distribution and learns directly from observed scenarios. We consider both synthetically generated processes (e.g., from Poisson or i.i.d. multivariate normal distributions) and realistic demand datasets (see Section~\ref{sec:vanilla-hdpo-realistic}; details in Appendix~\ref{appendix:realistic-demand-dataset}).
\item {\bf Structural constraints.} Captures additional structural assumptions imposed on the network. For example, a location may be designated as a \textit{transshipment center}, meaning it cannot hold inventory—equivalently, it is assigned an infinite holding cost.
\end{enumerate}

\subsection{Neural policies and state/action representation}\label{sec: policies}
We consider neural policies that map states  to actions in inventory networks. The design of these policies --- specifically how we structure their inputs and outputs --- represents a key design choice that enables effective learning and optimization.

\subsubsection{State representation as policy input:}
The state $S_t$ serves as input to the policy, implicitly defining the information on which decisions are based. The state has both \emph{dynamic components}---which evolve throughout the $T$ periods and are influenced by the agent's actions---and \emph{static components} --- which are exogenously determined and fixed upfront at the start of any episode of decision-making. 

The dynamic components of the state consist of the concatenation of \textit{local states} across nodes in the network given by $(I^k_t, Q^k_t, \mathcal{F}_t^k)_{k \in \mathcal{N}_+}.$ This consists of 
    \begin{enumerate}
    \item A \emph{physical inventory state}, which includes inventory on hand $I^k_t \in \mathbb{R}$ and information $Q^k_t$ on outstanding orders expected to arrive at the node in the future. We assume that each edge $e \in \mathcal{E}$ has a fixed lead time $L^e \geq 2$. For each node $k \in \mathcal{N}_+$, define $\bar{L}^k = \max_{e = (j,k)} L^e$ as its maximum incoming lead time. Then each $Q^k_t \in \mathbb{R}_+^{\bar{L}^k - 1}$ can be represented as a fixed-length vector that tracks outstanding orders placed over the previous $\bar{L}^k - 1$ periods.\footnote{Random lead times can be handled by letting $Q^k_t$ track orders until arrival and updating $I^k_t$ based on realized delays.}
    \item \emph{Forecasting information} $\mathcal{F}_t^k$ contains all information relevant to predicting future demand at location $k$. In our experiments, this usually consists of recent and historical demand observations. It could, in principle, include product features, contextual information, etc.
\end{enumerate}
In addition, we pass static information to the policy by including it as part of the state. This enables meta-learning: a single neural policy can adapt its behavior across different problem instances by conditioning on information like their specific cost structures, lead times, or inventory network structure.
\begin{enumerate}
    \item   \emph{Instance parameters} $\mathcal{R}^k$ include node-specific features such as costs, lead times, and other parameters known to the decision-maker at the initial period. These do not change over time. In our training, we sometimes vary $\mathcal{R} = (\mathcal{R}^k)_{k \in \mathcal{N}_+}$ by drawing it randomly from a specified distribution, enabling \emph{meta-learning} across related problem instances.
    \item The \emph{inventory network} $G$ encodes fixed supplier relationships. In all of our policies, this is used to enforce feasibility of actions (see below). In our application of GNN policy architectures, the computation graph of the policy also mimics $G$. While we do not engage in meta-learning across graph structures, this is possible within our framework and is a natural extension to pursue. 
\end{enumerate}

\subsubsection{Action space and feasible allocations: \label{section:action-and-feasible-allocations} }
The output of a policy specifies an allocation $a^e_t$ on each edge $e \in \mathcal{E}$, representing inventory flows between supplier–receiver node pairs. Allocations must respect feasibility: no DC may ship more than its available inventory.

We adopt a mechanism that ensures the NN produces feasible actions while remaining differentiable in the a NN parameters. Since the policy must output one allocation per edge, the network generates an \textit{intermediate output} $b^{(j,k)}$ for each edge $(j,k) \in \mathcal{E}$. These outputs are passed through \textit{feasibility enforcement layers} (FELs), which transform them into valid allocations. As illustrated in Figure~\ref{fig:neural-policies}, we treat FELs as part of the neural policy.

To ensure that a distribution center (DC) does not allocate more inventory than it holds, we apply hard-coded, differentiable functions independently to each supplier node $j \in \mathcal{N}_{\text{dc}}$. Each FEL maps the intermediate outputs $(b^{(j,k)})_{k \in \mathcal{N}_{\text{rec}}^j}$ to feasible allocations $(a^{(j,k)})_{k \in \mathcal{N}_{\text{rec}}^j} \in \mathbb{R}_+^{|\mathcal{N}_{\text{rec}}^j|}$ that satisfy the constraint $\sum_{k \in \mathcal{N}_{\text{rec}}^j} a^{(j,k)} \leq I^j$. While this construction is specific to our setting, the same principle may apply more broadly by designing appropriate differentiable mappings. Full definitions of the FELs used in our experiments are provided in Appendix~\ref{appendix:feasibility-enforcement}.

\subsection{Hindsight differentiable policy optimization}\label{sec:hdpo}
 We now describe how to optimize the neural policies introduced in the last subsection, exploiting special structures to enable efficient learning through pathwise gradients.

\subsubsection{Hindsight Differentiable  structure of inventory control. \label{sec:HD-structure} }
The objective of the decision-maker is to solve 
\begin{align}
\label{eq:mdp-objective}
    \begin{split}
    \min_{\theta} & \ \mathbb{E}_{\xi} \left [ \,\tfrac{1}{T} {\textstyle \sum_{t \in [T]}} c(S_t, \pi_{\theta}(S_t), \xi_t) \big | S_1 \right ] \quad  \textup{subject to} \quad S_{t+1} = f(S_t, \pi_{\theta}(S_t), \xi_t) \ \ \forall t \in [T]
    ,
    \end{split}
\end{align}
with $[T] = \{1,\ldots,T \}$, and where the system function $f$ governs state transitions, the cost function $c$ governs per-period costs, and $\xi = (\xi_1, \ldots, \xi_T)$ is a sequence of demand realizations. The policy $\pi_{\theta}$ is a (sufficiently) smooth function of $\theta \in \mathbb{R}^{b}$ which maps states to actions. Nonstationary policies can be represented by incorporating time into the state $S_t$. Instead, our numerical results largely focus on problems with large $T$ where stationary policies are nearly optimal.

Our approach to solving this relies on three key properties that, together, enable what we call \textit{hindsight differentiable policy optimization} (HDPO):

\begin{enumerate}
    \item {\bf Samples of exogenous demand scenarios.} We assume access to $H$ scenarios $(\bar{S}^h_1, \bar{\xi}^h_{1:T})$ where the sequence $(\xi_1, \ldots, \xi_T)$, with $\xi_t = (\xi^k_t)_{k \in \mathcal{N}_{\text{st}}}$, represents uncensored demand realizations at each store over time. The initial state $\bar{S}^h_1$ specifies initial inventory state and initial forecasting information, alongside instance parameters, which we usually construct synthetically in our simulations. 
    \item {\bf Known deterministic dynamics under given demand scenarios.} The transition function $f$ and cost function $c$ are fully specified. The state transitions of inventory on hand capture the "known physics" of inventory flow highlighted in Section \ref{sec: inventory_network_control}. The outstanding orders vector $Q^k_t$ shifts forward one period, with the oldest entry becoming available inventory and new orders appended based on current allocations. When forecasting information $\mathcal{F}^k_t$ consists of a sliding window of recent demands, it similarly shifts forward, dropping the oldest demand observation and incorporating the newly realized demand $\xi^k_t$. Cost functions are also known and come from applying fixed per-period per-unit  holding costs and edge-level costs defined in Section \ref{sec: inventory_network_control}.
    \item {\bf Continuous actions and differentiability of costs.}  Both $f$ and $c$ are continuous and almost everywhere differentiable with respect to the allocation decisions.  In particular, our model treats ordering quantities as continuous variables and---since all costs are variable, applied per-unit---assigns costs as continuous functions of these quantities.  Together, these imply that for fixed $S_1$ and $\xi$, the total cost $\sum_{t \in [T]} c(S_t, a_t, \xi_t)$ is continuous and a.e.\ differentiable in $(a_1, \ldots, a_T)$.
\end{enumerate}

 \subsubsection{Framework for training and evaluating policies. \label{sec:meta-learning-framework} } 

Given these structural properties, we can now describe our meta-learning framework for training neural policies across diverse inventory control problems. This framework, shown in Figure \ref{fig:training_and_evaluation}, leverages properties 1 and 2—access to exogenous demand scenarios and known deterministic dynamics—to enable reliable policy evaluation and comparison across multiple problem instances without requiring costly online experimentation. A single policy is trained across all instances by conditioning on instance-specific features, allowing it to generalize without retraining.

\setlength{\fboxsep}{6pt}              
\begin{figure}[htbp]
  \centering
  \fbox{%
    \begin{minipage}{.9\linewidth}
      \begin{enumerate}[leftmargin=*]
         \item Fix a {\bf \emph{setting}} by specifying the key structural features in Section \ref{sec:structural-features}---for example, ``one warehouse supplying multiple stores, with Poisson demand, where unmet demand is lost forever.''
    \item Fix a {\bf \emph{meta-instance}}, which specifies how instance parameters $\mathcal{R}$ are sampled (e.g., distributions over cost coefficients, lead times, edge costs, etc.).
    \item Sample many instance parameters $\mathcal{R}$. 
    \item For each instance, sample one or more demand {\bf \emph{scenarios}}  $(\bar{S}^h_1, \bar{\xi}^h_{1:T})$. 
    \item Optimize the shared policy to minimize average cost across (instance, scenario) pairs.
    \item Evaluate the policy on held-out instances from the same meta-instance distribution and held-out scenarios.  
      \end{enumerate}
    \end{minipage}}
  \caption{Six‑step experimental pipeline and key terminology.}\label{fig:training_and_evaluation}
\end{figure}

 \subsubsection{HDPO for optimizing policy parameters.} 
We implement the optimization step of our meta-learning framework using hindsight differentiable policy optimization (HDPO), our core method for training neural policies across instances and demand scenarios (step 5 of Figure~\ref{fig:training_and_evaluation}). The training process is illustrated in Figure \ref{fig: hdpo}, with the complete algorithm detailed in Algorithm \ref{alg:hdpo_full} in Appendix \ref{appendix:hdpo}. HDPO enables stable and efficient training by leveraging three key structural features that we carefully enforce: demand realizations are exogenous and fixed during training; inventory dynamics are known and deterministic conditional on actions and demand; and both cost and transition functions are smooth. Together, these allow us to treat the simulator as a differentiable computation graph and use standard backpropagation to compute how changes in policy parameters influence cumulative cost by tracing their effect on actions over time. In practice, we implement this in PyTorch, which supports automatic differentiation and GPU-accelerated training of expressive neural architectures. Further implementation details are provided in Appendix~\ref{appendix:implementation-and-experiments}.

HDPO stands in contrast to REINFORCE-style policy gradient methods, which estimate gradients based only on observed rewards and actions — without access to how counterfactual action changes would have altered outcomes. These methods rely on the likelihood-ratio trick to construct unbiased but often high-variance gradient estimates. In contrast, HDPO exploits the simulator's structure to compute low-variance gradients of expected cost with respect to policy parameters. In effect, it uses each demand scenario to assess the impact of all infinitesimal parameter perturbations, rather than learning only from the single trajectory that a given policy happens to generate. Appendix~\ref{appendix:score-function-gradient} provides additional details on the computation of gradient estimators for REINFORCE-style policy gradient methods.

\setlength{\fboxsep}{6pt}     
\begin{figure}[htbp]
  \centering
  \fbox{%
    \begin{minipage}{.9\linewidth}
      \begin{enumerate}[leftmargin=*]
        \item Sample a mini-batch of (instance, scenario) pairs.
        \item Simulate forward trajectories by recursively applying $f$ and $\pi_{\theta}$, accumulating per-period costs. 
        \item Backpropagate gradients with respect to the total cost incurred through the entire horizon and across the entire mini-batch. 
        \item Update policy parameters $\theta$ using standard optimizers (we use Adam). Go to step 1. 
        \end{enumerate}
    \end{minipage}}
  \caption{Hindsight differentiable policy optimization training process.} 
  \label{fig: hdpo}
\end{figure}

\subsection{Discussion of modeling assumptions. \label{sec:scope-and-modeling-assumptions}}

Our framework relies on two key modeling choices.

First, our formulation departs from classical inventory management, which typically considers discrete ordering decisions and substantial fixed ordering costs. In traditional single-product models, these fixed costs lead to the well-known (s,S) policies that wait until inventory depletes significantly before placing large replenishment orders. However, our continuous formulation reflects the operational reality of large retailers who manage diverse product assortments on fixed delivery schedules. From this perspective, the fixed costs of shipments are amortized across many products, making it natural to exclude shipment-level fixed costs when optimizing orders for individual products. This continuity assumption, combined with differentiable dynamics, enables gradient computation through entire simulation trajectories.

Second, we assume demand is \emph{exogenous}---that customer arrivals are unaffected by inventory availability---and we have access to realistic samples of realized demand scenarios $\bar{\xi}^h_{1:T}$. While the exogeneity assumption is strong, it is ubiquitous in the inventory management literature \citep{gijsbrechts2021can, xin2021understanding}, which tends to focus on balancing costs while \emph{meeting} unpredictable demand rather than purposefully \emph{shaping} future demand through stocking decisions. We remain agnostic to the source of these uncensored demand samples. While addressing demand censoring is crucial for simulator fidelity, this challenge is orthogonal to our methodological contributions in policy optimization. Depending on the context, these samples could result from a preprocessing step that imputes true demand from sales data and auxiliary signals (e.g., traffic to the product page on a website, as done in \cite{madeka2022deep}) or could be the output of a generative model trained on partially censored observations. We investigate the impact of censoring in Appendix~\ref{appendix:censored-demand}.  We further discuss our broader modeling assumptions in Section~\ref{sec:discussion-on-main-assumptions}.

\section{Vanilla HDPO: Effectiveness in Moderate-Sized Networks and the Need for Structure at Scale}
\label{sec:vanilla-hdpo}

This section summarizes the results of a comprehensive evaluation of the "Vanilla" HDPO approach across inventory control problems with diverse structural features and moderate network sizes. We focus on the \textit{Vanilla NN} architecture—a monolithic MLP that does not exploit any structural properties of the inventory network. Our key finding is that HDPO performs very well in moderate-sized inventory networks when sufficient demand traces are available, even without sophisticated modifications and using a straightforward policy network architecture. We conclude the section, and motivate Section \ref{sec:gnn-section}, by observing fundamental limitations of Vanilla NNs as we scale the size of inventory networks. 

Results in this section focus on optimization and assume access to a \textit{large} dataset of uncensored demand. Appendix~\ref{appendix:vanilla-hdpo-sample-efficiency} evaluates sample efficiency in the one-store setting, showing that HDPO achieves optimality gaps below $1\%$ with as few as $128$ traces.

In Section \ref{subsec:optimal_exps} we evaluate Vanilla HDPO on inventory control problems whose hidden structure enables us to either compute or tightly bound the cost attained by an optimal policy. Here, HDPO works with "raw" input features, without tailoring the methodology to each problem's special structure. We find that HDPO  achieves (essentially) optimal average cost for each of these problems. As highlighted in the introduction, it is quite rare to benchmark deep RL against the global optimum; our ability to do so rests on decades of research in operations that has uncovered the hidden structure of the global optimum in some settings of interest.

Section \ref{subsec:hdrl-and-hdpo-vs-reinforce} compares our results with those reported by \cite{gijsbrechts2021can}, highlighting the benefits of HDPO over popular policy gradient methods in deep RL. HDPO exhibits much better performance on the problems tested in \cite{gijsbrechts2021can}, and does so consistently without the need for a plethora of tricks or hyperparameter tuning. 

Later, Section~\ref{sec:vanilla-hdpo-realistic} evaluates HDPO performance using real-world data from the \textit{Favorita} dataset, where the primary challenge lies in addressing nonstationary sales patterns. HDPO directly maps historical demand windows and inventory information to ordering decisions, while \textit{generalized newsvendor policies} use a two-step approach: first forecasting the demand distribution, then optimizing based on quantiles of that forecast. Our findings indicate that HDPO's direct mapping approach consistently outperforms the two-step generalized newsvendor methods in a lost demand model.

Finally, Section \ref{sec:vanilla-unrelated-stores} assesses the sample efficiency of Vanilla HDPO in a setting with multiple identical but unrelated stores. Since the policy architecture has no structural bias for recognizing that stores are identical and independent, the Vanilla NN must infer this structure from data alone. Empirically, we find that performance worsens as the number of stores increases, which motivates our GNN architecture presented in Section \ref{sec:gnn-section}.

{\bf Vanilla NN architecture.}  Numerical results in this section use a \textit{Vanilla NN} architecture, shown in Figure~\ref{fig:vanilla nn} in Appendix \ref{appendix:vanilla-architecture}. This consists of a fully connected multilayer perceptron (MLP) that takes the full state vector as input and gives one intermediate output per edge in the inventory network. Full architectural details are provided in Appendix \ref{appendix:gnn-architecture}. We treat this as a basic instantiation of HDPO—capturing an “unstructured” application of the method—and use it as a baseline for comparison with our proposed GNN-based architecture in Section~\ref{sec:gnn-section}.

\subsection{HDPO recovers near-optimal policies in problems with hidden structure
\label{subsec:optimal_exps}}

Here, we evaluate the performance of HDPO in four distinct settings where the optimal cost is either exactly known or tightly bounded. These settings span a wide range of structural features—including single-node and multi-node systems, serial and OWMS network topologies, and both backlogged and lost-demand assumptions. The results underscore the flexibility of the method: HDPO consistently achieves near-optimal performance across structurally diverse inventory control problems by operating directly on raw state inputs.

\begin{table}[h!]
\centering
\caption{Overview of settings used in Section~\ref{subsec:optimal_exps}.}
\label{tab:opt-exp-settings-summary}
\begin{tabular}{@{}lllll@{}}
\toprule
\textbf{ID} & 
\shortstack{\textbf{Network} \\ \textbf{Structure}} & 
\shortstack{\textbf{Unmet Demand} \\ \textbf{Assumption}} & 
\shortstack{\textbf{Demand} \\ \textbf{Distribution}} & 
\shortstack{\textbf{Structural} \\ \textbf{Constraints}} \\ \midrule
S1 & Single-store & Backlogged & Normal & -- \\
S2 & Single-store & Lost & Poisson & Discrete allocations \\
S3 & Serial & Backlogged & Normal & -- \\
S4 & OWMS & Backlogged & Normal & Transshipment warehouse \\
\bottomrule
\end{tabular}
\end{table}

{\bf Benchmarks.} We study the four settings summarized in Table~\ref{tab:opt-exp-settings-summary}. We begin with two settings involving a single store: one where unmet demand is assumed to be backlogged \citep{arrow1958studies} (S1), and the other where it is considered lost \citep{zipkin2008old} (S2). Next, we consider a setting with a serial network structure (Figure~\ref{fig:serial-system}), operating under a backlogged demand assumption \citep{clark1960optimal} (S3). Finally, we analyze the setting introduced by \citet{federgruen1984approximations}, which involves a OWMS network (Figure~\ref{fig:one-warehouse-many-stores}), also under the assumption of backlogged demand (S4). In this setting, as the warehouse cannot hold inventory, we refer to it as a \textit{transshipment center}.
For each setting, we generate multiple instances, training a separate neural policy per instance by sampling multiple scenarios. This setup facilitates heuristic baseline design, enables controlled comparisons, and aligns with prior work \citep{gijsbrechts2021can, xin2021understanding}.

{\bf Experiment specifications.} Demand scenarios are split into train, development (dev), and test sets. Policies are trained using the Adam optimizer with early stopping based on dev set performance. Reported metrics are computed on the test set, excluding a fixed number of warm-up periods.
To approximate average loss, we train stationary policies over long episodes. Train and dev episodes span 50 and 100 periods, with evaluation over the final 20 and 40 periods, respectively. Test episodes span 5,000 periods, with costs computed over the final 2,000 to reflect steady-state behavior.
We use 32{,}768 scenarios for each of the train, dev, and test sets. Hyperparameters are fixed across instances within each setting. For single-location settings, we report the cost of a single run per instance; for multi-location settings, we report the best of three runs to mitigate occasional suboptimal outcomes and ensure robust evaluation. Precise implementation details for the numerical results are presented in Appendix~\ref{appendix:optimal-exps}.

{\bf Results.}
Table~\ref{table:optimal-exps-summary} summarizes the performance of Vanilla HDPO across multiple instances for each setting. Our findings show that the Vanilla NN reliably achieves near-optimal policies in instances with a state space of up to 63 dimensions, even with raw state inputs. Our results also show that HDPO effectively handles inventory problems with network structures when constraints are differentiable. This includes a serial network structure with 4 locations and a warehouse with 3 to 10 stores.
Computational constraints were not a limitation in these experiments. HDPO consistently achieved performance within 1\% of optimality within 7 and 2 minutes for settings S1 and S2, respectively (see Tables \ref{table: backlogged detailed} and \ref{table: lost demand detailed} in Appendix \ref{appendix:optimal-exps}). For settings S3 and S4, it achieved this performance in under 37 and 24 minutes, respectively (see Tables \ref{table:serial} and \ref{table: trans_shipment detailed} in the Appendix).

\begin{table}[h]
\centering
\caption{Vanilla NN performance in settings where the optimal cost can be bounded or computed. We report performance under the best-performing hyperparameter configuration.}
\label{table:optimal-exps-summary}

\begin{tabular}{@{}llllll@{}}
\toprule
\textbf{Setting} & 
\shortstack{\textbf{Number of} \\ \textbf{locations}} & 
\shortstack{\textbf{Raw state} \\ \textbf{dimension}} & 
\shortstack{\textbf{Instances} \\ \textbf{tested}} & 
\shortstack{\textbf{Average} \\ \textbf{opt. gap}} & 
\shortstack{\textbf{Max.} \\ \textbf{opt. gap}} \\ 
\midrule
S1 & 1 & 1 to 20 & 24 & 0.03\% & 0.17\% \\
S2 & 1 & 1 to 4 & 16 & $<$0.25\% & $<$0.25\% \\
S3 & 4 & 10 to 13 & 16 & \edit{0.54}\% & \edit{0.78}\% \\
S4 & 4 to 11 & 9 to 63 & 24 & $\leq$\edit{0.13}\% & $\leq$\edit{0.24}\% \\
\bottomrule
\end{tabular}
\end{table}

\subsection{On the benefits of HDPO
\label{subsec:hdrl-and-hdpo-vs-reinforce}}

As explained in Section \ref{sec:hdpo}, HDPO leverages important problem properties (known system, observation of historical scenarios, and differentiability) to enable 
an efficient search over the parameters of a neural policy. Is this important, or would generic deep RL methods have matched the exceptional performance summarized in Table \ref{table:optimal-exps-summary}? 

Some insight can be gained by comparing our results to those reported in \citet{gijsbrechts2021can}, who evaluated an A3C-based approach on 6 of the 16 instances from the classic inventory control study by \citet{zipkin2008old}. This corresponds to Setting S2, which features a single store facing stationary Poisson demand, with lost sales and discrete allocations. We chose not to implement other policy-gradient algorithms, as their performance is often highly sensitive to implementation details and requires substantial engineering effort to ensure reliable training.

{\bf Benchmarks.} We consider Setting S2 and replicate 6 of the 16 instances from the testbed in \citet{zipkin2008old} by setting the holding cost $h = 1$, underage cost $p \in \{4, 9\}$, and lead time $L \in \{2, 3, 4\}$.

{\bf Baselines.} We consider the results reported by \cite{gijsbrechts2021can} regarding their A3C method and we include results from the Capped Base-Stock (CBS) heuristic (see Appendix \ref{appendix:capped-base-stock-policy} for a definition) proposed by \cite{xin2021understanding}, which achieved an average optimality gap below 0.7\% in the 16-instance test bed.

{\bf Experiment specifications}.  We trained our model as if actions were continuous and rounded prescribed allocations to the nearest integer at test time. To test robustness to hyperparameter choices, we considered architectures with two and three hidden layers, learning rates of $10^{-4}, 10^{-3}, 10^{-2}$, and batch sizes of $1024$ and $8192$, resulting in $2 \times 3 \times 2 = 12$ hyperparameter combinations. Each configuration was run once on all instances.
We evaluate performance using the optimality gap with respect to the exact dynamic programming costs reported by \cite{zipkin2008old}. For each instance, we compare the \textbf{worst} loss achieved by HDPO across hyperparameter settings to the \textbf{best} loss reported for A3C by \cite{gijsbrechts2021can}, which corresponds to the best outcome among approximately 250 hyperparameter combinations. For CBS, we used the optimal parameters reported by \cite{xin2021understanding} but re-evaluated the policy in our environment.
Note that \cite{zipkin2008old} report lower bounds on the optimal cost rounded to two decimal places, whereas we did not round the losses obtained by HDPO or CBS. Since the smallest optimal cost is approximately $4.04$, gaps smaller than $0.01 / 4.04 \times 100 \approx 0.25\%$ are considered below the resolution of the benchmark and thus not detectable.

{\bf Results.} Table \ref{table: hdpo-vs-a3c} summarizes the performance of each approach on the 6-instance test bed. While the approach presented by \cite{gijsbrechts2021can} falls short of surpassing the CBS heuristic in any of these instances, with gaps between $3.0\%$ and $6.7\%$ for the best run, HDPO consistently outperforms the CBS heuristic, demonstrating near-optimal performance across the board. HDPO achieved the minimum detectable optimality gap of $0.25\%$ for all hyperparameter configurations on these 6 instances, and for all but one run across the complete 16-instance test bed (192 total runs). In contrast, \cite{gijsbrechts2021can} reported that their A3C algorithm exhibited high sensitivity to hyperparameters, necessitating extensive tuning.

\begin{table}[h!]
\begin{center}
\caption{
Optimality gaps in 6 instances studied in \citet{zipkin2008old} for A3C \citep[costs reported by][reflecting best run across around 250 hyperparameters]{gijsbrechts2021can}, CBS heuristic (parameters from \citep{xin2021understanding}) and HDPO (worst run across 12 hyperparameter settings per instance).
\label{table: hdpo-vs-a3c}}

\begin{tabular}{>{\raggedleft}p{3.3cm}>{\raggedleft}p{1.6cm}>{\raggedleft}p{1.6cm}>{\raggedleft}p{1.6cm}>{\raggedleft}p{1.6cm}>{\raggedleft}p{1.6cm}>{\raggedleft\arraybackslash}p{1.5cm}}
\toprule
Approach & $L$ = 2  \\$p$ = 4 & $L$ = 2 \\$p$ = 9 & $L$ = 3 \\$p$ = 4 & $L$ = 3 \\$p$ = 9 & $L$ = 4 \\$p$ = 4 & $L$ = 4 \quad$p$ = 9 \\
\midrule
                A3C (best run) &    3.20\% &    4.80\% &    3.00\% &    3.10\% &    6.70\% &    3.40\% \\
  CBS &   $<$ 0.25\% &   0.43\% &   0.67\% &   1.34\% &   1.63\% &   1.04\% \\
  HDPO (worst run) &    $<$ 0.25\% &   $<$ 0.25\% &  $<$ 0.25\% &   $<$ 0.25\% &   $<$ 0.25\% &  $<$ 0.25\% \\
\bottomrule
\end{tabular}

\end{center}
\end{table}

Details on performance across different hyperparameter choices are provided in Table~\ref{table: lost demand by hyperparams} in Appendix \ref{appendix:single-store-lost-demand-exps}. The results demonstrate consistently reliable behavior. Notably, under a specific hyperparameter configuration, our algorithm converges to solutions with less than $1\%$ optimality gap in under $\edit{55}$ seconds on average.

Appendix~\ref{appendix:vanilla-hdpo-sample-efficiency} further analyzes HDPO’s sample efficiency in this setting. It achieves optimality gaps below 1\% with as few as 128 train scenarios and performance indistinguishable from optimal with 512. These findings suggest that even retailers with a relatively small product catalog may benefit from applying HDPO.

\subsection{HDPO outperforms generalized newsvendor heuristics in problems with time-series data 
\label{sec:vanilla-hdpo-realistic} }

The previous set of experiments evaluated HDPO on stylized problems and showed that it can achieve near-optimal performance across a broad range of settings without leveraging domain-specific knowledge. Here, we assess the applicability of HDPO to realistic settings by testing it on single-store problems constructed from real-world demand data. The testbed isolates a single challenging problem feature: nonstationarity in real-world demand patterns. By simplifying other problem features, we facilitate the adaptation of ideas from the operations literature to design heuristic policies. Notably, our application of HDPO did not appear to leverage these simplifying assumptions, and we expect that its performance would not undergo substantial changes if, for example, lead times were random.

{\bf Benchmarks.}
In setting S5, we consider a single-store network with lost demand and realistic demand traces from the publicly available \textit{Corporación Favorita Grocery Sales Forecasting} dataset \citep{kagglefavorita}, which contains over 200{,}000 daily time series for 4{,}000 products across 54 stores in Ecuador. To improve data quality, we apply several filtering steps (see Appendix~\ref{appendix:realistic-demand-dataset} for details). Each (store, product)-level trace is treated as an independent scenario.
We define multiple meta-instances by specifying a mean underage cost $\hat{p} \in \{2, 3, 4, 6, 9, 13, 19\}$, which determines the range from which instance-level underage costs are sampled. Holding costs are fixed, and lead times are drawn uniformly from a fixed range.

\textbf{Baselines.} We implement a class of \textit{generalized newsvendor policies} that operate in two steps: first, they forecast cumulative demand over the product’s lead time; then, they place orders by selecting a fixed quantile of the predicted demand distribution. We evaluate three variants—\textit{Newsvendor}, \textit{Fixed Quantile}, and \textit{Transformed Newsvendor}—which differ in how the quantile is chosen. The most flexible variant learns a mapping from underage and holding costs to a quantile. Full definitions are provided in Appendix~\ref{appendix:generalized-newsvendor}. All policies in this class rely on a shared, offline-trained demand forecaster.
These policies are motivated by the fact that, under backlogged demand, future costs depend on cumulative demand until replenishment. Adjusting the quantile provides a way to account for differences in unit economics or model misspecification.

{\bf Experiment specifications.}
We partition demand scenarios temporally into train, dev, and test splits. Each is preceded by a 32-week pre-period—comprising 16 weeks of input history and 16 weeks of warm-up—to provide historical context and reduce sensitivity to initial conditions (see Appendix \ref{appendix:global-settings} for the detailed split specifications). Inventory is reinitialized to zero at the start of the warm-up period.
Performance is evaluated in terms of \textit{profit} (equivalent to minimizing cost with $p$ as the underage cost), measured relative to a non-admissible \textit{Just-in-time policy} that perfectly anticipates demand and orders exactly what is needed. This policy serves as an upper bound on achievable profit.
For the Vanilla NN, we fix hyperparameters across all meta-instances based on early experimentation and run each configuration once per meta-instance. For baselines, we train a quantile forecaster offline and tune policy parameters using the same procedure as for neural policies.

{\bf Results.}
Figure \ref{fig: real_data_lost_demand} presents the outcomes for the lost demand setting, across the 7 meta-instances tested. It depicts the total profit as a percentage of the value achieved by the Just-in-time benchmark. The figure illustrates that HDPO consistently outperforms all heuristics across instances, achieving over $80\%$ of the hindsight optimal profit when $\hat{p}$ takes a value of $9$ or larger. Our agent surpasses the best generalized newsvendor policy by up to $\edit{24}\%$, and the performance gap diminishes as the average unit underage cost increases. Additional details on the performance of HDPO in this setting can be seen in Table \ref{table:real-data-lost-demand} in Appendix \ref{appendix:one-store-real-setting}.

For further analysis, we compute the \textit{average weekly revenue} and \textit{average weekly holding cost} by summing the revenue $p\min\{\xi_t, I_t\}$ and holding cost $h(I_t - \xi_t)^+$ across all scenarios and weeks, then dividing by the total number of scenario-weeks. Figures~\ref{fig: real-data-average-revenue} and~\ref{fig: real-data-average-holding-cost} show the average weekly revenue and holding cost, respectively, normalized by the average unit underage cost in the setting with lost demand. The figures illustrate that HDPO consistently achieves higher revenues and lower holding costs compared to competitive benchmarks. The exception is the newsvendor policy, which severely over-orders and thus incurs very high holding costs—but as a result achieves higher revenues than HDPO.

\begin{figure}[htbp!]
\captionsetup[subfigure]{justification=centering}
\begin{subfigure}[t]{.32\textwidth}
  \centering
  \includegraphics[width=1.0\linewidth]{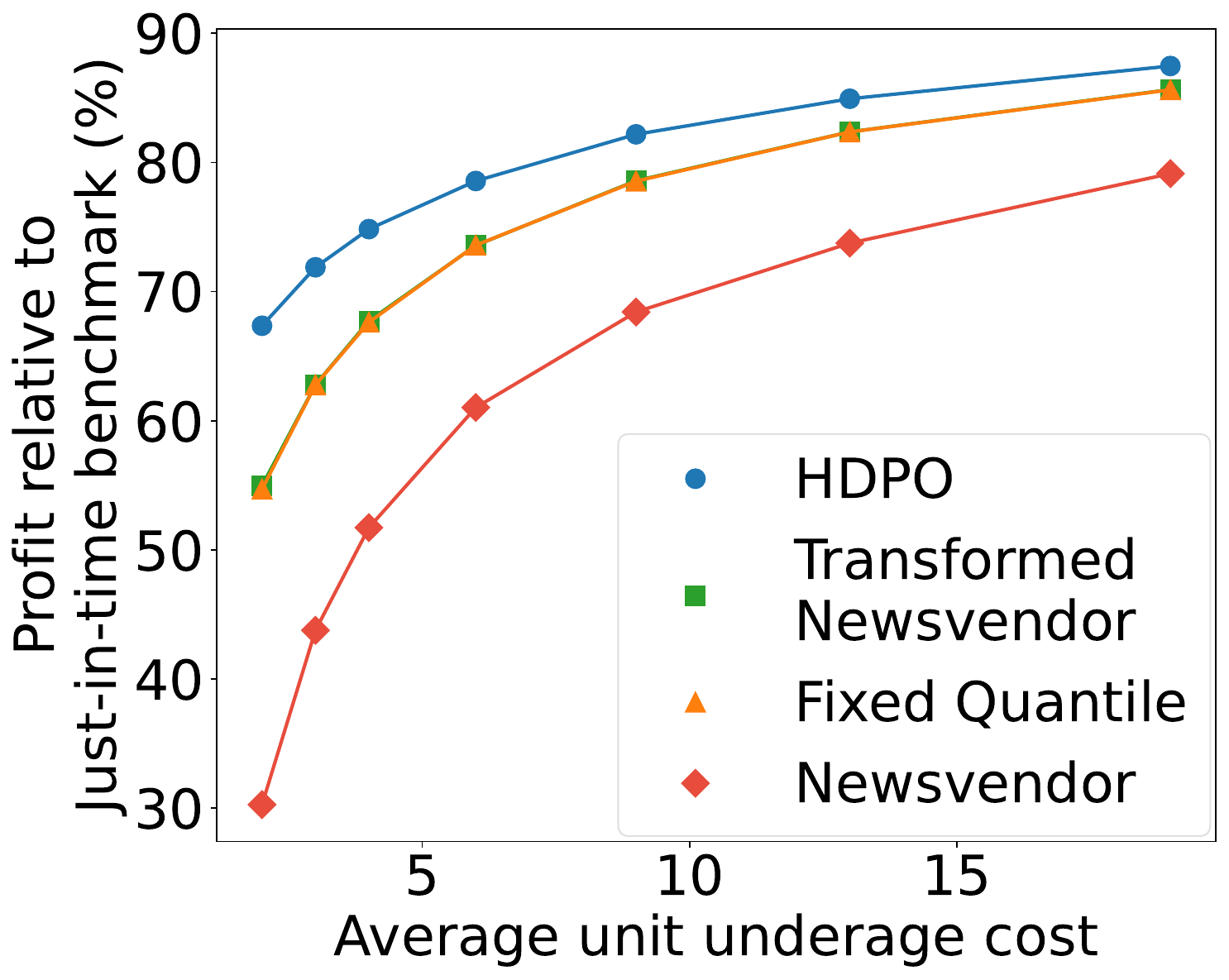}
  \caption{Profit relative to Just-in-time policy. Orange and green lines overlap. \\ \textbf{Higher is better.}}
  \label{fig: real_data_lost_demand}
\end{subfigure}%
\hspace{0.015\textwidth}
\begin{subfigure}[t]{.32\textwidth}
  \centering
  \includegraphics[width=1.0\linewidth]{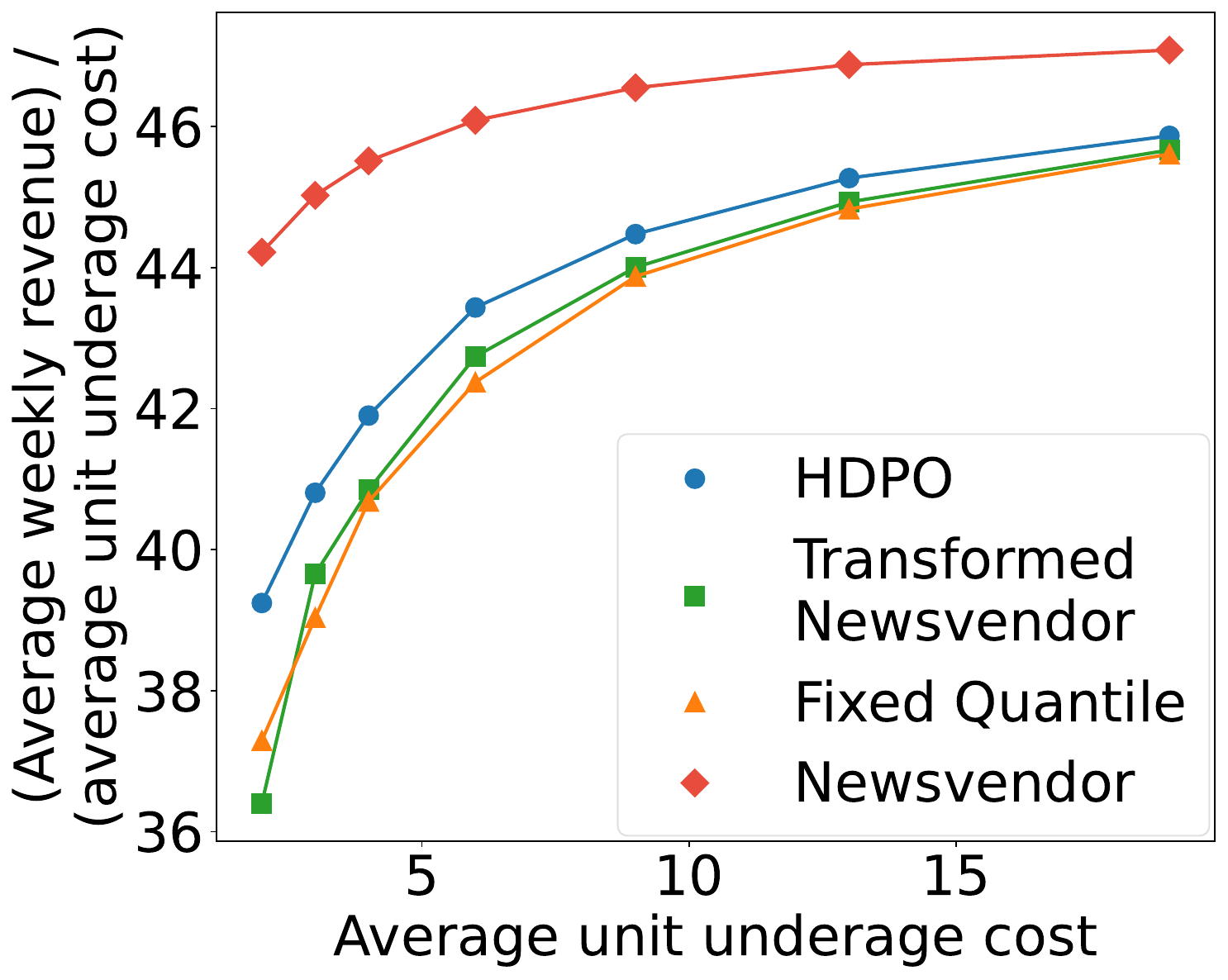}
  \caption{Average weekly revenue, normalized by the average unit underage cost. \\ \textbf{Higher is better.}}
  \label{fig: real-data-average-revenue}
\end{subfigure}
\hspace{0.015\textwidth}
\begin{subfigure}[t]{.32\textwidth}
  \centering
  \includegraphics[width=1.0\linewidth]{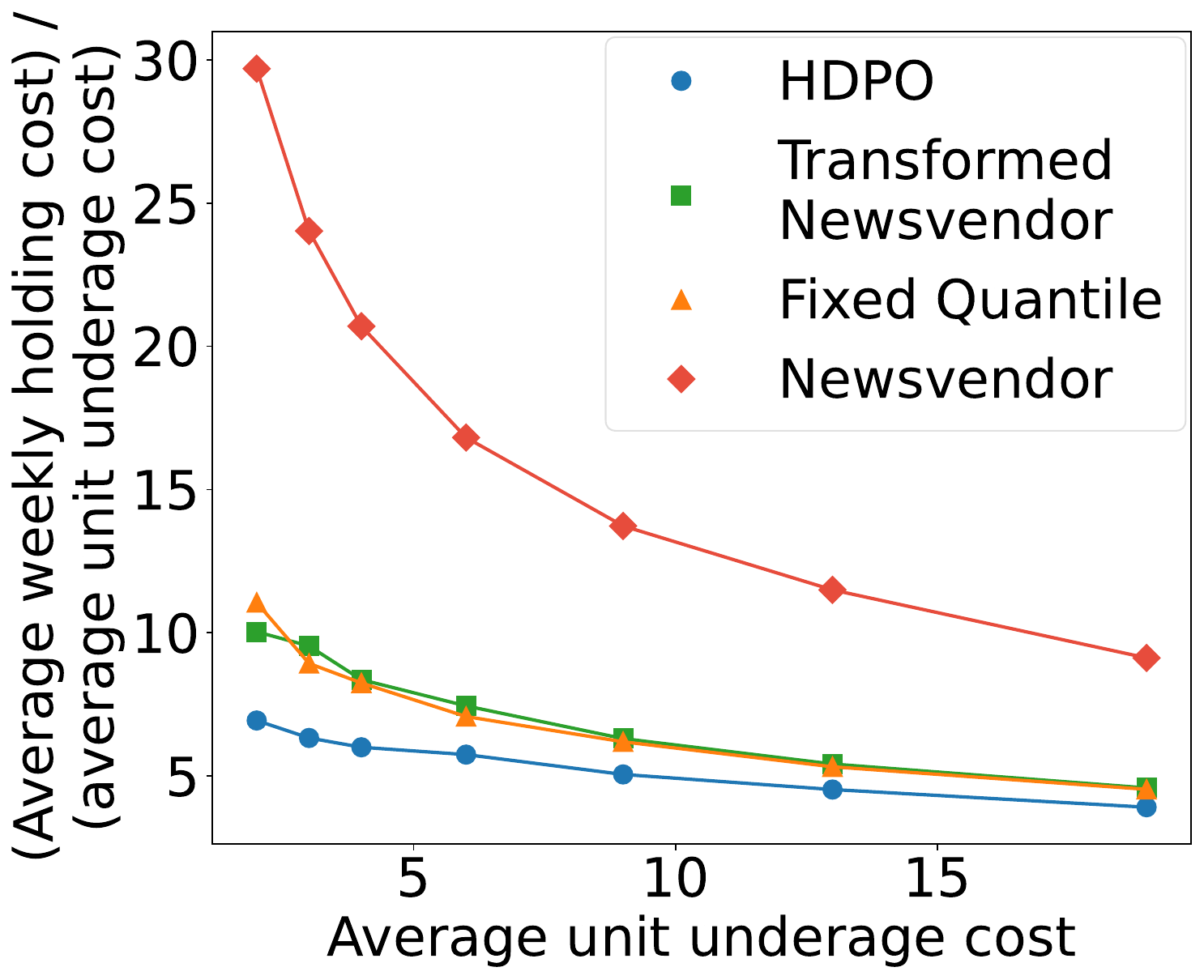}
  \caption{Average weekly holding cost, normalized by the average unit underage cost. \\ \textbf{Lower is better.}}
  \label{fig: real-data-average-holding-cost}
\end{subfigure}

\caption{Relative performance for setting with one store, considering $h=1$, realistic demand data and a lost demand assumption, for different average unit underage costs.
}
\label{fig: real data main results}
\end{figure}

Our findings in this non-stationary demand setting align with prior results reported in the literature for i.i.d.\ demand environments. Generalized newsvendor policies—like their base-stock counterparts—are designed to perform well under backlogged demand, where all incoming demand depletes available inventory. In contrast, in a lost demand setting, demand does not deplete inventory upon a stock-out, posing a challenge for policies optimized for backlogged settings. While adjusting the target quantile can partially address this issue, the underestimation of inventory may depend on the current inventory state in a non-trivial manner.
Our results emphasize that adjusting a quantile alone cannot systematically compensate for a policy that merely tracks the inventory position. We analyze this further in Appendix~\ref{sec: what-heuristics-miss}.

\subsection{Limitations: The need for structure-aware neural network architectures when addressing large inventory networks} \label{sec:vanilla-unrelated-stores}

The Vanilla NN architecture treats inventory network control problems as unstructured optimization tasks. It consists of a single fully-connected MLP that ingests the entire network state as a flattened vector and outputs decisions for all edges simultaneously. This architectural choice has a critical limitation: the network has no built-in awareness of the inventory network's topology or functional relationships between locations.

While theoretically capable of learning any policy, this structure-agnostic architecture cannot distinguish between sensible and spurious patterns without extensive data. It lacks mechanisms to favor learning that ``each store should order primarily based on its own inventory'' (likely sensible) over patterns like ``store 7 should order more when store 11 has high inventory'' (likely spurious). Without bias toward natural problem decompositions, meaningful and coincidental patterns appear equally expressible.

To crystallize this limitation, we design a sanity check that exposes the inefficiency of structure-agnostic learning.

{\bf Benchmark.} 
We introduce a setting we call S10 as a variation of S2, extending the single-store lost-demand setup to multiple identical, independent store copies, each facing Poisson demand, with no relationship between the stores. We vary the number of store copies while keeping all parameters fixed. 

This setting provides an ideal test case: as we add more identical stores, we are not actually increasing problem complexity. We are simply replicating the same single-store problem multiple times. An architecture that recognizes this structure should actually exhibit \emph{improving} performance with more stores, as each additional store provides more training data for what is fundamentally the same decision problem.

{\bf Experiment specifications.} See Appendix \ref{appendix:value-of-weight-sharing}

{\bf Results.}
Figure \ref{fig:weight_sharing-vanilla} reveals that the Vanilla NN fundamentally fails this sanity check.
Performance degrades substantially as the number of stores increases, with optimality gaps growing from under 1\% with 3 stores to over 8\% with 50 stores (using 128 training scenarios). This degradation directly stems from the architecture's structure-agnostic design: it must learn the independence of 50 stores from data, rather than encoding it architecturally.

This experiment crystallizes the need for structure-aware architectures and motivates our introduction of GNNs in the next section.

\begin{figure}[ht]
\begin{subfigure}{.32\textwidth}
  \centering
  \includegraphics[width=1\linewidth]{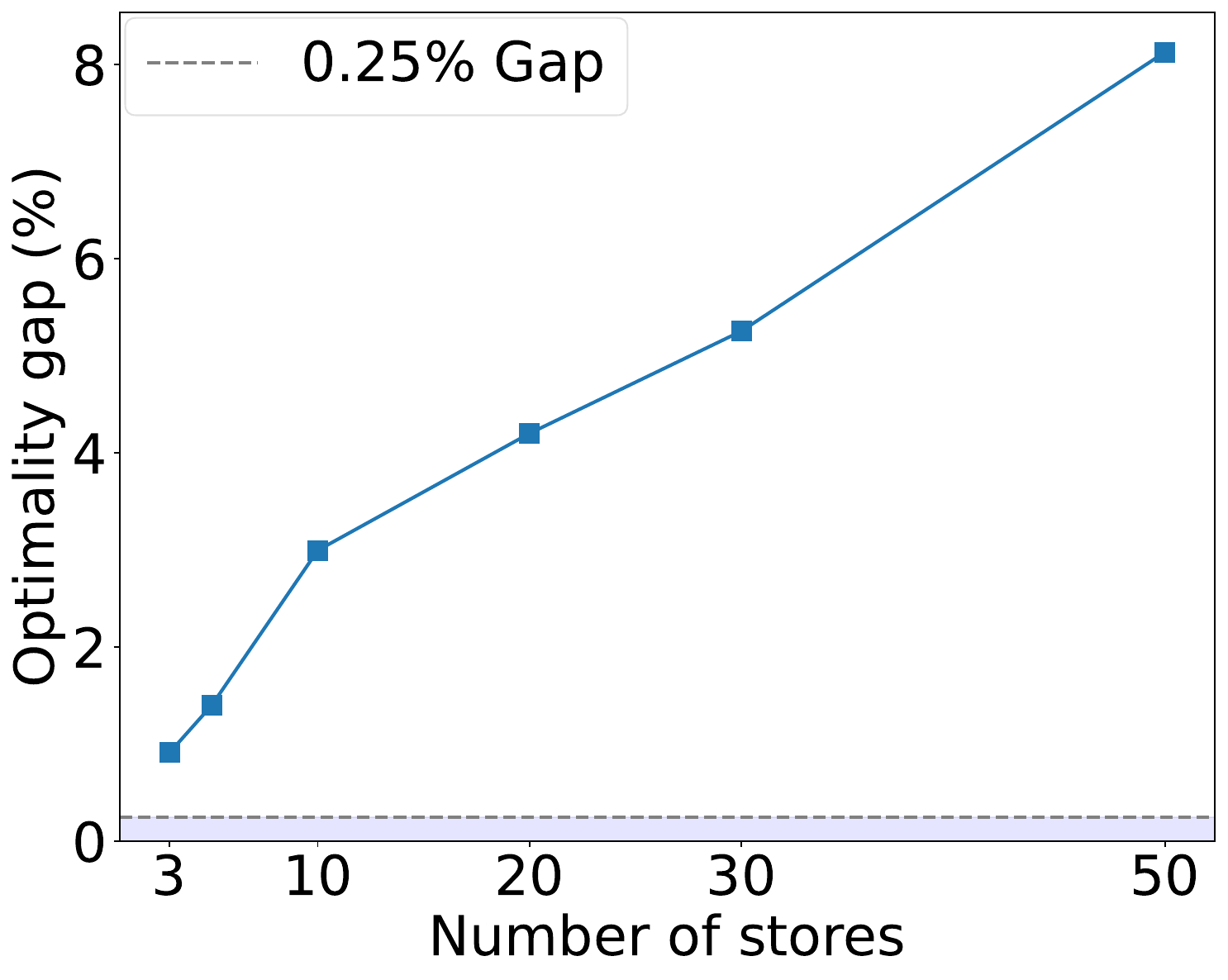}
  \caption{128 training scenarios.}
  \label{fig:weight-sharing-vanilla-small}
\end{subfigure}%
\hspace{0.015\textwidth}%
\begin{subfigure}{.32\textwidth}
  \centering
  \includegraphics[width=1\linewidth]{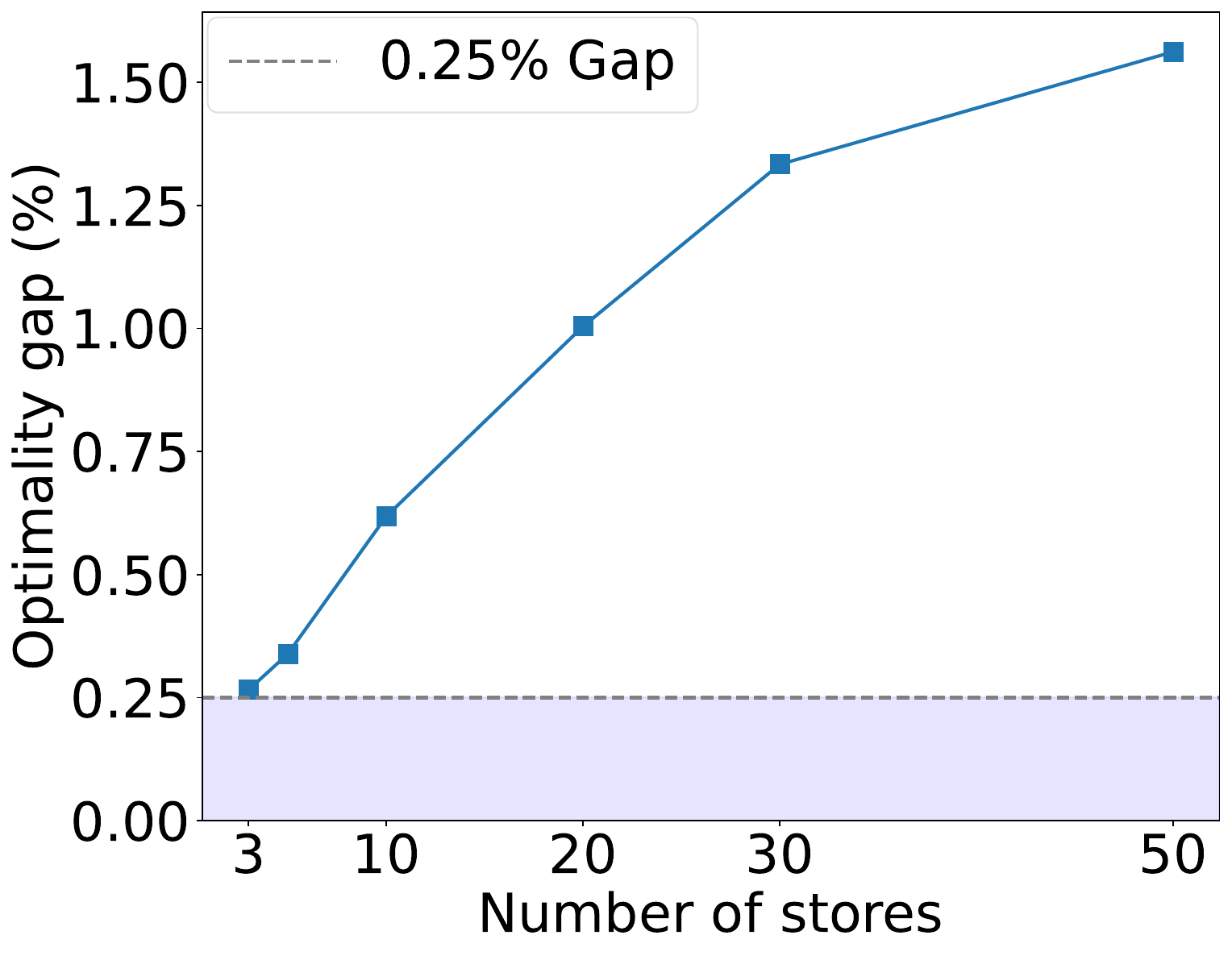}
  \caption{1024 training scenarios.}
  \label{fig:weight-sharing-vanilla-medium}
\end{subfigure}
\hspace{0.015\textwidth}%
\begin{subfigure}{.32\textwidth}
  \centering
  \includegraphics[width=1\linewidth]{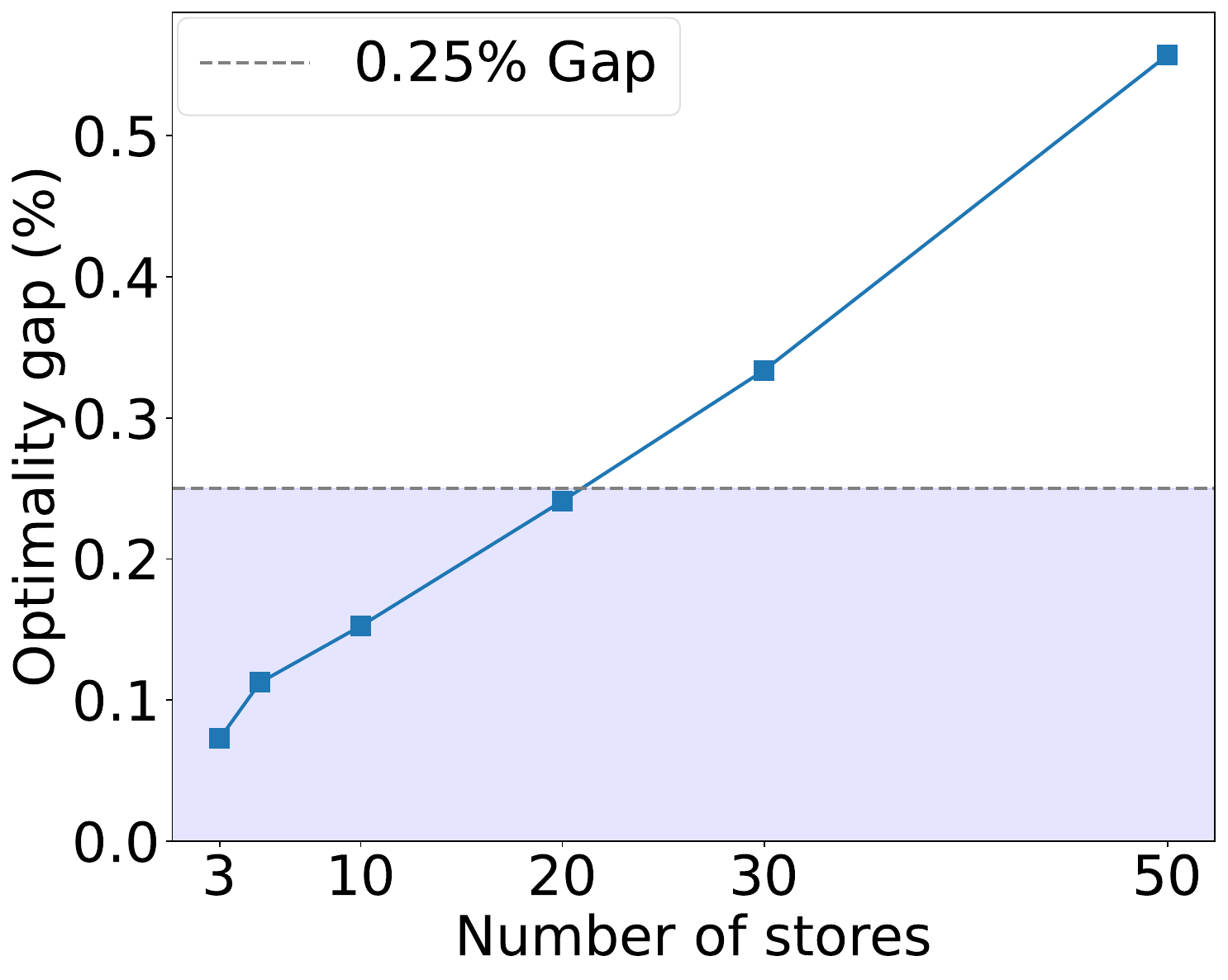}
  \caption{8192 training scenarios.}
  \label{fig:fig:weight-sharing-vanilla-large}
\end{subfigure}
\caption{ Optimality gaps (\%) for the Vanilla NN architecture in setting S10, which consists of many identical, independent stores. Results are shown across varying numbers of stores and sample sizes.}
\label{fig:weight_sharing-vanilla}
\end{figure}

\section{HDPO with GNN policies for large-scale inventory networks} 
\label{sec:gnn-section}

Section \ref{sec:vanilla-unrelated-stores} revealed that Vanilla NNs can struggle with sample efficiency because they ignore the inventory network's structure. We now introduce a GNN architecture in which the policy’s computation graph---the directed graph of computational operations---mirrors the physical inventory network along which goods flow. 

We introduce the desiderata motivating the architecture in Section \ref{sec: desiderata} and present our approach in Section \ref{sec: gnn-architecture}. Section \ref{sec:sample-efficiency-results} demonstrates the empirical impact: across six diverse inventory settings, GNNs are much more sample efficient than Vanilla NNs in large networks while matching Vanilla NN performance when data is abundant.
To understand these gains, Section \ref{sec: GNN_performance_drivers} investigates three complementary capabilities through focused experiments. We show how GNNs naturally decompose independent decisions and can enhance sample efficiency via weight sharing (Sec.\ref{sec:value-of-weight-sharing}), coordinate through message-passing when beneficial (Sec.\ref{sec:value-of-message-passing}), and dynamically exploit inventory network flexibility (Sec.~\ref{sec:value-of-flexibility}). Finally, Section \ref{sec: GNN_constructive} provides constructive demonstrations showing how GNNs can elegantly represent optimal policies in two stylized inventory settings.

\subsection{From desiderata to Graph Neural Networks \label{sec: desiderata} }
To motivate our proposed design, we enumerate the following desiderata for the policy architecture:
\begin{itemize}
    \item {\bf Desideratum 1 (decomposition across uncoupled locations):} If the inventory graph is disconnected, the allocation on each edge should depend only on state information available within its own connected component.
    \item {\bf Desideratum 2 (equivariance across repeated sub-structures):} If a node re-labeling maps one part of the inventory graph onto another, then applying that same re-labeling to the system state should produce correspondingly re-labeled policy actions—ensuring that learning on one sub-graph transfers to any isomorphic part of the network.
    \item {\bf Desideratum 3 (coordination across coupled locations)}: When locations are coupled through flows or constraints, the allocations involving them need to be coordinated so that decisions are jointly optimal.
\end{itemize}

\paragraph{\bf Why familiar architectures fall short.}
Before introducing our proposed design, it is helpful to see why two seemingly reasonable alternatives cannot satisfy all three desiderata.

\emph{(a) Vanilla NN — a fully centralized MLP.} As studied in Section~\ref{sec:vanilla-unrelated-stores}, the Vanilla NN feeds the \emph{entire} system state into one multilayer perceptron that outputs all replenishment decisions. Although expressive, it violates Desideratum 1: an action at an edge can be influenced by features from completely disconnected components. It also violates Desideratum 2: weights are tied to absolute node indices, so what the network learns at one location does not automatically transfer to isomorphic parts of the graph. These two shortcomings manifest as the pronounced sample-inefficiency documented in Section~\ref{sec:vanilla-unrelated-stores}.

\emph{(b) Fully–decentralized local rule.} At the opposite extreme, we could assign each edge a small neural module that looks only at its local state. This satisfies Desideratum 1 by construction. If all neural modules share common weights, then Desideratum 2 is also satisfied. But this architecture fails Desideratum 3: when flows or capacity constraints couple locations, the policy lacks a mechanism to incorporate upstream or downstream information, instead making decisions at each edge solely based on local state information. In RL terminology, this amounts to \emph{centralized training, decentralized execution} \citep{gronauer2022multi}: the policy is trained in a simulator to optimize a global reward function, but each edge is constrained to act on its own observations only, preventing the coordination required for joint optimality.

This reasoning leads us naturally to a GNN policy architecture. Our GNN involves NN modules with shared weights that perform computations for each node and edge in the inventory network, and use permutation invariant aggregation of messages. It naturally satisfies Desiderata 1 and 2. In fact, when locations are completely decoupled, the architecture reduces to the fully-decentralized local rule discussed above.

The key innovation of GNNs, however, is that through an iterative message-passing procedure, they also enable coordinated decision-making among connected nodes in the inventory graph, thereby satisfying Desideratum 3. Informally, GNNs enable coordinated decision-making while maintaining a ``relational inductive bias''---that is, a bias toward prioritizing information from neighboring locations in the network structure \citep[Chapter 13.4]{prince2023understanding}. This approach has proven valuable in other domains involving networked entities, such as molecular property prediction in drug discovery \citep{gilmer2017neural}, social influence modeling in social networks \citep{hamilton2017inductive}, and real-time ETA prediction in large-scale traffic systems \citep{derrow2021eta}.

\subsection{Our proposed GNN policy architecture \label{sec: gnn-architecture} }
We propose a concrete GNN architecture for sample efficient inventory control in networks. Our architecture is specified in Algorithm~\ref{alg:samp_gnn}, which we encourage the reader to review alongside the discussion below. We now discuss its features, including some key choices that are specific to our construction (directed edges, edge-level tasks, and aggregating information from neighbors by summing over their embeddings).
Our GNN starts (as usual) by embedding nodes and edges 
into a latent space, refines these embeddings through an iterative message-passing process, and finally applies a function to extract meaningful outputs for the task at hand. We adopt an architecture capturing node- and edge-level tasks in a \emph{directed} graph with vertices $V$ and directed edges $E$, with the direction of each edge capturing the direction of inventory flow (from supplier to receiver). (We note that typical GNNs live on \emph{undirected} graphs, with just a few exceptions \cite{emanu2023dirgnn}) 

\begin{algorithm}
\caption{GNN architecture for inventory networks}\label{alg:samp_gnn}
\begin{algorithmic}[1]
\State \textbf{Input:} Directed graph  $(\mathcal{N}, \mathcal{E})$ of the inventory network,
node features
$\{S^k_t\}_{k \in \mathcal{N}}$ including inventory level $I^j_t$, edge features $\{L^{(j,k)}\}_{(j,k) \in \mathcal{E}}$, number of message-passing layers $L^{\text{MP}}$, trainable functions \texttt{EmbedNode}, \texttt{EmbedEdge}, \texttt{UpdateNode}, \texttt{UpdateEdge} and \texttt{Readout}, and non-trainable function \texttt{FeasibilityEnforce}.
\State \textbf{Output:} Feasible allocation decisions $\{a^{(j,k)}\}_{(j,k) \in \mathcal{E}}$

\State Initialize node embeddings $h^k \gets$ \texttt{EmbedNode}$(S^k_t)$ for all $k \in \mathcal{N}$
\State Initialize edge embeddings $h^{(j,k)} \gets$ \texttt{EmbedEdge}$(h^j, h^k, L^{(j,k)})$ for all $(j, k) \in \mathcal{E}$
\For{$\ell = 1$ to $L^{\text{MP}}$} \Comment{Message-passing layers}
    \For{each node $k \in \mathcal{N}_{+}$}
        \State Aggregate messages from incoming edges: $z^{k}_{\text{in}} \gets \sum_{j \in \mathcal{N}_{\text{sup}}^k}  h^{(j,k)} / \sqrt{|\mathcal{N}_{\text{sup}}^k|}$
        \State Aggregate messages from outgoing edges: 
        $z^{k}_{\text{out}} \gets \sum_{j \in \mathcal{N}_{\text{rec}}^k}  h^{(k,j)} / \sqrt{|\mathcal{N}_{\text{rec}}^k|}$
        \State Update node embedding: $h^k \gets$ $h^k$ + \texttt{UpdateNode}$(h^k, z^{k}_{\text{in}}, z^{k}_{\text{out}})$
    \EndFor
    \For{each edge $(j, k) \in \mathcal{E}$}
        \State Update edge embedding: $h^{(j,k)} \gets$ $h^{(j,k)} + $ \texttt{UpdateEdge}$(h^{(j,k)}, h^j, h^k)$
    \EndFor
\EndFor
\For{each edge $(j, k) \in \mathcal{E}$}
    \State Compute intermediate output: $b^{(j,k)} \gets$ \texttt{Readout}$(h^{(j,k)})$
\EndFor
\For{each distribution center node $j \in \mathcal{N}_{\text{dc}}$}
    \State Enforce feasibility: $a^{(j,k)} \gets$ \texttt{FeasibilityEnforce}$(I^j_t, \{b^{(j,k)}\}_{k \in \mathcal{N}_{\text{rec}}^j})$ for all $k \in \mathcal{N}_{\text{rec}}^j$
\EndFor
\State \textbf{return} $\{a^{(j,k)}\}_{(j,k) \in \mathcal{E}}$
\end{algorithmic}
\end{algorithm}

Node features capture aspects such as inventory on hand, inventory pipelines, holding costs, underage costs and forecasting information. Edge features include, e.g., lead times. In each decision period, nodes and edges are initialized with trainable embeddings of the node features and edge features (lines 3 and 4). 
Embeddings are iteratively refined through multiple rounds of message passing (indexed by $\ell \in [L^{\text{MP}}]$), consisting of:
    \begin{enumerate}
        \item {\textbf{Edge Aggregation (lines 7 and 8):}}
        Each node separately sums over embeddings (``messages'') from its incoming (outgoing) edges, with results normalized to prevent gradient explosion.
        As a simple special case, this allows to capture summing over inventory shipments received (sent), though of course the edge embeddings are general and flexible.
       
        \item \textbf{Node updates (line 9):} The aggregated messages are used to update the node's representation. We choose to increment the current embedding of the node with the output of the trainable function \texttt{UpdateNode} (termed a \emph{skip connection} in machine learning). 
        \item \textbf{Edge Update (line 12):} Each edge updates its embedding based on its previous representation and the updated representation of the connected nodes, again via an increment over the current embedding.
\end{enumerate}
After message passing, a function extracts a tentative order/shipment quantity on each edge
(lines 16). These quantities are passed through non-trainable feasibility enforcement layers (FELs) to ensure that the inventory quantity at each DC node is respected (lines 19).   
 
A schematic of the message-passing process is shown in Figure~\ref{fig:gnn-architecture}, with further implementation details provided in Appendix~\ref{appendix:gnn-architecture}. One such detail, worth highlighting here, concerns how we encode the inventory graph in our implementation. Specifically, we include a fictitious self-loop edge at each DC, allowing the DC to output a tentative order that preserves its own inventory. In practice, this design choice improved training stability by preventing pathological cases where allocating more to a DC merely induced greater downstream flow without enabling the DC to retain stock.

\begin{figure}[htbp!]
  \centering
  \includegraphics[width=1.0\linewidth]{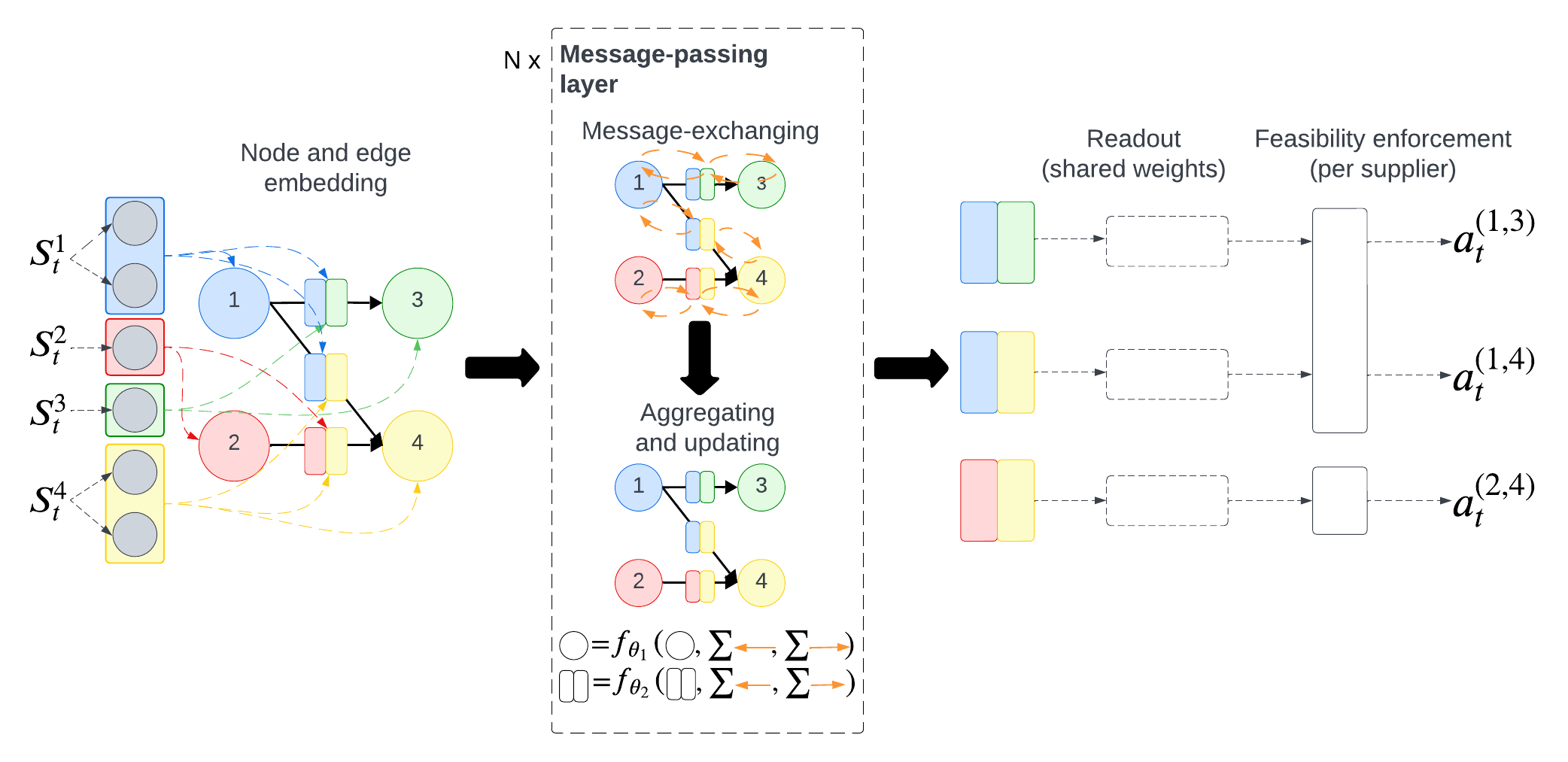}
  \caption{Overview of the GNN architecture in a setting with four locations. Location 1 supplies locations 3 and 4, while location 2 supplies location 4. For simplicity, we omit edges from the external supply node.}
  \label{fig:gnn-architecture}
\end{figure}

This completes our description of the GNN architecture and its motivation. It remains, of course, to see whether the architecture actually works as intended, and delivers the desired good performance and sample efficiency in larger inventory networks. We now pursue this investigation.

\subsection{Main Empirical Results: GNN-based policies are much more sample efficient than Vanilla
\label{sec:sample-efficiency-results}}

We hypothesize that GNN policies strike an effective balance between structure and flexibility: their relational inductive bias enhances sample efficiency when training data is limited, while retaining sufficient representational capacity to coordinate decisions optimally when samples are abundant. Our experiments test this hypothesis across a broad spectrum of structural features and data-availability regimes, motivated by real-world settings where retailers often have access to a limited number of scenarios (e.g., one per product offered) for training.

Our findings support this hypothesis. GNNs consistently outperform vanilla NNs when data is limited relative to the network scale, while matching vanilla performance when data is abundant. This outcome was not obvious a priori—GNNs could have (a) proven prone to overfitting or (b) potentially lost important information from distant parts of the network compared to the Vanilla NN's global view, risking suboptimal performance.
Our decision to treat vanilla NN architectures as the baseline stems from the fact that many settings we study are high-dimensional and not easily amenable to common heuristics in operations. Reassuringly, in two settings where the optimum can be computed or bounded, GNNs converge to optimal behavior, suggesting that the structured architecture does not fundamentally limit representational capacity.

The remainder of this section documents these findings across six inventory control settings. Section~\ref{sec: GNN_performance_drivers} then investigates the mechanisms driving these sample efficiency gains through focused  studies.

{\bf Benchmarks.} We evaluate performance across six distinct settings, summarized in Table~\ref{table:experiment-settings}. These settings vary in network topology, unmet demand assumptions, and demand distributions. For each, we vary the number of locations and consider a single meta-instance. In the synthetic demand scenarios (S3, S4, S6, and S8), we use the same problem instance across all scenarios while allowing parameters to vary by location. This fixed-instance approach follows established literature and aids in variance reduction. In contrast, settings with realistic demand (S7 and S9) use different instances across scenarios. These realistic-demand settings are based on the \textit{Favorita} dataset (see Appendix~\ref{appendix:realistic-demand-dataset}), from which we construct product-level demand traces spanning multiple stores.
In most settings, instance parameters are sampled independently from predefined distributions, and edge costs are set to zero. Exceptions are the settings involving MWMS networks (S8 and S9), where we introduce spatial structure: in S8, store locations are uniformly sampled in the plane, while in S9, they are set according to the geographic coordinates of the actual stores associated with each demand trace. In both settings, warehouses are positioned using K-means clustering to simulate strategic placement relative to store locations. Lead times and edge costs are then determined by Euclidean distances and warehouse-specific parameters, resulting in geographically coherent and heterogeneous inventory networks.

\begin{table}[h!]
\centering
\caption{Overview of settings and location counts used in the sample efficiency experiments.\newline
\footnotesize{In S4, the warehouse functions as a transshipment center$^\ast$ (\ie it cannot hold inventory).}}
\label{table:experiment-settings}
\begin{tabular}{@{}llllll@{}}
\toprule
\textbf{ID} & 
\shortstack{\textbf{Network} \\ \textbf{Structure}} & 
\shortstack{\textbf{Unmet Demand} \\ \textbf{Assumption}} & 
\shortstack{\textbf{Demand} \\ \textbf{Distribution}} & 
\shortstack{\textbf{Number of} \\ \textbf{Locations}} &
\shortstack{\textbf{Varied} \\ \textbf{Component}} \\ \midrule
S3 & Serial & Backlogged & Normal & 4--7 & Echelons \\
S4 & OWMS$^\ast$ & Backlogged & Normal & 4--51 & Stores \\
S6 & OWMS & Lost & Normal & 4--51 & Stores \\
S7 & OWMS & Lost & Realistic & 4--22 & Stores \\
S8 & MWMS & Lost & Normal & 12--56 & Stores and warehouses \\
S9 & MWMS & Lost & Realistic & 23--26 & Warehouses \\
\bottomrule
\end{tabular}
\end{table}

{\bf Experiment specifications.} For settings with synthetic demand (S3, S4, S6, and S8), we assess sample efficiency by varying the number of training scenarios across $\{128, 1024, 8192\}$, selecting models using early stopping on a dev set of the same size. For the two settings using realistic demand (S7 and S9), the training sets contain 64 and 288 product-level demand traces, respectively, with dev and test sets of equal size. Datasets are split temporally, following the specifications in Section~\ref{sec:vanilla-hdpo-realistic}.
Since both learning rate and model size significantly affect generalization \citep{jiang2019fantastic, allen2019convergence}, we consider three learning rates for each architecture. For the Vanilla NN, we also vary layer width across three values. Each hyperparameter configuration is run three times, and final results reflect the best-performing run for each architecture class, number of locations, and sample size.
Full architectural and experimental details, as well as complete results, are provided in Appendix~\ref{appendix:sample-efficiency-results}.

\subsubsection{Strong Sample Efficiency Gains with Limited Data.}
Figures~\ref{fig:sample-efficiency-small-sample} and~\ref{fig:sample-efficiency-large-sample} show that GNNs consistently outperform vanilla NNs when training data is limited, in settings S4, S6, and S8, which involve networks with repeated structural elements. With only 128 samples, the GNN's sample efficiency advantages are substantial across all network sizes—starting even with 3 stores—and increase dramatically as the number of locations grows. At 50 stores, the Vanilla NN suffers relative excess costs of approximately 31\%, 14\%, and 8\% in these three settings, respectively. Strikingly, in networks with 50 stores, GNN with only 128 samples performs comparably to (and even outperforms) Vanilla with 8,192 samples, implying a reduction in data requirements to achieve a given level of performance of up to 64 times (see Figures~\ref{fig:sample-efficiency-1dc-trans}, \ref{fig:sample-efficiency-1dc-lost} and~\ref{fig:sample-efficiency-manydcs} in Appendix~\ref{appendix:sample-efficiency-results}).  

\begin{figure}
\label{fig:sample-efficiency-synthetic}
\begin{subfigure}{.49\textwidth}
  \centering
  \includegraphics[width=1\linewidth]{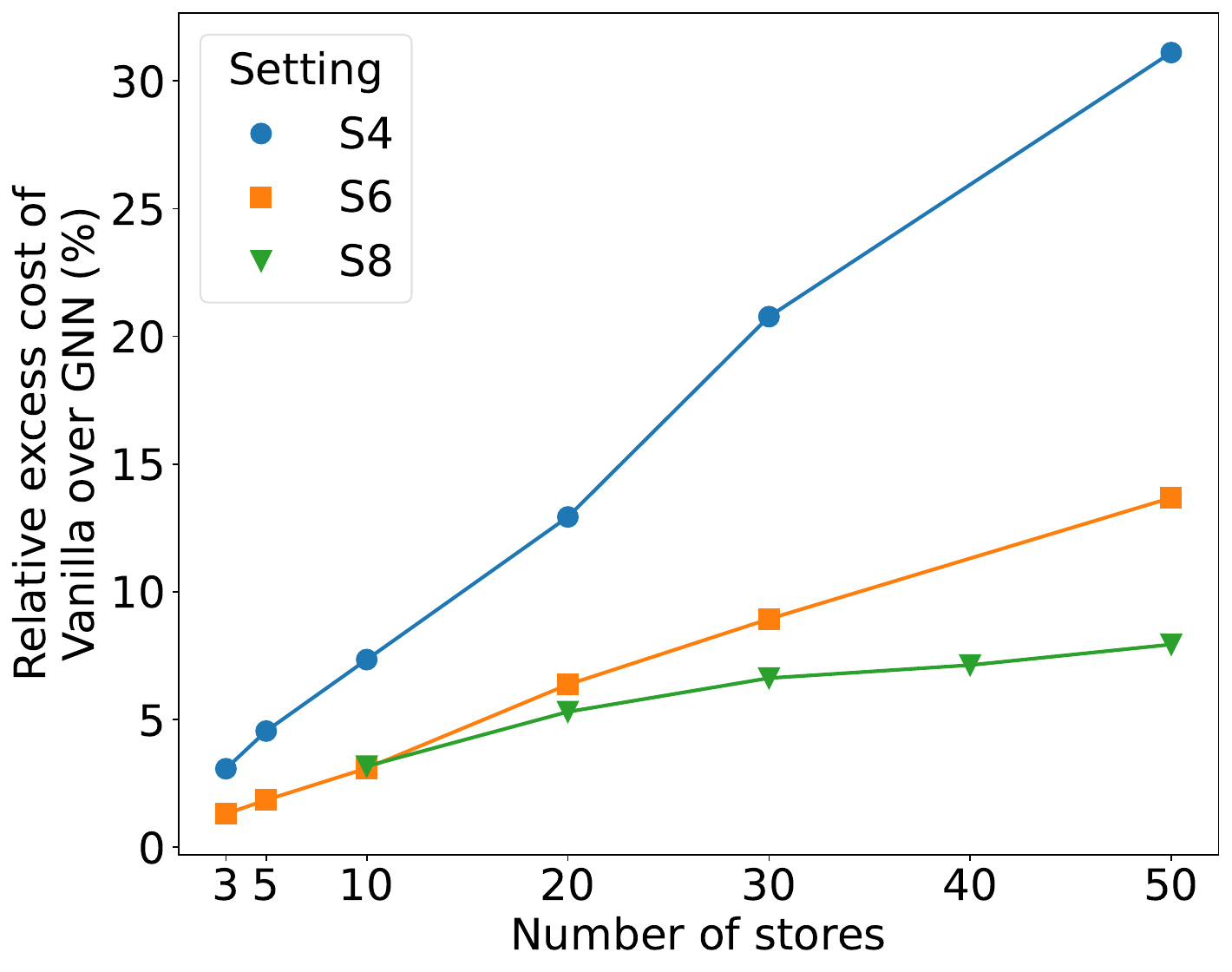}
  \caption{128 training scenarios.}
  \label{fig:sample-efficiency-small-sample}
\end{subfigure}%
\hspace{0.02\textwidth}
\begin{subfigure}{.49\textwidth}
  \centering
  \includegraphics[width=1\linewidth]{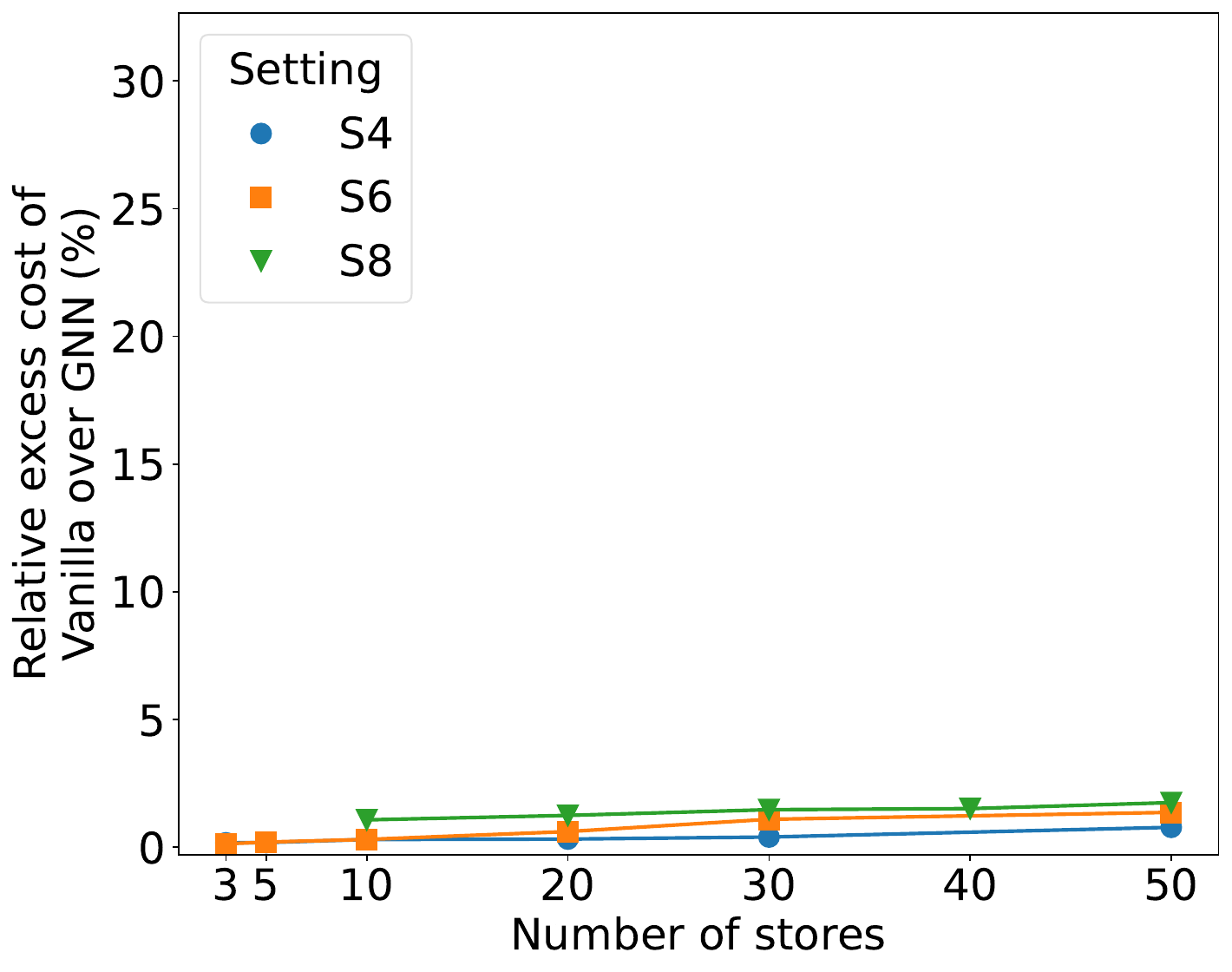}
  \caption{8192 training scenarios.}
  \label{fig:sample-efficiency-large-sample}
\end{subfigure}

\caption{Relative excess cost (\%) of the loss incurred by the Vanilla NN compared to the GNN, for 128 (left) and 8192 (right) training samples, in settings S4, S6, and S8. For S8,there are 2 to 6 warehouses for 10 to 50 stores, with 10 additional stores for each additional warehouse.}

\end{figure}

Remarkably, this sample efficiency pattern holds even in challenging network topologies. Figure \ref{fig:relative-excess-cost-serial} shows results for setting S3, which proved to be the most challenging among all six settings. Serial systems pose a particular structural difficulty for GNNs since information must propagate through many message-passing rounds to flow from one end of the chain to the other. Despite this structural challenge, the GNN still achieves significant performance gains over Vanilla with limited data (128 and 1024 training scenarios), and these advantages tend to widen for longer serial chains.

\begin{figure}[htbp]
\centering
\includegraphics[width=0.49\linewidth]{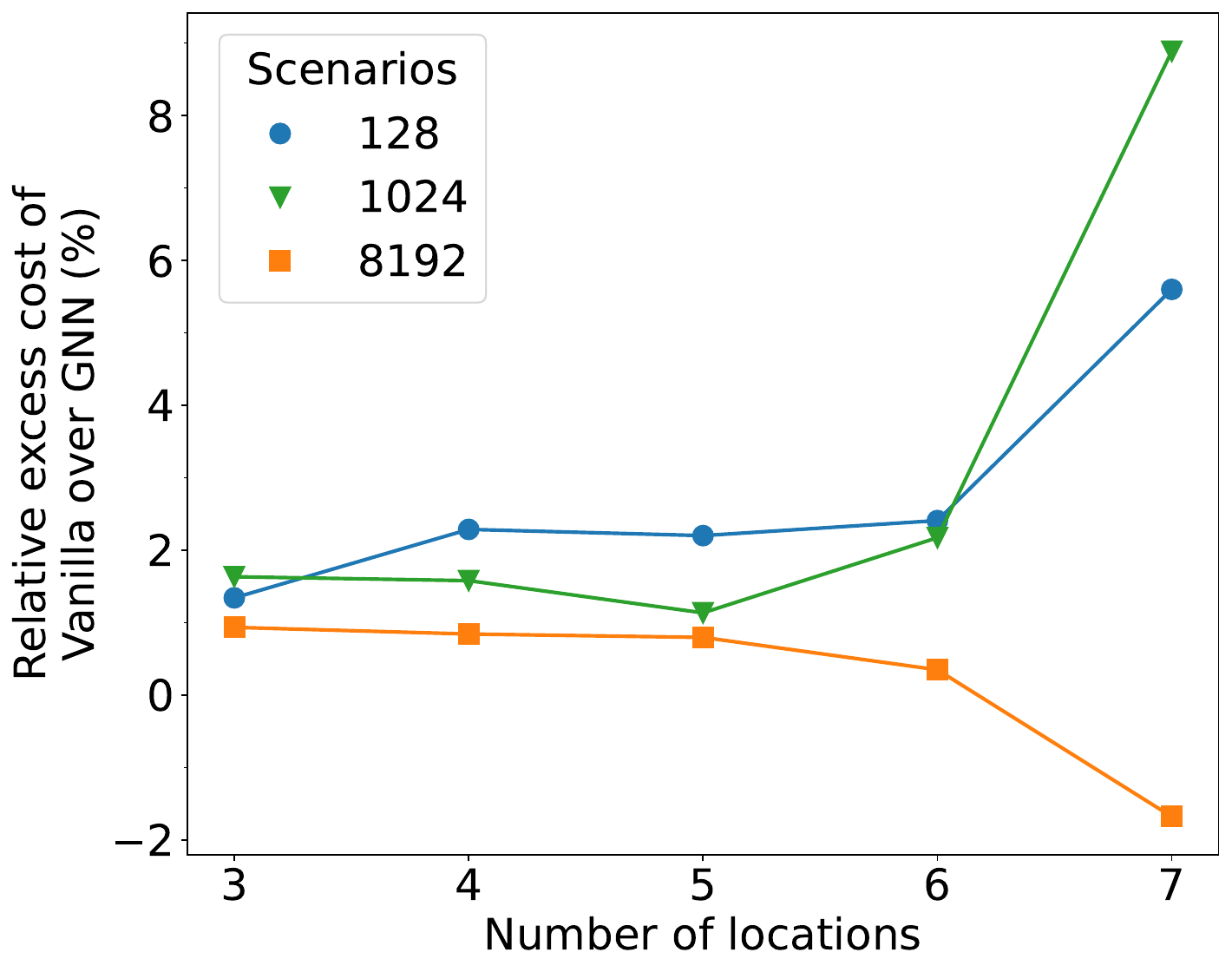}
\caption{Relative excess cost (\%) of the loss incurred by the Vanilla NN compared to the GNN in setting S3 for a varying number of training scenarios.}
\label{fig:relative-excess-cost-serial}
\end{figure}

\subsubsection{Convergence with Abundant Data Hints At  Sufficient  Representational Capacity.}
The second component of our hypothesis is that GNNs retain sufficient representational capacity when training data is abundant. To assess this, we examine optimality gaps in settings S3 and S4  where the true optimum can be obtained or bounded  (see Appendix \ref{appendix:sample-efficiency-results}). In the challenging serial network setting S3, GNNs achieve within 1\% of optimal performance with 8192 training scenarios for systems up to 6 locations. In setting S4, GNNs consistently achieve optimality gaps below 0.2\% with 8192 samples, even for 50-store systems with state representations up to 555 dimensions.

Figure~\ref{fig:sample-efficiency-large-sample} provides additional support for this hypothesis. With 8192 training samples, performance gaps shrink dramatically across all settings: the Vanilla NN's excess costs drop to roughly 1.0\%, 1.5\%, and 2.0\% for 50-store configurations in settings S4, S6, and S8, respectively.
Similarly, in the challenging serial networks (Figure~\ref{fig:relative-excess-cost-serial}), the substantial performance gaps observed with limited data largely disappear when abundant training samples are available. This convergence pattern holds consistently across network topologies and problem scales.

These results address a natural concern about structured architectures: that their inductive biases might fundamentally limit effective representational power. Our findings suggest otherwise: GNNs appear capable of learning the same complex coordination patterns as Vanilla NNs, but require fewer samples to do so effectively. In these settings, the structured architecture provides a more efficient path to the same representational capacity, rather than a ceiling on ultimate performance.

\subsubsection{Sample efficiency benefits persist in realistic settings.}
Here, we demonstrate that the patterns observed with synthetic demand extend to more realistic conditions. Figure~\ref{fig:sample-efficiency-real} shows results for settings S7 and S9, which use realistic demand traces from the Favorita dataset. Setting S9 additionally incorporates shipping costs and lead times that reflect the actual geographic placement of stores and warehouses.

For 64 training scenarios, the GNN achieves profits up to 
3.5\% and 7\% higher than the Vanilla NN in settings S7 and S9, respectively. While these gains are more modest than in synthetic-demand settings, the fundamental pattern persists: GNNs provide sample efficiency advantages 
that tend to widen as network scale increases (though the widening is less pronounced in S9, where the number of stores is fixed). 

These results provide important validation that the sample efficiency benefits we observe are not artifacts of simplified synthetic environments. Even when faced with realistic demand patterns, geographic constraints, and the complexities of real-world data, the structured inductive bias of GNNs continues to provide meaningful advantages when training data is limited relative to network scale.

\begin{figure}
\begin{subfigure}{.49\textwidth}
  \centering
  \includegraphics[width=1\linewidth]{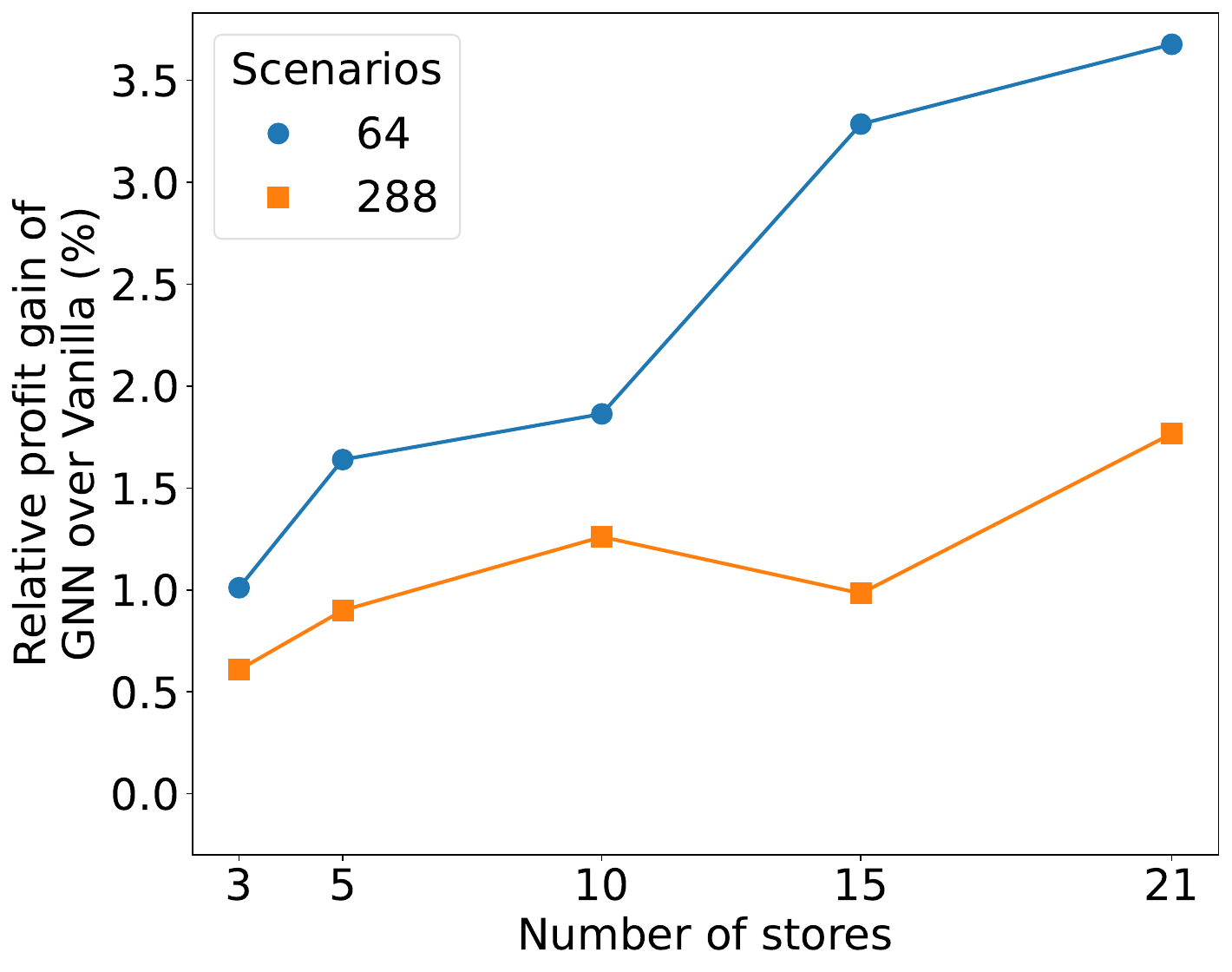}
  \caption{Setting S7.}
  \label{fig:sample-efficiency-owms-real}
\end{subfigure}%
\hspace{0.02\textwidth}
\begin{subfigure}{.49\textwidth}
  \centering
  \includegraphics[width=1\linewidth]{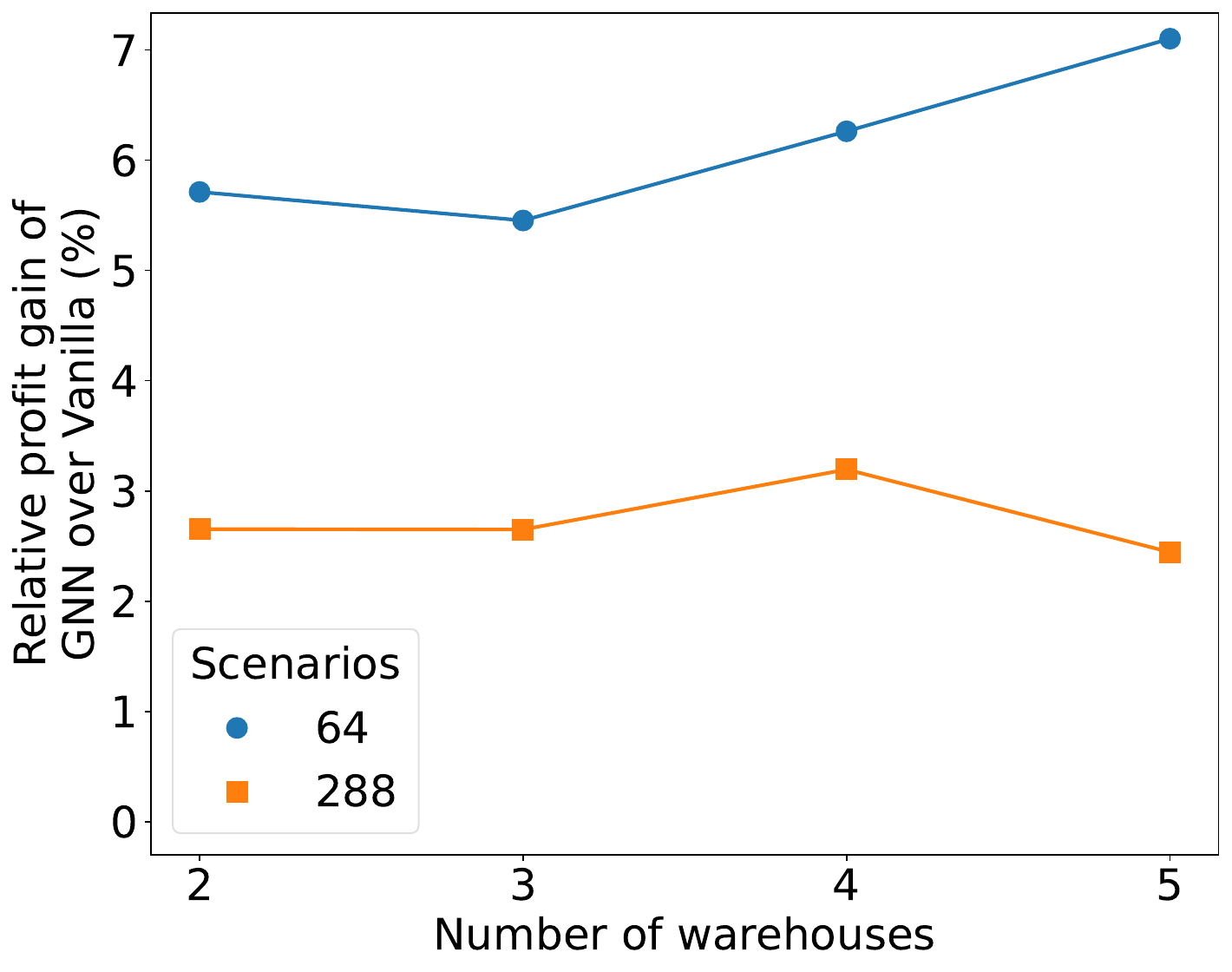}
  \caption{Setting S9.}
  \label{fig:sample-efficiency-mwms-real}
\end{subfigure}

\caption{Relative profit gain (\%) of the GNN compared to the Vanilla NN in Settings S7 (left) and S9 (right), for 64 and 288 training scenarios.
In S9, the number of stores is fixed at 21 across all warehouse counts.}
\label{fig:sample-efficiency-real}
\end{figure}

\subsection{Understanding the drivers of GNNs performance: a balance between decomposition and coordination}\label{sec: GNN_performance_drivers}

The strong empirical results in Section~\ref{sec:sample-efficiency-results} show that GNN policies require far fewer training scenarios than Vanilla NNs while matching their performance once data is plentiful. In this section, we investigate the mechanisms driving GNN performance through three focused experiments. We find that GNNs excel because they automatically strike a balance between \textit{decomposition} and \textit{coordination}: they decompose the global decision problem into local computations at nodes and edges, while coordinating these local decisions through message passing when the physical network creates dependencies.
To illustrate this, we provide numerical evidence showing that GNNs (i) decompose decisions when locations act independently (Section~\ref{sec:value-of-weight-sharing}) and (ii) coordinate via message passing when interactions matter (Section~\ref{sec:value-of-message-passing}). Moreover, we show that GNNs can adapt flexibly to changing network conditions (Section~\ref{sec:value-of-flexibility}). This ability to adaptively balance decomposition and coordination underlies the consistently strong performance of GNNs across our experiments.

\subsubsection{How GNNs decompose a problem: the unrelated stores problem.} \label{sec:value-of-weight-sharing}

Here we verify that  our GNN architecture addresses the fundamental limitation of vanilla NNs identified in Section \ref{sec:vanilla-unrelated-stores}. The unrelated stores setting (S10) provides an ideal sanity check: if stores are truly independent, the GNN should recognize this structure and avoid the degradation exhibited by the Vanilla NN.

{\bf Baselines.}
To isolate the benefits of recognizing the lack of a relationship between locations (under unrelated stores, there is \emph{no} relationship between any pair of locations), we introduce a variant called the \textit{Separate-Weights GNN (SW-GNN)}, in which there is no weight sharing across locations.

{\bf Experiment specifications.} See Appendix \ref{appendix:value-of-weight-sharing}

{\bf Results.}
Figure~\ref{fig:weight_sharing} presents the results. While the performance of the Vanilla NN deteriorates  as the number of stores increases, the performance of SW-GNN remains flat. This highlights the value of architectures that explicitly encode entity-level structure—e.g., recognizing that parts of the state space correspond to distinct locations—which supports better generalization. Moreover, the GNN (with shared weights) not only avoids degradation but improves as the number of stores increases, clearly outperforming the Separate-Weights baseline and highlighting the sample-efficiency gains enabled by weight sharing.

\begin{figure}[ht]
\begin{subfigure}{.32\textwidth}
  \centering
  \includegraphics[width=1\linewidth]{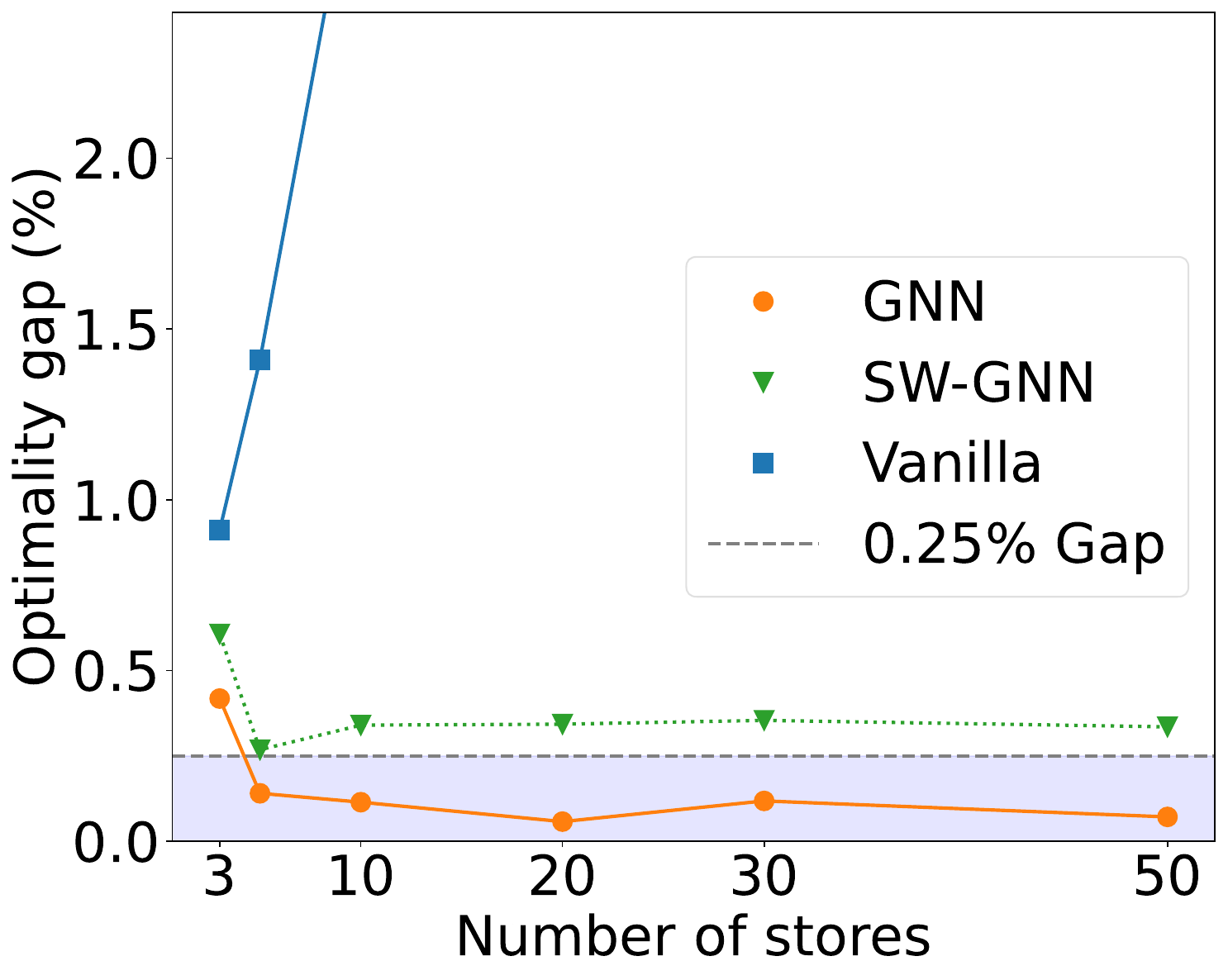}
  \caption{128 training scenarios.}
  \label{fig:sample-efficiency-weight-sharing-small}
\end{subfigure}%
\hspace{0.015\textwidth}%
\begin{subfigure}{.32\textwidth}
  \centering
  \includegraphics[width=1\linewidth]{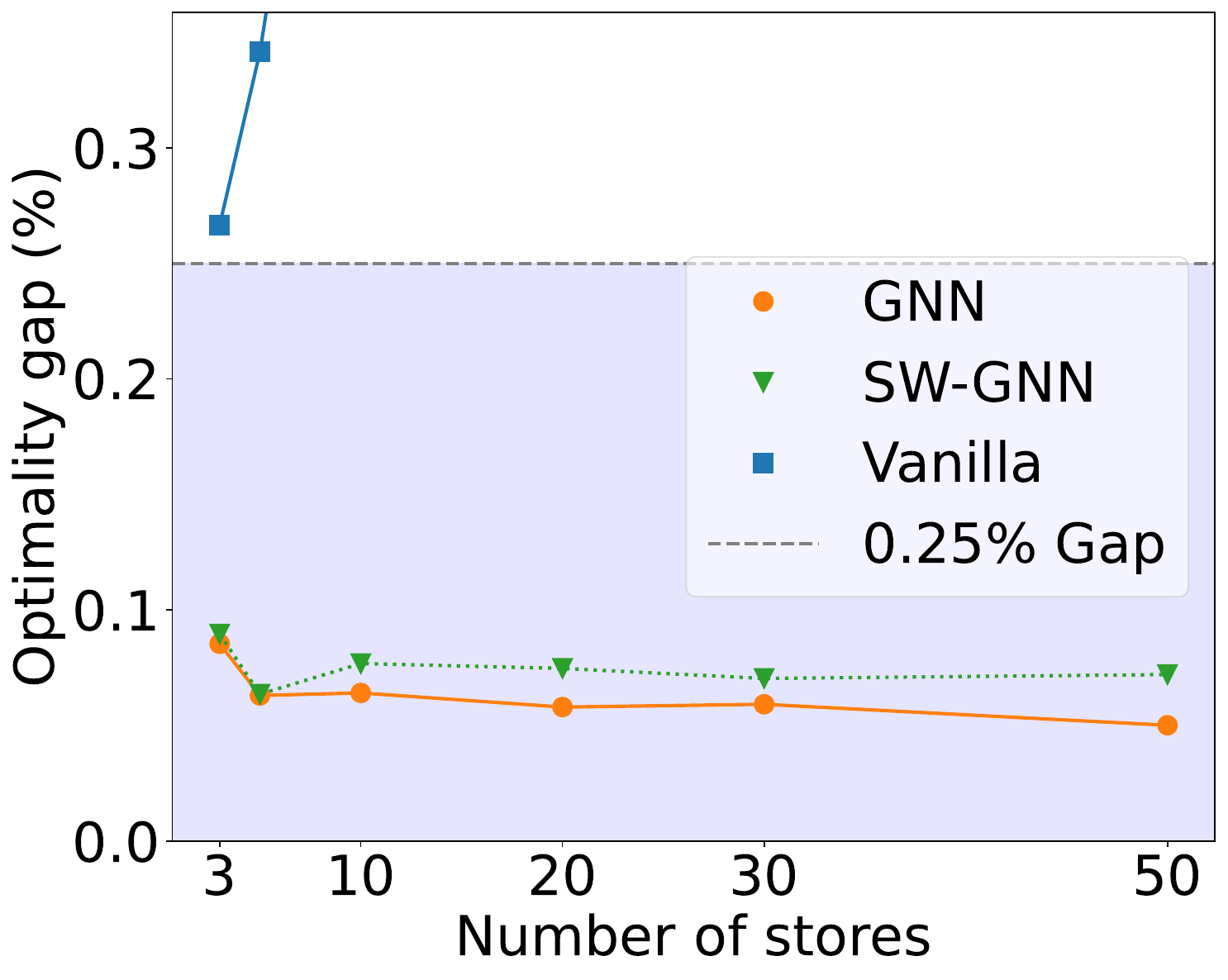}
  \caption{1024 training scenarios.}
  \label{fig:sample-efficiency-weight-sharing-medium}
\end{subfigure}
\hspace{0.015\textwidth}%
\begin{subfigure}{.32\textwidth}
  \centering
  \includegraphics[width=1\linewidth]{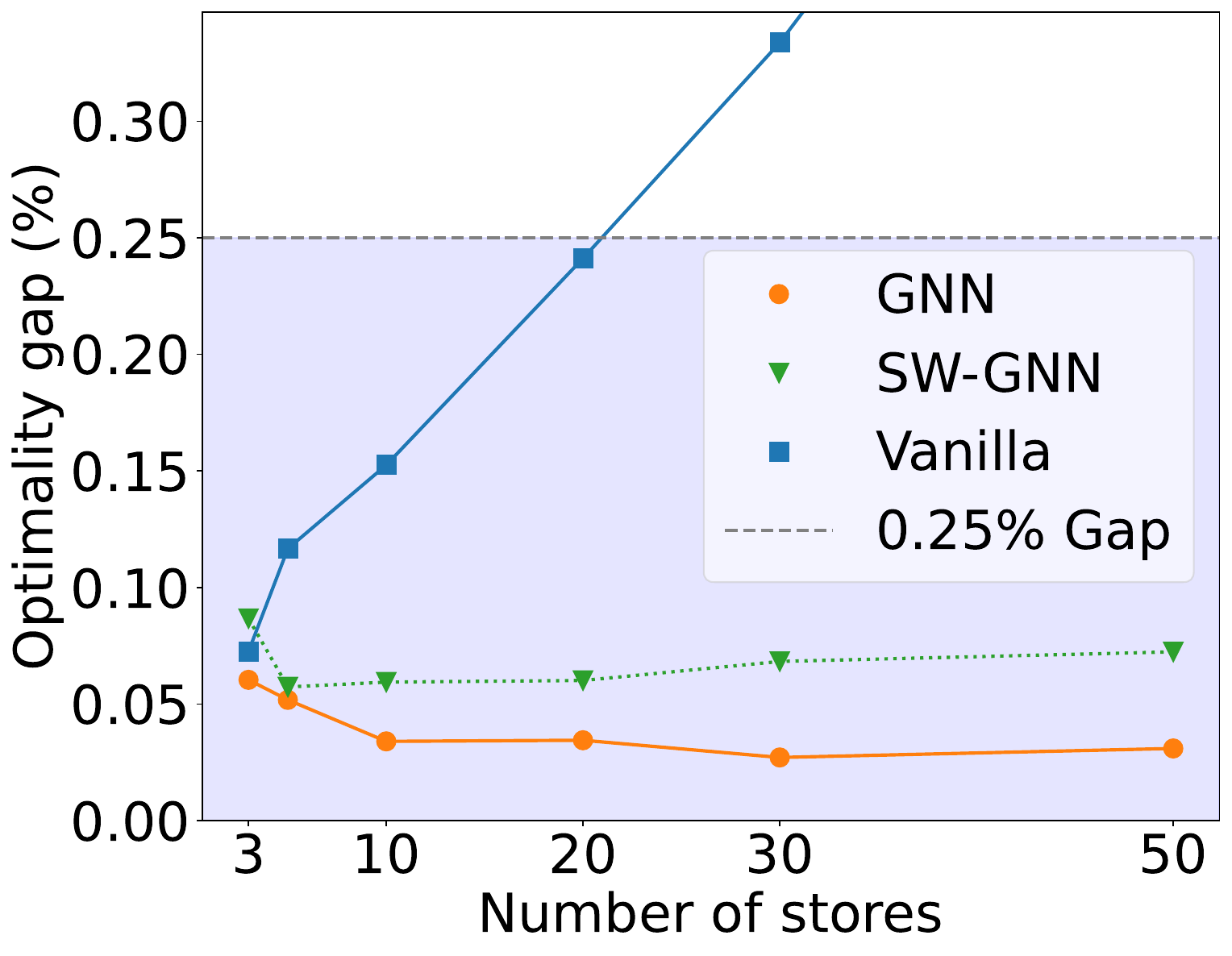}
  \caption{8192 training scenarios.}
  \label{fig:sample-efficiency-weight-sharing-large}
\end{subfigure}
\caption{ Optimality gaps (\%) for the SW-GNN, Vanilla NN, and GNN architectures in setting S10, which consists of many identical, independent stores. Results are shown across varying numbers of stores and sample sizes.}
\label{fig:weight_sharing}
\end{figure}

While this example lacks true network structure, the findings are closely mirrored in Section~\ref{sec:sample-efficiency-results} in the presence of network structure. Importantly, this behavior persists despite the inclusion of message passing, which does not appear to degrade generalization. We delve deeper into the role of message passing in the expressive power of the GNN architecture in Section \ref{sec:value-of-message-passing}.

\subsubsection{ GNNs enable coordination via message passing. \label{sec:value-of-message-passing}}

Section~\ref{sec:value-of-weight-sharing} explored how our structured representation and weight sharing enhance generalization. One possible approach to improve sample efficiency—and simplify computation—is to adopt a fully decentralized policy, where local decisions rely only on edge-specific information. In this section, we show that such decentralized policies can underperform in certain settings, even when trained centrally on large datasets—that is, even when decentralized-execution policies are trained to optimize a global cost function across a large dataset of demand trajectories.

To test whether GNNs can effectively coordinate, we focus on a setting where stores must share limited warehouse capacity and each receives private information about future demand. Optimal allocation requires stores to share their local forecasts—allocation to a store should be conservative today if higher-priority stores will face high demand tomorrow, which requires coordination across the network.

{\bf Benchmarks.}
We introduce setting S11, which features an OWMS network topology with a transshipment warehouse and three stores, under a lost demand assumption. A high-priority store faces a large unit underage cost of 10, while the medium- and low-priority stores have underage costs of
6 and 2, respectively. All lead times are one period, and each location incurs a holding cost equal to 25\% of its underage cost. Demand at each store follows a normal distribution, independent across stores and over time.
Crucially, at each period, each store observes the mean of its own demand distribution for the following period. Given the unit lead times, this allows stores to place informed orders. However, because stores are not informed of other stores' forecasts, their ability to coordinate is limited.

{\bf Baselines}
We compare the GNN with a \textit{decentralized NN}, which is a GNN without the message-passing component, reflecting a policy that relies solely on local, edge-level information.

{\bf Experiment specifications.} See Appendix \ref{appendix:value-of-message-passing}.

{\bf Results.}
In this setting, the GNN outperformed its decentralized counterpart by a $\edit{40}\%$ margin in total costs. Figure~\ref{fig:value-of-message-passing} plots the normalized allocations to the low-priority store as a function of the normalized sum of forecasted demand across the medium- and high-priority stores, under both policies. While allocations under the decentralized policy do decrease with forecasted demand at other stores, they are systematically higher than those of the GNN when demand elsewhere is high, and lower when it is low. This suggests that the decentralized policy is forced to “hedge” against missing information—choosing actions that perform reasonably on average but fail to adapt to specific realizations. These results underscore the importance of message passing in GNN-based architectures, which enables coordination by sharing relevant information across locations.

\begin{figure}[htbp!]
\centering
\captionsetup[subfigure]{justification=centering}

\begin{subfigure}{0.48\textwidth}
  \centering
  \includegraphics[width=\linewidth]{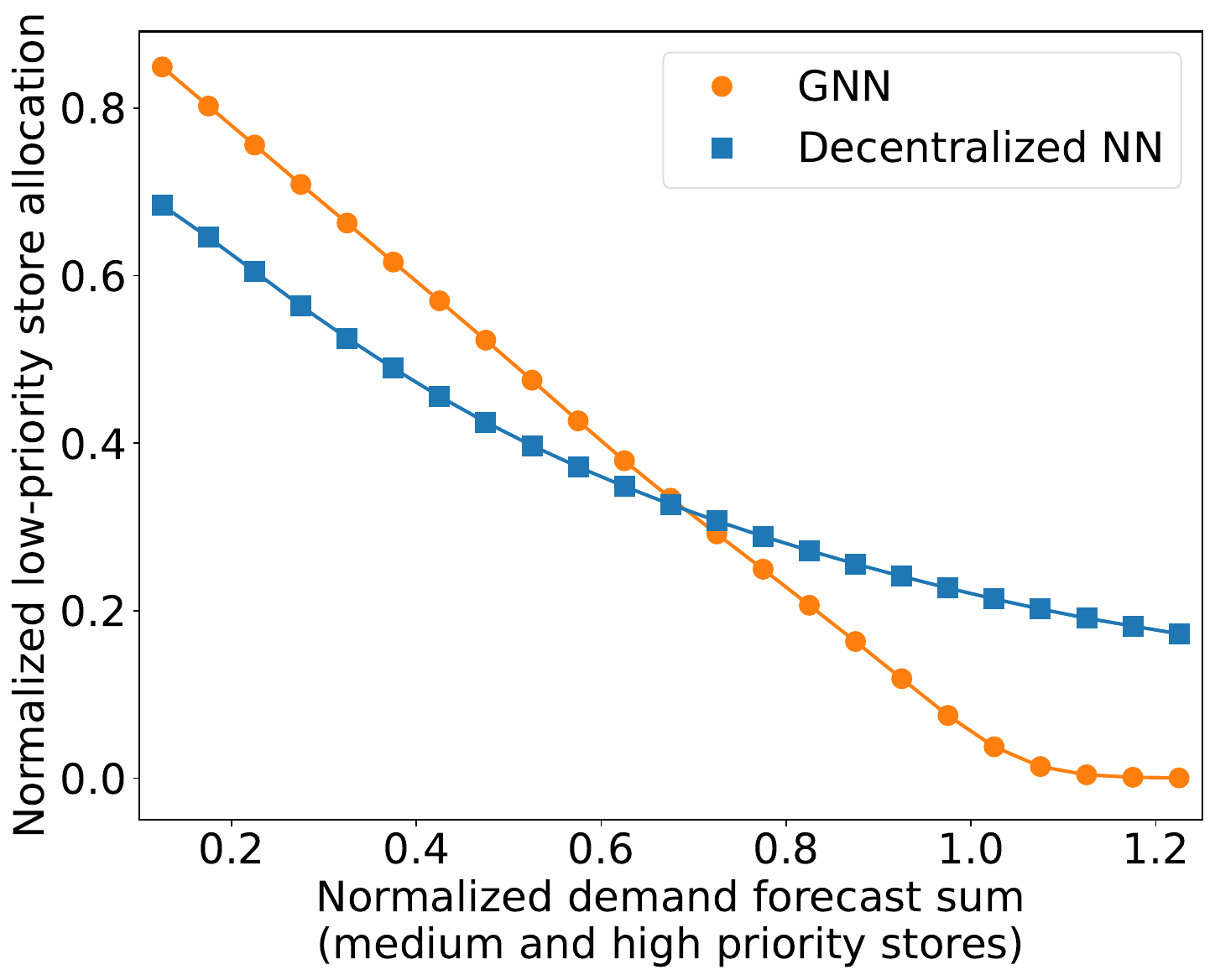}
  \caption{Setting S11: low-priority allocations.}
  \label{fig:value-of-message-passing}
\end{subfigure}
\hfill
\begin{subfigure}{0.48\textwidth}
  \centering
  \includegraphics[width=\linewidth]{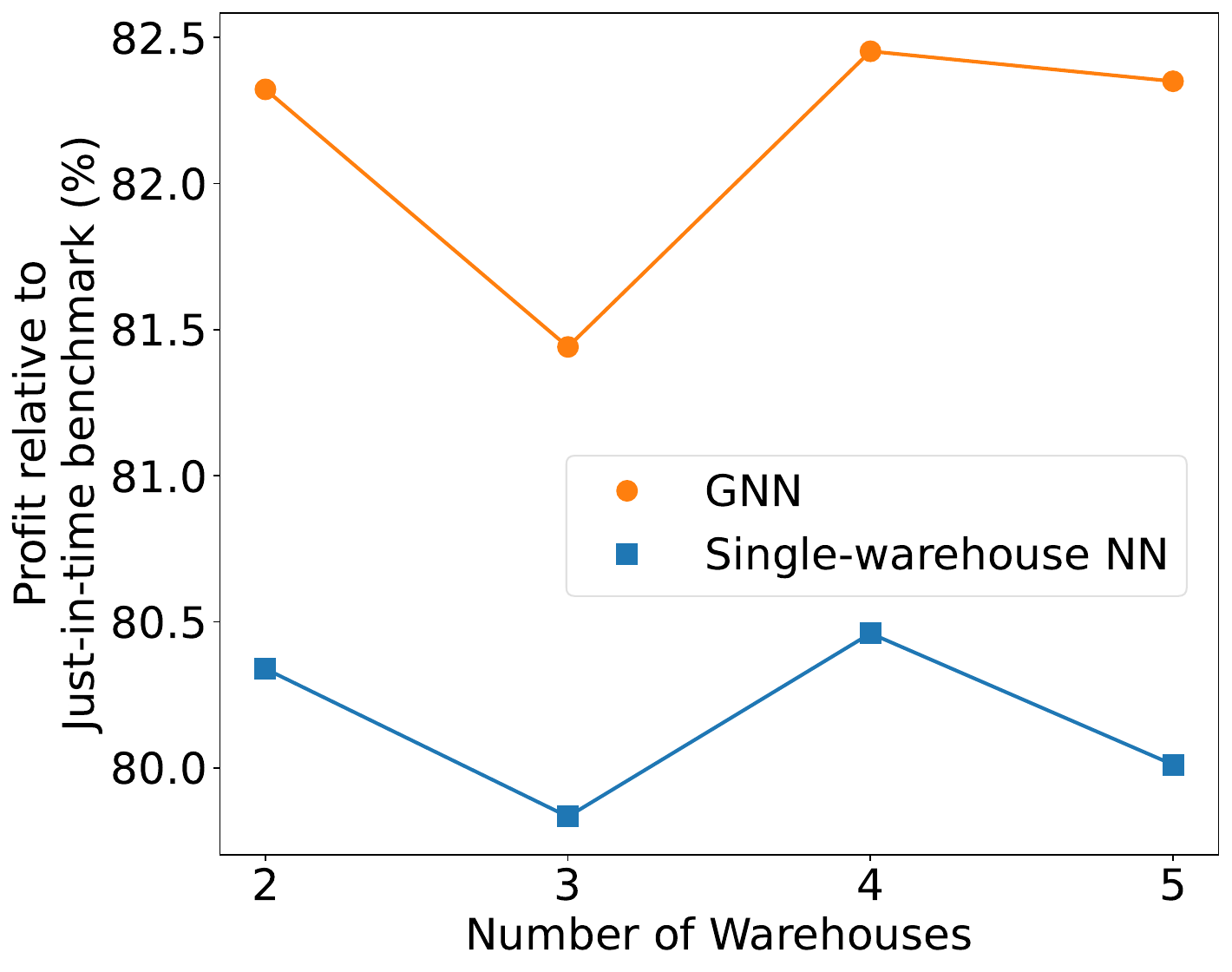}
  \caption{Setting S9: relative profit.}
  \label{fig:value-of-flexibility}
\end{subfigure}

\caption{\textbf{Left:} Allocations to the low-priority store in Setting S11, plotted against the sum of forecasted demand at the medium- and high-priority stores, with both quantities normalized by the warehouse's inventory on hand. \textbf{Right:} Profit relative to Just-in-time policy achieved by the GNN and the Single-warehouse NN in setting S9, across varying numbers of warehouses.}
\label{fig:value-of-experiments}
\end{figure}

\subsubsection{GNN policies  successfully exploit flexibility in the inventory network. \label{sec:value-of-flexibility} }

One of the key advantages of the GNN architecture is its ability to encode and exploit the structure of the inventory network. In this section, we investigate whether the policies induced by GNNs are able to selectively exploit this edge-level flexibility, adapting to the cost and lead-time tradeoffs available in the network.

{\bf Benchmarks.}
We consider setting S9, which corresponds to an MWMS network with realistic demand and a lost-demand assumption. We vary the number of warehouses and evaluate performance on a single meta-instance with heterogeneous parameters across locations.

{\bf Baselines.}
We construct a \textit{Single-warehouse NN} by assigning each store to the warehouse that minimizes expected cost, determined via a preliminary screening step that solves isolated two-node subproblems for each warehouse-store pair. The screening typically selected the warehouse with the lowest edge cost, and occasionally favored the one with the shortest lead time. The resulting static assignments define a simplified policy graph on which a GNN-based policy is trained.

{\bf Experiment specifications.} See Appendix \ref{appendix:value-of-flexibility}

{\bf Results.}
Figure~\ref{fig:value-of-flexibility} shows that the GNN consistently outperforms the Single-warehouse NN, achieving relative performance gains of 2.5\% and 2.9\% for networks with four and five warehouses, respectively. This highlights the GNN’s ability to leverage the flexibility of the full network, in contrast to the fixed-assignment baseline, which cannot adapt to local inventory or demand fluctuations. Since warehouses face no ordering limits and have low holding costs, alternative static assignments are unlikely to yield significantly better performance. Although combinatorial considerations can matter when capacity constraints induce competition across stores, such tensions are minimal in this setting. This supports the view that the GNN’s ability to dynamically adapt assignments based on realized conditions is the primary source of its advantage. These results underscore the value of an integrated policy that reasons jointly across the network.

\subsection{Constructive demonstrations: how GNNs capture optimal policy structure in two stylized inventory control problems.}\label{sec: GNN_constructive}  To further motivate our GNN architecture, we now briefly describe two settings in which our GNN architecture captures the optimal (or near optimal) policy in a simple way. 

{\bf Serial network.} First, we consider a serial network (see Figure \ref{fig:serial-system}) under a backlogged demand assumption, and label nodes as $0,\ldots,K$ from the special node $0$ to the only store $K$. In this setting, an echelon-stock policy is optimal (see Appendix \ref{appendix:echelon-stock-policy} for its definition). In this policy class, the shipment on each edge $(k-1, k)$ depends on the state $S_t$ only via the sum of inventory and outstanding orders across the ``downstream'' locations $k, k+1, \dots, K$ (the \textit{echelon inventory position}), given by $\sum_{i=k}^K(I^i_t + \sum_{l=1}^{\bar{L}^i - 1} Q_t(l))$. This can be captured with $L^{\textup{MP}}=K-1$ iterations of message passing via a simple choice of embeddings and functions in our GNN class. We provide the complete proof in Appendix \ref{appendix:gnn-serial-proof}.

{\bf Stylized OWMS setting.} Now, we consider a stylized example with an OWMS network (see Figure~\ref{fig:one-warehouse-many-stores}), with a warehouse lead time of 1 and a lead time of 0 for all stores, and demand that is correlated across stores and independent across time. Since the dimension of the state and action spaces increases as the number of stores grows, one might expect that the structure of an optimal policy grows increasingly complex. Instead, due to increasingly weak coupling between individual stores, the problem simplifies. \edit{We show that there exists a simple (asymptotically) optimal policy where allocation along any warehouse-to-store edge is determined by a single function applied to local information about the receiving store (its local state and unit economics) and a single global statistic (the total amount of inventory in the network).
Our GNN architecture naturally implements this through one round of message passing: the warehouse aggregates all store inventory levels to compute total network inventory, then propagates this global statistic to each warehouse-to-store edge. A shared readout function can then implement the optimal policy across all edges. 
Iterative message passing is important here: we conjecture that decentralized policies (such as the Decentralized NN described in Section \ref{sec:value-of-message-passing}) are generally suboptimal due to their inability to track total inventory in the network, and would incur an increase in expected cost which does not vanish with $K$. See Appendix \ref{appendix: proof of main theorem} for a description of the setting, our formal result characterizing a near optimal policy, and how the policy can be captured by a simple instantiation of our GNN architecture.}

\section{Conclusion 
\label{sec:conclusion} }

We begin with a thorough discussion of our modeling assumptions in Section~\ref{sec:discussion-on-main-assumptions}, and conclude in Section~\ref{sec:summary-and-outlook} by summarizing our findings and outlining directions for future research.

\subsection{Discussion on main assumptions 
\label{sec:discussion-on-main-assumptions} }

Here, we justify our main assumptions, building on the motivation outlined in Section~\ref{sec:scope-and-modeling-assumptions}.

{\bf Continuity of costs, transitions, and policy.} Our model treats ordering quantities as continuous variables and assumes smoothly varying costs, enabling the use of pathwise gradients. While this departs from classical settings with fixed ordering costs (where well-performing policies  often exhibit large discontinuities in the state, it is relevant for large retailers that receive shipments consisting of many products on a fixed schedule. We discuss possible extensions to discontinuous settings in Section~\ref{sec:summary-and-outlook}.

{\bf Differentiability in networked settings.} Even in complex networks involving shared resources, the cost and transition functions remain smooth under appropriate modeling choices. We achieve this by incorporating FELs (see Section~\ref{section:action-and-feasible-allocations}) that enforce constraints in a differentiable manner. This enables training policies that produce feasible actions end-to-end using pathwise gradients.

{\bf Uncensored demand observations and "backtestability".} Our approach relies on the key feature that the stochastic process is exogenous and observable, which allows for evaluating the performance of any policy using historical data. In particular, we assume access to a dataset of uncensored demand observations. In practice, one often only observes sales data, which reflects censored demand. Still, this assumption is common in the literature \citep{gijsbrechts2021can, xin2021understanding}, as it avoids the need to model uncertainty due to partial observability.
Alternatively, HDPO can be viewed as operating after a preprocessing step that imputes demand from sales and auxiliary signals. In Appendix \ref{appendix:censored-demand}, we justify this treatment by presenting: (i) evidence that de-censoring techniques allow HDPO to perform well even on small, censored datasets, and (ii) initial results suggesting that HDPO-trained neural policies are as robust to demand misspecification as classical heuristics. We also offer a brief tutorial on the challenges of censored data, explore possible solutions, and assess the performance impact of imperfect de-censoring.

{\bf Scope and extensions.} Our model and experiments focus on a subset of structural features commonly studied in the operations literature, including fixed lead times and single-product inventory. This setup facilitates benchmarking against alternative heuristic policies and enables clear performance insights. Nonetheless, our approach can readily accommodate additional complexities—such as stochastic lead times and random yields—through suitable redefinitions of the state, transition, and cost functions, as long as the problem retains the properties outlined in Section~\ref{sec:HD-structure}. Multi-product extensions are also feasible, though they may require additional development of FELs and a revised definition of scenario data. 

\subsection{Summary and Outlook \label{sec:summary-and-outlook}}

Previous research applying DRL to inventory management has largely relied on generic algorithms such as REINFORCE and standard NN architectures. While these approaches can often be made to work, \citet{gijsbrechts2021can} note that “initial tuning of the hyperparameters is computationally and time intensive and requires both art and science.” In contrast, we systematically investigate two directions for improving the reliability and efficiency of Deep RL in this domain.

The first is HDPO, a method that trains parameterized policies using low-variance pathwise gradients by leveraging key problem features. We demonstrate that even a simple “Vanilla” implementation of this approach can recover near-optimal policies across four inventory control settings, despite relying solely on raw state inputs. Experiments with real-world time-series data highlight the benefits of optimizing over a flexible policy class in an end-to-end manner, compared to common alternatives such as newsvendor-type policies.

The second is a policy network architecture based on GNNs, which we find significantly improves sample efficiency across a diverse set of inventory control settings. We identify its structured design and weight sharing as key enablers of this improved efficiency, and message passing as a mechanism that provides representational power despite its semi-decentralized structure. We validated this hypothesis through several experiments.

As a final contribution, we develop a comprehensive suite of benchmarks that span a variety of network topologies, demand processes, and structural assumptions. To support future work and promote reproducibility, we make our code publicly available.

This paper suggests several promising directions for future work. First, one might explore additional operations problems amenable to HDPO. Second, biased yet practical relaxations—such as Gumbel-softmax \citep{maddison2016concrete, jang2016categorical} and straight-through estimators \citep{bengio2013estimating}—enable the use of pathwise gradients even in discontinuous settings. These methods have shown success in related domains like queueing control \citep{che2024differentiable}, suggesting potential extensions of our approach to fixed-cost inventory problems.
In a similar vein, hybrid methods that combine score-function and pathwise gradients could support joint optimization over discrete and continuous decisions. Finally, GNNs have been shown to generalize across unseen topologies in other domains. Developing architectures that transfer across inventory networks would be practically valuable, enabling the use of Deep RL in settings where the network evolves over time (e.g., when new stores are added after training).

{\bf \large Acknowledgment} 

After beginning our numerical experiments, we learned of the work of \cite{madeka2022deep}. We thank them for their open and collaborative outlook, and their very helpful feedback.

\newpage

\bibliographystyle{informs2014}
\bibliography{references}

\newpage
 
\renewcommand{\theHchapter}{A\arabic{chapter}}
\renewcommand{\theHsection}{appendix.\arabic{section}}  

\begin{APPENDIX}{}
\section*{Outline of the appendix}

We divide the appendix into four main sections. Appendix~\ref{appendix:core-ideas} elaborates on key components of our approach, including an extended literature review and an overview of the computation of zeroth-order gradient estimators. Appendix~\ref{appendix:baselines} describes the baselines used throughout the paper and explains how to compute the optimum or lower bounds for the settings in Section~\ref{subsec:optimal_exps}. In Appendix~\ref{appendix:implementation-and-experiments}, we detail our model implementation, hyperparameter tuning process, and numerical results. \edit{Finally, Appendix~\ref{appendix: proof of main theorem} provides a full proof of the asymptotic optimality of a GNN policy with one message-pass for a stylized setting involving one warehouse and a growing number of stores.}

\section{Core ideas \label{appendix:core-ideas}}

This section provides detail on several key components of our approach. Appendix~\ref{appendix:extended-lit-review} extends the literature review from Section~\ref{subsec: further-related-literature} by discussing additional machine learning methods relevant to our work. Appendix~\ref{appendix:model} gives a detailed formulation of inventory control problems in networks. Appendix~\ref{appendix:score-function-gradient} explains the computation of zeroth-order gradient estimators. Appendix~\ref{appendix:hdpo} presents the full HDPO algorithm, and Appendix~\ref{appendix:feasibility-enforcement} details the feasibility enforcement layers (FELs) used throughout our work. Appendix~\ref{appendix:vanilla-architecture} provides details on the Vanilla NN architecture design. Lastly, Appendix~\ref{appendix:gnn-serial-proof} contains the proof characterizing the optimality of message-passing policies in serial networks. The proof of the asymptotic optimality of GNNs for a setting with one warehouse and multiple stores is provided separately in Appendix~\ref{appendix: proof of main theorem} due to its length.

\subsection{Extended literature review \label{appendix:extended-lit-review}}

Here, we extend the literature review introduced in Section~\ref{subsec: further-related-literature}.

\paragraph{\bf \edit{Graph NNs in Reinforcement Learning.}}

GNNs have demonstrated success in supervised learning applications, including drug discovery \citep{gilmer2017neural}, social network analysis \citep{hamilton2017inductive}, and recommender systems \citep{ying2018graph}. However, applications to reinforcement learning (RL) are far more limited. Most existing works leverage GNNs to encode representations of high-dimensional states for use in policy-based \citep{park2021learning, hameed2023graph}, value-based \citep{almasan2022deep, kotecha2024leveraging}, or model-based approaches \citep{jiang2018graph}.

Our work instead focuses on network control problems where actions spaces are also very high dimensional. Specifically, we examine how GNN policies can automatically balance the decomposition and coordination of decisions across many network components. This focus has conceptual parallels to \cite{wang2018nervenet}, who used graph representations of an agent's physical structure (modeling body parts as nodes and joints as edges) to parameterize structured policies for simultaneous control. While our work differs in using pathwise gradients and addressing a different application domain, the core insight about structured coordination remains similar.

The work of \cite{kotecha2024leveraging} also applies GNNs to multi-agent inventory control but diverges significantly in methodology. They focus on \textit{decentralized-execution} policies where agents make decisions based solely on local information, using GNNs as value function architectures for centralized training. In contrast, our approach emphasizes coordination through message passing within the GNN policy itself.

\paragraph{\bf Other DRL methods in inventory management.}

To our knowledge, \cite{van1997neuro} were the first to apply DRL to inventory management. They introduce the notions of a ``pre-decision-state'' and ``post-decision-state'', which is the state of the inventory system before and after an order is placed.  See \cite{powell2007approximate} for further discussion. They  apply a specialized approximate policy iteration algorithm which approximates the cost-to-go from a ``post-decision-state'' by applying a two layer NN and hand-crafted features of the state. Recent successes of  DRL in various domains have sparked renewed interest in applications to inventory management.

Recent papers of \cite{oroojlooyjadid2022deep} and \cite{shar2023weakly} apply variants of  the deep Q-networks algorithm \citep{mnih2015human} to inventory management problems. \cite{oroojlooyjadid2022deep} extends deep Q-networks \citep{mnih2015human} to address the classical beer game. Their data-driven approach demonstrates the ability to recover near-optimal policies in a simplified setting and significantly outperforms simple heuristics when considering complicating problem features. \cite{shar2023weakly} introduce weakly coupled deep Q-networks, leveraging a Lagrangian-decomposition-based upper bound to accelerate the training of the Q-function approximation. Applied to an inventory control problem with multiple products and exogenous production rates, their approach demonstrates superiority over classical value-based methods. Our work deviates from this literature stream as HDPO focuses on directly approximating a policy rather than a value function.

\paragraph{\bf Multi-agent RL approaches in inventory management.}
In recent efforts to address inventory problems with a network structure, actor-critic multi-agent RL (MARL) techniques have been explored \citep{liu2022multi, ding2022multi}. Although our work has a loose connection to MARL, we take a distinct approach by considering a "central planner" who optimizes a system comprising many "weakly coupled" components. This approach avoids the need for sophisticated techniques to handle the flow of information among agents.

\paragraph{\bf{Other machine learning approaches in inventory management.}}
Alternative machine learning approaches, different from DRL, have also been investigated for inventory problems. \cite{qi2023practical} uses supervised learning for a single-location inventory problem with stochastic lead times, employing posterior optimal actions as training labels for a policy NN. Additionally, \cite{harsha2021math} proposed training ReLU-based NNs to approximate value functions, and then utilizing them in a Mixed Integer Program to derive actions.

\subsection{Detailed Inventory Network Formulation \label{appendix:model} }

This appendix presents a complete version of the model introduced in Section~\ref{sec: inventory_network_control}, including explicit definitions of the transition dynamics and cost functions.

The inventory network is modeled as a directed graph $\mathcal{G} = (\mathcal{N}, \mathcal{E})$, where $\mathcal{N}$ is the set of nodes (locations) and $\mathcal{E}$ contains a directed edge for each supplier-receiver relationship. We include a special node, labeled $0$, representing an \textit{external supply source} with unlimited inventory. Any edge of the form $(0, k) \in \mathcal{E}$ indicates that node $k$ can order from this external source. Node $0$ does not hold inventory or make decisions beyond supplying connected nodes. We define $\mathcal{N}_+ := \mathcal{N} \setminus \{0\}$ to represent the set of all physical locations in the network.

Each node $k \in \mathcal{N}_+$ is either a \textit{store}, where demand arises, or a \textit{distribution center} (DC), which supplies inventory to other nodes. This avoids cases in which a location must simultaneously fulfill demand and supply another node. However, the model can be readily extended to allow nodes to serve both roles. We denote the set of stores by $\mathcal{N}_{\text{st}} \subseteq \mathcal{N}_+$ and define DCs as $\mathcal{N}_{\text{dc}} = \{ k \in \mathcal{N}_+ \mid \mathcal{N}_{\text{rec}}^k \neq \emptyset \}$. For each node $k$, we define its set of \textit{suppliers} as $\mathcal{N}_{\text{sup}}^k = \{ j \in \mathcal{N} \mid (j,k) \in \mathcal{E} \}$, and its set of \textit{receivers} as $\mathcal{N}_{\text{rec}}^k = \{ j \in \mathcal{N} \mid (k,j) \in \mathcal{E} \}$.

We assume that each store sells the same product (meaning we do not model cross-product interactions). There are no constraints on storage or transfer quantities. 
All lead times are assumed to satisfy $L^e \geq 2$. Zero lead times allow inventory to flow through multiple edges within a single period, while lead times of up to $1$ period do not necessitate tracking outstanding orders explicitly. Slight adjustments to the formulation are required to handle lead times of zero or one.
Finally, we recall that $[x] = \{1,\ldots,x\}$ and define $[x]_0 = [x] \cup \{0\}$, $(x)^+ = \max\{0, x\}$, and $v(n)$ as the $n$-th entry of a vector $v$.

Each edge $e \in \mathcal{E}$ has a fixed lead time $L^e \geq 2$. For each node $k$, we define $\bar{L}^k = \max_{e = (j,k)} L^e$ as the longest incoming lead time. \edit{The cost structure includes holding costs $h^k$ and underage costs $p^k$ at each node, as well as edge-level costs $\lambda^e$, which represent per-unit shipment costs or procurement costs when the origin node is the outside supplier.}

{\bf State and action spaces.}
In this model, the trace of exogenous outcomes $(\xi_1, \ldots, \xi_T)$, with $\xi_t  = \left( \xi^k_t  \right)_{k \in \mathcal{N}_{\text{st}}}$, represents the sequence of (uncensored) demand quantities at each store across time. The system state is a concatenation $S_t = (S_t^k)_{k \in \mathcal{N}_+}$ of local states
\begin{align}
S_t^k = (I^k_t, Q^k_t, \mathcal{F}_t^k),
\end{align}
where $I^{k}_{t} \in \mathbb{R}$ is the current inventory on hand at location $k \in \mathcal{N}_+$, $Q^k_t$ represents the vector of outstanding orders, and $\mathcal{F}_t^k$ contains forecasting-relevant information.
The vector $Q^k_t$ tracks orders placed in the previous $\bar{L}^k - 1$ periods \edit{that have yet to arrive.}\footnote{We further note that random lead times can be modeled by letting each entry in $Q^k_t$ track orders that have not yet arrived and updating $I^{k}_t$ based on the realized lead time.}.

The forecasting information $\mathcal{F}_t^k$ may encode recent demand as well as exogenous contextual features such as weather.
To ensure that the system is Markovian, $\mathcal{F}_t^k$ must be rich enough\footnote{One can always ensure the Markov property by letting $\mathcal{F}_t^k = (\xi_1^k, \ldots, \xi_{t-1}^k)$ include the full demand history. If the demand process is an $m^{\text{th}}$-order Markov chain, then $\mathcal{F}_t^k = (\xi_{t-m}^k, \ldots, \xi_{t-1}^k)$ suffices.} to capture relevant historical demand information:
\begin{align}
\mathbb{P}\left( [\xi_{t}, \ldots, \xi_{T}] \in \cdot \mid (\mathcal{F}_t^k)_{k \in \mathcal{N}_{\text{st}}} \right)
= \mathbb{P}\left( [\xi_{t}, \ldots, \xi_{T}] \in \cdot \mid (\mathcal{F}_i^k)_{k \in \mathcal{N}_{\text{st}}, i \in [t]}, \xi_1, \ldots, \xi_{t-1} \right).
\end{align}

After observing $S_t$, a central planner must define the \textit{allocations} $a^e_t$ on each edge $e \in \mathcal{E}$. The action space at state $S_t$ is given by:
\begin{align}
\mathcal{A}(S_t) = \{ (a^e_t)_{e \in \mathcal{E}} \in \mathbb{R}_+^{|\mathcal{E}|} \mid \sum_{k \in \mathcal{N}_{\text{rec}}^j} a^{(j,k)}_t \leq I_t^j, \quad \forall j \in \mathcal{N}_{\text{dc}} \}.
\end{align}
This constraint ensures that no DC allocates more inventory than it has available.

The dimension of the state space grows with the number of locations, the maximum lead time on each edge, and the length of the demand forecast window. The dimension of demand realizations (\ie, $\xi_t$) grows with the number of locations, and the dimension of the action space grows with the number of edges.

{\bf Transition functions.}
Following the literature on inventory control, we separately consider two ways in which inventory on hand at each store evolves:
\begin{align}
    I^k_{t+1} &= I^k_{t} - \xi^k_t + Q^k_t(1) & k \in \mathcal{N}_{\text{st}}\, , \quad \text{OR} \label{eq: transition for backlogged} \\
    I^k_{t+1} &= (I^k_{t} - \xi^k_t)^+ + Q^k_t(1) & k \in \mathcal{N}_{\text{st}} \phantom{\quad \text{OR}} \label{eq: transition for lost demand}
\end{align}
Equation~\eqref{eq: transition for backlogged} corresponds to a \emph{backlogged demand} model, where customers wait if the product is unavailable. Equation~\eqref{eq: transition for lost demand} corresponds to a \emph{lost demand} model, where unmet demand is discarded.

Let $\{{\mathbf{e}_i}\}_{i=1}^{\bar{L}-1}$ denote the standard canonical basis for $\mathbb{R}^{\bar{L}-1}$, where $\bar{L} = \max_{(j,k) \in \mathcal{E}} L^{(j,k)}$ is the maximum lead time across all edges in the network.
We now define the remaining transitions:
\begin{align}
    I^k_{t+1} &= I^k_t + Q^k_t(1) - {\textstyle \sum_{(k,j) \in \mathcal{E}}} a^{(k,j)}_t, & k \in \mathcal{N}_{\text{dc}}  \label{eq: transition warehouse} \\
    Q^k_{t+1} &= (Q^k_t(2), \ldots, Q^k_t(\bar{L} - 2), 0)^{\top} + {\textstyle \sum_{(j,k) \in \mathcal{E}}} a^{(j,k)}_t \mathbf{e}_{L^{(j,k)}-1}, & k \in \mathcal{N}_+. \label{eq: transition pipeline}
\end{align}
Note that $Q^k_{t+1}$ is updated by shifting entries one position to the left and then adding each allocation in the position corresponding to one unit less than the associated lead time.

We omit an explicit formula for how forecasting information is updated, as this depends on the specific modeling assumptions.

{\bf Cost functions.}
The per-period cost function is defined as: 
\begin{align}
\label{eq: cost-per-period}
c(I_t, a_t, \xi_t) = \sum_{e \in \mathcal{E}} \underbrace{\lambda^e a^{e}_t}_{\text{cost at edge } e} 
+ \sum_{k \in \mathcal{N}_{\text{st}}} \underbrace{c^k(I^k_t, \xi^k_t)}_{\text{cost at store } k} 
+ \sum_{k \in \mathcal{N}_{\text{dc}}} \underbrace{c^k\left(I^k_t, \sum_{e=(k,j) \in \mathcal{E}} a^{e}_t \right)}_{\text{cost at DC } k}.
\end{align}

Here, $\lambda^e$ represents the per-unit cost of sending inventory along edge $e$. For edges where the special node 0 acts as the supplier (i.e., $(0,k) \in \mathcal{E}$), $\lambda^e$ is interpreted as a procurement cost. 

The cost incurred at store $k$ is given by:
\begin{align}
c^k(I^k_t, \xi^k_t) = p^k(\xi^k_t - I^k_t)^+ + h^k(I^k_t - \xi^k_t)^+,
\end{align}
where the first and second terms correspond to underage and overage costs, respectively. Per-unit underage costs $p^k$ and holding costs $h^k$ may vary across locations.

The cost incurred at DC $k$ is given solely by holding costs:
\begin{align}
c^k\left(I^k_t, \sum_{(k,j) \in \mathcal{E}} a^{(k,j)}_t\right) = h^k \left(I^k_t - \sum_{(k,j) \in \mathcal{E}} a^{(k,j)}_t \right).
\end{align}

\subsection{Score-function gradient estimation
\label{appendix:score-function-gradient}
}

Policy gradient algorithms in the style of REINFORCE \citep{williams1992simple} do not calculate gradients samples as in Algorithm \ref{alg:hdpo_full}, instead attempting to perform gradient based optimization without any knowledge of the model and without assuming any smoothness properties of it.  These algorithms enforce smoothness by employing  randomized policies under which action probabilities are smooth functions of policy parameters.  They derive gradient estimates by utilizing $T$-length sequences of states visited and costs incurred, and differentiating the log-probabilities associated with the sampled action in each observed state.

Zeroth-order estimators of gradients are typically computed as follows. Let $\pi_{\theta}$ be a randomized policy such that, for each $S \in \mathcal{S}$, $\pi_{\theta}(\cdot | S)$ defines a probability distribution over actions $a \in \mathcal{A}(S)$. Define the policy loss $J(\theta) = \mathbb{E}^{\pi_{\theta}} \left[\sum_{t \in [T]} c(S_t, a_t, \xi_t) \right]$, representing the expected cumulative cost when following policy $\pi_{\theta}$.
The REINFORCE algorithm uses the following policy gradient formula
\begin{align}
\label{eq:reinforce-gradient}
    \nabla_{\theta}J(\theta) = \E^{\pi_{\theta}} \left[\left(\sum_{t \in [T]} G_t \frac{\nabla_{\theta} \pi_{\theta}(a_t|S_t)}{\pi_{\theta}(a_t|S_t)} \right) \right],
\end{align}
where $G_t = \sum_{u= t,\ldots, T} c(S_u, a_u, \xi_u)$. 

To obtain an estimator for the gradient of the policy loss, a collection of $N$ trajectories, denoted as $\{\tau^n = (S^n_1, a^n_1, c^n_1, \ldots, S^n_T, a^n_T, c^n_T)\}_{n \in [N]}$, must be generated via policy rollouts following each gradient update. Here, $S^n_t$ represents the state, $a^n_t$ signifies the action sampled from $\pi_{\theta}(\cdot | S^n_t)$, and $c^n_t$ denotes the corresponding incurred cost at time step $t$ within policy rollout $n$. Subsequently, the gradient is estimated as:
\begin{align}
\label{eq:reinforce-estimator}
    \frac{1}{N} \sum_{n \in [N]} \left[\left(\sum_{t \in [T]} G^n_t \frac{\nabla_{\theta} \pi_{\theta}(a^n_t|S^n_t)}{\pi_{\theta}(a^n_t|S^n_t)} \right) \right],
\end{align}
with $G^n_t = \sum_{u= t,\ldots, T} c^n_u$ representing the cumulative cost from time period $t$ onwards for rollout $n$.

This method has at least two substantial drawbacks. First, estimating the policy gradient requires employing that policy for many periods to gather data. This is feasible when a high fidelity simulator is available, but may  be impractical otherwise.  Second, while the estimator \eqref{eq:reinforce-estimator} is unbiased, its variance grows with the time horizon $T$ and can explode if policies become nearly deterministic, since the inverse propensity weights $1/\pi_{\theta}(a^n_t|S^n_t)$ explode. The structure of hindsight differentiable RL problems alleviates these challenges, making it easy to backtest the performance of new polices without additional data gathering and enabling gradient estimation without inverse propensity weighting.

\subsection{HDPO algorithm \label{appendix:hdpo}}
Here, we provide a detailed description of the HDPO algorithm discussed in Section \ref{sec:hdpo}. To simplify the presentation, Algorithm \ref{alg:hdpo_full} demonstrates the forward and backward passes being performed one scenario at a time. However, in our implementation, we execute each pass simultaneously across a batch of scenarios. This is achieved by defining a tensor $\tilde{S}^H_t$ that describes the state across all scenarios within a batch and leveraging PyTorch's parallel computing capabilities to perform all operations in a parallel manner (see Appendix \ref{appendix:global-settings} for a detailed explanation). The algorithm also abstracts the policy update step through a generic function \texttt{UpdatePolicy}, which may correspond to stochastic gradient descent (SGD) or more sophisticated optimizers such as Adam—as used in our experiments.

\begin{algorithm}
\caption{Hindsight Differentiable Policy Optimization}\label{alg:hdpo_full}
\begin{algorithmic}
\Require Historical scenario pairs $(\bar{S}^1_1, \bar{\xi}^1_{1:T}), \ldots, (\bar{S}^H_1, \bar{\xi}^H_{1:T})$, initial policy parameters $\theta \in \Theta$, cost and transition functions $c$ and $f$, gradient-based policy update function \texttt{UpdatePolicy} (\eg Adam).
\While{\textup{not reached termination criteria}}
\State Randomly partition scenario indexes $[H]$ into $n_B$ index batches $(\hat{H}_1, \ldots, \hat{H}_{n_B})$.
\For{index batch $\hat{H} \in (\hat{H}_1, \ldots, \hat{H}_{n_B})$}
    \State $J \gets 0$, $\nabla J\gets 0$ \Comment{initialize batch cost and gradient}
    \For {$h \in \hat{H}$}
        \State $\ell \gets 0$, $S \gets \bar{S}^h_1$ \Comment{initialize cost and initial state}
        \For {$t \in [T]$} \Comment{forward pass to obtain cost on scenario}
        \State $\ell \gets  \ell + c(S, \pi_{\theta}(S), \bar{\xi}^h_t)/T$ \Comment{update cost on scenario}
        \State $S \gets f(S, \pi_{\theta}(S), \bar{\xi}^h_t)$ \Comment{get new state}
        \EndFor
        \State $J \gets J + \ell$ \Comment{update cost across batch}
        \State $\nabla J \gets \nabla J + \nabla_{\theta}\ell$ \Comment{get gradient sample (by backpropagation) and update}
    \EndFor
    \State $\theta \gets \texttt{UpdatePolicy} (\theta, \tfrac{1}{|\hat{H}|} \nabla J)$ \Comment{update parameters based on average sample gradient}
\EndFor
\EndWhile
\end{algorithmic}
\end{algorithm}

\subsection{Feasibility enforcement for neural policy classes}\label{appendix:feasibility-enforcement}

This appendix formalizes the feasibility enforcement mechanism outlined in Section \ref{section:action-and-feasible-allocations}. In our inventory control setting, NN policies must output feasible allocation decisions—i.e., allocations that do not exceed a distribution center's available inventory. We achieve this by structuring the network to output an unconstrained intermediate vector, and then applying deterministic post-processing functions that enforce feasibility.

As motivation, consider the conventional use of NNs in classification tasks. To ensure that the output is a valid probability vector over possible labels, the network typically produces an \emph{intermediate output}—a real-valued vector interpreted as ``logits’’—which is then passed through a softmax function to produce a feasible distribution.

Analogously, our policy maps a state to an intermediate edge output vector $(b^{(j,k)})_{(j,k) \in \mathcal{E}} \in \mathbb{R}^E$. For each distribution center $j \in \mathcal{N}_{\text{dc}}$, a \textit{feasibility enforcement layer} transforms $(b^{(j,k)})_{k \in \mathcal{N}_{\text{rec}}^j}$ into a feasible allocation vector $(a^{(j,k)})_{k \in \mathcal{N}_{\text{rec}}^j}$ that satisfies the inventory constraint:
\[
\sum_{k \in \mathcal{N}_{\text{rec}}^j} a^{(j,k)} \leq I^j.
\]

Formally, an FEL is any function $g$ that maps a DC’s inventory $I^j$ and its intermediate outputs $(b^{(j,k)})_{k \in \mathcal{N}_{\text{rec}}^j}$ to an allocation vector $(a^{(j,k)})_{k \in \mathcal{N}_{\text{rec}}^j} \in \mathbb{R}^{|\mathcal{N}_{\text{rec}}^j|}_+$ obeying the constraint above. The same function $g$ is applied independently for each DC $j$.

We evaluate two FELs:
\begin{align}
g_1(I^j, (b^{(j,k)})_{k \in \mathcal{N}_{\text{rec}}^j}) &= \left[[b^{(j,k)}]^+ \cdot \min\left\{1, \tfrac{I^j}{\sum_{i \in \mathcal{N}_{\text{rec}}^j} [b^{j,i}]^+} \right\}\right]_{k \in \mathcal{N}_{\text{rec}}^j} & (\text{Proportional Allocation}) \label{eq:proportional-allocation}\\
g_2(I^j, (b^{(j,k)})_{k \in \mathcal{N}_{\text{rec}}^j}) &= \left[I^j \cdot \tfrac{\exp(b^{(j,k)})}{1 + \sum_{i \in \mathcal{N}_{\text{rec}}^j} \exp(b^{j,i})}\right]_{k \in \mathcal{N}_{\text{rec}}^j} & (\text{Softmax}) \label{eq:softmax}
\end{align}

While FELs provide a principled way to guarantee valid actions, they are not the only approach. An alternative is to penalize constraint violations during training, encouraging the agent to learn to output feasible actions directly. However, this requires careful tuning of penalty coefficients and does not eliminate the need for projection at test time. In contrast, the explicit enforcement functions described above are simple to implement, perform well empirically across various inventory networks (see Sections~\ref{sec:vanilla-hdpo}--\ref{sec:gnn-section}), and eliminate the need for ad hoc constraint handling. 

Our actual implementations of these FELs may vary slightly from the definitions above. These modifications were introduced to enforce hard constraints in specific settings (e.g., full allocation at transshipment nodes), or to improve the optimization landscape by smoothing gradients or avoiding pathological behaviors. We describe these implementation variants in Appendix~\ref{appendix:fel-implementations}.

\subsection{Vanilla NN architecture design  
\label{appendix:vanilla-architecture}}

The Vanilla NN is a fully-connected multi-layer perceptron (MLP) that maps the full system state—flattened into a single long vector—to an output for each edge in the inventory network. We present a high-level depiction of the Vanilla NN in Figure~\ref{fig:vanilla nn}.

\begin{figure}
    \centering
    \includegraphics[scale=0.6]{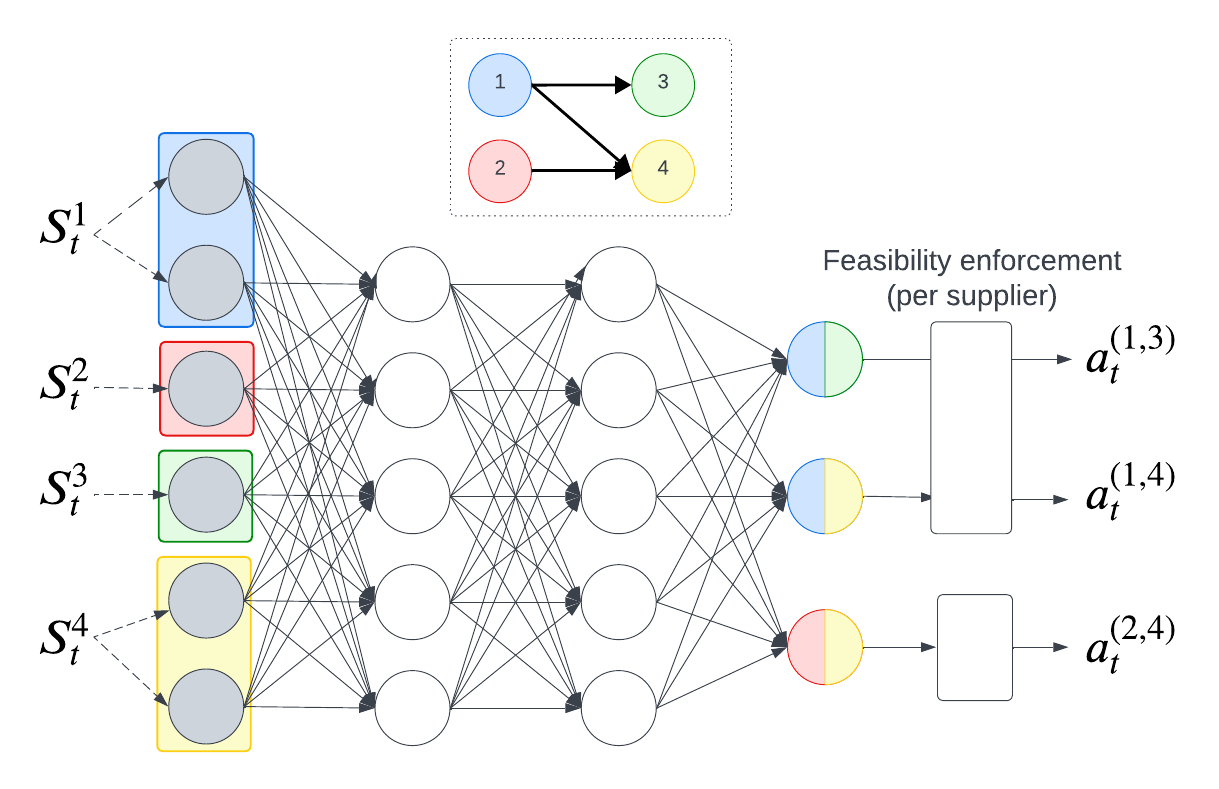}
    \caption{Overview of the Vanilla NN architecture in a setting with four locations. Location 1 supplies locations 3
and 4, while location 2 supplies location 4. For simplicity, we omit edges from the external supply node.}
    \label{fig:vanilla nn}
\end{figure}

\subsection{Proof of how GNNs capture the optimal policy structure in a serial system. \label{appendix:gnn-serial-proof} }

Here, we provide a constructive demonstration of how our proposed GNN architecture can represent the 
optimal policy for the setting of a serial network under stationary demand and a backlogged demand assumption 
(see setting S3 in Table~\ref{tab:opt-exp-settings-summary}). For a serial network with K echelons, an echelon-stock policy (see Equation \ref{eq:echelon-stock-for-serial} in Appendix~\ref{appendix:echelon-stock-policy}) is optimal. 
In what follows, we show how the GNN architecture can elegantly represent policies in this class with K-1 rounds of message-passing.

\edit{Edge embeddings have two coordinates: the first aims to track the echelon inventory position, and the second represents the message to be passed in the next round. They also track the \edit{echelon} base-stock levels of the receiving node, though we suppress this 
\edit{additional coordinate} for clarity in this construction. Let the inventory position at node $k$ be $X^k_t = I^k_t + \sum_{l=1}^{\bar{L}^k - 1} Q_t^l$. Set $\texttt{EmbedNode}(S^k_t) = X^k_t$ and $\texttt{EmbedEdge}(h^j, h^k, L^{(j,k)}) = (h^k, h^k)$, so that each edge initially receives two copies of the inventory position of its downstream node. Define $\texttt{UpdateNode}(h^k, z_{\textup{in}}, z_{\textup{out}}) = z_{\textup{out}}(2) - h^k$, \edit{so that the new embedding node of $k$, after a round of message passing, is $h^k + z_{\textup{out}}(2) - h^k = z_{\textup{out}}(2)$, i.e., the message received from the edge downstream of $k$ which it needs to send upstream}. 
Let $\texttt{UpdateEdge}(h^{(j,k)}, h^j, h^k) = (h^k, h^k - h^{(j,k)}(2))$. This update scheme ensures that, in each round, the second coordinate of the embedding acts as a message that carries the inventory position of a downstream node one step upstream. Node embeddings are not used for any direct decision-making; their sole purpose is to store and forward these messages. (Note that the store node receives $z^{K}_{\textup{out}} = (0, 0)$ in every round.) After $K-1$ rounds of message passing, the first entry of each $h^{(k-1,k)}$ equals the echelon inventory position at $k$, and a suitable $\texttt{Readout}$ function can implement the echelon stock policy at each $k$.}

\clearpage

\section{Description of baselines
\label{appendix:baselines}}

In this section, we describe the baselines and lower bounds used across all experimental settings. We highlight the modeling assumptions under which a known policy is optimal, and summarize the problem features that motivate the use of specific heuristic baselines. To streamline notation, we omit edge-specific superscripts in settings involving a single decision location. Table~\ref{tab:all-settings-summary} in Appendix~\ref{appendix:settings-summary} provides a summary of the structural features for each setting considered.

\subsection{Base stock policy (optimal for setting S1)
\label{appendix:base-stock-policy}}

A base-stock policy aims to maintain a target inventory position by ordering enough in each period to raise the current position to a fixed level. This simple structure is optimal in single-store settings with deterministic lead time, i.i.d. demand, linear underage and holding costs, and assuming all unmet demand is backlogged \citep{arrow1958studies} (setting S1).

We define the inventory position at time $t$ as
\begin{align}
\label{eq:inventory-position}
    X_t = I_t + \sum_{l=1}^{L - 1} Q_t(l),
\end{align}
where $L$ denotes the lead time from the outside supplier to the store, $I_t$ is the on-hand inventory, and $Q_t(l)$ denotes the quantity scheduled to arrive in $l$ periods, for $l = 1, \ldots, L{-}1$. The base-stock policy orders the amount needed to raise the inventory position to a constant target level $\hat{S} \in \mathbb{R}$:
\begin{align}
\label{eq:base-stock-policy}
    \pi(S_t) = \left( \hat{S} - X_t \right)^+.
\end{align}

When per-unit underage and holding costs are given by $p$ and $h$, and there are no ordering costs, the optimal base-stock level is
\begin{align}
\label{eq:base-stock-level}
    \hat{S} = \hat{F}^{-1} \left( \frac{p}{p + h} \right),
\end{align}
where $\hat{F}$ is the cumulative distribution of demand over $L + 1$ periods.

\subsection{Capped base stock (baseline for setting S2)
\label{appendix:capped-base-stock-policy}}

The Capped Base-Stock (CBS) policy extends the classical base-stock policy by imposing a cap on the maximum order size. It is defined by two parameters: a base-stock level $\hat{S}$ and an order cap $r$. The policy takes the form:
\begin{align}
\label{capped base stock}
    \pi(S_t) = \min \left\{ \left( \hat{S} - X_t \right)^+, r \right\},
\end{align}
where $X_t$ denotes the inventory position, as in Eq.~\eqref{eq:inventory-position}.

This policy class has been shown to perform particularly well under the structural features of setting S2, which involves a single store, lost sales, and Poisson demand.
In such settings, base-stock policies are no longer optimal for strictly positive lead times, and their performance typically deteriorates as the lead time increases. The optimal policy lacks a simple structure and may depend on the entire inventory pipeline, making exact computation intractable for most instances.

Nevertheless, CBS policies achieve strong empirical performance in this regime. In particular, \citet{xin2021understanding} demonstrate an average optimality gap of $0.71\%$ across the 32-instance benchmark introduced by \citet{zipkin2008old}. We adopt this benchmark as a reference in Section~\ref{subsec:hdrl-and-hdpo-vs-reinforce} and Appendix~\ref{appendix:single-store-lost-demand-exps}, where we compare CBS to our HDPO-based policies.

The lost-demand assumption is especially relevant in brick-and-mortar retail, where studies show that only 9–22\% of customers are willing to delay their purchase when an item is out of stock \citep{corsten2004stock}.

\subsection{Lower bound for setting S2 \label{appendix:one-store-lost-demand-lower-bound} }

We use the cost values reported by \citet{zipkin2008old} as a lower bound for benchmarking. These costs were obtained using value iteration on the exact dynamic program and are reported with two decimal places of precision.

Solving the dynamic program in this setting is computationally intensive, with complexity increasing rapidly in the lead time. As a result, \citet{zipkin2008old} were only able to compute exact costs for relatively short lead times and moderate problem sizes.

\subsection{Echelon stock (optimal for setting S3)
\label{appendix:echelon-stock-policy}}

This baseline corresponds to the optimal policy for Setting S3, which features a serial network topology, normally distributed demand, and a backlogged demand assumption. In this setting, the system consists of $K$ echelons, each comprising a single location. Locations are sequentially numbered from $1$ to $K$, starting from the upstream and progressing towards the downstream. Demand arises solely at the most downstream location (location $K$), which we refer to as the \textit{store}, and any unfilled demand is backlogged. Inventory costs, denoted as $h^k$, are incurred at each location, and there is an underage cost of $p$ per unit at the most downstream location. Each edge $(j,k)$ in the serial network has a lead time $L^{(j,k)} \in \mathbb{N}$. It is assumed that demand is i.i.d. over time.

The sequence of events is as follows: A central planner observes the state $S_t$ at time $t$ and jointly determines the order $a^{(0,1)}_t$ that the first location places with an external supplier with unlimited inventory and, for $k=2,\ldots,K$, specifies the quantity $a^{(k-1,k)}_t$ to transfer from location $k-1$ to location $k$, ensuring that it does not exceed the current inventory level at location $k-1$ (\ie, $a^{(k-1,k)}_t \leq I^{k-1}_t$).

We define the \emph{echelon inventory position} $Y^k_t$ of location $k$ at time $t$ as 
\begin{align}
\label{eq:echelon-position-definition}
    Y^k_t = \sum_{j=k}^{K} X^j_t,
\end{align}
with $X^k_t$ the inventory position of location $k$ as defined in \eqref{eq:inventory-position}. That is, it represents the sum over all inventory positions of downstream locations and its own. As shown by \citet{clark1960optimal}, the optimal policy $\pi$ for a serial system takes the form of an \emph{echelon base-stock policy}. Let $Y_t^k$ denote the echelon inventory position at node $k$, and $\hat{S}^1, \ldots, \hat{S}^K$ the corresponding echelon base-stock levels. The policy prescribes actions
\begin{align}
\label{eq:echelon-stock-for-serial}
    a^{(0,1)}_t &= [\hat{S}^1 - Y^1_t]^+ \\
    a^{(k-1,k)}_t &= \min\left\{ I^{k-1}_t,\; [\hat{S}^k - Y^k_t]^+ \right\} \quad \text{for } k = 2, \ldots, K,
\end{align}
which we denote compactly as $a^{(k-1,k)}_t = [\pi(S_t)]_k$ for all $k$.
We note that we will use the terms echelon base-stock levels and echelon-stock levels interchangeably. We refer to \eqref{eq:echelon-stock-for-serial} as an \emph{Echelon-stock policy}.

\subsection{Lower bound for setting S4
\label{appendix:transshipment-setting-lower-bound}}

We provide a lower bound for the setting in which a warehouse (indexed by $w$) operates as a transshipment center—i.e., it cannot hold inventory—and supplies multiple stores indexed by $1, \ldots, K$ under a backlogged demand assumption. Demand is i.i.d.\ across time but may be correlated across stores. Costs arise solely from holding and underage penalties at the stores. We slightly abuse notation by indexing the stores as $1, \ldots, K$ and the warehouse as $w$ to maintain consistency with the notation used in the proof of Theorem~\ref{theorem:1-round-message-passing-policy} in Appendix \ref{appendix: proof of main theorem}.

\citet{federgruen1984approximations} provide a clever method for obtaining a lower bound when demand across stores follows a joint normal distribution and store costs and lead times are identical. To apply their results, we assume all stores share the same underage cost $p$ and lead time $L$, while the transshipment center (warehouse) has lead time $L^w$. Let $\mu^k$ and $\sigma^k$ denote the mean and standard deviation of demand for store $k \in [K]$, and let $\Sigma$ denote the demand covariance matrix, where $[\Sigma]_{ij} = \textup{Cov}(\xi^i_1, \xi^j_1)$.

The \emph{echelon inventory position} at the warehouse is defined as
\begin{align}
\label{eq:echelon-stock-transshipment}
    Y^w_t = \sum_{k \in \mathcal{N}_+} X^k_t,
\end{align}
where $X^k_t$ denotes the inventory position of location $k$ as defined in \eqref{eq:inventory-position}. That is, $Y^1_t$ represents the sum of the inventory position across stores plus its own.

The relaxation allows inventory to flow from stores to the warehouse and across stores. In this relaxed setting, the Bellman equation can be written in terms of the echelon stock $Y^w_t$ alone, leading to an optimal policy of the form
\begin{align}
\label{eq:echelon-base-stock-policy-transshipment}
    a_t^{(0,w)} = [\hat{S}^w - Y^w_t]^+,
\end{align}
where $\hat{S}^w \in \mathbb{R}_+$ is the echelon base-stock level and $a_t^{(0,w)}$ denotes the order placed by the warehouse to the outside supplier (node $0$).

Since we do not consider purchase costs, $\hat{S}^w$ is given analytically by a newsvendor-type formula:
\begin{align}
\label{eq:echelon-base-stock-level-transshipment}
    \hat{S}^w = F_G^{-1}\left( \frac{p}{p + h} \right),
\end{align}
where $G$ is a normal distribution with:
\begin{align*}
    \hat{\mu}_G &= (L^w + L + 1) \sum_{k = 1}^K \mu^k, \\
    \hat{\sigma}_G &= \sqrt{L^w \sum_{i=1}^K \sum_{j=1}^K [\Sigma]_{ij} + (L + 1)\left(\sum_{k=1}^K \sigma^k \right)^2}.
\end{align*}

Letting $\hat{s} = \frac{\hat{S}^w - \hat{\mu}_G}{\hat{\sigma}_G}$ denote the standardized base-stock level, the resulting lower bound on per-period cost is:
\begin{align*}
    p(\hat{\mu}_G - \hat{S}^w) + (p + h) \hat{\sigma}_G \left( \hat{s} \Phi(\hat{s}) + \phi(\hat{s}) \right),
\end{align*}
where $\Phi(\cdot)$ and $\phi(\cdot)$ are the CDF and PDF of the standard normal distribution, respectively.

Note that the bound is not normalized by the number of stores $K$.

\subsection{Generalized Newsvendor Policies (baselines for setting S5) \label{appendix:generalized-newsvendor}}

These heuristics utilize a distribution forecaster trained offline, and aim to "raise" inventory levels to a level dictated by a given quantile. Denoting the distribution of the sum of the next $L+1$ demands, given lead time $L$ and forecasting information $\mathcal{F}_t$, as $H(L, \mathcal{F}_t)$, a \textit{generalized newsvendor policy} $\pi$ takes the form
\begin{equation}
\label{eq: quantile-policy}
    \pi(S_t) = \left(H(L, \mathcal{F}_t)^{-1}(\tau^{\pi}) - X_t \right)^+,
\end{equation}

where $X_t$ is the inventory position (see Equation~\eqref{eq:inventory-position}), and $\tau^{\pi}$ represents a time-invariant quantile, with the flexibility to depend on \edit{instance parameters} (e.g., underage and holding costs) to accommodate heterogeneity across products.

The rationale for adopting this class of policies stems from the notion that, assuming that unmet demand is backlogged, the inventory on-hand before the order placed at time $t$ arrives can be expressed as $I_{t + L} = X_t - \sum_{l=0}^{L - 1}\xi_{t + l}$. Hence, under the backlogged demand assumption, the action minimizing the expected cost $L$ periods into the future can be obtained as a quantile of the distribution of the sum of the next $L+1$ demands. Adjusting this quantile has the potential to account for variations in product economics (underage and holding costs) and underlying model misspecifications.

We trained a NN to forecast multiple quantiles for this metric using historical features. This allows us to approximate $H(L, \mathcal{F}_t)$ and hence estimate the specified quantiles. Additional information about the implementation and performance evaluation of the quantile forecaster can be found in Appendix \ref{appendix: quantile-forecaster}.

We considered several generalized newsvendor policies, which are equipped with the same quantile forecaster and have access to the problem primitives in each scenario. They only differ in the heuristic 
choice of the quantile $\tau^{\pi}$:

\begin{itemize}
    \item Newsvendor: Places orders up to the newsvendor quantile, given by $\frac{p}{p + h}$.
    \item Fixed Quantile: Considers a common quantile $\tau^{\pi}$ for all scenarios, with the quantile being a trainable parameter.
    \item Transformed Newsvendor: Utilizes a NN to flexibly learn a map from the newsvendor quantile $\frac{p}{p + h}$ to a new quantile.
\end{itemize}

For the last two heuristics, we optimize the parameters defining the policy by integrating the forecaster (with fixed parameters)  into our simulator. Employing linear interpolation across the predicted quantiles enables us to optimize the parameters in a differentiable manner. When predicting values outside the range of predicted quantiles, we employ linear interpolation using the slope of the nearest quantile range. For example, to predict an extremely low quantile, we interpolate with the same slope as the line connecting the first and second quantiles. It is worth noting that the vast majority of predicted quantiles fall within the range of predicted quantiles. Finally, it is crucial to highlight that our forecaster is not trained at this stage.

\subsection{Upper bound on profit for settings with realistic demand (S5, S7 and S9) \label{appendix:bound-just-in-time} }

We compute a loose upper bound on achievable profit using a \textit{Just-in-time} oracle policy that has perfect foresight of future demand. In the single-store case, this oracle places orders exactly $L$ periods in advance, ensuring that inventory arrives just in time to satisfy each week's demand. When a store is supplied by a DC, the DC places orders with enough anticipation to cover both its own lead time and the lead time from the DC to the store. In both cases, the oracle ensures that inventory arrives exactly when needed, with no holding or stockout costs incurred.

Because the oracle perfectly matches demand without incurring inventory costs, its profit equals the total revenue from satisfying all demand at prevailing prices. This serves as a natural upper bound for evaluating learned policies.

\clearpage

\section{Implementation details and numerical experiments 
\label{appendix:implementation-and-experiments}}

This appendix provides detailed implementation and experimental specifications for all results presented in the main text. Appendix~\ref{appendix:settings-summary} summarizes the key structural features that define each inventory control setting, including network topology, demand model, and assumptions regarding unmet demand. Appendices~\ref{appendix:realistic-demand-dataset} and~\ref{appendix:many-warehouse-many-store-placement} describe the realistic demand data and spatial configurations used in experiments involving many-warehouse many-store networks.

We then detail the architectural and implementation choices underpinning our numerical results. Appendix~\ref{appendix:fel-implementations} introduces implementation variants of the feasibility enforcement layers used to ensure valid allocations. Appendix \ref{appendix:gnn-architecture} gives implementation details for the GNN architecture. Appendix~\ref{appendix:nn-inputs} specifies the input features provided to the neural policies.
Next, we outline the global experimental setup (Appendix~\ref{appendix:global-settings}) and the construction of the quantile forecaster used in baseline heuristics (Appendix~\ref{appendix: quantile-forecaster}).

The remaining appendices report numerical specifications and results for each group of experiments. Appendix~\ref{appendix:optimal-exps} provides details for the experiments in Section~\ref{subsec:optimal_exps}, where we benchmark against known or tightly bounded optima (settings S1–S4). Appendix~\ref{appendix:censored-demand} assesses the impact of operating with censored demand data, outlines potential approaches to address it, and justifies the assumptions made in our work. Appendix~\ref{appendix:vanilla-hdpo-sample-efficiency} evaluates the sample efficiency of the Vanilla HDPO policy in a single-store setting (setting S2), while Appendix~\ref{appendix:one-store-real-setting} presents results for single-store problems with realistic demand (setting S5). Appendix~\ref{appendix:sample-efficiency-results} provides specifications and detailed results for the large-scale sample efficiency experiments in Section~\ref{sec:sample-efficiency-results}, covering settings S3, S4, and S6–S9. Finally, Appendix~\ref{appendix:value-of-experiments} describes the experimental design for the ablation studies presented in Sections~\ref{sec:value-of-weight-sharing},\ref{sec:value-of-message-passing}, and \ref{sec:value-of-flexibility}.

{\bf Terminology and reporting conventions.}
Below we define common terms used throughout our experimental setup and architectural descriptions
\begin{itemize}
    \item Hyperparameter: A parameter that is set before the learning process begins and affects the model's behavior or performance.
    \item NN architecture class: Overall topology and organization of the network, which determines how information flows through the network and how computations are performed.
    \item Module: A reusable subnetwork or component within a neural architecture that performs a specific function (e.g., embedding, message passing, readout).
    \item Run: A single execution of HDPO under a fixed setting, meta-instance and set of hyperparameters.
    \item Epoch: A complete pass through the entire training dataset during the training process.
    \item Batch: A subset of the training dataset used for updating model parameters.
    \item Batch size: The number of scenarios included in each batch.
    \item Gradient step: A step taken to update the model's parameters based on the gradient computed over one batch.
    \item Learning rate: A hyperparameter that determines the step size of the gradient step.
    \item Hidden layer: A layer in a NN that sits between the input and output layers and performs intermediate computations.
    \item Unit/neurons: Fundamental unit of a NN that receives input, performs a computation, and produces an output.
\end{itemize}

\subsection{Settings summary \label{appendix:settings-summary}}

Table~\ref{tab:all-settings-summary} provides a comprehensive overview of all experimental settings used throughout the paper. Each setting is defined by its network topology, unmet demand assumption, demand distribution, and structural constraints. The final column lists the sections and appendices where each setting is introduced or analyzed in detail. We use consistent setting identifiers (S1--S11) across figures and tables to aid navigation and comparison.

\begin{table}[h!]
\centering
\caption{Summary of all experimental settings used in the paper.\newline
\footnotesize{Abbreviations: OWMS = One-warehouse–many-stores; MWMS = Many-warehouses–many-stores; MS = Multiple independent stores.}}
\label{tab:all-settings-summary}
\begin{tabular}{@{}llllll@{}}
\toprule
\textbf{ID} &
\shortstack{\textbf{Network} \\ \textbf{Structure}} &
\shortstack{\textbf{Unmet} \\ \textbf{Demand} \\ \textbf{Assumption}} &
\shortstack{\textbf{Demand} \\ \textbf{Dist.}} &
\shortstack{\textbf{Structural} \\ \textbf{Constraints}} &
\textbf{Reference} \\ \midrule

S1 & Single-store & Backlogged & Normal & -- & \ref{subsec:optimal_exps}, \ref{appendix:single-store-backlogged-exps} \\
S2 & Single-store & Lost & Poisson & Discrete allocations & \ref{subsec:optimal_exps}, \ref{subsec:hdrl-and-hdpo-vs-reinforce}, \ref{appendix:single-store-lost-demand-exps}, \ref{appendix:censored-demand} \\
S3 & Serial & Backlogged & Normal & -- & \ref{subsec:optimal_exps}, \ref{sec:sample-efficiency-results}, \ref{appendix:serial-system-exps}, \ref{appendix:sample-efficiency-serial} \\
S4 & OWMS & Backlogged & Normal & Transshipment warehouse & \ref{subsec:optimal_exps}, \ref{sec:sample-efficiency-results}, \ref{appendix:transshipment-warehouse-exps}, \ref{appendix:sample-efficiency-transshipment} \\
S5 & Single-store & Lost & Realistic & -- & \ref{sec:vanilla-hdpo-realistic}, \ref{appendix:one-store-real-setting} \\
S6 & OWMS & Lost & Normal & -- & \ref{sec:sample-efficiency-results}, \ref{appendix:sample-efficiency-owms-synthetic} \\
S7 & OWMS & Lost & Realistic & -- & \ref{sec:sample-efficiency-results}, \ref{appendix:sample-efficiency-owms-realistic} \\
S8 & MWMS & Lost & Normal & -- & \ref{sec:sample-efficiency-results}, \ref{appendix:sample-efficiency-mwms-synthetic} \\
S9 & MWMS & Lost & Realistic & -- & \ref{sec:sample-efficiency-results}, \ref{sec:value-of-flexibility}, \ref{appendix:sample-efficiency-mwms-realistic}, \ref{appendix:value-of-flexibility} \\
S10 & MS & Lost & Poisson & Discrete allocations & \ref{sec:value-of-weight-sharing}, \ref{appendix:value-of-weight-sharing} \\
S11 & OWMS & Lost & Normal & \shortstack{Transshipment warehouse \\ + signaled demand mean} & \ref{sec:value-of-message-passing}, \ref{appendix:value-of-message-passing} \\
\bottomrule
\end{tabular}
\end{table}

\subsection{Realistic demand dataset \label{appendix:realistic-demand-dataset} }

In this section, we provide details on the dataset used for settings with realistic demand, including filtering criteria and data processing steps. We utilize sales data from the \textit{Corporación Favorita Grocery Sales Forecasting} competition \citep{kagglefavorita} hosted on Kaggle. Corporación Favorita, one of Ecuador's largest grocery retailers, operates stores in various formats, including hypermarkets, supermarkets, and convenience stores. The dataset includes around 200,000 daily sales time series for over 4,000 products across 54 physical store locations, spanning the years 2013 to 2018. A key challenge in using this dataset lies in addressing complex nonstationary patterns in the sales data.

We aggregate sales data to a weekly level, treating each time series—corresponding to a particular product at a given store—as an independent demand trace. We focus on a 171-week window from January 2013 to early April 2016, excluding data from mid-April 2016 onward to avoid distortions linked to the major 2016 Ecuador earthquake. Figure~\ref{fig: samples-sales} shows examples of weekly sales for five sample traces and their Gaussian smoothed counterparts, along with aggregate sales across the 32,768 demand traces used in our experiments. The figure highlights the large heterogeneity in both magnitude and seasonal patterns across traces.

\begin{figure}
    \centering
    \includegraphics[scale=0.6, trim={0.8cm 0 0 0}]{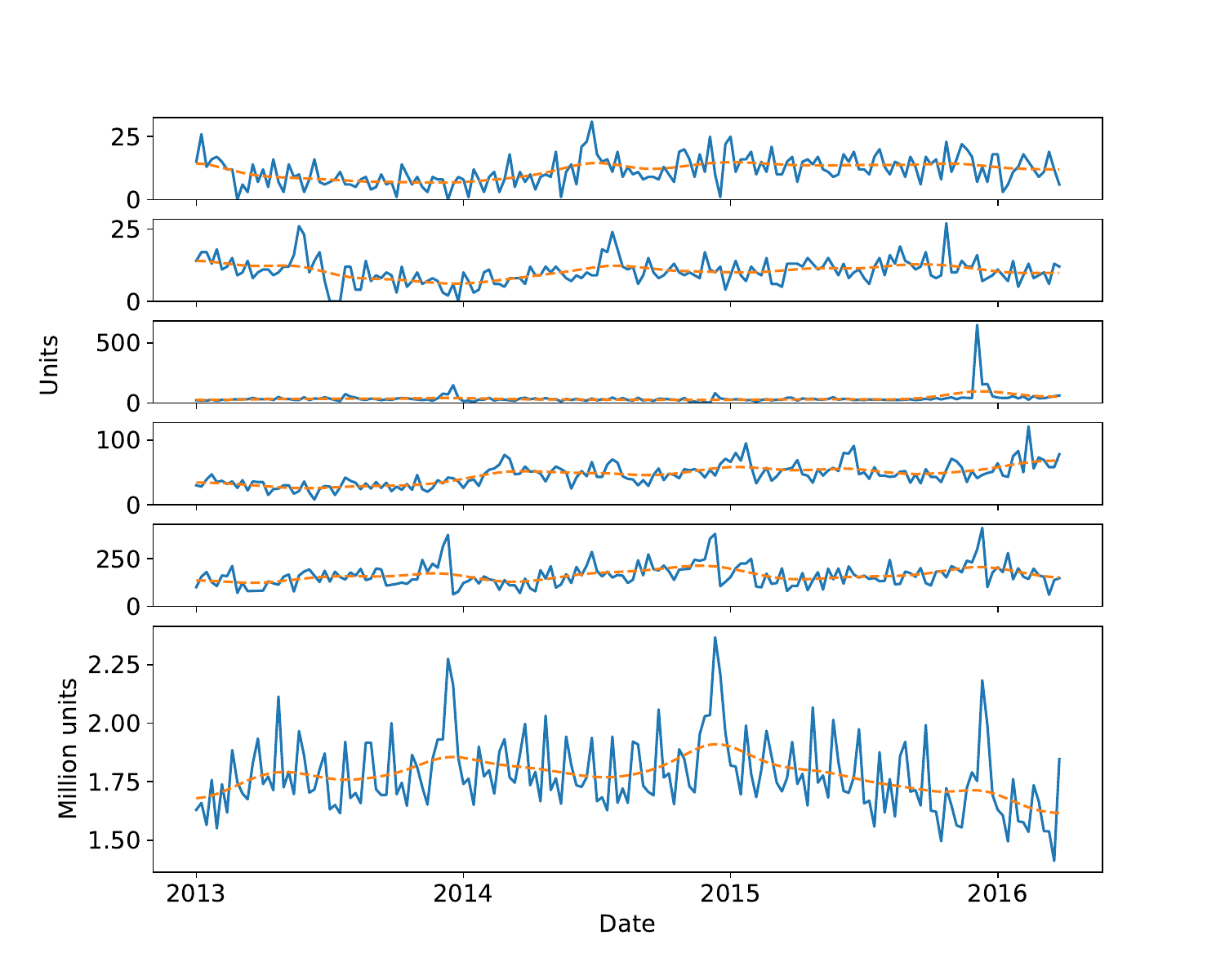}
    \caption{Weekly sales (blue line) and Gaussian smoothing (orange line) for 5 sample paths (first 5 plots) and summed across all 32,768 sample paths (last plot). Data corresponds to a filtered sub-sample of the dataset provided in the \textit{Corporación Favorita Grocery Sales Forecasting} competition.}
    \label{fig: samples-sales}
\end{figure}

As an initial pre-processing step, we discard demand traces that show no sales in the first 16 periods, to ensure meaningful signal in the early part of the trace. Since the dataset includes only sales (rather than true demand), and sales may be censored by stockouts, we attempt to reduce the influence of censoring by filtering out heavily affected traces. Specifically, we exclude any demand trace with zero sales in more than 10\% of the weeks, treating extended zero-sale periods as potential stockouts. Additionally, we remove all perishable products, which are likely to follow different replenishment schedules. From the remaining filtered set, we randomly sample 32,768 demand traces for use in our experiments. While we acknowledge that raw sales data may be an imperfect proxy for demand, we believe this real-world dataset still offers a challenging and realistic testbed for data-driven inventory control.

For single-store settings, we drop all store identifiers and treat each of the 32,768 selected traces as representing an independent product-level demand path. Although multiple traces may originate from the same product across different stores, we treat them independently. For multi-store settings, by contrast, we retain store identifiers and group data by product. A demand trace in this case corresponds to the weekly demand of a single product across all selected stores. We identify 21 high-volume stores and only retain product-level traces for which the demand series is valid across every one of these 21 stores—ensuring data consistency across locations.

Finally, upon inspecting the aggregate sales trends in Figure~\ref{fig: samples-sales}, we observed a clear spike around late December, indicating the strong impact of Christmas. To help the policy account for such demand surges, we include the number of weeks to Christmas as an input feature.

\subsection{Realistic location placement for many-warehouse many-store settings}
\label{appendix:many-warehouse-many-store-placement}
\edit{
Since actual network data is unavailable for scenarios S8 and S9, we employ a heuristic approach to configure their network topologies. The resulting network structures for these scenarios are illustrated in Figures \ref{fig:synthetic_MWMS_network_example} and \ref{fig:realistic_MWMS_network_example}, respectively.

\textbf{Store Placement:} In setting S8, which incorporates synthetic demand, store locations are sampled uniformly at random across a 2D plane. In setting S9, which incorporates realistic demand, stores are positioned at actual city coordinates from the Favorita dataset on the Ecuador map.

\textbf{Warehouse Positioning:} We employ a K-means clustering algorithm with the number of centroids equal to the number of warehouses. Each warehouse is positioned at the location of a resulting centroid. We define each cluster as being \textit{assigned} to its corresponding warehouse, a definition that will be used in the subsequent edge creation step.

\textbf{Edge Creation:} We adopted different approaches for each setting due to the geographical clustering of stores in S9, which would result in an overly sparse network if the same criteria were applied to both settings. We adjusted parameters to achieve networks that are relatively dense while avoiding complete connectivity.
We define $d_{\text{max}}^{w}$ as the maximum distance from warehouse $w \in \mathcal{N}_{\text{dc}}$ to any store $s \in \mathcal{N}_{\text{st}}$ in its assigned cluster. For S8, we use L2 norm distance and connect warehouse $w \in \mathcal{N}_{\text{dc}}$ to store $s \in \mathcal{N}_{\text{st}}$ when $d(w,s) < 1.3 \times d_{\text{max}}^{w}$. For S9, we use haversine distance and connect when $d(w,s) < 1.8 \times \max_w d_{\text{max}}^{w}$.

\textbf{Lead time assignment:} We define a lead time range $[L_{\min}^w, L_{\max}^w]$ for each warehouse $w$, which specifies the minimum and maximum lead times that will depend on the distance to stores. These ranges are specified in Appendix~\ref{appendix:sample-efficiency-mwms-synthetic} and Appendix~\ref{appendix:sample-efficiency-mwms-realistic} for S8 and S9 respectively. 

The lead time for each warehouse-to-store edge, $L_{s}^w$, is calculated as:
\begin{equation}
L_s^w = L_{\min}^w + (L_{\max}^w - L_{\min}^w) \times \min\left(\frac{d(w,s)}{d_{\text{max}}^{w}}, 1.0\right)
\end{equation}

This formula linearly interpolates lead times based on distance: stores closer to the warehouse receive lead times closer to $L_{\min}^w$, while stores at maximum distance $d_{\text{max}}^{w}$ or beyond receive $L_{\max}^w$.
}

\textbf{Other Parameter Settings:} We assign each warehouse a fixed holding cost and fixed edge costs to all connected stores (meaning they do not depend on lead time). This assumes that variable costs per unit primarily depend on handling and packaging, with distance-based transportation costs being negligible. We believe this is a reasonable approximation for a small country like Ecuador.
The procedures for sampling demand parameters in scenario S8 follow those of other scenarios (see Appendix \ref{appendix:sample-efficiency-mwms-synthetic}).

\begin{figure}[htbp!]
\centering
\captionsetup[subfigure]{justification=centering}
\begin{subfigure}[t]{0.49\textwidth}
  \centering
  \includegraphics[width=\linewidth,height=9cm,keepaspectratio]{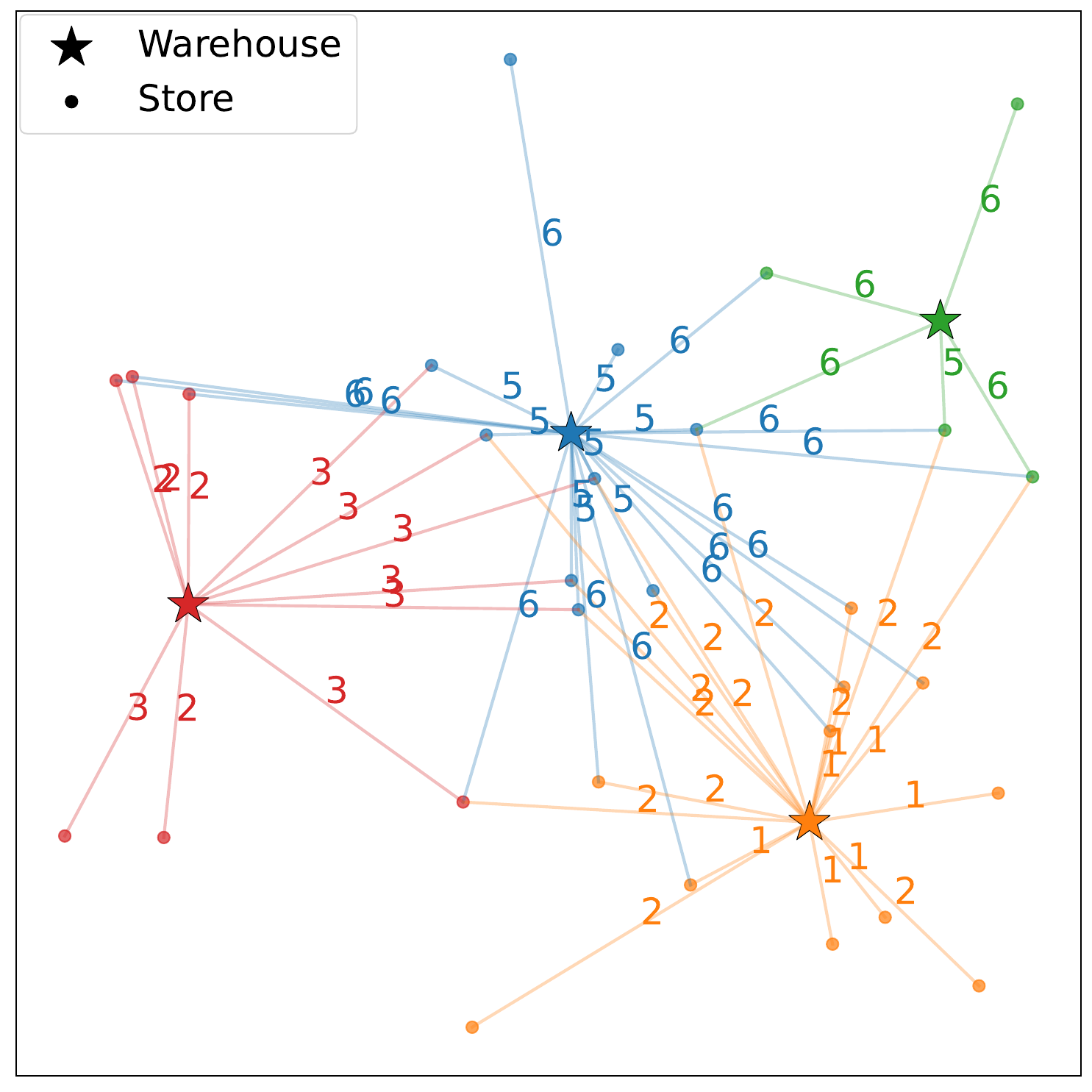}
  \caption{Example network configuration for Setting S8 with 4 warehouses and 30 stores.}
  \label{fig:synthetic_MWMS_network_example}
\end{subfigure}
\hfill
\begin{subfigure}[t]{0.49\textwidth}
  \centering
  \includegraphics[width=\linewidth,height=8.11cm,keepaspectratio]{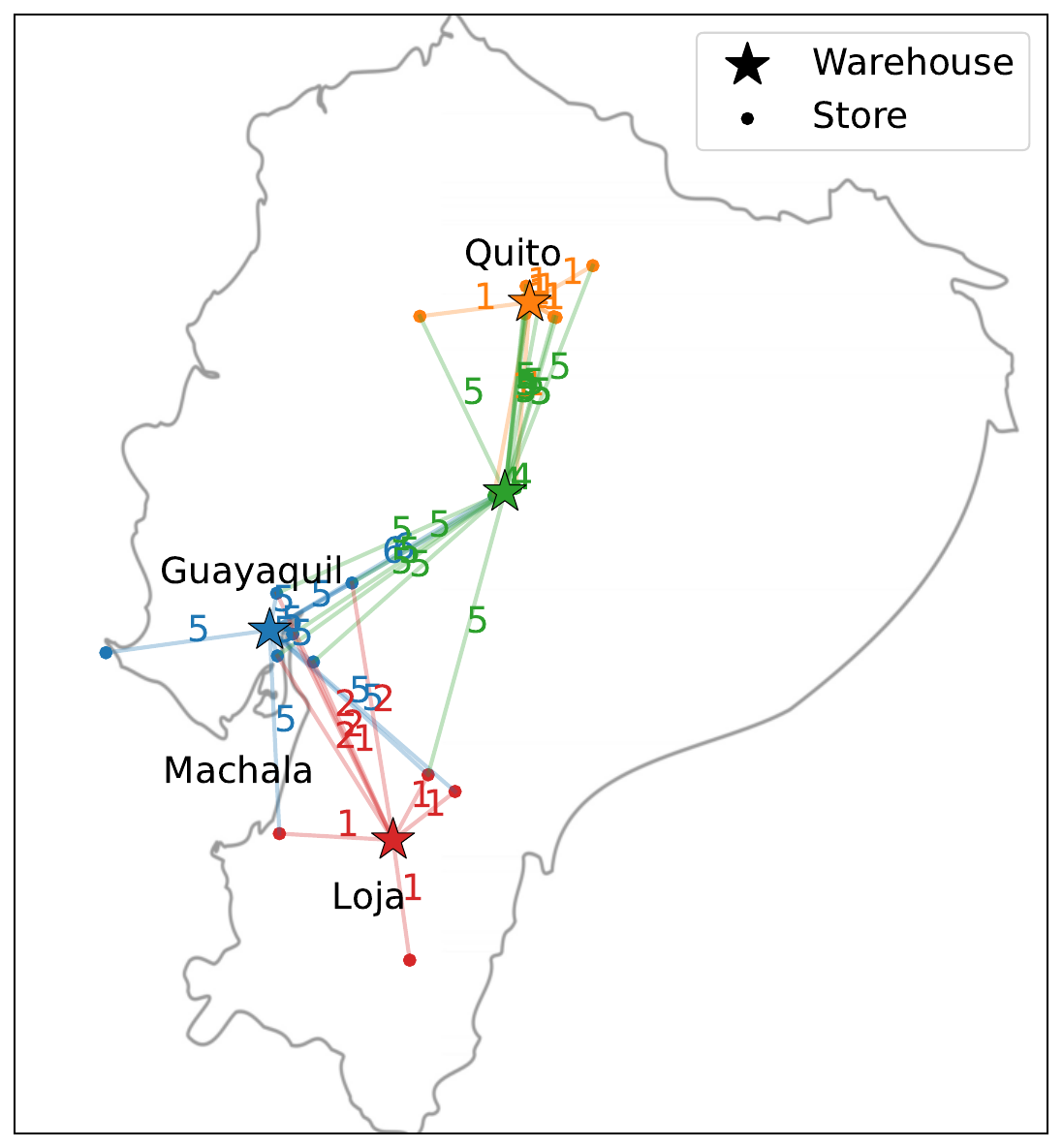}
  \caption{Example network configuration for Setting S9 with 4 warehouses and 21 stores.}
  \label{fig:realistic_MWMS_network_example}
\end{subfigure}
\caption{Examples of network configurations used for Settings S8 (left) and S9 (right). Each color indicates the cluster group associated with each warehouse. Each number above an edge denotes the lead time from the corresponding warehouse to the store.}
\label{fig:network-configurations}
\end{figure}

\subsection{Implementation Variants of Feasibility Enforcement Layers}
\label{appendix:fel-implementations}

Here we elaborate on the implementation of FELs used in our experiments. We designed several variants of the FELs introduced in Appendix~\ref{appendix:feasibility-enforcement}, tailoring them to enforce constraints in specific settings—such as ensuring full inventory allocation at transshipment warehouses—and to improve the optimization landscape.

Although HDPO is generally reliable, we observed that it occasionally converged to poor local minima in some instances. We believe certain FEL modifications helped alleviate this issue by shaping the optimization landscape more favorably and preventing gradients from vanishing.

For the GNN policy architecture, we use the same FEL across all settings—$g_{1b}$, described below—except when transshipment warehouses are present, in which case we apply a full-allocation variant. For the Vanilla architecture, we experimented with several FELs, but always used a consistent choice within each setting. That is, for each architecture-setting pair, the same FEL is applied across all instances.

We describe the FEL variants used below:

\begin{itemize}
    \item $g_{1a}$ (\textbf{Full Allocation Proportional}):  
    This variant ensures that a DC allocates \emph{all} of its on-hand inventory:
    \begin{equation}
        g_{1a}(I^j, (b^{(j,k)})_{k \in \mathcal{N}_{\text{rec}}^j}) = \left[ [b^{(j,k)}]_+ \cdot \frac{I^j}{\sum_i [b^{j,i}]_+} \right]_{k \in \mathcal{N}_{\text{rec}}^j}
        \label{eq:full-proportional}
    \end{equation}
    This is used in settings involving transshipment warehouses, where holding inventory is infeasible and full allocation must be enforced.

    \item $g_{2a}$ (\textbf{Full Allocation Softmax}):  
    This variant also ensures that all of a DC’s inventory is allocated and is defined as:
    \begin{equation}
        g_{2a}(I^j, (b^{(j,k)})_{k \in \mathcal{N}_{\text{rec}}^j}) = \left[ I^j \cdot \frac{\exp(b^{(j,k)})}{\sum_i \exp(b^{j,i})} \right]_{k \in \mathcal{N}_{\text{rec}}^j}
        \label{eq:full-softmax}
    \end{equation}

    \item $g_{1b}$ (\textbf{DC-Bid Proportional Allocation}):  
    A variant of $g_1$ that includes an additional logit $b^{j,j}$ for each DC $j$, representing the DC's own “bid” to retain inventory. This output is appended to the logits sent to the FEL and included in the normalization, even though its corresponding allocation $a^{j,j}$ is discarded after the FEL. This design helped the policy avoid pathological cases where increasing allocation toward a DC simply triggered more downstream flow without allowing the DC to retain inventory. The FEL is equivalent to Equation~\eqref{eq:proportional-allocation}, applied over the augmented vector $\left((b^{(j,k)})_{k \in \mathcal{N}^j_{\text{rec}}}, \{b^{j,j}\}\right)$. We detail the necessary corresponding adjustments for the GNN architecture in Appendix \ref{appendix:gnn-architecture}.

    \item $g_{2b}$ (\textbf{Extended Softmax}):  
    A modification of $g_2$ designed for Vanilla architectures. In addition to applying softmax at standard DCs, we also apply it to the outside supplier. For the outside supplier, we replace the inventory on hand $I^j$ in Equation~\eqref{eq:softmax} with a scalar upper bound on feasible order quantities. The application remains unchanged for DCs. This approach was only used for the Vanilla NN and improved training stability in some settings.
\end{itemize}

We provide specific implementation choices for each experiment in the corresponding subsections.

{\bf Additional Layers Used.}
For FELs that treat the intermediate outputs $(b^{(j,k)})$ as allocation amounts (\ie $g_{1a}$ (Full Allocation Proportional), $g_{1b}$ (DC-Bid Proportional), and $g_1$ (Proportional Allocation)), we apply a softplus transformation followed by the addition of a small positive constant before passing the values to the FEL. This ensures that all inputs to the allocation function are strictly positive, preventing negative allocations while maintaining differentiability. We use softplus instead of ReLU to avoid regions with zero gradient, which can impede optimization. The bias term further shifts the outputs away from zero, helping to prevent initialization in flat regions of the loss landscape. The precise values used are detailed in Appendix \ref{appendix:global-settings}.

\subsection{Implementation details for GNN architecture 
\label{appendix:gnn-architecture}}

Here, we provide further details on our specific implementation of the GNN architecture described in Algorithm \ref{alg:samp_gnn}.

\begin{itemize}
\item \textbf{Outside supply representation:} Rather than using the \texttt{EmbedNode} module for the outside supply node, we represent it with a fixed zero-vector of appropriate dimensions. This embedding remains unchanged during message-passing but serves as input for edge updates.
\item \textbf{Demand region representation:} To indicate that a node faces demand, we represent each "demand region" as a node with a zero-vector embedding and create an edge from each store to its corresponding demand region. The demand region's embedding remains fixed during message-passing but provides input for edge updates, while the edge embeddings themselves are updated during message-passing. The \texttt{Readout} module is not applied to these edges.
\item \textbf{State padding:} The \texttt{EmbedNode} module creates node embeddings $h^k$ using fixed-size vector inputs. However, location states can have different dimensionalities for several reasons: i) stores face demands while DCs do not, ii) locations can have different lead times, resulting in outstanding orders with different dimensions, and iii) some parameters (such as underage cost) may apply only to certain node types. To address this, we pad lower-dimensional vectors with zeros for each state component that can cause dimensional differences, ensuring uniform dimensionality across all locations.
\item \textbf{Warehouse "bidding":} In Appendix \ref{appendix:fel-implementations}, we explain that in practice we use a "DC-Bid Proportional Allocation" FEL, in which each DC places its own "bid" to retain inventory. To implement this, we create a self-loop edge for each DC (where the origin and destination nodes are the same DC) and set its lead time to 0. These edges participate in message-passing, and their embeddings are updated during message-passing.
\end{itemize}

\subsection{Input features for neural policies \label{appendix:nn-inputs}}

This section specifies the input features provided to the neural policies. These inputs fall into three categories: (i) inventory state and pipeline, (ii) forecasting and contextual features, and (iii) instance parameters. While both the Vanilla and GNN policies consume the same high-level inputs, the encoding and representation differ by architecture. At the end of this section, we provide a detailed specification of the inputs and outputs of the GNN modules.

{\bf Inventory state and pipeline.}

All policies receive the on-hand inventory $I^k_t$ and the outstanding orders $Q^k_t \in \mathbb{R}_+^{\bar{L}^k - 1}$ for each node $k$, where $\bar{L}^k$ denotes the maximum incoming lead time. This captures the inventory pipeline under deterministic lead times.

{\bf Forecasting and contextual features.}
These features are included only in settings with realistic demand. For each location $k$, we input a fixed-length \emph{demand lookback window} of recent demand realizations, typically covering the past 16 periods. We also include the number of days remaining until the next Christmas, which captures the strong seasonal fluctuations observed in the data. These features are omitted in synthetic experiments, where demand is stationary and independent over time.

{\bf Instance parameters.}
Each node $k$ is associated with static parameters such as the holding cost $h^k$, underage cost $p^k$, and maximum incoming lead time $\bar{L}^k$. For each edge $e = (j,k)$, we provide the shipment cost $\lambda^e$ and lead time $L^e$.

These instance-level parameters are always provided to the GNN architecture, even when they do not vary across scenarios, as doing so allows it to learn across locations. For the Vanilla NN, instance parameters are only included when they vary across scenarios. We assessed passing the instance parameters as input in such settings. As performance was similar, we decided not to include them.

{\bf Omitted features.}
For settings with realistic demand, although product-specific attributes such as category could potentially enhance performance, we intentionally omit them to simplify the development and interpretation of baseline heuristics used for comparison. This ensures that all policy classes operate under comparable information assumptions. Additionally, we do not consider any store-specific features. This follows the intuition that all the relevant information can be inferred from the demand time series and parameters such as costs and lead times.

\subsection{Global experimental settings \label{appendix:global-settings}}

This section details implementation and design choices that are consistent across most experiments. Deviations are specified in the setting-specific subsections.

{\bf Differentiable simulator implementation.}
We developed a differentiable simulator using PyTorch and conducted all experiments on an NVIDIA A40 GPU with 48GB of memory. Even though the pseudo-code for HDPO shown in Algorithm \ref{alg:hdpo_full} shows scenarios being processed sequentially, in practice we implemented an efficient parallel computation scheme to expedite the training process. For a given mini-batch of $H$ scenarios, we simultaneously executed the forward pass first, followed by the backward pass, across all scenarios. To achieve this, we utilized an initial mini-batch "state" matrix denoted as $\tilde{S}_1^H$. This matrix was obtained by stacking the initial states $\bar{S}_1^h$ for each scenario $h \in H$. At each time period, we input the matrix $\tilde{S}_t^H$ into the NN, enabling us to obtain the outputs for every scenario in a highly parallelizable manner. Subsequently, we computed the costs $c_t$ and updated the mini-batch state matrix $\tilde{S}^H_{t+1}$ through efficient matrix computations. This approach allowed PyTorch to efficiently estimate the gradients during the backward pass. 

{\bf Data Splits and Evaluation Windows.} Each experiment uses three data splits: train, dev, and test. For settings with synthetic demand, each split contains 32{,}768 independently generated demand scenarios, unless otherwise noted. Training and dev episodes span 50 and 100 periods, respectively, with evaluation based on the final 20 and 40 periods. Test episodes span 5{,}000 periods, with costs computed over the final 2{,}000 to approximate steady-state performance.

For settings involving realistic demand, we evaluate performance over 79 periods for training (periods 33–111), 21 periods for development (periods 121–141), and 21 periods for testing (periods 151–171). Each evaluation window is preceded by a 16-period warm-up phase and a 16-period initialization phase used to construct lagged input features. For example, development performance is computed over periods 121–141, following a warm-up on periods 105–120 and initialization over periods 89–104. This setup ensures there is no overlap between train, dev, and test windows and that reported costs reflect behavior after a realistic adaptation period.

{\bf Training Protocol.} Training was halted if performance on the dev set did not improve for 500 epochs or reached 20,000 epochs. The training set is used exclusively for updating model weights. Early stopping is used to select the final model for evaluation. During training, we monitor performance on the dev set at regular intervals and save the weights of the model that achieves the lowest dev loss. After training concludes, this model—frozen at its best dev-set performance—is used both for model selection and for final evaluation on the test set.

{\bf Performance Evaluation.} All reported performance metrics are computed on a clean test set that is held out throughout training and model selection. To reduce sensitivity to the initial state, we discard a fixed number of warm-up periods before computing average costs, as described above.

{\bf Initialization of inventory state.}
We set inventory on hand $I^k_1$ and every entry of $Q^k_1$ to 0 for every location $k \in \mathcal{N}_{+}$.

{\bf Fixed hyperparameters.} We use the Adam optimizer with PyTorch's default parameters, $(\beta_1, \beta_2) = (0.9, 0.999)$.

{\bf Hyperparameter Tuning.}
We found our method to be relatively robust to the choice of hyperparameters, especially in single-location settings. Initial experiments explored various combinations of learning rates, batch sizes, hidden layer counts, and layer widths. For each architecture, we selected a reasonable set of hyperparameters that performed well across early settings. In multi-location settings, we occasionally performed ad-hoc tuning using small grid searches to refine performance.

{\bf Other fixed parameters.} In settings with realistic demand, we use a demand lookback window of length 16; this history is omitted in synthetic-demand settings.

\subsubsection{GNN default hyperparameters. \label{appendix:gnn-hyperparams} }

The values in Table \ref{tab:gnn-defaults} summarize the default hyperparameter configuration used for all GNN-based architectures, including the Decentralized NN, Single-warehouse NN, and Separate-weights NN. These variants typically use a subset of the full set of modules listed in the table, in which case the listed hyperparameters apply only to the modules present in the architecture. The defaults are applied across settings unless explicitly overridden. For instance, when tuning a specific hyperparameter, we replace only the corresponding default. As noted in Section~\ref{appendix:fel-implementations}, when using the $g_{1b}$ (DC-Bid Proportional) FEL, we apply a softplus transformation followed by a small positive bias to the outputs of the \texttt{Readout} module before passing them to the feasibility enforcement layer. This ensures non-negativity and helps avoid flat regions during early training. 
Demand lookback windows are only used in settings involving realistic demand.

\begin{table}[h!]
\centering
\caption{Default hyperparameters used for GNN-based architectures. Sets \{·\} indicate grid search values. Exceptions are noted below.}
\label{tab:gnn-defaults}
\begin{tabular}{@{}ll@{}}
\toprule
\textbf{Component} & \textbf{Default Setting} \\
\midrule
Batch size & 1024 \\
Learning rates & $\{10^{-2}, 10^{-3}, 10^{-4}\}$ \\
Softplus bias & 5 \\
FEL & $g_{1b}$ (DC-Bid Proportional) \\
\texttt{EmbedNode} layers & 2 (width 32) \\
\texttt{EmbedEdge} layers & 2 (width 32) \\
\texttt{UpdateNode} layers & 2 (width 32) \\
\texttt{UpdateEdge} layers & 2 (width 32) \\
\texttt{Readout} layers & 2 (width 32) \\
Initialization & Zero \\
\bottomrule
\end{tabular}
\end{table}

\vspace{1em}
\noindent
\textbf{Exceptions.} The default hyperparameters in Table~\ref{tab:gnn-defaults} apply to all GNN-based architectures unless otherwise specified. The following deviations are made in particular settings:
\begin{itemize}
    \item \textbf{Softplus bias.} In settings with realistic demand (S7 and S9), we reduce the softplus bias from 5 to 1 to improve numerical stability.
    \item \textbf{FEL choice.} In settings with a transshipment warehouse (S4 and S10), we use $g_{1a}$ (Full Allocation Proportional) instead of the default $g_{1b}$ (DC-Bid Proportional) to enforce full allocation of on-hand inventory.
    \item \textbf{Layer width.} In settings with realistic demand (S7 and S9), we increase all of the a NN layers' width from 32 to 64.

    \item \textbf{Batch size.} In settings with realistic demand (S7 and S9), we decrease batch size to 72 since the datasets are significantly smaller than those for settings with synthetic demand.
\end{itemize}

\subsubsection{Vanilla NN hyperparameters. \label{appendix:vanilla-hyperparams} }

The Vanilla NN architecture maps the full state of the inventory system to a vector of allocation logits for each edge. Unlike the GNN-based architectures, the design of the Vanilla NN varies across settings, reflecting differences in action dimensionality and network structure. Table~\ref{tab:vanilla-defaults} summarizes the key hyperparameter choices used in each setting.

\begin{table}[h!]
\centering
\caption{Vanilla NN hyperparameter configuration by setting. Sets \{·\} indicate grid search values. For Section \ref{sec:vanilla-hdpo} experiments using fixed hyperparameters, S3 uses LR=$10^{-2}$ and width=32; S4 uses LR=$10^{-3}$ and width=256.}
\label{tab:vanilla-defaults}
\begin{tabular}{@{}lcccccc@{}}
\toprule
\textbf{Setting} & \textbf{LR} & \textbf{Batch Size} & \textbf{NN layers} & \textbf{Layer width} & \textbf{FEL} & \textbf{Softplus} \\
\midrule
S1 & $10^{-3}$ & 8192 & 3 & 32 & - & Yes \\
S2 & $10^{-3}$ & 1024 & 3 & 32 & - & Yes \\
S3 & $\{10^{-2}, 10^{-3}, 10^{-4}\}$ & 8192 & 2 & $\{128, 256, 512\}$ & $g_{2b}$ & No \\
S4 & $\{10^{-2}, 10^{-3}, 10^{-4}\}$ & 1024 & 3 & $\{128, 256, 512\}$ & $g_{2a}$ & No \\
S5 & $10^{-3}$ & 8192 & 2 & 64 & - & Yes \\
S6 & $\{10^{-2}, 10^{-3}, 10^{-4}\}$ & 1024 & 3 & $\{128, 256, 512\}$ & $g_{2b}$ & No \\
S7 & $\{10^{-2}, 10^{-3}, 10^{-4}\}$ & 72 & 3 & $\{128, 256, 512\}$ & $g_1$ & Yes \\
S8 & $\{10^{-2}, 10^{-3}, 10^{-4}\}$ & 1024 & 3 & $\{128, 256, 512\}$ & $g_{2b}$ & No \\
S9 & $\{10^{-2}, 10^{-3}, 10^{-4}\}$ & 72 & 3 & $\{128, 256, 512\}$ & $g_1$ & Yes \\
S10 & $\{10^{-2}, 10^{-3}, 10^{-4}\}$ & 1024 & 3 & $\{128, 256, 512\}$ & - & Yes \\
\bottomrule
\end{tabular}
\end{table}

{\bf Feasibility enforcement layers.}
For networked settings with synthetic demand, we primarily use \textbf{Extended Softmax} ($g_{2b}$), which incorporates heuristic upper bounds into the softmax normalization. These upper bounds are calculated consistently across settings by summing the mean demands of all downstream stores and multiplying by 4. While the standard softmax ($g_2$) yielded similar performance, we found that explicitly incorporating these upper bounds significantly improved training stability and reduced the likelihood of convergence to poor local optima.
We apply this Extended Softmax to most synthetic networked settings (S3, S6, S8). The only exception is the transshipment setting (S4), where the structural requirement that on-hand inventory must be fully allocated each period necessitates using \textbf{Full Allocation Softmax} ($g_{2a}$) instead.

For realistic demand settings (S7, S9), meaningful upper bounds are not obvious due to the variability of real demand patterns. Therefore, we opted for using the \textbf{Proportional Allocation} rule ($g_1$) with these settings.

{\bf Softplus transformation and biases.} For FELs that interpret logits as allocation amounts (specifically, $g_1$) or in architectures without an FEL, we apply a softplus transformation to ensure strictly positive inputs and prevent invalid behavior during early training. We add a positive bias of 1 to shift outputs away from zero and avoid flat gradient regions. Softmax-based FELs ($g_{2a}$, $g_{2b}$) do not require softplus or bias since they inherently map unconstrained logits to valid allocations.

\subsection{Building a quantile forecaster \label{appendix: quantile-forecaster} }

In this subsection, we provide detailed information on the offline training and performance evaluation of the quantile forecaster used in the generalized newsvendor policies defined in Appendix~\ref{appendix:generalized-newsvendor} and evaluated in Section~\ref{sec:vanilla-hdpo-realistic}. The quantile forecaster is designed to estimate the distribution of the sum of $m$ demand terms (considering various values of $m$ simultaneously), given the sequence of the previous $s_1 \in \mathbb{N}$ demands and the number of days until the next occurrence of Christmas, denoted as $d_{\textup{Christmas}} \in \mathbb{N}$. Specifically, given a tuple ($\xi_1, \ldots, \xi_{s_1}, d_{\textup{Christmas}}$), the quantile forecaster predicts the value of the $\tau$-quantile of the sum of the next $m$ demands concurrently for each $\tau$ and $m$ within specified sets $\mathcal{Q}$ and $\mathcal{M}$, respectively. This is, for one input tuple, it produces $|\mathcal{Q}| |\mathcal{M}|$ terms.

We consider a dataset comprising $N$ pairs in the form of $(x_i, y_i)$, where $x_i \in \R^{s_1 + 1}$ represents a feature vector, and $y_i \in \R^{|\mathcal{M}|}$ is a target vector. Given the prediction $\hat{y}_i \in \R^{|\mathcal{Q}| |\mathcal{M}|}$ for feature vector $x_i$, we define the multi-horizon loss for the $\tau$-quantile as
\begin{equation}
\label{eq: multi-horizon-loss}
    \ell_{\tau}(y_i, \hat{y}_i) = \sum_{m \in \mathcal{M}} \max\{\tau(y_{im} - \hat{y}_{im\tau}), (1 - \tau)(\hat{y}_{im\tau} - y_{im})\},
\end{equation}
and the multi-horizon multi-quantile loss as
\begin{equation}
\label{eq: multi-horizon-multi-quantile-loss}
    \ell(y_i, \hat{y}_i) = \sum_{\tau \in \mathcal{Q}} \ell_{\tau}(y_i, \hat{y}_i).
\end{equation}
The objective is to minimize the sample average of $\ell(y_i, \hat{y}_i)$, given by $1/N \sum_{i=1}^{N}\ell(y_i, \hat{y}_i)$.

We implemented the quantile forecaster using PyTorch, considering quantiles ranging from $0.05$ to $0.95$ in steps of $0.05$. Given that the numerical experiments in Section \ref{sec:vanilla-hdpo-realistic} consider lead times of $4$, $5$, and $6$, we set $\mathcal{M} = \{5, 6, 7\}$ (recall that generalized newsvendor policies take into account the distribution of the sum of $L + 1$ demands). The train set and dev set sizes were approximately $4$ million and $1.5$ million samples, respectively. These sets were generated by extracting subsequences of demands with a length of $16$ from the datasets outlined in Section \ref{appendix:realistic-demand-dataset}.

The architecture of the implemented Multilayer Perceptron (MLP) includes $2$ hidden layers, each with $128$ neurons. We considered a batch size of $1,048,576$ samples and a learning rate of $0.01$. In practice, input features for all samples within a batch are fed in a compact matrix form, and the model outputs a matrix of predictions with dimensions $(n_{\textup{batch}}, |\mathcal{Q}|, |\mathcal{M}|)$, where $n_{\textup{batch}}$ is the batch size. This design allows us to leverage the parallel computing capabilities of GPUs and take advantage of an efficient implementation in PyTorch. 

We validate the efficacy of the quantile forecaster by observing the strong performance of the Transformed Newsvendor policy in the lost demand setting with high average unit underage costs (depicted in Figure \ref{fig: real_data_lost_demand} in Section \ref{sec:vanilla-hdpo-realistic}). Recall that for high unit underage costs, stockouts are infrequent, allowing policies optimized for backlogged demand settings to perform well. To further assess the performance of our quantile forecaster, we conducted two separate analyses.

First, we evaluated the model's calibration, which measures the agreement between the predicted values for each quantile and the empirical probability of a target lying below each of them. Ideally, for each $\tau \in \mathcal{Q}$, we would observe an empirical proportion of $\tau$ of the samples lying below the predicted $\tau$-quantile.
Therefore, in Figure \ref{fig:calibration}, we plotted each $\tau \in \mathcal{Q}$ against the proportion of targets lying below the predicted $\tau$-quantile, calculated as $\sum_{i \in [N]} \sum_{m \in \mathcal{M}} \mathbbm{1}\left(y_{im} \leq \hat{y}_{im\tau}\right) / (N |\mathcal{M}|)$. The figure demonstrates that our forecaster is nearly perfectly calibrated in the train set. However, in the dev set, there is a slight overestimation, resulting in larger-than-ideal proportions of observed targets lying below each predicted quantile value, although the calibration remains generally good.

Next, our objective is to provide an estimation of the magnitude of prediction errors relative to the targets. Illustrated in Figure \ref{fig:quantile-loss}, we depict, for each $\tau \in \mathcal{Q}$, the cumulative multi-horizon loss (see Eq. \ref{eq: multi-horizon-loss}) divided by the sum of targets, given by $\sum_{i \in [N]} \ell_{\tau}(y_i, \hat{y}_i) / \sum_{i \in [N]} \sum_{m \in \mathcal{M}}y_{im}$. To enhance comprehension, consider that when $\tau=0.5$, the multi-horizon quantile loss is equivalent to one half of the sum of mean absolute errors (MAEs) across time horizons $m \in \mathcal{M}$. The sum of MAEs is therefore less than $18\%$ of the sum of the targets, indicating a relatively low value. As the analyzed ratio is nearly maximized for $\tau=0.5$, this suggests that the quantile forecaster adeptly predicts the conditional distribution of cumulative demands. 

While we recognize the importance of comparing our forecaster with other methodologies to evaluate its performance, the three evaluations conducted in this section collectively indicate its effectiveness. 

\begin{figure}
\begin{subfigure}{.49\textwidth}
  \centering
  \includegraphics[width=1\linewidth]{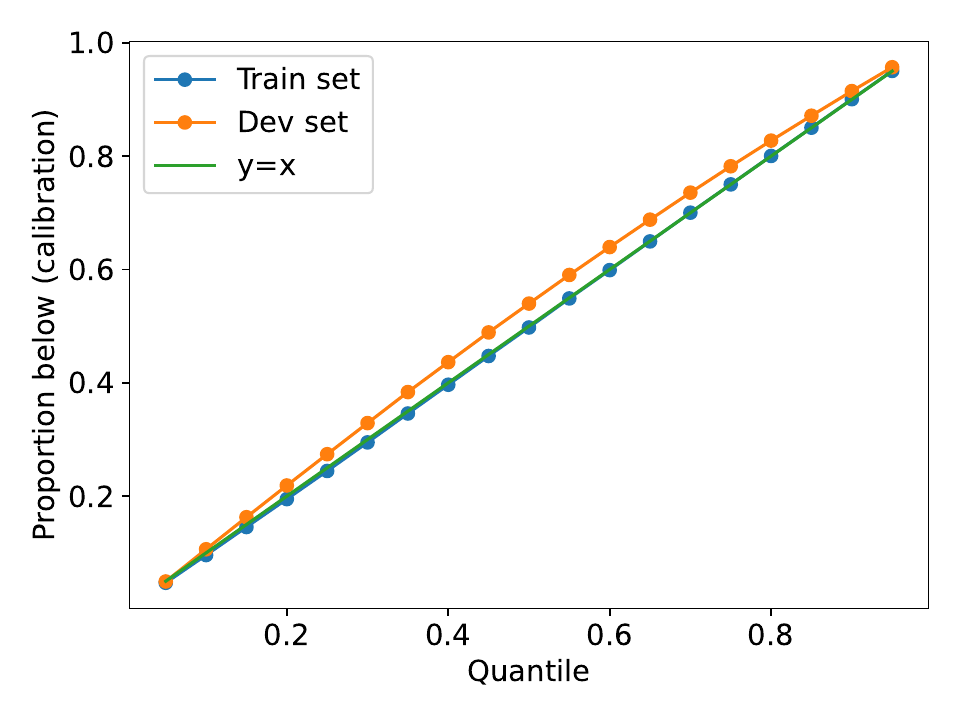}
  \caption{Calibration for each quantile, calculated as \\the proportion of targets that lie below the \\quantile predicted by the forecaster.}
  \label{fig:calibration}
\end{subfigure}%
\begin{subfigure}{.49\textwidth}
  \centering
  \includegraphics[width=1\linewidth]{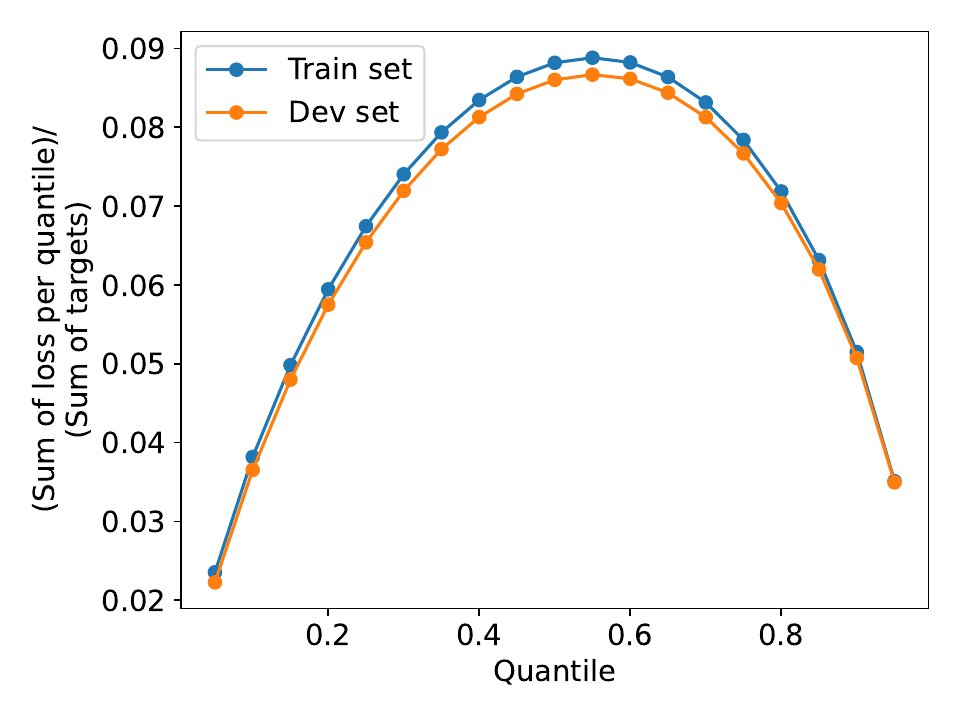}
  \caption{Sum of multi-horizon quantile loss (see Equation \ref{eq: multi-horizon-loss}) divided by the sum of the targets, for each quantile.}
  \label{fig:quantile-loss}
\end{subfigure}

\caption{Performance indicators for the quantile forecaster.}
\label{fig:forecaster-plots}
\end{figure}

\subsection{Specifications and results for numerical experiments in Section \ref{subsec:optimal_exps} \label{appendix:optimal-exps}}

In this section, we provide detailed benchmark definitions, baselines, and results for the experiments with known bounds on the optimal cost from Section \ref{subsec:optimal_exps}. Since we use the same experiment specifications outlined in that section, we omit them here to avoid redundancy. The hyperparameters used are given in Table \ref{tab:vanilla-defaults}.

\subsubsection{Vanilla HDPO in Setting S1. \label{appendix:single-store-backlogged-exps}} \hfill\\
{\bf Benchmarks.} We generate demand traces by sampling from a normal distribution with a mean of $\mu = 5.0$ and a standard deviation of $\sigma = 1.6$, truncating it from below at $0$. We created $24$ instances by setting $h = 1$, $p = 4, 9, 19, 39$ and $L = 1, 4, 7, 10, 15, 20$.  

{\bf Baselines} We compare our model with the optimal base-stock policy computed according to \eqref{eq:base-stock-policy} and \eqref{eq:base-stock-level} in Appendix \ref{appendix:base-stock-policy}.

{\bf Results.}
Figures \ref{Opt gap one-store backlogged} and \ref{Time to opt one-store backlogged}, respectively, show the optimality gap on the test set and time to reach $1\%$ of optimality gap on the dev set. Our approach achieves an average gap of $0.03\%$ across instances and takes less than $4$ minutes, on average, to obtain a $1\%$ gap. Further, gaps are consistently below $\edit{0.2}\%$ across instances, even for long lead times of up to $20$ periods. We report additional performance indicators in Table \ref{table: backlogged detailed}.

\begin{figure}
\begin{subfigure}{.49\textwidth}
  \centering
  \includegraphics[width=1\linewidth]{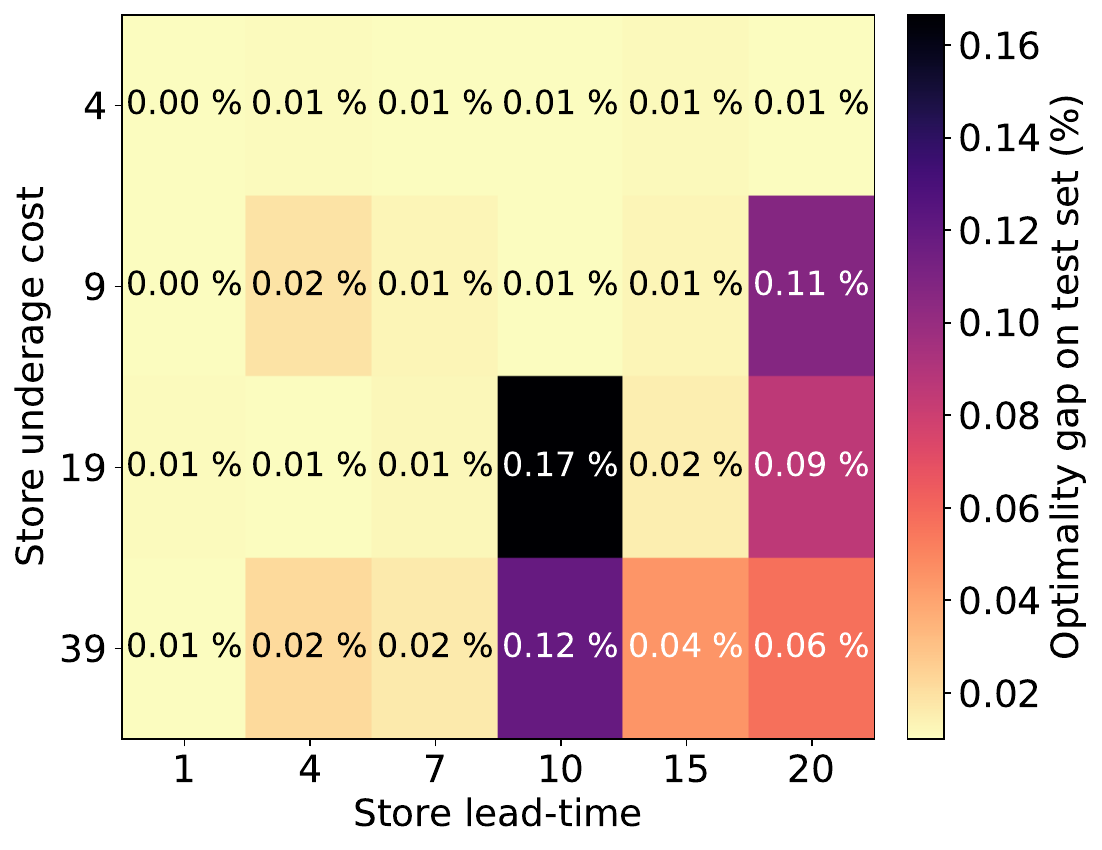}
  \caption{Optimality gap on the test set.}
  \label{Opt gap one-store backlogged}
\end{subfigure}%
\hspace{0.02\textwidth}
\begin{subfigure}{.49\textwidth}
  \centering
  \includegraphics[width=1\linewidth]{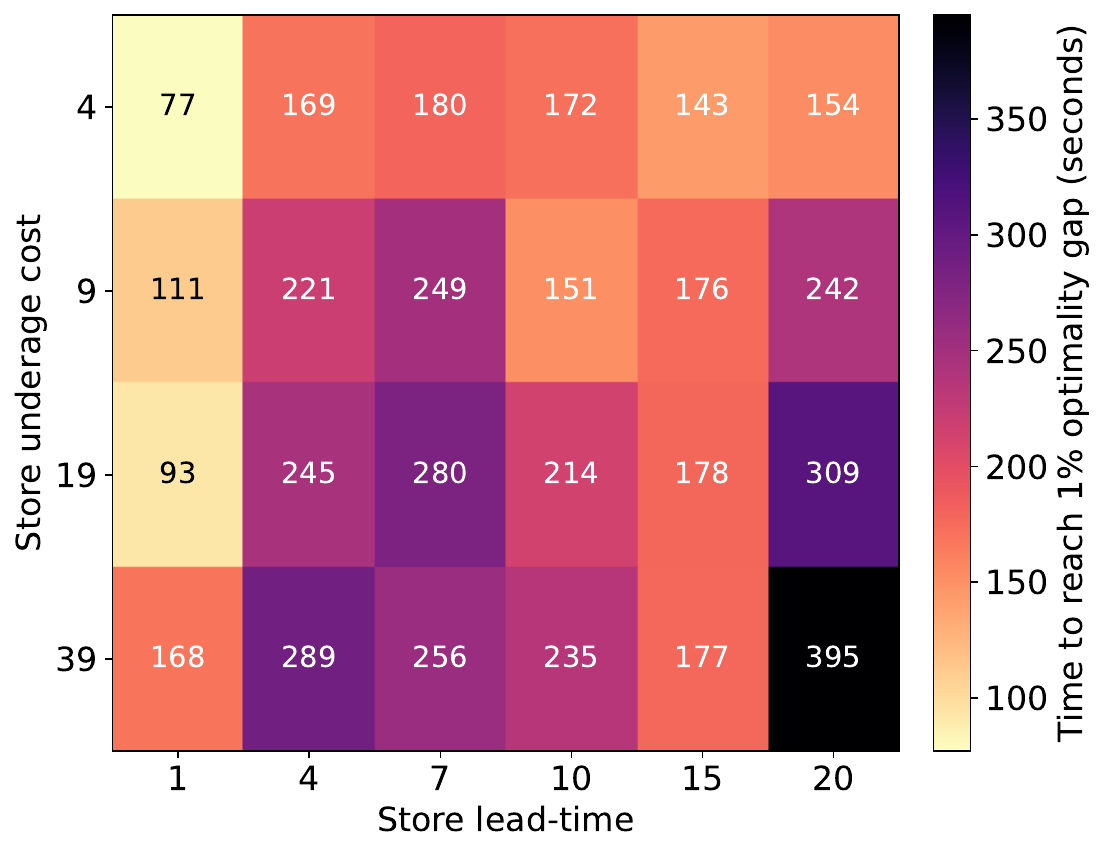}
  \caption{Time to reach 1\% opt. gap on the dev set.}
  \label{Time to opt one-store backlogged}
\end{subfigure}

\caption{Optimality gap on the test set and time to reach 1\% optimality gap on the dev set for setting S1 across different underage costs and lead times.}
\end{figure}

\begin{table}[h!]
\begin{center}
\caption{Performance metrics of the Vanilla  NN for each instance of setting S1. \label{table: backlogged detailed}}

\begin{tabular}{>{\raggedleft}p{1.4cm}>{\raggedleft}p{1.5cm}>{\raggedleft}p{1.4cm}>{\raggedleft}p{1.4cm}>{\raggedleft}p{1.4cm}>{\raggedleft}p{1.4cm}>{\raggedleft}p{1.4cm}>{\raggedleft}p{1.4cm}>{\raggedleft\arraybackslash}p{1.7cm}}
\toprule
 \raggedright Store leadtime &  \raggedright Store  underage cost &  \raggedright Train loss &  \raggedright Dev  loss &  \raggedright Test  loss &  \raggedright Train  gap (\%) &  \raggedright Dev  gap (\%) &  \raggedright Test  gap (\%) &  \raggedright\arraybackslash Time to 1\% dev gap (s) \\
\midrule
1  & 4  & 3.17  & 3.17  & 3.17  & 0.01 & 0.02 & 0.00 & 77 \\
1  & 9  & 3.97  & 3.97  & 3.97  & 0.01 & 0.03 & 0.00 & 111 \\
1  & 19 & 4.68  & 4.67  & 4.67  & 0.02 & 0.04 & 0.01 & 93 \\
1  & 39 & 5.29  & 5.29  & 5.29  & 0.01 & 0.05 & 0.01 & 168 \\
4  & 4  & 5.01  & 5.01  & 5.01  & 0.01 & 0.05 & 0.01 & 169 \\
4  & 9  & 6.30  & 6.27  & 6.28  & 0.02 & 0.06 & 0.02 & 221 \\
4  & 19 & 7.37  & 7.39  & 7.38  & 0.03 & 0.07 & 0.01 & 245 \\
4  & 39 & 8.36  & 8.37  & 8.37  & 0.00 & 0.08 & 0.02 & 289 \\
7  & 4  & 6.33  & 6.34  & 6.33  & 0.01 & 0.06 & 0.01 & 180 \\
7  & 9  & 7.94  & 7.92  & 7.94  & 0.06 & 0.08 & 0.01 & 249 \\
7  & 19 & 9.40  & 9.36  & 9.33  & 0.09 & 0.09 & 0.01 & 280 \\
7  & 39 & 10.61 & 10.57 & 10.58 & 0.03 & 0.10 & 0.02 & 256 \\
10 & 4  & 7.42  & 7.44  & 7.43  & 0.05 & 0.07 & 0.01 & 172 \\
10 & 9  & 9.29  & 9.32  & 9.31  & 0.02 & 0.08 & 0.01 & 151 \\
10 & 19 & 11.01 & 10.94 & 10.96 & 0.09 & 0.10 & 0.17 & 214 \\
10 & 39 & 12.42 & 12.41 & 12.42 & 0.07 & 0.12 & 0.12 & 235 \\
15 & 4  & 8.94  & 8.94  & 8.95  & 0.04 & 0.09 & 0.01 & 143 \\
15 & 9  & 11.22 & 11.26 & 11.23 & 0.01 & 0.11 & 0.01 & 176 \\
15 & 19 & 13.21 & 13.20 & 13.21 & 0.02 & 0.12 & 0.02 & 178 \\
15 & 39 & 14.90 & 14.92 & 14.97 & 0.10 & 0.11 & 0.04 & 177 \\
20 & 4  & 10.29 & 10.29 & 10.26 & 0.06 & 0.10 & 0.01 & 154 \\
20 & 9  & 12.86 & 12.85 & 12.88 & 0.02 & 0.13 & 0.11 & 242 \\
20 & 19 & 15.16 & 15.08 & 15.13 & -0.04 & 0.14 & 0.09 & 309 \\
20 & 39 & 17.11 & 17.17 & 17.14 & 0.09 & 0.17 & 0.06 & 395 \\
\bottomrule
\end{tabular}

\end{center}
\end{table}

\subsubsection{Vanilla HDPO in Setting S2. \label{appendix:single-store-lost-demand-exps}} \hfill\\

{\bf Benchmarks, Baselines and Experiment specifications.} See Section \ref{subsec:hdrl-and-hdpo-vs-reinforce}

{\bf Results.} In Table \ref{table: lost demand detailed}, we report the performance under the hyperparameter setting that minimizes loss on the dev set (see Table \ref{tab:vanilla-defaults} in Appendix \ref{appendix:vanilla-hyperparams}). This table illustrates the reliability of HDPO, achieving results within $1\%$ of optimality in under $\edit{90}$ seconds for all but one instance. Moreover, Figures \ref{fig: learning curve u9l4_small_0.0001_10to13} and \ref{fig: learning curve u9l4_large_0.01_10to10} show the learning curves on one instance for two "extreme" choices of hyperparameters, revealing stable learning and rapid convergence to near-optimal solutions across different hyperparameter settings (note that an epoch corresponds to 4 and 32 gradients steps for the plots on the left and right, respectively).

\begin{table}[h!]
\begin{center}
\caption{Performance metrics of the Vanilla NN for each instance of setting S2 for the best hyperparameter setting. For the last two columns we consider the performance of the NN with continuous allocation as proxy for the performance in the discrete-allocation setting, as costs changed, on average, by less than 1\% after discretizing the allocation.\label{table: lost demand detailed}}

\begin{tabular}{>{\raggedleft}p{1.8cm}>{\raggedleft}p{1.8cm}>{\raggedleft}p{1.8cm}>{\raggedleft}p{1.8cm}>{\raggedleft}p{2.1cm}>{\raggedleft\arraybackslash}p{1.8cm}}
\toprule
 \raggedright Store lead time &  \raggedright Store  underage cost &  \raggedright Test loss & \raggedright Test gap (\%) &  \raggedright Gradient steps to  1\% dev gap &  \raggedright\arraybackslash Time to 1\% dev gap (s) \\
\midrule
                           1 &                                  4 &                    4.04 &                      $<$0.25 &                                         640 &                                                 44 \\
                           1 &                                  9 &                    5.44 &                      $<$0.25 &                                         640 &                                                 45 \\
                           1 &                                 19 &                    6.68 &                      $<$0.25 &                                         320 &                                                 24 \\
                           1 &                                 39 &                    7.84 &                      $<$0.25 &                                         320 &                                                 23 \\
                           2 &                                  4 &                    4.40 &                      $<$0.25 &                                         320 &                                                 27 \\
                           2 &                                  9 &                    6.09 &                      $<$0.25 &                                        1280 &                                                111 \\
                           2 &                                 19 &                    7.67 &                      $<$0.25 &                                         960 &                                                 80 \\
                           2 &                                 39 &                    9.11 &                      $<$0.25 &                                         640 &                                                 56 \\
                           3 &                                  4 &                    4.60 &                      $<$0.25 &                                         960 &                                                 82 \\
                           3 &                                  9 &                    6.53 &                      $<$0.25 &                                         960 &                                                 88 \\
                           3 &                                 19 &                    8.36 &                      $<$0.25 &                                         640 &                                                 57 \\
                           3 &                                 39 &                   10.04 &                      $<$0.25 &                                         640 &                                                 56 \\
                           4 &                                  4 &                    4.73 &                      $<$0.25 &                                         640 &                                                 53 \\
                           4 &                                  9 &                    6.84 &                      $<$0.25 &                                         960 &                                                 82 \\
                           4 &                                 19 &                    8.88 &                      $<$0.25 &                                         320 &                                                 29 \\
                           4 &                                 39 &                   10.79 &                      $<$0.25 &                                         320 &                                                 27 \\
\bottomrule
\end{tabular}

\end{center}
\end{table}
\begin{figure}
\captionsetup[subfigure]{justification=centering}
\begin{subfigure}{.49\textwidth}
  \centering
  \includegraphics[width=1\linewidth]{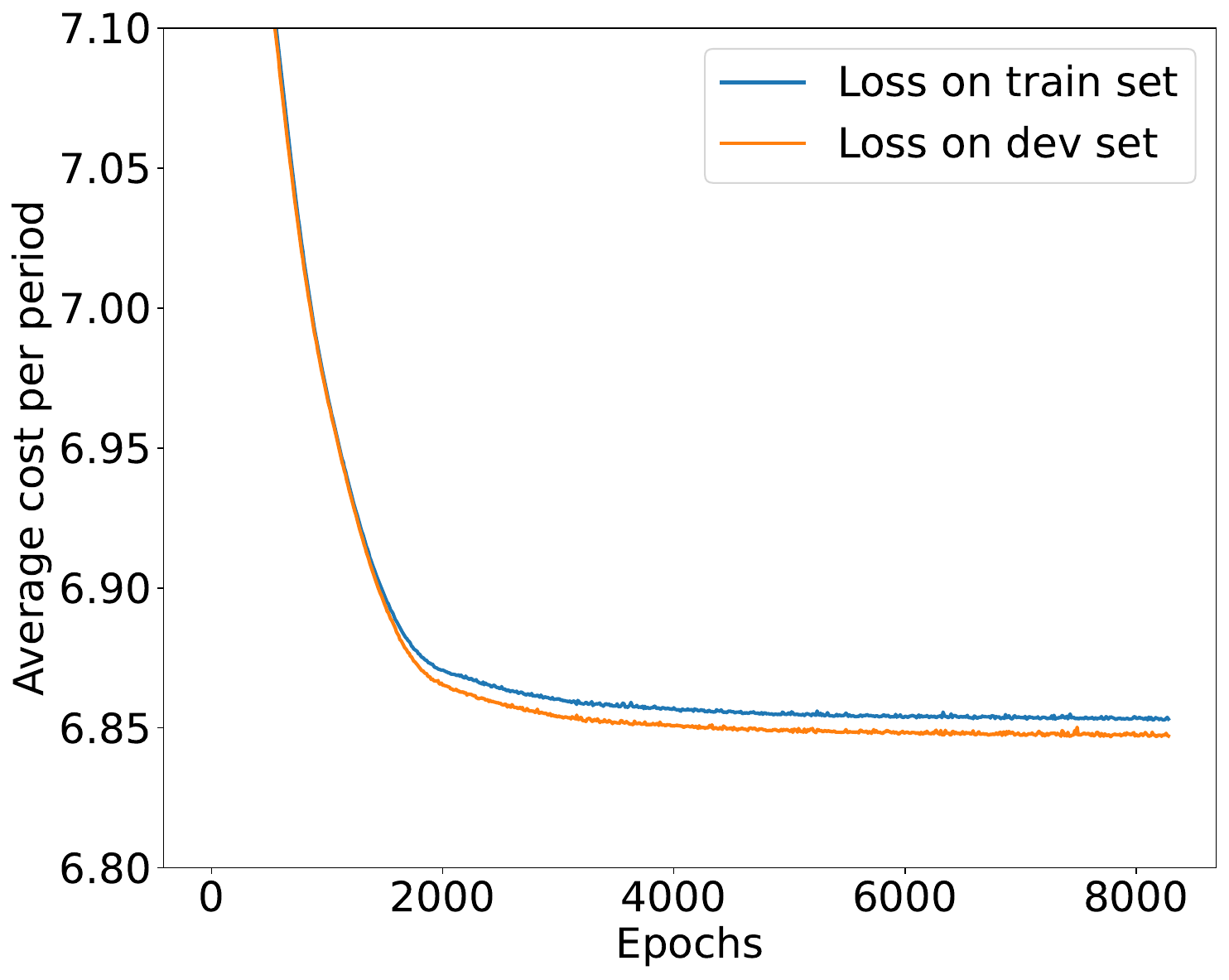}
  \caption{$2$ hidden layers, learning rate of $10^{-4}$ \newline and batch size of $8192$.}
  \label{fig: learning curve u9l4_small_0.0001_10to13}
\end{subfigure}%
\hspace{0.02\textwidth}
\begin{subfigure}{.49\textwidth}
  \centering
  \includegraphics[width=1\linewidth]{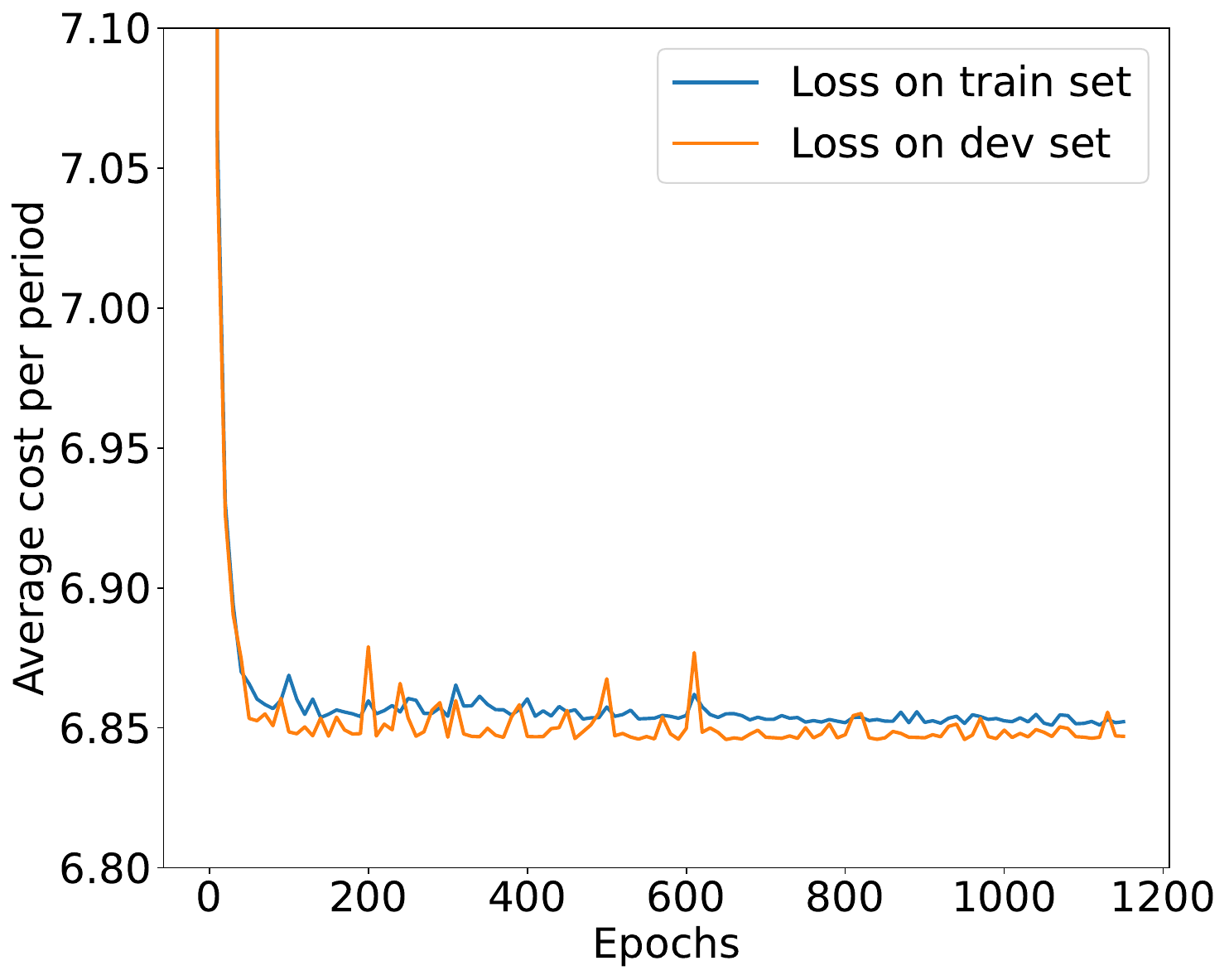}
  \caption{$3$ hidden layers, learning rate of $10^{-2}$ \newline and batch size of $1024$.}
  \label{fig: learning curve u9l4_large_0.01_10to10}
\end{subfigure}

\caption{Epochs vs average cost per period on train and dev sets with continuous allocation for different hyperparameter choices. Setting S2 with unit underage cost of 9 and lead time of 4. Note that an epoch corresponds to 4 and 32 gradient steps for the left and right plots, respectively.}
\label{fig: learning curves lost demand}
\end{figure}

In Figure \ref{fig: scatter one store lost demand} we plot the inventory position (see Eq. \eqref{eq:inventory-position}) and allocation under the Vanilla NN policy for $2$ settings and compare it to the allocation under the optimal CBS policy (red line). We randomly jitter points for visibility, and color points according to the current inventory on-hand. We observe that the structure of the policy learned by our Vanilla NN somewhat resembles a CBS policy, but our learned policy is able to use additional information in the state space to achieve lower costs. For a fixed inventory position, the NN tends to order less for lower inventory on hand, as a stock-out is more likely under such a scenario in which case less inventory will actually be depleted.

\begin{figure}
\begin{subfigure}{.5\textwidth}
  \centering
  \includegraphics[width=1\linewidth]{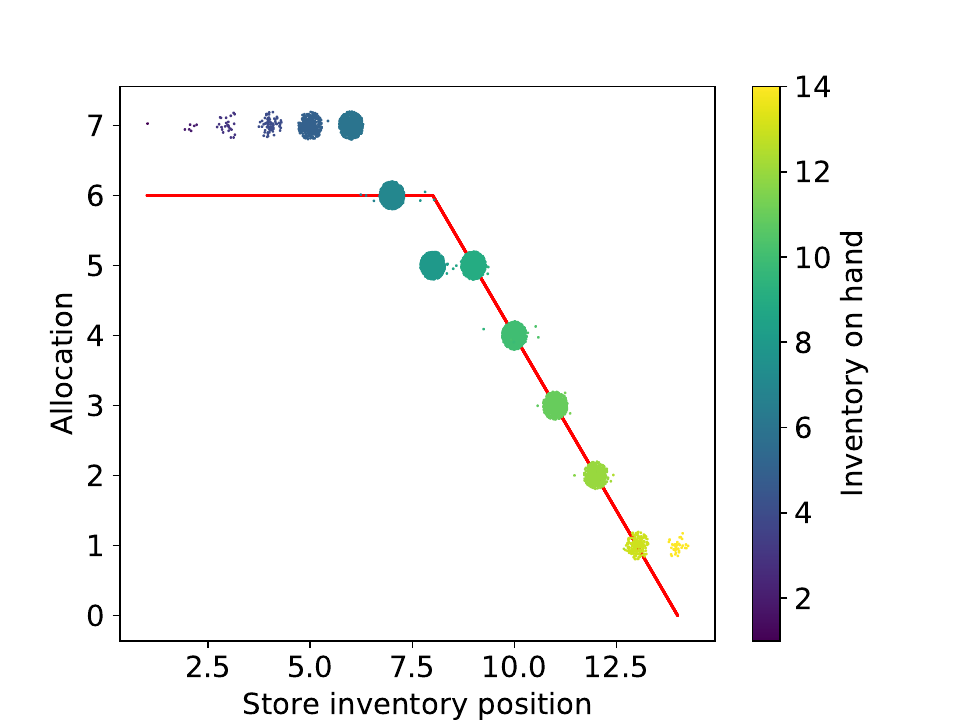}
  \caption{Lead time of $1$ period}
  \label{scatter l1 u9}
\end{subfigure}%
\begin{subfigure}{.5\textwidth}
  \centering
  \includegraphics[width=1\linewidth]{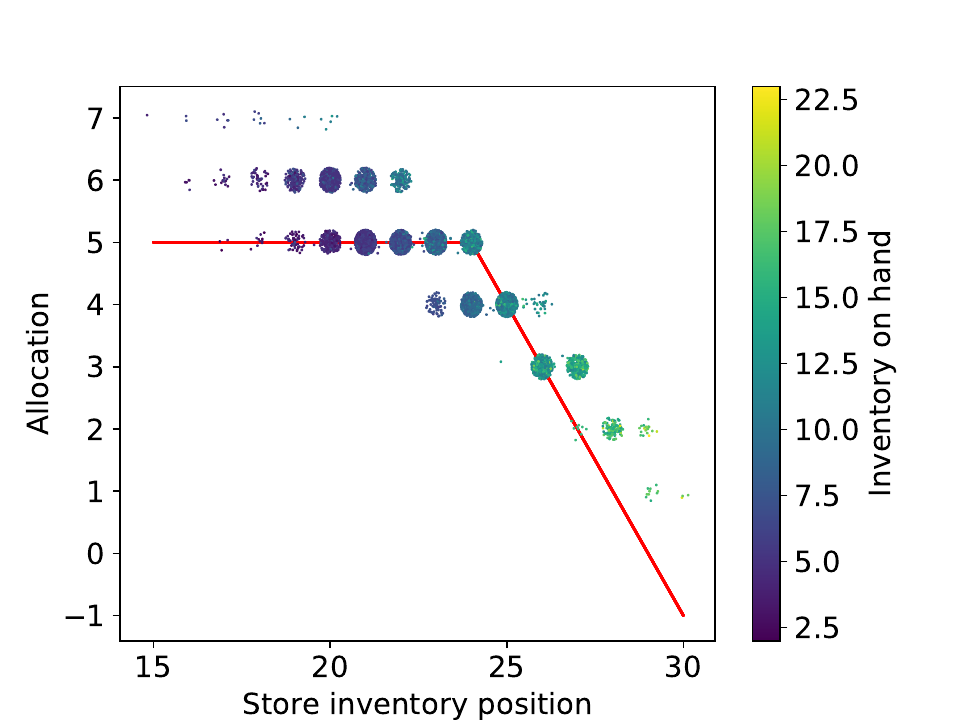}
  \caption{Lead time of $4$ periods}
  \label{scatter l4 u9}
\end{subfigure}

\caption{Inventory position vs allocation under Vanilla NN for setting S2 with a unit underage cost of $9$. Points are randomly jittered for clarity. Point colors correspond to current inventory on-hand, and the red line captures the allocation under the optimal CBS policy (reported in \cite{xin2021understanding}).}
\label{fig: scatter one store lost demand}
\end{figure}

Table \ref{table: lost demand by hyperparams} reports the average number of gradient steps and wall-clock time required to reach a 1\% optimality gap on the dev set across the 16 instances from \cite{zipkin2008old}, evaluated under a range of hyperparameter settings. The results highlight HDPO’s computational efficiency and robustness: in all runs but one, the method produced solutions indistinguishable from the optimum. Moreover, across all hyperparameter choices, solutions within 1\% of optimality were found in under \edit{14} minutes on average, indicating consistently strong performance. Under the best-performing configuration, this time dropped to just \edit{55} seconds on average, demonstrating the method’s speed and reliability.
\begin{table}[h!]
\begin{center}
\caption{Summary of performance metrics of the Vanilla NN for different hyperparameter settings for setting S2, across the 16-instance test-bed in \cite{zipkin2008old} for Poisson demand with mean $5$. We consider that an instance is solved to optimality if the gap is smaller than $0.25\%$. For the last $2$ columns we consider the performance of the NN with continuous allocation as proxy for the performance in the discrete-allocation setting, as costs changed, on average, by less than 1\% after discretizing the allocation.\label{table: lost demand by hyperparams}}

\begin{tabular}{@{}rrrrrr@{}}
\toprule
\multicolumn{1}{l}{\begin{tabular}[c]{@{}l@{}}Hidden  \\ layers\end{tabular}} & \multicolumn{1}{l}{\begin{tabular}[c]{@{}l@{}}Batch\\ size\end{tabular}} & \multicolumn{1}{l}{\begin{tabular}[c]{@{}l@{}}Learning \\ rate\end{tabular}} & \multicolumn{1}{l}{\begin{tabular}[c]{@{}l@{}}Instances solved \\ to optimality (\#)\end{tabular}} & \multicolumn{1}{l}{\begin{tabular}[c]{@{}l@{}}Average gradient steps \\ to  1\% dev gap\end{tabular}} & \multicolumn{1}{l}{\begin{tabular}[c]{@{}l@{}}Average time to \\ 1\%  dev gap (s)\end{tabular}} \\ 
\midrule
2 & 1024 & 0.0001 & 16 & 6120 & 467 \\
2 & 1024 & 0.0010 & 16 & 1740 & 133 \\
2 & 1024 & 0.0100 & 16 & 1420 & 105 \\
2 & 8192 & 0.0001 & 15 & 5795 & 819 \\
2 & 8192 & 0.0010 & 16 & 1412 & 203 \\
2 & 8192 & 0.0100 & 16 & 1250 & 171 \\
3 & 1024 & 0.0001 & 16 & 4140 & 351 \\
3 & 1024 & 0.0010 & 16 & 940 & 81 \\
3 & 1024 & 0.0100 & 16 & 660 & 55 \\
3 & 8192 & 0.0001 & 16 & 3840 & 554 \\
3 & 8192 & 0.0010 & 16 & 825 & 118 \\
3 & 8192 & 0.0100 & 16 & 395 & 57 \\
\bottomrule
\end{tabular}

\end{center}
\end{table}

\subsubsection{Vanilla HDPO in Setting S3. \label{appendix:serial-system-exps}} \hfill

{\bf Benchmarks.} We analyzed a serial network structure with $4$ echelons. We considered Normal demand with mean $\mu = 5.0$ and standard deviation $\sigma = 2.0$, truncated at $0$ from below. We fixed holding costs as $h^1 = 0.1$, $h^2 = 0.2$, $h^3 = 0.5$ and $h^4 = 1.0$. The lead times for edges are fixed as $L^{(0,1)} = 2$, $L^{(1,2)} = 4$ and $L^{(2,3)} = 3$. We created $16$ instances by setting $p = 4, 9, 19, 39$ and $L^{(3,4)} = 1, 2, 3, 4$.

{\bf Baselines.} Within this setting, there exists an optimal echelon-stock policy (see Eq. \eqref{eq:echelon-stock-for-serial} in Appendix \ref{appendix:echelon-stock-policy}). We employ our differentiable simulator to search for the best-performing base-stock levels $\hat{S}^1, \ldots, \hat{S}^4$ through multiple runs. We compare our NNs with the best-performing echelon-stock policy obtained, which we take as the optimal cost.

{\bf Results.} Table \ref{table:serial} describes the performance of the Vanilla NN for each instance of the serial network structure (see Figure \ref{fig:serial-system} in Section \ref{sec: inventory_network_control}). The Vanilla NN achieved an average optimality gap of \edit{$0.54\%$}, and needed around \edit{$5000$} gradient steps, on average, to achieve a gap smaller than $1\%$ in the dev set.

\begin{table}[h!]
\begin{center}
\caption{Performance metrics for the Vanilla NN for each instance of setting S3.
\label{table:serial}}

\begin{tabular}{>{\raggedleft}p{1.1cm}>{\raggedleft}p{1.4cm}>{\raggedleft}p{0.8cm}>{\raggedleft}p{0.8cm}>{\raggedleft}p{0.8cm}>{\raggedleft}p{1.1cm}>{\raggedleft}p{0.8cm}>{\raggedleft}p{0.8cm}>{\raggedleft\arraybackslash}p{2.1cm}>{\raggedleft\arraybackslash}p{2.1cm}}
\toprule
Store lead time & Store underage cost & Train loss & Dev loss & Test loss & Train gap (\%) & Dev gap (\%) & Test gap (\%) & Gradient steps to 1\% dev gap & Time to 1\% dev gap (s) \\
\midrule
1 & 4 & 6.93 & 6.94 & 6.93 & 0.42 & 0.51 & 0.46 & 5040 & 1410 \\
1 & 9 & 8.42 & 8.41 & 8.41 & 0.66 & 0.50 & 0.47 & 4320 & 1255 \\
1 & 19 & 9.69 & 9.65 & 9.65 & 0.88 & 0.40 & 0.44 & 2680 & 733 \\
1 & 39 & 10.77 & 10.74 & 10.75 & 0.70 & 0.43 & 0.50 & 4640 & 1331 \\
2 & 4 & 7.63 & 7.64 & 7.64 & 0.43 & 0.53 & 0.44 & 7200 & 2185 \\
2 & 9 & 9.33 & 9.32 & 9.32 & 0.72 & 0.69 & 0.66 & 6280 & 1858 \\
2 & 19 & 10.77 & 10.72 & 10.72 & 0.93 & 0.45 & 0.50 & 4240 & 1237 \\
2 & 39 & 11.96 & 11.96 & 11.97 & 0.44 & 0.44 & 0.50 & 4040 & 1251 \\
3 & 4 & 8.28 & 8.27 & 8.27 & 0.83 & 0.73 & 0.65 & 4400 & 1231 \\
3 & 9 & 10.10 & 10.08 & 10.08 & 0.60 & 0.46 & 0.40 & 3000 & 867 \\
3 & 19 & 11.66 & 11.66 & 11.66 & 0.62 & 0.58 & 0.63 & 4040 & 1197 \\
3 & 39 & 13.04 & 13.03 & 13.03 & 0.62 & 0.49 & 0.54 & 7080 & 2165 \\
4 & 4 & 8.83 & 8.84 & 8.83 & 0.82 & 0.88 & 0.78 & 7000 & 2167 \\
4 & 9 & 10.79 & 10.81 & 10.80 & 0.54 & 0.67 & 0.60 & 6040 & 1729 \\
4 & 19 & 12.55 & 12.48 & 12.48 & 1.02 & 0.48 & 0.49 & 3120 & 916 \\
4 & 39 & 14.03 & 13.97 & 13.99 & 0.90 & 0.46 & 0.55 & 5160 & 1468 \\
\bottomrule
\end{tabular}

\end{center}
\end{table}

\subsubsection{Vanilla HDPO in Setting S4 \label{appendix:transshipment-warehouse-exps}} \hfill

{\bf Benchmarks.} We consider the setting introduced in \cite{federgruen1984approximations}, where a warehouse $w$ operates as a transshipment center (\ie cannot hold inventory) and there are multiple stores $1, \ldots, K$, under a backlogged demand assumption (fourth row in Table \ref{table:optimal-exps-summary}). Demand is i.i.d. across time but may exhibit correlation across stores. In this setting, demand may take on negative values (corresponding to the possibility of products being returned directly to a store). To apply a known analytical lower bound (see Appendix \ref{appendix:transshipment-setting-lower-bound}) on the optimum, we assume uniform per-unit costs and lead time across all stores. The warehouse has a constant lead time $L^{(0,w)} = 3$ periods from the outside source. We generated $24$ instances by fixing holding costs at $1$, considering number of stores $K=3, 5, 10$, lead times $L^{(w,k)}=2, 6$ from warehouse to stores, underage costs $p^k=4, 9$, and pairwise correlation in demands of $0.0$ and $0.5$. The store-level marginal demand distributions are assumed to be normal, with the mean and coefficient of variation sampled uniformly between $2.5$ to $7.5$ and \edit{$0.25$ to $0.5$}, respectively.

{\bf Baselines.} We compare against the analytical lower bound for this transshipment setting, as detailed in Appendix \ref{appendix:transshipment-setting-lower-bound}.

{\bf Results.} Table \ref{table: trans_shipment detailed} summarizes the performance results. We obtain near-optimal performance across all instances, with a maximum gap of \edit{$0.24\%$} and an average gap of \edit{$0.13\%$}. These results demonstrate the reliability of HDPO in effectively addressing the network problem studied, particularly when the existing constraints can be represented in a differentiable manner. HDPO required approximately \edit{5,300} gradient steps and \edit{14} minutes, on average, to achieve $1\%$ of optimality gap on the dev set.

\begin{table}[h!]
\begin{center}
\caption{Performance metrics of the Vanilla NN for each instance of setting with one "transshipment" warehouse under backlogged demand assumption. Results consider sampling means and coefficients of variation uniformly between $2.5-7.5$ and $0.25-0.5$, respectively. \label{table: trans_shipment detailed}}

\begin{tabular}{>{\raggedleft}p{1.1cm}>{\raggedleft}p{1.1cm}>{\raggedleft}p{1.4cm}>{\raggedleft}p{1.3cm}>{\raggedleft}p{1.1cm}>{\raggedleft}p{0.7cm}>{\raggedleft}p{0.7cm}>{\raggedleft}p{0.7cm}>{\raggedleft}p{1.1cm}>{\raggedleft}p{2.1cm}>{\raggedleft\arraybackslash}p{1.6cm}}
\toprule
Number of stores & Store leadtime & Store underage cost & Pairwise correlation & Lower bound & Train loss & Dev loss & Test loss & Test gap $\leq$(\%) & Gradient steps to 1\% dev gap & Time to 1\% dev gap (s) \\
\midrule
3 & 2 & 4 & 0.0 & 5.15 & 5.15 & 5.15 & 5.16 & 0.10 & 2880 & 408 \\
3 & 2 & 4 & 0.5 & 5.75 & 5.76 & 5.75 & 5.76 & 0.08 & 2880 & 445 \\
3 & 2 & 9 & 0.0 & 6.46 & 6.45 & 6.45 & 6.47 & 0.12 & 3200 & 457 \\
3 & 2 & 9 & 0.5 & 7.21 & 7.23 & 7.21 & 7.22 & 0.09 & 3520 & 493 \\
3 & 6 & 4 & 0.0 & 7.28 & 7.27 & 7.27 & 7.29 & 0.14 & 3200 & 466 \\
3 & 6 & 4 & 0.5 & 7.72 & 7.73 & 7.72 & 7.73 & 0.16 & 2880 & 435 \\
3 & 6 & 9 & 0.0 & 9.13 & 9.11 & 9.12 & 9.14 & 0.16 & 3520 & 587 \\
3 & 6 & 9 & 0.5 & 9.68 & 9.70 & 9.67 & 9.69 & 0.17 & 3520 & 483 \\
\midrule
5 & 2 & 4 & 0.0 & 4.59 & 4.59 & 4.59 & 4.59 & 0.07 & 4800 & 698 \\
5 & 2 & 4 & 0.5 & 5.30 & 5.30 & 5.29 & 5.30 & 0.08 & 4160 & 689 \\
5 & 2 & 9 & 0.0 & 5.76 & 5.75 & 5.76 & 5.76 & 0.08 & 4160 & 617 \\
5 & 2 & 9 & 0.5 & 6.64 & 6.64 & 6.63 & 6.65 & 0.09 & 4160 & 598 \\
5 & 6 & 4 & 0.0 & 6.67 & 6.67 & 6.67 & 6.67 & 0.09 & 5120 & 714 \\
5 & 6 & 4 & 0.5 & 7.17 & 7.19 & 7.17 & 7.18 & 0.15 & 5120 & 731 \\
5 & 6 & 9 & 0.0 & 8.36 & 8.36 & 8.36 & 8.37 & 0.12 & 5440 & 821 \\
5 & 6 & 9 & 0.5 & 8.99 & 9.01 & 8.98 & 9.00 & 0.18 & 5120 & 861 \\
\midrule
10 & 2 & 4 & 0.0 & 4.67 & 4.68 & 4.68 & 4.68 & 0.11 & 7040 & 1035 \\
10 & 2 & 4 & 0.5 & 5.54 & 5.55 & 5.54 & 5.54 & 0.13 & 7040 & 1048 \\
10 & 2 & 9 & 0.0 & 5.86 & 5.87 & 5.87 & 5.87 & 0.18 & 7680 & 1267 \\
10 & 2 & 9 & 0.5 & 6.94 & 6.96 & 6.95 & 6.95 & 0.15 & 9280 & 1348 \\
10 & 6 & 4 & 0.0 & 6.94 & 6.94 & 6.94 & 6.95 & 0.13 & 7040 & 987 \\
10 & 6 & 4 & 0.5 & 7.55 & 7.57 & 7.56 & 7.56 & 0.23 & 7360 & 1127 \\
10 & 6 & 9 & 0.0 & 8.70 & 8.71 & 8.71 & 8.71 & 0.17 & 8320 & 1202 \\
10 & 6 & 9 & 0.5 & 9.46 & 9.50 & 9.48 & 9.48 & 0.24 & 9600 & 1430 \\
\bottomrule
\end{tabular}

\end{center}
\end{table}

\subsection{Impact of demand censoring in setting S2
\label{appendix:censored-demand} }

In practice, demand can only be observed when sufficient inventory is available to meet it, so a realistic dataset might only include sales data. HDPO is designed to operate after, and independently of, the preprocessing stage via which demand is imputed from sales and other available data. This section aims to justify our treatment by presenting the following empirical evidence: (i) early experiments suggest that even with small, censored datasets, existing de-censoring techniques enable the effective application of HDPO with minimal performance degradation, and (ii) promisingly, initial results indicate that the robustness of NNs trained with HDPO to distribution shifts caused by misspecified demand is comparable to that of simple heuristics. Additionally, this section serves as a concise tutorial, discussing the challenges of applying HDPO with censored data, exploring potential solutions, and evaluating the performance impact of incorrect de-censoring.

There are several methods to estimate demand from offline censored datasets. When demand is assumed to be stationary, the Kaplan-Meier estimator \citep{kaplan1958nonparametric}, a classical method for estimating survival functions in the presence of right-censored data, can be applied. We refer readers to \cite{huh2011adaptive} for its formal definition (see Section 3) and for an explanation of its use in inventory control, including general conditions under which the estimator converges to the true distribution (see Theorem 2).

In practice, retailers might also have access to additional information that facilitates demand estimation utilizing other ad-hoc methods. For example, \cite{madeka2022deep} leveraged web glance views to infer demand from sales data by exploiting the assumption that the conversion rate (\ie the ratio of glance views that turn into a sale) is constant for each product and period, thus allowing to directly impute demand in the presence of stock-outs. Using this de-censoring approach, they trained HDPO and reported promising results in a real-world deployment, outperforming a Newsvendor benchmark.

\subsubsection{Performance under successful de-censoring. \label{appendix:succesful-decensoring} }

To assess the robustness of the Vanilla NN on de-censored data, we analyze performance in a setting with lost demand and stochastic censoring. Our goal is to demonstrate that HDPO remains effective even when operating on imputed demand distributions, showing comparable robustness to classical heuristics. In the absence of realistic data, we simulate an environment that aims to reflect the natural variation of stocking levels due to policy decisions and random lead times, allowing us to analyze performance under controlled conditions while capturing the key challenges of demand censoring.

For demand imputation, we employ the classical Kaplan-Meier estimator, a well-established non-parametric method for handling censored data. When the largest sales observation is censored, we extend the estimated distribution using a simple exponential tail fit via Maximum Likelihood Estimation. This straightforward approach avoids sophisticated or contrived techniques, relying instead on standard statistical methods to reconstruct the demand distribution from censored observations.

{\bf Benchmarks.} We consider setting S2 with a Poisson demand distribution with mean 5, censored by a Poisson process with mean 6. We set underage and holding costs to $p=4$ and $h=1$ respectively, with a deterministic lead time of $2$ periods.

{\bf Baselines.} We consider the CBS heuristic (see Equation \eqref{capped base stock} in Section \ref{appendix:capped-base-stock-policy}.

{\bf Experiment specifications.} To ensure robust evaluation, we considered sample sizes of $10^2, 10^3, 10^4$ and $10^5$ samples and $32$ random seeds for the demand and censoring process used for estimating the CDF. For each combination of seed and sample size, we estimate a CDF, then generate training and dev data from that estimated CDF, while testing is performed using data generated from the true CDF. We use a large sample of 32,768 samples for train, dev, and test sets (the aforementioned sample sizes refer to those used to estimate the CDF). For each combination of seed and sample size, we trained the Vanilla NN model once with its fixed hyperparameters and the CBS policy once for each of three learning rates (1.0, 0.5, 0.1), selecting the one that performed best on a dev set generated using the estimated CDF.

{\bf Results.} Our results, illustrated in Figures \ref{fig: KM CDF estimation} and \ref{fig: KM results}, demonstrate strong robustness to demand censoring. The performance loss is comparable between the Vanilla NN and Capped Base Stock heuristic, with gaps below $0.5\%$ when the estimated distribution closely approximates the true one (for $10^3$, $10^4$, and $10^5$ demand observations). Even with limited data ($10^2$ observations), the gaps remain below $2\%$. These findings suggest that HDPO can be applied directly with small censored datasets when using appropriate demand imputation, maintaining its effectiveness despite the additional preprocessing step.

\begin{figure}
\captionsetup[subfigure]{justification=centering}

\begin{subfigure}{\textwidth}
  \centering
  \includegraphics[width=1.0\linewidth]{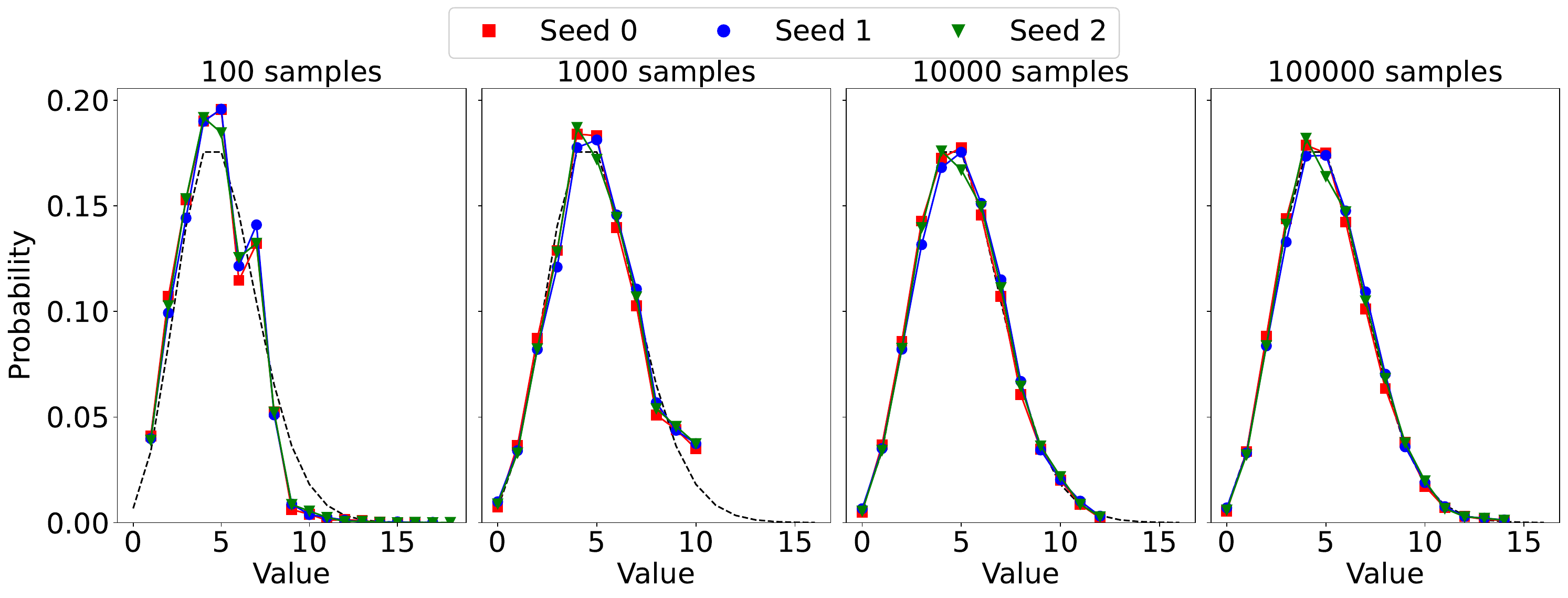}
  \caption{Demand densities estimated using the Kaplan-Meier estimator for varying sample sizes ($10^2$, $10^3$, $10^4$, and $10^5$ observations) for $3$ different random seeds. Dashed black line corresponds to the true distribution (Poisson with a mean of $5$).}
  \label{fig: KM CDF estimation}
\end{subfigure}

\begin{subfigure}{\textwidth}
  \centering
  \includegraphics[width=0.8\linewidth]{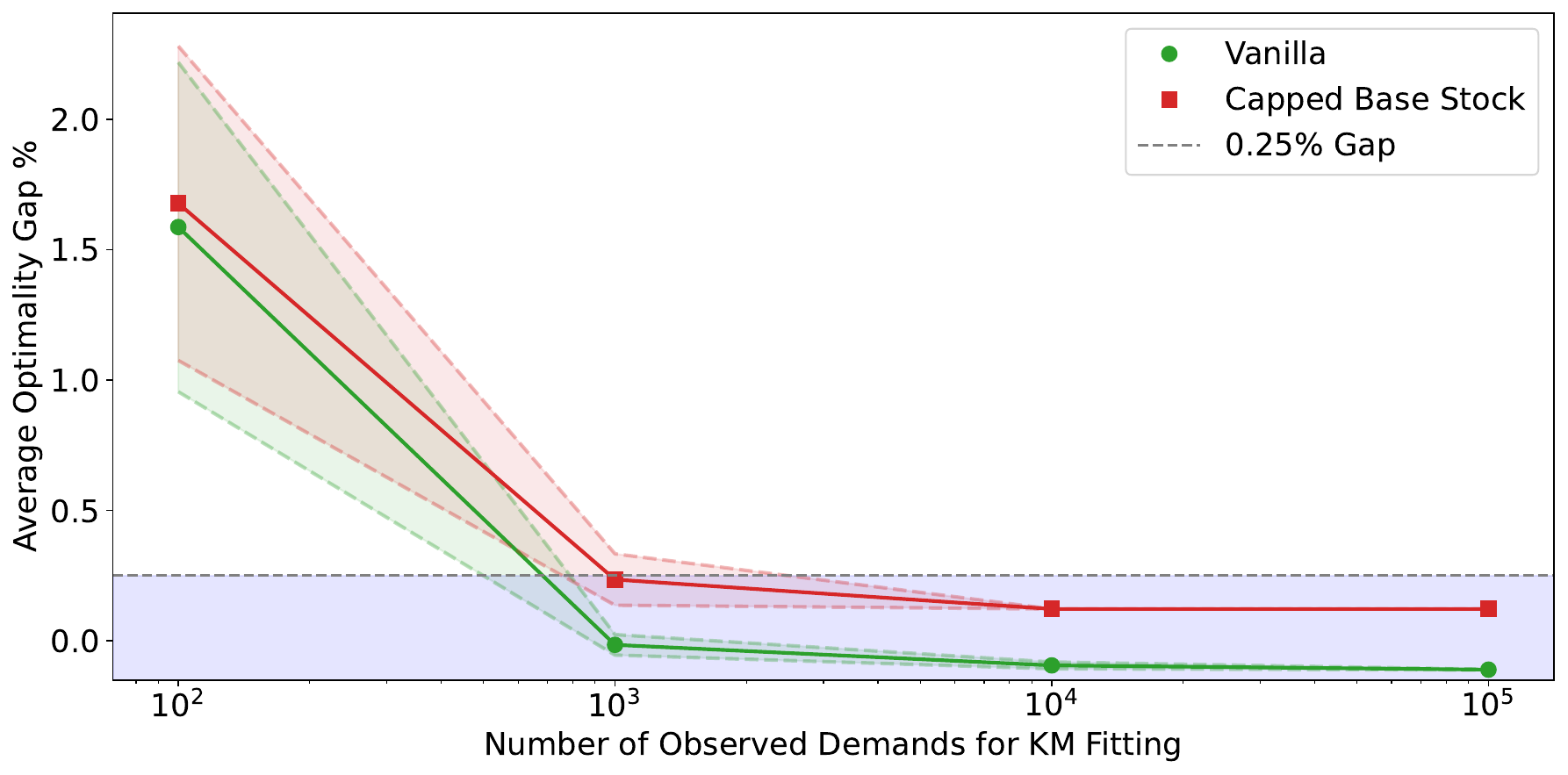}
  \caption{Average optimality gap (mean and 95\% confidence interval) for the Vanilla NN and Capped Base Stock heuristic across different sample sizes; performances are comparable. Results are averaged over multiple random seeds, with each model selected based on its best performance in a dev set generated using the estimated CDF. The performance of each model is evaluated on a test set generated from the true
Poisson demand distribution with a mean of 5. We consider that gaps below $0.25\%$ cannot be "detected", given that optimal costs are reported in \cite{zipkin2008old} with two decimal places.}
  \label{fig: KM results}
\end{subfigure}

\caption{
Evaluation of demand estimation using the Kaplan-Meier estimator (a) and policy performance (b) under varying sample sizes used for demand estimation. Setting of a single retailer operating under a lost demand assumption, with an underage cost of $4$, a holding cost of $1$, and a lead time of $2$ periods.}
\label{fig: KM}
\end{figure}

\subsubsection{Evidence of robustness of the Vanilla NN to unavoidable de-censoring errors. \label{appendix:unsuccesful-decensoring} }

In practice, a retailer might encounter situations where the dataset lacks the information required to accurately impute demand, even as the number of observations grows. One such scenario arises when systematic under-ordering prevents large demand values from being observed. To analyze HDPO's robustness in such settings, we consider scenarios where the tail of the distribution must be imputed using methods such as specifying a family of distributions and estimating its parameters via Maximum Likelihood Estimation, which may lead to performance degradation in downstream models.

For our analysis, we employ a straightforward censoring and imputation approach. We introduce a censoring threshold $M$, above which all demands are censored, and assume accurate demand estimation below this threshold due to sufficient data availability. For censored observations, we simply truncate demand at $M$ and add a sample from a predefined tail distribution. Formally, if $D_i$ is a demand sampled from the true distribution, we estimate demand as $\hat{D}_i = \min\{D_i, M\} + B_i \mathbbm{1}(D_i \geq M)$, where $B_i$ is drawn from the tail distribution.

{\bf Benchmarks.} We consider setting S2 with a single retailer under a lost demand assumption, with true demand following a Poisson distribution with mean 5. We set underage and holding costs to $p=4$ and $h=1$, with a lead time of $2$ periods.

{\bf Baselines.} We consider the CBS heuristic (see Equation \eqref{capped base stock} in Section \ref{appendix:capped-base-stock-policy}.

{\bf Experiment specifications.} For tail imputation, we use a Weibull distribution with varying scale ($\lambda = 1.0, 2.0, 3.0$) and shape ($k = 0.8, 1.0, 1.2$) parameters, testing censoring thresholds $M = 5, 6, 7, 8$. The resulting distributions for these de-censoring configurations are shown in Figure \ref{fig: Weibull CDF estimation}, spanning a wide range of tail behaviors from significantly lighter to substantially heavier than the original distribution. For each configuration, we estimate a CDF using the Weibull imputation, then generate training and development data from that estimated CDF, while testing is performed using data generated from the true Poisson distribution. We use a large sample of 32,768 samples for train, dev, and test sets. For each configuration, we consider one run for the Vanilla NN and one run for each of learning rates \{1.0, 0.5, 0.1\} for the Capped Base Stock heuristic, select the best-performing model on a dev set of de-censored demand data, and evaluate on a test set considering the true demand distribution.

{\bf Results.} Our results, illustrated in Figure \ref{fig: Weibull results}, demonstrate that while performance depends significantly on the chosen imputation scheme, the scale of degradation is comparable between the Vanilla NN and the CBS heuristic. This suggests that NNs trained via HDPO can achieve robustness on par with simple heuristics, even when operating with imperfectly estimated demand distributions.

\begin{figure}
\captionsetup[subfigure]{justification=centering}

\begin{subfigure}{\textwidth}
  \centering
  \includegraphics[width=1.0\linewidth]{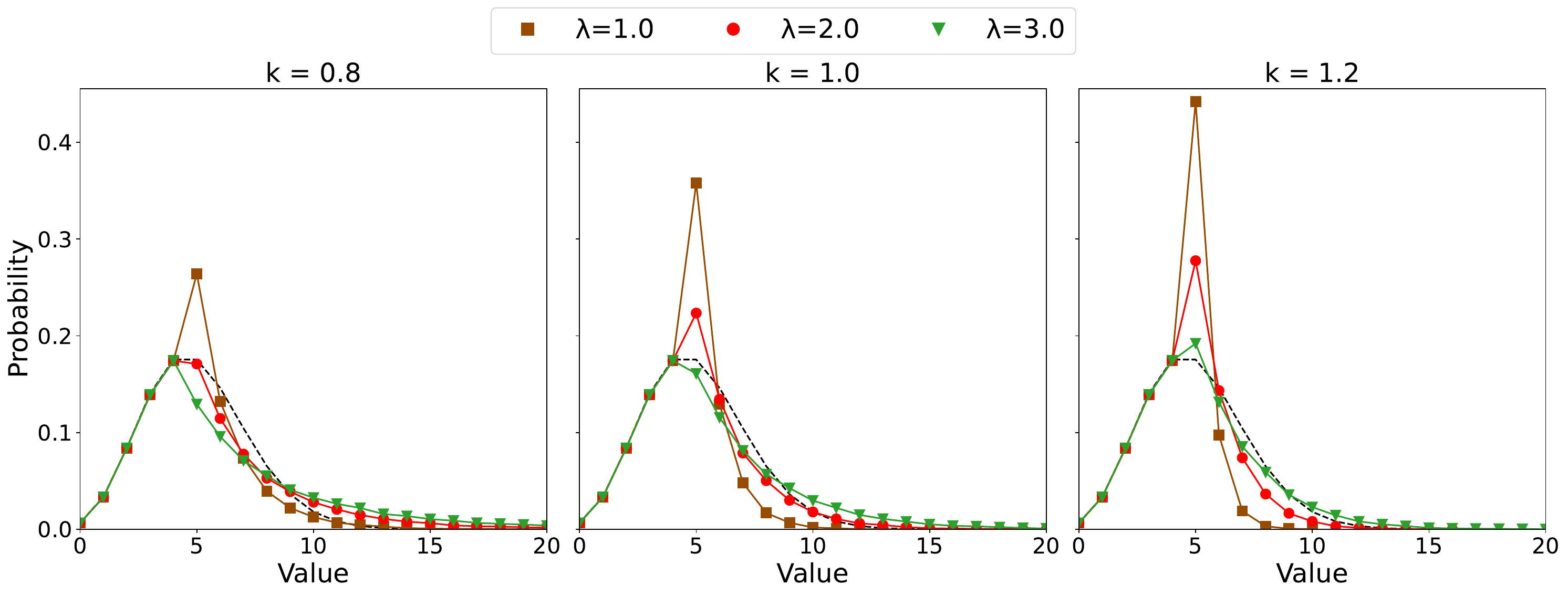}
  \caption{Estimated demand densities obtained by imputing the censored tail using a Weibull distribution with various scale ($\lambda = 1.0, 2.0, 3.0$) and shape ($k = 0.8, 1.0, 1.2$) parameters. The dashed black line corresponds to the true distribution (Poisson with a mean of 5).}
  \label{fig: Weibull CDF estimation}
\end{subfigure}

\begin{subfigure}{\textwidth}
  \centering
  \includegraphics[width=1.0\linewidth]{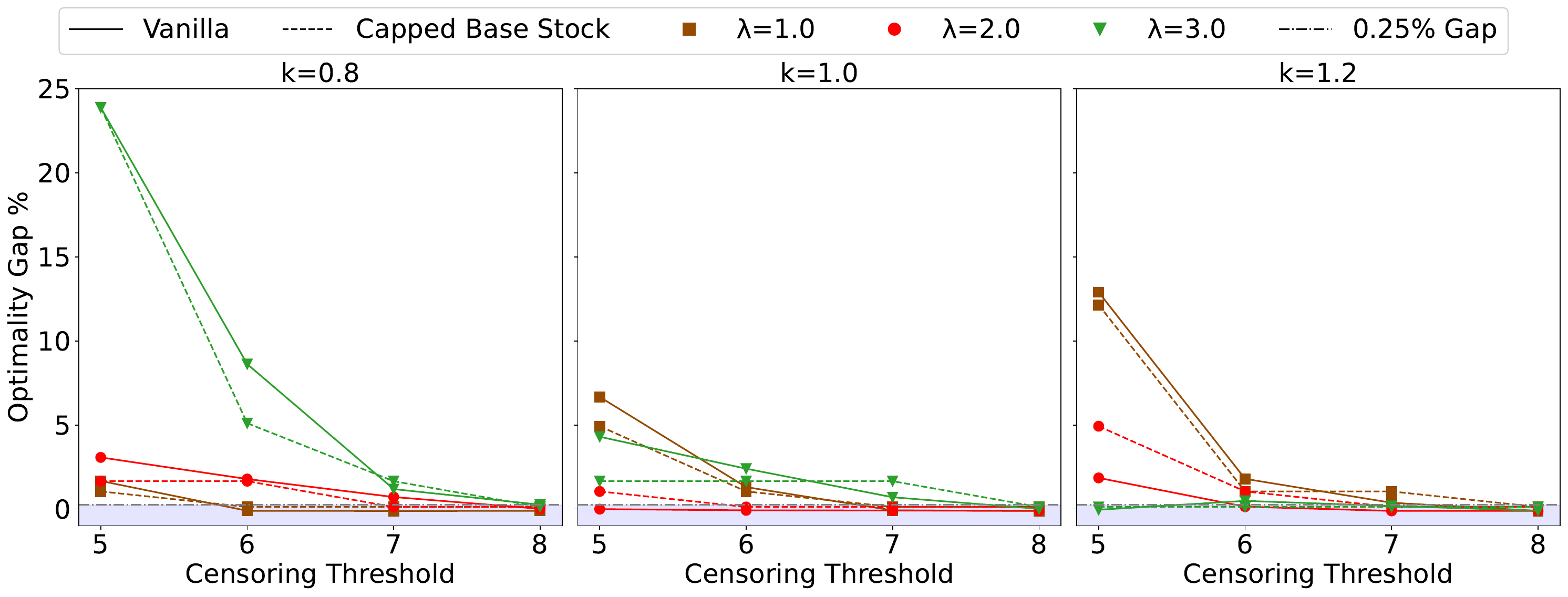}
  \caption{Optimality gap for the Vanilla NN and CBS heuristic across different parameters of the Weibull distribution used to impute the tail; performances are comparable. Each model is selected based on its best performance in a dev set generated using the estimated CDF. The performance of each model is evaluated on a test set generated from the true
Poisson demand distribution with a mean of 5. We consider that gaps below $0.25\%$ cannot be "detected", given that optimal costs }
  \label{fig: Weibull results}
\end{subfigure}

\caption{Evaluation of demand estimation (a) and policy performance (b) using a Weibull distribution under different parameters to impute the tail. Setting S2, with an underage cost of 4, a holding cost of 1, and a lead time of 2 periods.}
\label{fig: Weibull}
\end{figure}

\subsection{Sample efficiency experiment in Setting S2 \label{appendix:vanilla-hdpo-sample-efficiency} }

In this section, we evaluate the sample efficiency of Vanilla HDPO in setting S2, which considers a single store with Poisson demand and lost sales. 

{\bf Benchmarks.} We consider setting S2, fixing holding cost to $1$, the mean of Poisson demand to $5$. We consider lead times of 3 and 4 periods and underage costs of $9.0$ and $19.0$.

{\bf Baselines.} We compare against the optimal costs as reported by \cite{zipkin2008old}.

{\bf Experiment specifications.} We consider 4 meta-instances and 10 runs (each with a different seed for generating demand samples) for each meta-instance. We trained the Vanilla NN on varying numbers of training scenarios, with the dev set size matching the training set size. We train considering continuous actions and round to the nearest integer at test time. Each scenario corresponds to a synthetic demand trace for a product across 50 periods—approximately one year of sales. We average the results over the 10 independent training runs and evaluate on a common test set of 32,768 scenarios.

{\bf Results.} Figure~\ref{fig:sample-efficiency-one-store} reports the mean and 95\% confidence intervals of the optimality gap across the 40 training runs. HDPO achieves optimality gaps below 1\% with as few as 128 training scenarios (where each scenario typically corresponds to one product's demand trace), and performance nearly indistinguishable from the optimum with 512 scenarios. These results suggest that even retailers with a relatively small number of products may benefit from applying HDPO.

\begin{figure}[ht]
    \centering
    
    \includegraphics[width=0.8\textwidth]{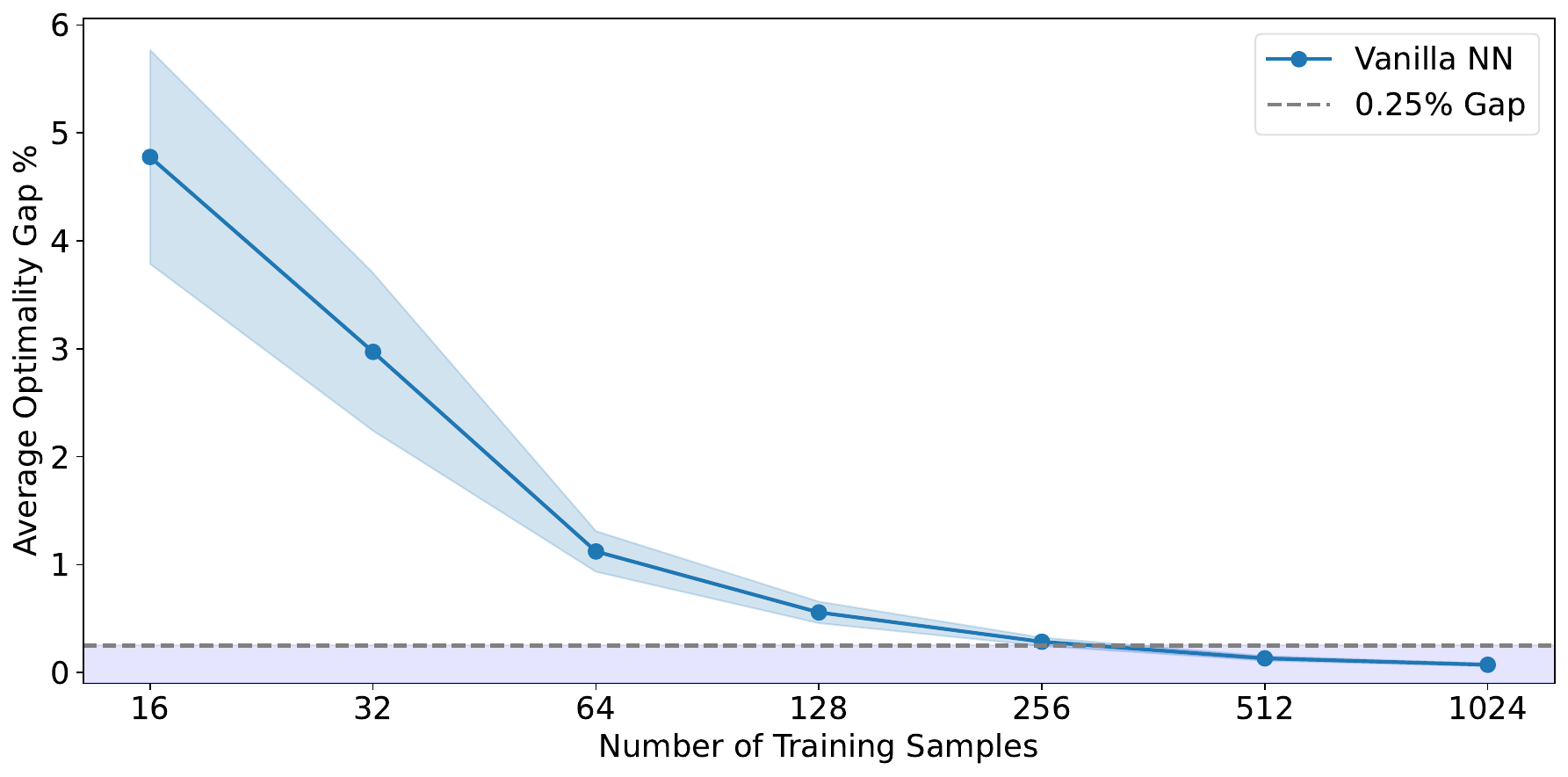}
    \caption{Sample efficiency of HDPO in setting S2. We tested across 4 meta-instances (with a fixed instance per meta-instance) and 10 independent training runs (each with a different seed) per meta-instance. Each scenario represents a synthetic demand trace over $50$ periods. The average optimality gaps are shown with 95\% confidence intervals.}
    \label{fig:sample-efficiency-one-store}
\end{figure}

\subsection{Vanilla HDPO in Setting S5 \label{appendix:one-store-real-setting}}

{\bf Benchmarks, Baselines and Experiment specifications.} See Section \ref{sec:vanilla-hdpo-realistic}.

{\bf Results.}
Table \ref{table:real-data-lost-demand} showcases the performance of our approach across different average unit underage costs. It is important to note that, despite including the elapsed time and the number of gradient steps required to achieve a $1\%$ performance gap relative to a baseline, our approach was not specifically optimized for speed. These metrics would likely be considerably smaller if the hyperparameters were selected with speed optimization as the primary objective.

Table \ref{table:real-data-lost-demand} demonstrates that our approach exhibits robust generalization, as relative profits remain consistent across train, dev, and test sets. Furthermore, our approach consistently achieves profits within $1\%$ of the Transformed Newsvendor baseline in just $\edit{10}$ seconds per instance. Notably, as shown in Figure \ref{fig: real_data_lost_demand} in Section \ref{sec:vanilla-hdpo-realistic}, the Transformed Newsvendor policy achieves profits within $5\%$ of our approach for instances with unit underage costs of $9$ or greater. This indicates that our approach delivers strong performance with minimal computational time.

\begin{table}[h!]
\begin{center}
\caption{Performance metrics of the Vanilla NN for each instance of the setting with one store considering sales data from the \textit{Corporación Favorita Grocery Sales Forecasting} competition, under a lost demand assumption. Evaluation on dev set was performed every $\edit{10}$ epochs (equivalent to $\edit{40}$ gradient steps), so time and gradient steps represent upper bounds. Profits are relative to the Just-in-time policy, and the Transformed Newsvendor is considered as the baseline.
\label{table:real-data-lost-demand}}
\begin{tabular}{>{\raggedleft}p{1.5cm}>{\raggedleft}p{1.2cm}>{\raggedleft}p{1.2cm}>{\raggedleft}p{1.2cm}>{\raggedleft}p{1.5cm}>{\raggedleft}p{1.5cm}>{\raggedleft}p{1.5cm}>{\raggedleft}p{2.0cm}>{\raggedleft\arraybackslash}p{2.0cm}}
\toprule
\raggedright Average unit underage cost & \raggedright Train profit & \raggedright Dev profit & \raggedright Test profit & \raggedright Relative train profit (\%) & \raggedright Relative dev profit (\%) & \raggedright Relative test profit (\%) & \raggedright Time to 1\% gap from baseline (s) & \raggedright\arraybackslash Gradient steps to 1\% gap from baseline \\
\midrule
2  & 69.35  & 70.53  & 64.90  & 64.83 & 67.83 & 66.01 & 10 & 40 \\
3  & 112.55 & 113.06 & 104.26 & 70.14 & 72.49 & 70.69 & 10 & 40 \\
4  & 157.60 & 157.26 & 144.89 & 73.66 & 75.62 & 73.68 & 10 & 40 \\
6  & 250.42 & 248.41 & 228.64 & 78.02 & 79.63 & 77.51 & 10 & 40 \\
9  & 394.70 & 389.39 & 359.59 & 81.98 & 83.21 & 81.27 & 10 & 40 \\
13 & 590.35 & 582.13 & 537.57 & 84.89 & 86.12 & 84.11 & 10 & 40 \\
19 & 890.04 & 876.13 & 809.90 & 87.56 & 88.68 & 86.70 & 10 & 40 \\
\bottomrule
\end{tabular}

\end{center}
\end{table}

Lastly, Figure \ref{fig:profit-period-lost-demand} displays the weekly profit, averaged across scenarios, for each week and policy in the test set. The illustration highlights that our approach consistently achieves higher profits almost every week, emphasizing the reliability of our approach.

\begin{figure}
    \centering
    \includegraphics[width=0.8\textwidth]{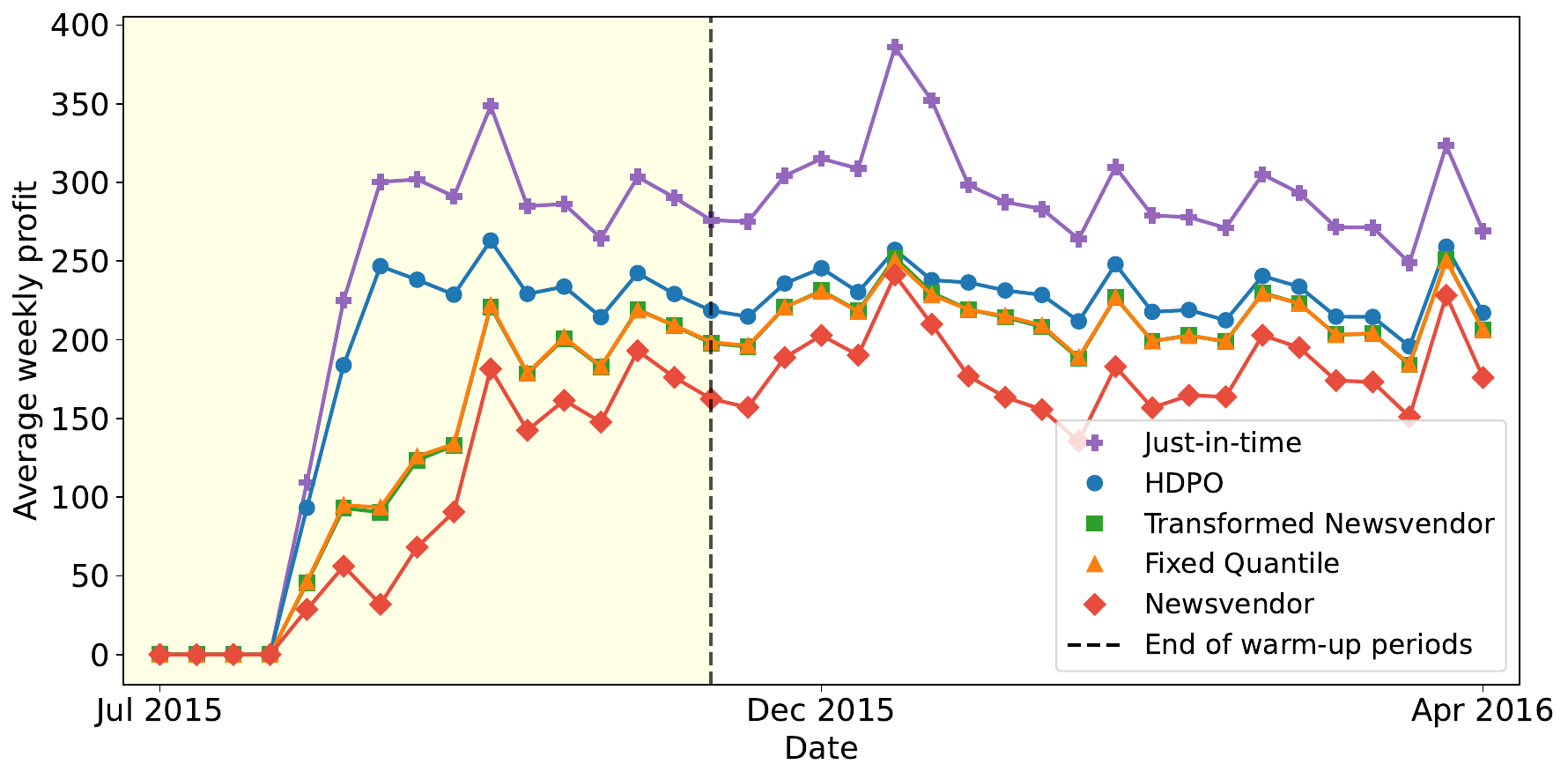}
    \caption{Average weekly profit on the \edit{test} set obtained by each policy in the setting with one store and realistic demand, for an average unit underage cost of $6$ and under a lost demand assumption. \edit{The yellow shaded area denotes warm-up periods omitted from evaluation to minimize the influence of initial conditions.}}
    \label{fig:profit-period-lost-demand}
\end{figure}

\subsubsection{What do generalized newsvendor policies miss? 
\label{sec: what-heuristics-miss} }

Quantile policies are designed to be effective in a backlogged demand setting, where all incoming demand depletes the existing inventory. 
In contrast, in a lost demand setting demand does not deplete inventory upon a stock-out, posing a challenge for policies optimized for backlogged demand settings. While adjusting the target quantile can partially address this issue, the underestimation of inventory may depend on the current inventory state in a non-trivial manner. As anticipated, in the lost demand setting, the performance of generalized newsvendor policies improves with an increase in average underage cost (see Figure \ref{fig: real_data_lost_demand} in Section \ref{sec:vanilla-hdpo-realistic}), as unmet demand is expected to decrease. This observation aligns with previous analyses in stylized settings considering a lost demand assumption, where base-stock policies become asymptotically optimal as the underage cost grows large \citep{sun2014quadratic}.

To conduct a more detailed analysis, we calculate the \textit{implied quantile} that the agent orders up to by "inverting" the target level $(a_t + X_t)$ using our quantile forecaster. In other words, we find the $\tau^{\pi}$ that solves $H(L, \mathcal{F}_t)(\tau^{\pi}) = (a_t + X_t)$ following Equation \eqref{eq: quantile-policy} (with the implied quantile set to $0$ when $a_t = 0$ since we cannot solve for $\tau^{\pi}$ in that case). We standardize this quantity by subtracting the mean and dividing by the standard deviation for each scenario. Subsequently, we define the \textit{stock-out ratio} as the cumulative unmet demand until the order at time $t$ arrives (\ie $L$ periods into the future), divided by the scenario's average cumulative demand across $L$ periods.

In Figure \ref{fig:quantile_vs_lost_demand}, we group the standardized implied quantile into buckets of width $0.1$ and compute the sample average stock-out ratio for each bucket, for the meta-instance with $\hat{u} = 9$. We interpret this as the expected stock-out ratio observed by the agent at time $t$ when defining each quantile. The analysis reveals that our agent tends to place orders corresponding to lower quantiles when predicting significant stock-outs in subsequent periods, highlighting the potential underestimation of future inventory by a generalized newsvendor policy in such cases.

\begin{figure}
    \centering
    \includegraphics[scale=0.6]{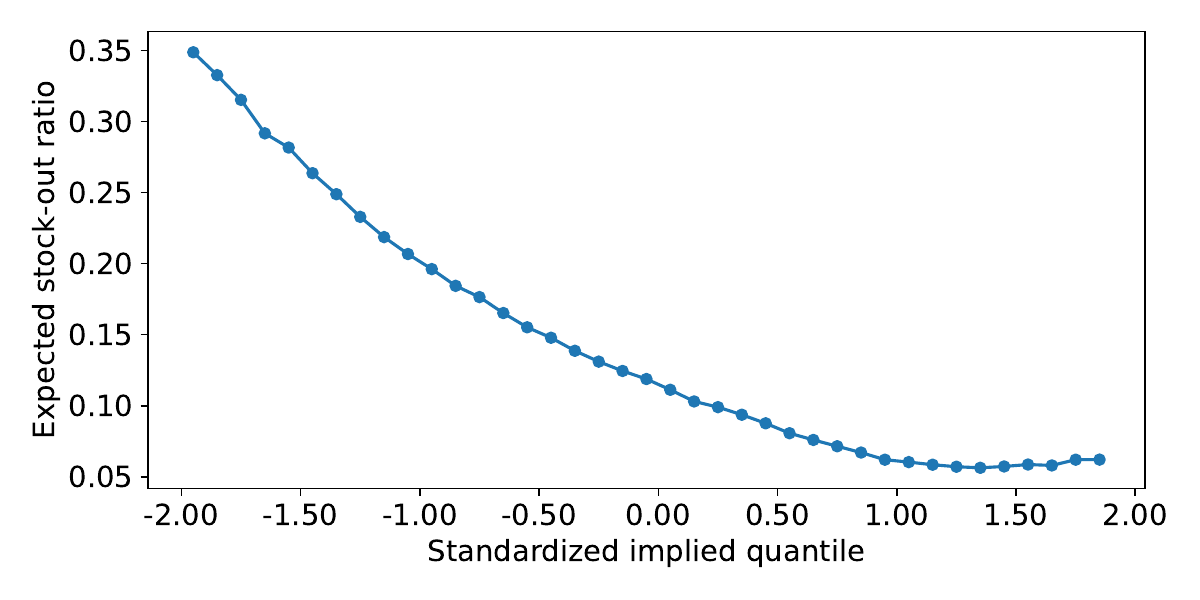}
    \caption{Standardized implied quantile versus expected stock-out ratio for setting S5 considering an average unit underage cost of $9$ and a lost demand assumption. The standardized implied quantile is binned with a width of $0.1$, and the expected stock-out ratio is computed by averaging the sample stock-out ratio within each bin.}
    \label{fig:quantile_vs_lost_demand}
\end{figure}

An additional drawback of generalized newsvendor policies lies in their failure to incorporate value functions. For products with pronounced seasonality, such as Christmas trees, excessive ordering may lead to incurring additional holding costs across multiple periods beyond the one in which orders arrive. \edit{In our experiments involving large unit underage costs (in which stock-outs are rare and therefore considering demand to be backlogged is a good approximation),} the over-ordering indicated by the Newsvendor quantile can be partially mitigated by statically adjusting the target quantile, as demonstrated by the favorable performance of the Transformed Newsvendor policy (refer to Figure \ref{fig: real_data_lost_demand}). However, it is reasonable to anticipate that in a scenario featuring predictable short-term demand surges (such as planned promotions or national holidays), the effectiveness of generalized newsvendor policies might decline.

Finally, it is crucial to emphasize that constructing a generalized newsvendor policy, which involves building and training a forecaster for multiple time horizons and quantiles, requires significantly more computational resources and time compared to building and training an agent directly on the downstream task. Furthermore, determining how to divide a task into prediction and optimization stages becomes increasingly challenging as the problem incorporates more realistic features, such as the inclusion of random lead times. This underscores additional advantages of an end-to-end approach over strategies that separate prediction and decision-making.

\subsection{Sample efficiency experiments in Settings S3, S4 and S6--S9 \label{appendix:sample-efficiency-results}}

This appendix provides detailed specifications for the six experimental settings evaluated in Section~\ref{sec:sample-efficiency-results}. These settings are designed to evaluate the sample efficiency of different policy architectures across a range of inventory control problems, which vary in terms of network structure, demand model, and the number of locations. Each setting is described in detail in the following sub-sections.

{\bf Experiment specifications.}
Across all settings with synthetic demand (Settings S3, S4, S6 and S8), we assess sample efficiency by varying the number of training scenarios over ${128, 1024, 8192}$. For the two settings that use realistic demand data (Settings S7 and S9), the training set considers $64$ and $288$ product-level demand traces, with the dev and test sets having the same size (see Appendix \ref{appendix:global-settings} for the temporal split specifications).

To ensure a fair comparison between architectures, we conduct a grid search over the most relevant hyperparameters. While more sophisticated methods such as Bayesian optimization could be used, our goal is to standardize the tuning process across models. Specifically, we vary the learning rate and the size of each NN layer. For the Vanilla NN architecture, we fix the number of hidden layers to $3$, and consider hidden layer widths of $128, 256, 512$ neurons and learning rates of $10^{-4}, 10^{-3}, 10^{-2}$. The only exception is setting S3, where we consider 2 hidden layers and widths of $32, 64, 128$. This treatment follows the best-performing parameters previously identified, which are shown in Table \ref{tab:vanilla-defaults}. 
For the GNN-based architecture, we vary learning rates over the same set and consider other hyperparameters to be fixed, as specified in Table \ref{tab:gnn-defaults}. Each configuration is run three times for every architecture.

For settings involving synthetic demand, the batch size for training is set to the minimum of 1024 and the total number of training scenarios. For settings with realistic demand, we set the batch size to 72 samples since the datasets are significantly smaller. In early experiments, we also evaluated the use of weight decay to mitigate overfitting, but it yielded no consistent improvements and was excluded from final experiments.

For each fixed meta-instance, sample size and architecture class, we report the metrics corresponding to the run that minimizes the dev loss among all runs and hyperparameters
For settings with a known bound on the optimal cost (Settings S3 and S4), we report percentage optimality gaps. In settings with synthetic demand but no such bound (Settings S6 and S8), we report the relative excess cost (\%)—measured against the best-performing run across all architectures, hyperparameters, and sample sizes for a fixed number of locations. For the settings involving realistic demand (S7 and S9), we report the percentage of achievable profit relative to the Just-in-time benchmark (see Appendix~\ref{appendix:bound-just-in-time}), which serves as an upper bound on achievable performance.

{\bf Results.}
We summarize the performance of the Vanilla and GNN architectures across all six settings and training sample sizes in Figure~\ref{fig:sample_efficiency_subfigure}. More detailed results for each setting are shown in Tables \ref{table:sample-efficiency-serial}-\ref{table:sample-efficiency-mwms-realistic}. These tables report, for each meta-instance, number of training scenarios, and architecture class, the hyperparameters that achieved the best dev performance, alongside the corresponding train loss, dev loss, and test loss. The final column presents a relative performance metric (optimality gaps, relative excess costs, or percentage of achievable profit, depending on the setting as explained in the previous paragraph). We highlight the best test performance for each meta-instance in bold.

\begin{figure}[htbp!]
\captionsetup[subfigure]{justification=centering}
\centering

\begin{subfigure}[t]{.32\textwidth}
  \centering
  \includegraphics[width=\linewidth]{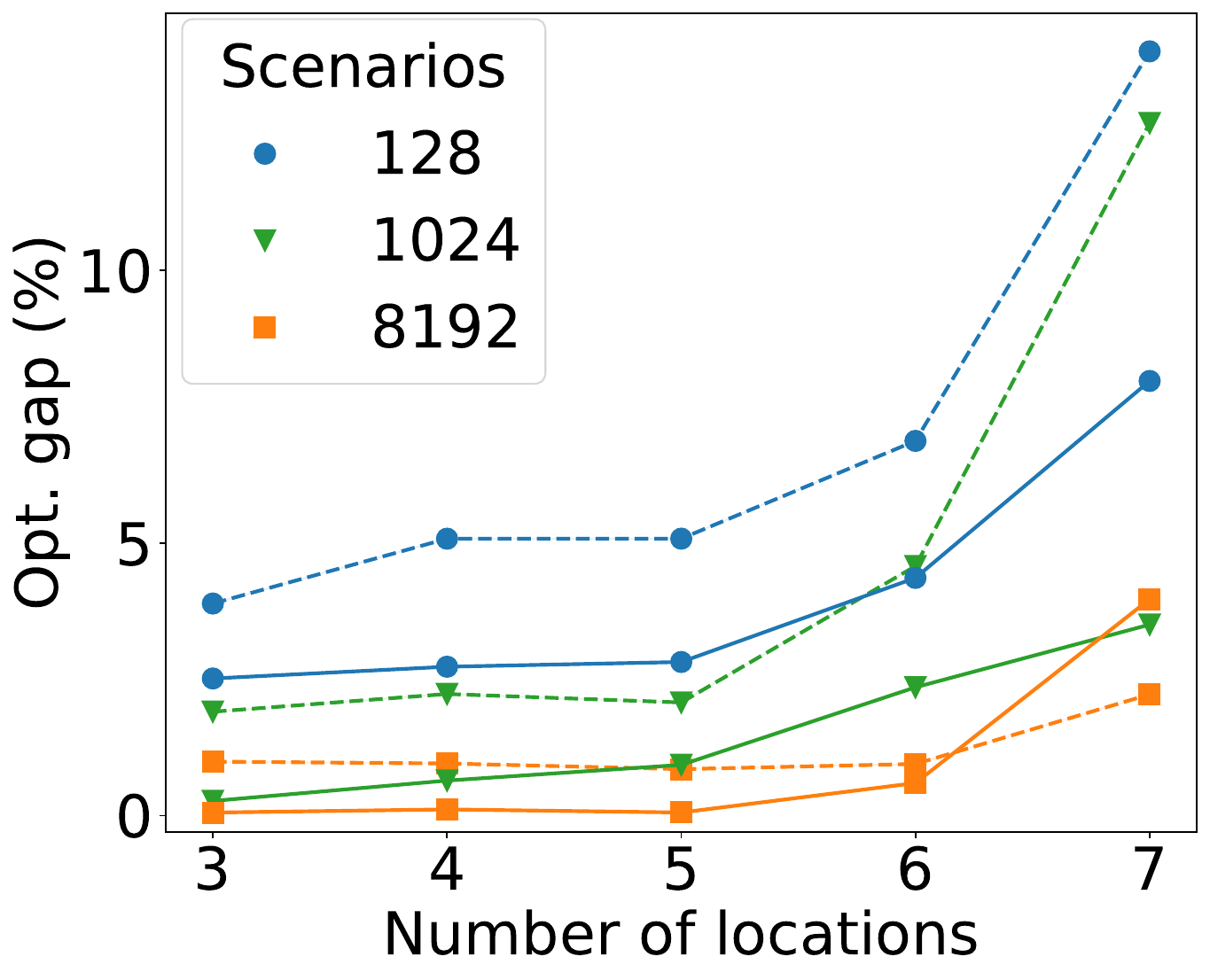}
  \caption{Setting S3. \textbf{Lower is better.}}
  \label{fig:sample-efficiency-serial}
\end{subfigure}%
\hspace{0.015\textwidth}%
\begin{subfigure}[t]{.32\textwidth}
  \centering
  \includegraphics[width=\linewidth]{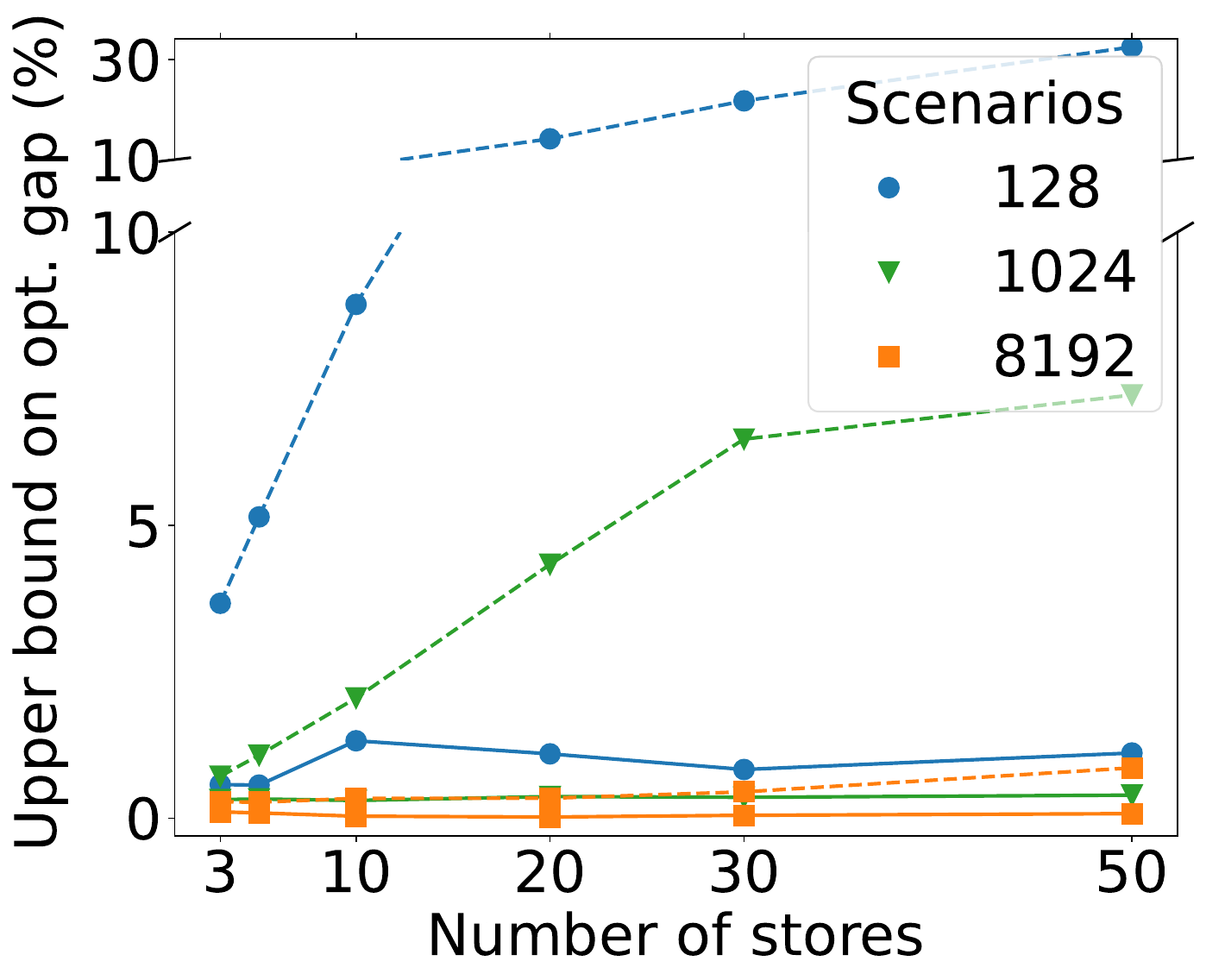}
  \caption{Setting S4. \textbf{Lower is better.}}
  \label{fig:sample-efficiency-1dc-trans}
\end{subfigure}%
\hspace{0.015\textwidth}%
\begin{subfigure}[t]{.32\textwidth}
  \centering
  \includegraphics[width=\linewidth]{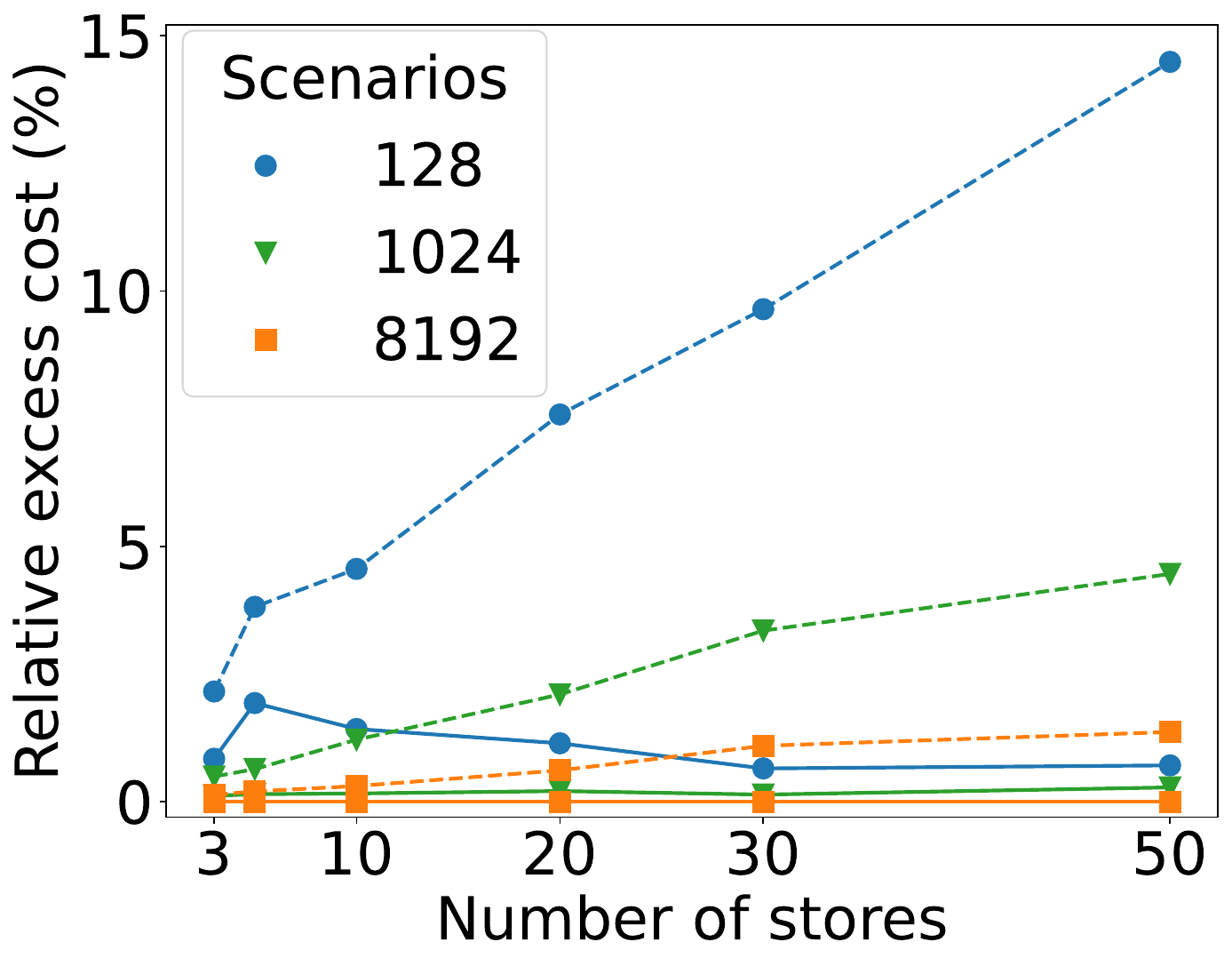}
  \caption{Setting S6. \textbf{Lower is better.}}
  \label{fig:sample-efficiency-1dc-lost}
\end{subfigure}
\vspace{0.25cm}

\begin{subfigure}[t]{.32\textwidth}
  \centering
  \includegraphics[width=\linewidth]{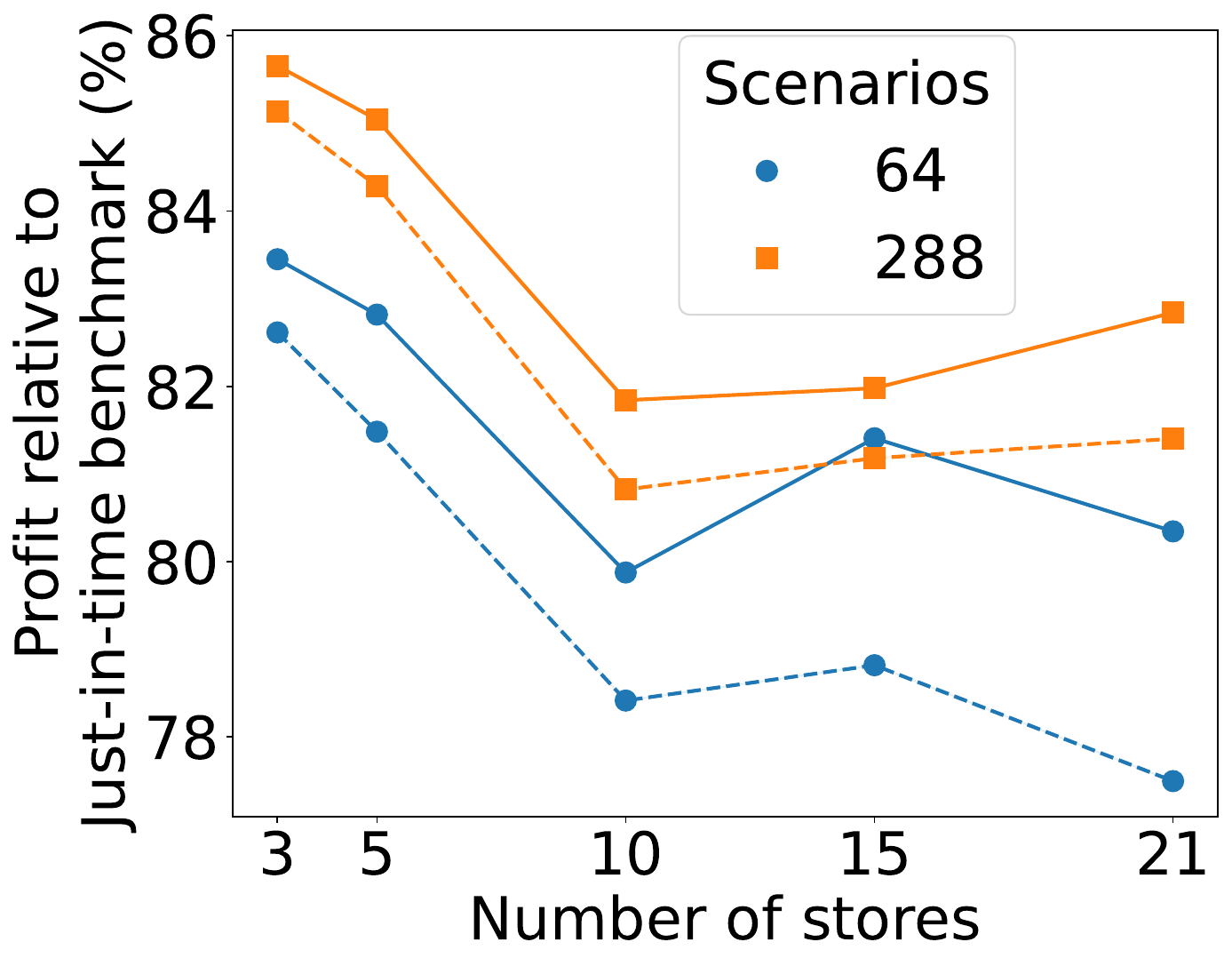}
  \caption{Setting S7. \textbf{Higher is better.}}
  \label{fig:sample-efficiency-1dc-lost-real}
\end{subfigure}%
\hspace{0.015\textwidth}%
\begin{subfigure}[t]{.32\textwidth}
  \centering
  \includegraphics[width=\linewidth]{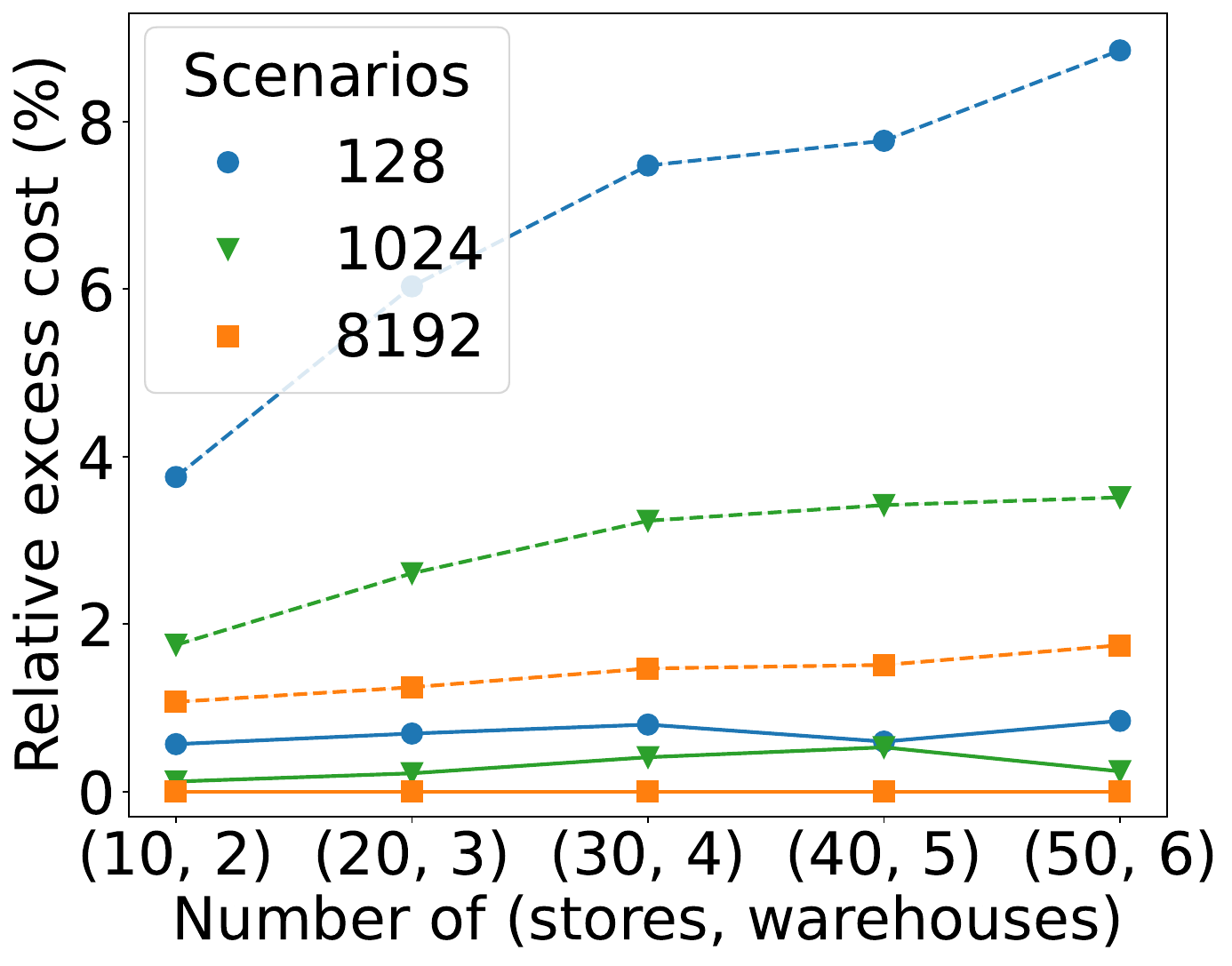}
  \caption{Setting S8. \textbf{Lower is better.}}
  \label{fig:sample-efficiency-manydcs}
\end{subfigure}%
\hspace{0.015\textwidth}%
\begin{subfigure}[t]{.32\textwidth}
  \centering
  \includegraphics[width=\linewidth]{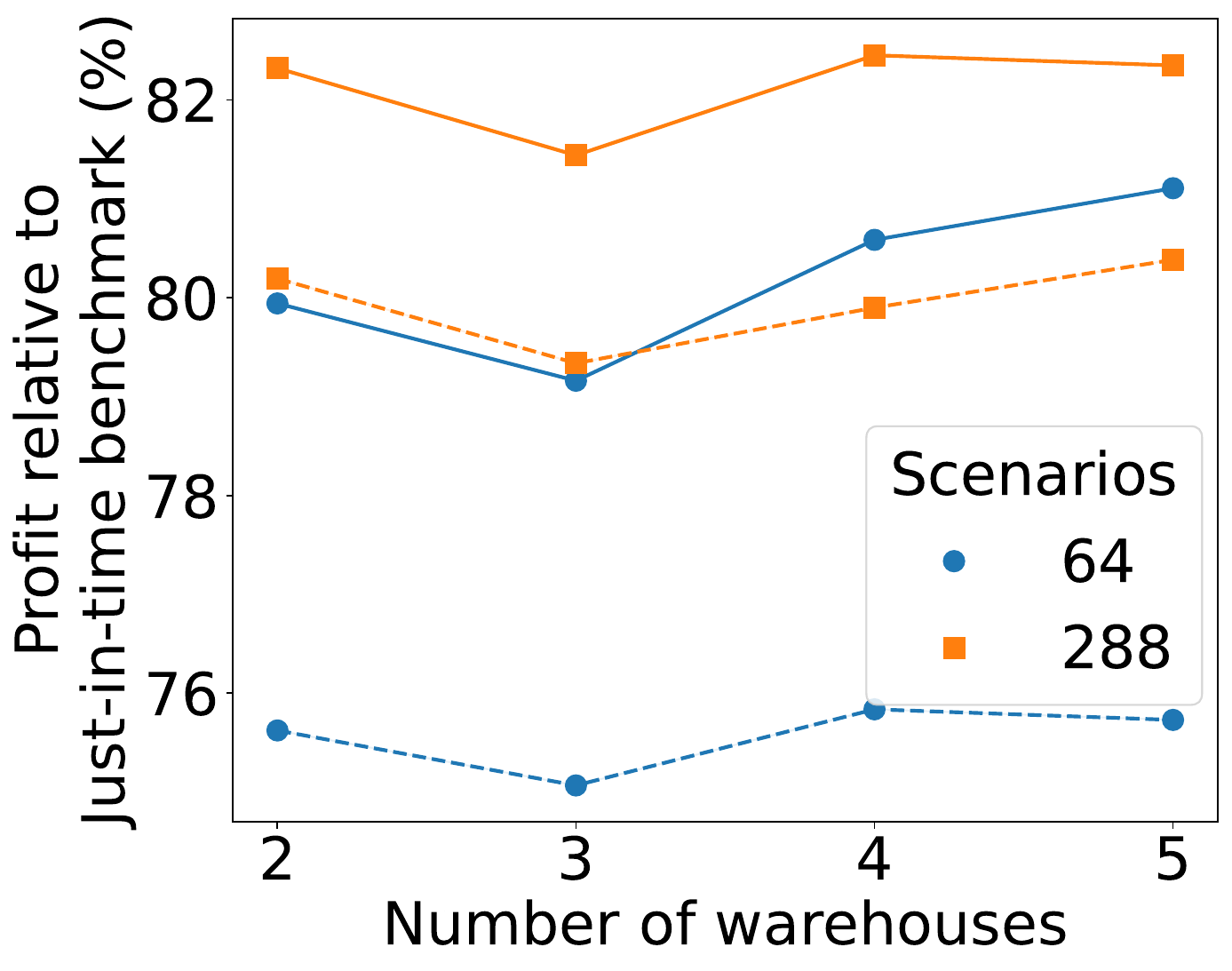}
  \caption{Setting S9. \textbf{Higher is better.}}
  \label{fig:sample-efficiency-manydcs-real}
\end{subfigure}

\caption{Sample efficiency results across the six settings outlined in Table~\ref{table:experiment-settings}. Each plot shows relative performance as a function of the number of training scenarios and locations. Costs are normalized by the minimum value achieved within each architecture class, number of scenarios, and location count. Solid lines represent the SA-MP NN, dashed lines represent the Vanilla NN, and line colors correspond to the number of scenarios used for train and dev sets. Some legends are omitted from subplots for clarity.}
\label{fig:sample_efficiency_subfigure}
\end{figure}

\subsubsection{Recovering near-optimal performance in stylized settings.}  Figures~\ref{fig:sample-efficiency-serial_GNN} and~\ref{fig:sample-efficiency-transshipment_GNN} report the optimality gaps achieved by the GNN in settings S3 and S4, which involve serial and OWMS network topologies, respectively. Setting S3 proved to be the most challenging among the six, likely due to the long serial inventory flow, which may induce sharp local optima. Despite this, the GNN performs within 1\% of optimal with 8192 training scenarios for systems with up to 6 locations, and achieves gaps below 3\% with just 128 training scenarios for systems with 5 or fewer locations. While performance degrades for 6 and 7 locations, we view these instances as unrealistic due to the complexity introduced by long inventory chains. Still, these results highlight the GNN’s ability to recover near-optimal policies in structurally challenging environments.

Figure~\ref{fig:sample-efficiency-transshipment_GNN} further demonstrates the GNN’s striking sample efficiency in setting S4: with only 128 samples, it consistently achieves optimality gaps below 1.5\%, even for systems with up to 50 stores. With 8192 samples, the gap drops below 0.2\%. \edit{These instances involve state representations with up to 303 dimensions—or 555 when including instance parameters—highlighting the GNN’s ability to scale effectively to high-dimensional problems.}

\begin{figure}
\label{fig:sample-efficiency-serial-and-transshipment_GNN}
\begin{subfigure}{.49\textwidth}
  \centering
  \includegraphics[width=1\linewidth]{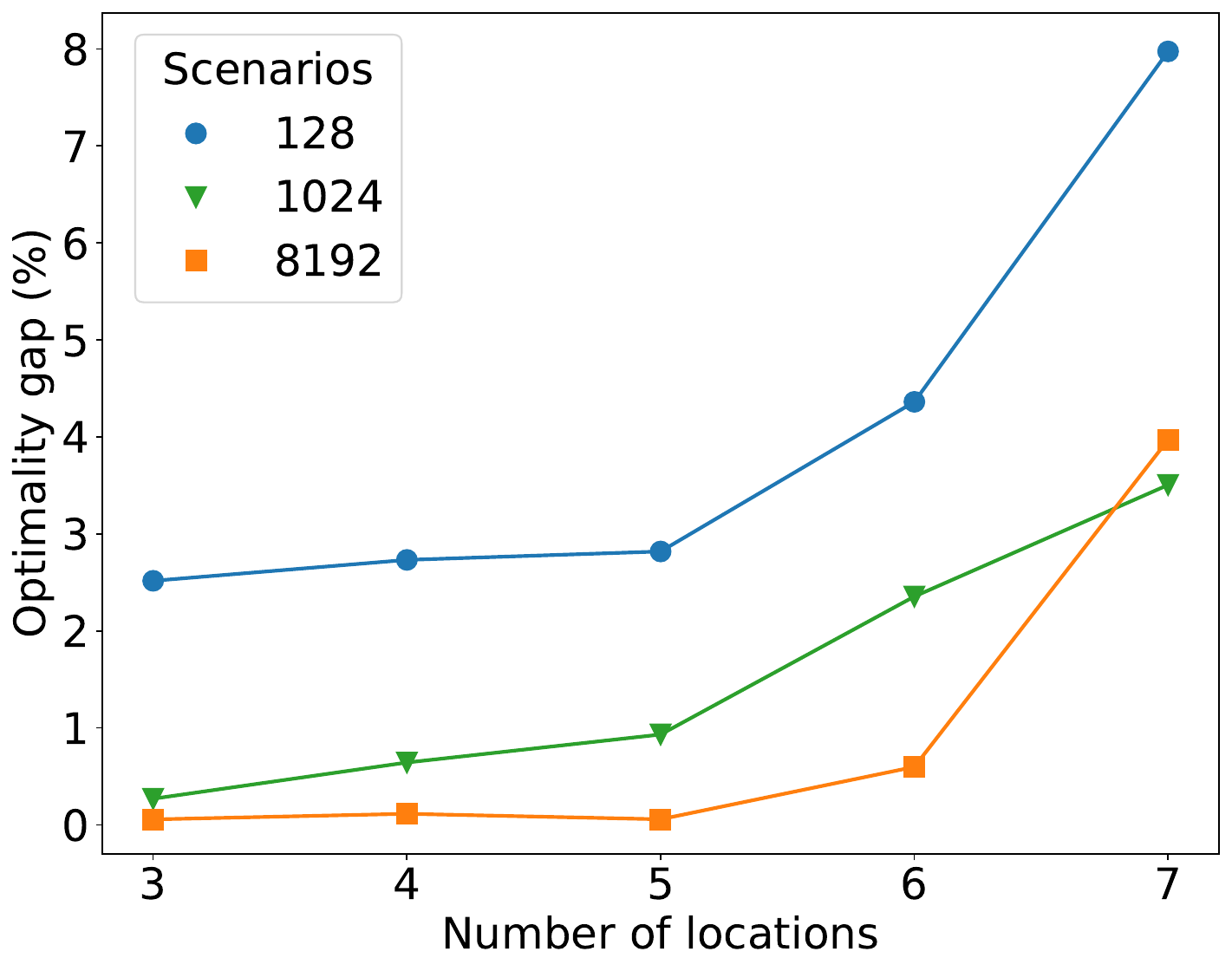}
  \caption{Setting S3}
  \label{fig:sample-efficiency-serial_GNN}
\end{subfigure}%
\hspace{0.02\textwidth}
\begin{subfigure}{.49\textwidth}
  \centering
  \includegraphics[width=1\linewidth]{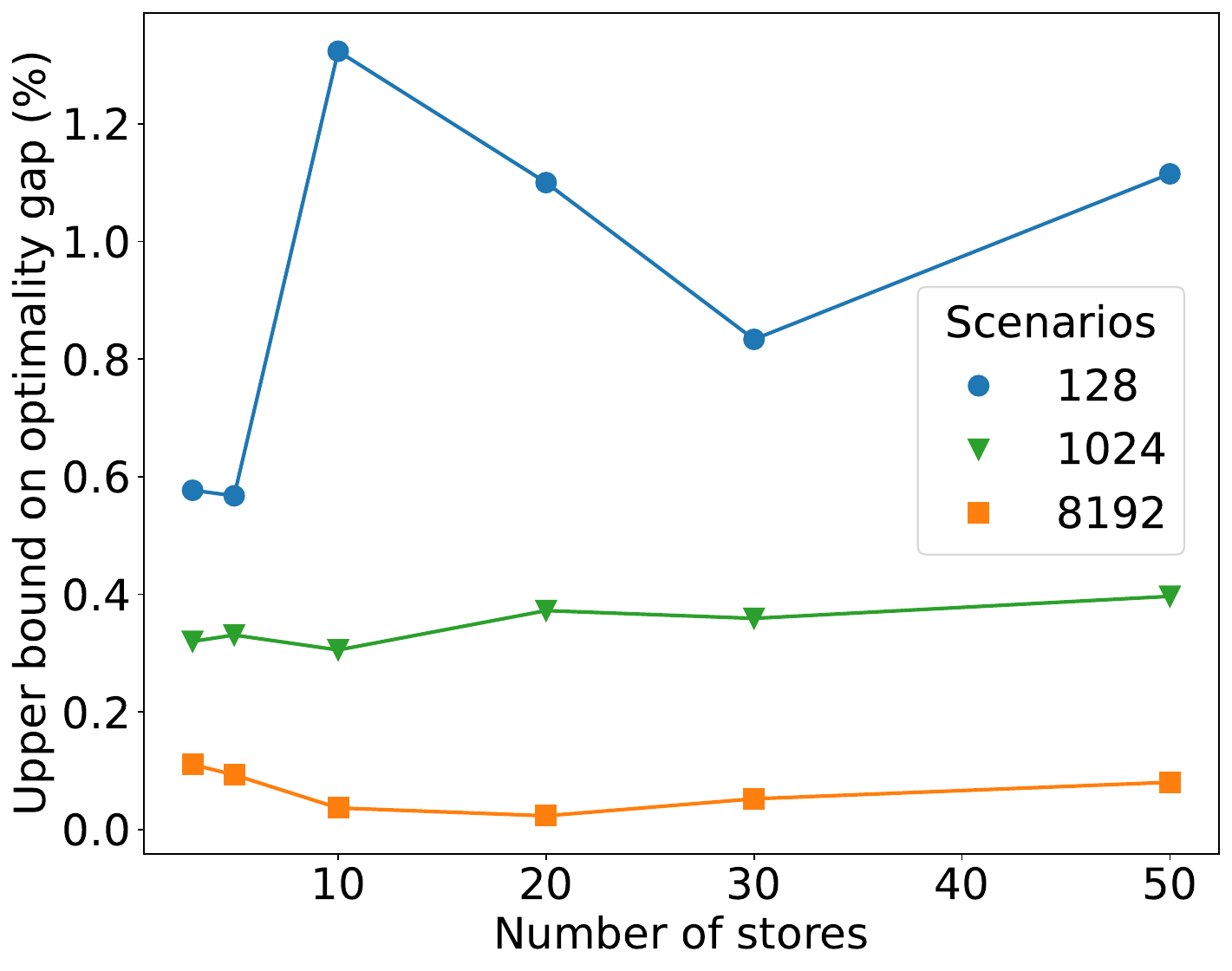}
  \caption{Setting S4}
  \label{fig:sample-efficiency-transshipment_GNN}
\end{subfigure}

\caption{Optimality gap (\%) of the GNN in settings S3 (left) and S4 (right) across multiple sample sizes.}
\end{figure}

\subsubsection{Setting S3.} \hfill
\label{appendix:sample-efficiency-serial}

{\bf Benchmarks.} Demand follows a normal distribution with mean 5 and standard deviation 2. We vary the number of locations in $\{3, 4, 5, 6, 7\}$, numbering them sequentially from the outside supplier (location 0) to the final store. The underage cost is set to 9.

The location parameters (holding cost, lead time from supplier) are assigned from a master list: $(0.1, 2)$, $(0.2, 4)$, $(0.25, 1)$, $(0.3, 3)$, $(0.4, 2)$, $(0.5, 3)$, $(1.0, 4)$. For each network size of $n$ locations (excluding the outside supplier), we use the last $n$ parameter pairs and re-number them as locations 1 through $n$ (\eg for $n=4$ locations, we use the last 4 parameter pairs corresponding to $(0.3, 3)$, $(0.4, 2)$, $(0.5, 3)$, $(1.0, 4)$ and assign them to locations 1, 2, 3, 4 respectively).

For each number of locations, we consider one meta-instance and one fixed instance.

{\bf Results.} Results are shown in Table~\ref{table:sample-efficiency-serial}.

\begin{table}[h!]
\begin{center}
\scriptsize
\caption{Metrics for Setting S3.}
\label{table:sample-efficiency-serial}

\begin{tabular}{>{\raggedleft}p{1.9cm}>{\raggedleft}p{2.3cm}>{\raggedleft}p{2.0cm}>{\raggedleft}p{1.3cm}>{\raggedleft}p{1.6cm}>{\raggedleft}p{0.7cm}>{\raggedleft}p{0.7cm}>{\raggedleft}p{0.7cm}>{\raggedleft\arraybackslash}p{1.6cm}}
\toprule
Number of echelons & Training scenarios (\#) & Architecture Class & Learning rate & Units per layer (\#) & Train loss & Dev loss & Test loss & Test opt. gap (\%) \\
\midrule
3 & 128  & GNN     & 0.0010 & -   & 9.76 & 10.10 & 10.11 & 2.43 \\
3 & 128  & Vanilla & 0.0100 & 128 & 10.02 & 10.13 & 10.25 & 3.85 \\
3 & 1024 & GNN     & 0.0100 & -   & 9.94 & 9.94  & 9.89  & 0.20 \\
3 & 1024 & Vanilla & 0.0100 & 64  & 10.10 & 10.10 & 10.05 & 1.82 \\
3 & 8192 & GNN     & 0.0010 & -   & 9.92 & 9.89  & \textbf{9.87} & 0.00 \\
3 & 8192 & Vanilla & 0.0100 & 32  & 9.98 & 9.98  & 9.96  & 0.91 \\
\midrule
4 & 128  & GNN     & 0.0010 & -   & 10.55 & 11.00 & 11.02 & 2.70 \\
4 & 128  & Vanilla & 0.0100 & 32  & 11.12 & 11.12 & 11.27 & 5.03 \\
4 & 1024 & GNN     & 0.0010 & -   & 10.87 & 10.84 & 10.80 & 0.65 \\
4 & 1024 & Vanilla & 0.0100 & 64  & 10.99 & 11.01 & 10.97 & 2.24 \\
4 & 8192 & GNN     & 0.0010 & -   & 10.82 & 10.76 & \textbf{10.74} & 0.09 \\
4 & 8192 & Vanilla & 0.0100 & 32  & 10.86 & 10.86 & 10.83 & 0.93 \\
\midrule
5 & 128  & GNN     & 0.0010 & -   & 10.49 & 11.16 & 11.29 & 2.82 \\
5 & 128  & Vanilla & 0.0100 & 64  & 11.19 & 11.46 & 11.54 & 5.10 \\
5 & 1024 & GNN     & 0.0010 & -   & 11.21 & 11.11 & 11.09 & 1.00 \\
5 & 1024 & Vanilla & 0.0100 & 128 & 11.17 & 11.30 & 11.21 & 2.09 \\
5 & 8192 & GNN     & 0.0001 & -   & 11.10 & 11.01 & \textbf{10.99} & 0.09 \\
5 & 8192 & Vanilla & 0.0100 & 64  & 11.15 & 11.11 & 11.08 & 0.91 \\
\midrule
6 & 128  & GNN     & 0.0010 & -   & 11.60 & 12.11 & 12.22 & 4.36 \\
6 & 128  & Vanilla & 0.0100 & 32  & 11.58 & 12.38 & 12.52 & 6.92 \\
6 & 1024 & GNN     & 0.0010 & -   & 12.27 & 12.03 & 11.99 & 2.39 \\
6 & 1024 & Vanilla & 0.0010 & 64  & 12.05 & 12.34 & 12.25 & 4.61 \\
6 & 8192 & GNN     & 0.0010 & -   & 11.77 & 11.79 & \textbf{11.78} & 0.60 \\
6 & 8192 & Vanilla & 0.0100 & 32  & 12.00 & 11.86 & 11.82 & 0.94 \\
\midrule
7 & 128  & GNN     & 0.0010 & -   & 12.06 & 13.07 & 13.29 & 7.96 \\
7 & 128  & Vanilla & 0.0010 & 64  & 15.16 & 13.97 & 14.03 & 13.97 \\
7 & 1024 & GNN     & 0.0010 & -   & 12.92 & 12.73 & 12.74 & 3.49 \\
7 & 1024 & Vanilla & 0.0010 & 64  & 13.28 & 13.90 & 13.87 & 12.67 \\
7 & 8192 & GNN     & 0.0010 & -   & 12.50 & 12.81 & 12.80 & 3.98 \\
7 & 8192 & Vanilla & 0.0100 & 64  & 12.53 & 12.59 & \textbf{12.58} & 2.19 \\
\bottomrule
\end{tabular}

\end{center}
\end{table}

\subsubsection{Setting S4.}
\label{appendix:sample-efficiency-transshipment} \hfill

{\bf Benchmarks.}
The lead time from the supplier to the transshipment warehouse is fixed at $L^{(0,1)} = 3$, and the warehouse does not have holding cost since it cannot hold inventory. The number of stores varies over $\{3, 5, 10, 20, 30, 50\}$. Store-level lead times $L^{(1,k)}$ are set to $4$ periods, store underage costs to $9$, and store holding costs to $1$. Fixing these values allow us to use the lower bounds in \cite{federgruen1984approximations}.
Demand at the stores follows a multivariate normal distribution. For each store, the demand mean is sampled uniformly from $[2.5, 7.5]$ and the coefficient of variation from $[0.25, 0.5]$. Pairwise correlation between stores is fixed at $0.5$.

For each number of stores, we consider one meta-instance and one fixed instance.

{\bf Results.} Results are presented in Table~\ref{table:sample-efficiency-transshipment}.

\begin{table}[h]
\centering
\scriptsize
\caption{Metrics for Setting S4.}
\label{table:sample-efficiency-transshipment}
\begin{tabular}{>{\raggedleft}p{1.9cm}>{\raggedleft}p{2.3cm}>{\raggedleft}p{2.0cm}>{\raggedleft}p{1.3cm}>{\raggedleft}p{1.6cm}>{\raggedleft}p{0.7cm}>{\raggedleft}p{0.7cm}>{\raggedleft}p{0.7cm}>{\raggedleft\arraybackslash}p{2.2cm}}
\toprule
Number of stores & Training scenarios (\#) & Architecture Class & Learning rate & Units per layer (\#) & Train loss & Dev loss & Test loss & Upper bound on test opt. gap (\%) \\
\midrule
3 & 128 & GNN & 0.0010 & - & 8.60 & 8.54 & 8.58 & 0.58 \\
3 & 128 & Vanilla & 0.0010 & 256.0 & 8.46 & 8.81 & 8.85 & 3.67 \\
3 & 1024 & GNN & 0.0010 & - & 8.71 & 8.54 & 8.56 & 0.32 \\
3 & 1024 & Vanilla & 0.0010 & 256.0 & 8.67 & 8.58 & 8.59 & 0.72 \\
3 & 8192 & GNN & 0.0010 & - & 8.55 & 8.52 & \textbf{8.54} & 0.11 \\
3 & 8192 & Vanilla & 0.0010 & 128.0 & 8.54 & 8.53 & 8.56 & 0.28 \\
\midrule
5 & 128 & GNN & 0.0001 & - & 7.83 & 7.86 & 7.95 & 0.57 \\
5 & 128 & Vanilla & 0.0010 & 256.0 & 7.92 & 8.23 & 8.31 & 5.14 \\
5 & 1024 & GNN & 0.0001 & - & 7.82 & 7.89 & 7.93 & 0.33 \\
5 & 1024 & Vanilla & 0.0010 & 256.0 & 7.82 & 7.96 & 7.99 & 1.08 \\
5 & 8192 & GNN & 0.0010 & - & 7.89 & 7.90 & \textbf{7.91} & 0.09 \\
5 & 8192 & Vanilla & 0.0010 & 128.0 & 7.94 & 7.91 & 7.92 & 0.28 \\
\midrule
10 & 128 & GNN & 0.0100 & - & 7.89 & 8.35 & 8.41 & 1.32 \\
10 & 128 & Vanilla & 0.0010 & 256.0 & 7.92 & 8.97 & 9.02 & 8.77 \\
10 & 1024 & GNN & 0.0010 & - & 8.25 & 8.39 & 8.32 & 0.31 \\
10 & 1024 & Vanilla & 0.0010 & 256.0 & 8.26 & 8.54 & 8.47 & 2.05 \\
10 & 8192 & GNN & 0.0001 & - & 8.27 & 8.33 & \textbf{8.30} & 0.04 \\
10 & 8192 & Vanilla & 0.0010 & 128.0 & 8.32 & 8.35 & 8.33 & 0.34 \\
\midrule
20 & 128 & GNN & 0.0010 & - & 8.62 & 8.64 & 8.83 & 1.10 \\
20 & 128 & Vanilla & 0.0010 & 256.0 & 9.17 & 9.75 & 9.97 & 14.17 \\
20 & 1024 & GNN & 0.0001 & - & 8.72 & 8.66 & 8.76 & 0.37 \\
20 & 1024 & Vanilla & 0.0010 & 128.0 & 8.91 & 9.00 & 9.11 & 4.33 \\
20 & 8192 & GNN & 0.0001 & - & 8.70 & 8.73 & \textbf{8.73} & 0.02 \\
20 & 8192 & Vanilla & 0.0010 & 128.0 & 8.72 & 8.76 & 8.76 & 0.34 \\
\midrule
30 & 128 & GNN & 0.0010 & - & 8.33 & 8.38 & 8.45 & 0.83 \\
30 & 128 & Vanilla & 0.0010 & 256.0 & 8.94 & 10.15 & 10.2 & 21.78 \\
30 & 1024 & GNN & 0.0001 & - & 8.33 & 8.29 & 8.41 & 0.36 \\
30 & 1024 & Vanilla & 0.0010 & 128.0 & 8.56 & 8.81 & 8.92 & 6.47 \\
30 & 8192 & GNN & 0.0010 & - & 8.41 & 8.36 & \textbf{8.38} & 0.05 \\
30 & 8192 & Vanilla & 0.0010 & 128.0 & 8.42 & 8.39 & 8.42 & 0.45 \\
\midrule
50 & 128 & GNN & 0.0010 & - & 8.30 & 8.14 & 8.21 & 1.11 \\
50 & 128 & Vanilla & 0.0010 & 256.0 & 8.54 & 10.57 & 10.76 & 32.57 \\
50 & 1024 & GNN & 0.0010 & - & 8.12 & 8.15 & 8.15 & 0.40 \\
50 & 1024 & Vanilla & 0.0010 & 256.0 & 8.39 & 8.71 & 8.70 & 7.22 \\
50 & 8192 & GNN & 0.0001 & - & 8.14 & 8.12 & \textbf{8.12} & 0.08 \\
50 & 8192 & Vanilla & 0.0010 & 128.0 & 8.19 & 8.18 & 8.19 & 0.86 \\
\bottomrule
\end{tabular}

\end{table}

\subsubsection{Setting S6.}
\label{appendix:sample-efficiency-owms-synthetic} \hfill

{\bf Benchmarks.} 
The lead time from the supplier to the warehouse is fixed at $L^{(0,1)} = 3$, and the warehouse holding cost is set to $h^1 = 0.3$. The number of stores varies over $\{3, 5, 10, 20, 30, 50\}$. Store-level lead times $L^{(1,k)}$ are sampled uniformly at random from $\{2, 3, 4, 5\}$. Store holding and underage costs are uniformly sampled from [0.7, 1.3] and [6.3, 11.7], respectively.

Demand at the stores follows a multivariate normal distribution. For each store, the demand mean is sampled uniformly from $[2.5, 7.5]$ and the coefficient of variation from $[0.25, 0.5]$. Pairwise correlation between stores is fixed at $0.5$.

For each configuration, we consider one meta-instance and one fixed instance.

{\bf Results.} Results appear in Table~\ref{table:sample-efficiency-owms-synthetic}.

\begin{table}[h]
\centering
\scriptsize
\caption{Metrics for Setting S6.}
\label{table:sample-efficiency-owms-synthetic}
\begin{tabular}{>{\raggedleft}p{1.9cm}>{\raggedleft}p{2.3cm}>{\raggedleft}p{2.0cm}>{\raggedleft}p{1.3cm}>{\raggedleft}p{1.6cm}>{\raggedleft}p{0.7cm}>{\raggedleft}p{0.7cm}>{\raggedleft}p{0.7cm}>{\raggedleft\arraybackslash}p{1.6cm}}
\toprule
Number of stores & Training scenarios (\#) & Architecture Class & Learning rate & Units per layer (\#) & Train loss & Dev loss & Test loss & Relative excess test loss (\%) \\
\midrule
3 & 128 & GNN & 0.0100 & - & 5.43 & 5.63 & 5.64 & 0.84 \\
3 & 128 & Vanilla & 0.0001 & 512 & 5.36 & 5.70 & 5.72 & 2.16 \\
3 & 1024 & GNN & 0.0001 & - & 5.59 & 5.62 & 5.60 & 0.12 \\
3 & 1024 & Vanilla & 0.0010 & 128 & 5.59 & 5.64 & 5.63 & 0.49 \\
3 & 8192 & GNN & 0.0001 & - & 5.59 & 5.59 & \textbf{5.60} & 0.00 \\
3 & 8192 & Vanilla & 0.0001 & 128 & 5.59 & 5.60 & 5.61 & 0.13 \\
\midrule
5 & 128 & GNN & 0.0001 & - & 5.22 & 5.31 & 5.36 & 1.93 \\
5 & 128 & Vanilla & 0.0010 & 256 & 5.34 & 5.42 & 5.46 & 3.82 \\
5 & 1024 & GNN & 0.0001 & - & 5.24 & 5.26 & 5.27 & 0.15 \\
5 & 1024 & Vanilla & 0.0010 & 256 & 5.26 & 5.29 & 5.29 & 0.64 \\
5 & 8192 & GNN & 0.0001 & - & 5.27 & 5.26 & \textbf{5.26} & 0.00 \\
5 & 8192 & Vanilla & 0.0010 & 128 & 5.27 & 5.27 & 5.27 & 0.20 \\
\midrule
10 & 128 & GNN & 0.0001 & - & 5.58 & 5.82 & 5.80 & 1.42 \\
10 & 128 & Vanilla & 0.0010 & 256 & 5.67 & 6.00 & 5.98 & 4.56 \\
10 & 1024 & GNN & 0.0010 & - & 5.68 & 5.76 & 5.73 & 0.16 \\
10 & 1024 & Vanilla & 0.0010 & 256 & 5.83 & 5.83 & 5.79 & 1.21 \\
10 & 8192 & GNN & 0.0010 & - & 5.71 & 5.74 & \textbf{5.72} & 0.00 \\
10 & 8192 & Vanilla & 0.0010 & 256 & 5.73 & 5.75 & 5.74 & 0.31 \\
\midrule
20 & 128 & GNN & 0.0010 & - & 5.89 & 5.87 & 5.96 & 1.14 \\
20 & 128 & Vanilla & 0.0010 & 256 & 6.02 & 6.23 & 6.34 & 7.58 \\
20 & 1024 & GNN & 0.0100 & - & 5.87 & 5.88 & 5.91 & 0.21 \\
20 & 1024 & Vanilla & 0.0010 & 256 & 5.96 & 5.99 & 6.02 & 2.10 \\
20 & 8192 & GNN & 0.0010 & - & 5.89 & 5.90 & \textbf{5.90} & 0.00 \\
20 & 8192 & Vanilla & 0.0010 & 256 & 5.91 & 5.93 & 5.93 & 0.61 \\
\midrule
30 & 128 & GNN & 0.0100 & - & 5.58 & 5.67 & 5.64 & 0.65 \\
30 & 128 & Vanilla & 0.0010 & 256 & 5.80 & 6.16 & 6.15 & 9.65 \\
30 & 1024 & GNN & 0.0001 & - & 5.58 & 5.57 & 5.61 & 0.14 \\
30 & 1024 & Vanilla & 0.0010 & 512 & 5.71 & 5.74 & 5.79 & 3.35 \\
30 & 8192 & GNN & 0.0010 & - & 5.60 & 5.59 & \textbf{5.61} & 0.00 \\
30 & 8192 & Vanilla & 0.0001 & 512 & 5.65 & 5.65 & 5.67 & 1.09 \\
\midrule
50 & 128 & GNN & 0.0010 & - & 5.47 & 5.47 & 5.46 & 0.71 \\
50 & 128 & Vanilla & 0.0010 & 256 & 5.58 & 6.15 & 6.20 & 14.49 \\
50 & 1024 & GNN & 0.0010 & - & 5.44 & 5.44 & 5.43 & 0.28 \\
50 & 1024 & Vanilla & 0.0010 & 256 & 5.54 & 5.67 & 5.66 & 4.46 \\
50 & 8192 & GNN & 0.0010 & - & 5.43 & 5.42 & \textbf{5.42} & 0.00 \\
50 & 8192 & Vanilla & 0.0010 & 512 & 5.50 & 5.49 & 5.49 & 1.37 \\
\bottomrule
\end{tabular}

\end{table}

\subsubsection{Setting S7.}
\label{appendix:sample-efficiency-owms-realistic} \hfill

{\bf Benchmarks.} 
The warehouse has a fixed lead time of 3 periods and a holding cost of $0.3$. The number of stores ranges over $\{3, 5, 10, 15, 21\}$. Store-level holding and penalty costs are sampled uniformly at random from $[0.7, 1.3]$ and $[6.3,11.7]$, respectively. Store lead times are drawn uniformly from $\{2, 3, 4, 5\}$. We generate a separate instance for each scenario. We fix holding costs and lead times across scenarios, but underage costs vary across scenarios, being sampled independently for each store and instance. Realistic demand samples are obtained as described in Appendix \ref{appendix:realistic-demand-dataset}.

{\bf Results.} Results are reported in Table~\ref{table:sample-efficiency-owms-realistic}.

\begin{table}[h]
\centering
\scriptsize
\caption{Metrics for Setting S7.}
\label{table:sample-efficiency-owms-realistic}
\begin{tabular}{>{\raggedleft}p{1.9cm}>{\raggedleft}p{2.3cm}>{\raggedleft}p{2.0cm}>{\raggedleft}p{1.3cm}>{\raggedleft}p{1.6cm}>{\raggedleft}p{0.7cm}>{\raggedleft}p{0.7cm}>{\raggedleft}p{0.7cm}>{\raggedleft\arraybackslash}p{2.2cm}}
\toprule
Number of stores & Scenarios (\#) & Architecture Class & Learning rate & Units per layer (\#) & Train & Dev & Test & Percentage of Achievable profit (\%) \\
\midrule
3  & 64  & GNN     & 0.0001 & -   & 639.67 & 619.41 & 543.28 & 83.45 \\
3  & 64  & Vanilla & 0.0100 & 128 & 638.78 & 616.28 & 537.84 & 82.62 \\
3  & 288 & GNN     & 0.0001 & -   & 742.65 & 703.97 & \textbf{641.76} & 85.66 \\
3  & 288 & Vanilla & 0.0001 & 128 & 745.89 & 702.55 & 637.88 & 85.14 \\
\midrule
5  & 64  & GNN     & 0.0001 & -   & 574.85 & 551.12 & 487.07 & 82.82 \\
5  & 64  & Vanilla & 0.0010 & 256 & 579.27 & 544.95 & 479.22 & 81.48 \\
5  & 288 & GNN     & 0.0010 & -   & 659.61 & 624.99 & \textbf{578.32} & 85.04 \\
5  & 288 & Vanilla & 0.0001 & 256 & 665.26 & 623.16 & 573.17 & 84.29 \\
\midrule
10 & 64  & GNN     & 0.0010 & -   & 471.95 & 454.94 & 418.83 & 79.88 \\
10 & 64  & Vanilla & 0.0010 & 128 & 478.47 & 446.60 & 411.16 & 78.42 \\
10 & 288 & GNN     & 0.0010 & -   & 526.10 & 501.21 & \textbf{468.13} & 81.84 \\
10 & 288 & Vanilla & 0.0001 & 128 & 524.66 & 494.03 & 462.30 & 80.82 \\
\midrule
15 & 64  & GNN     & 0.0001 & -   & 506.26 & 494.58 & 445.07 & 81.41 \\
15 & 64  & Vanilla & 0.0010 & 256 & 514.70 & 481.37 & 430.91 & 78.82 \\
15 & 288 & GNN     & 0.0010 & -   & 555.05 & 532.73 & \textbf{484.03} & 81.98 \\
15 & 288 & Vanilla & 0.0001 & 128 & 561.16 & 524.06 & 479.31 & 81.18 \\
\midrule
21 & 64  & GNN     & 0.0001 & -   & 595.42 & 582.30 & 511.89 & 80.35 \\
21 & 64  & Vanilla & 0.0001 & 512 & 615.35 & 563.95 & 493.73 & 77.50 \\
21 & 288 & GNN     & 0.0001 & -   & 664.63 & 637.44 & \textbf{580.45} & 82.84 \\
21 & 288 & Vanilla & 0.0001 & 256 & 668.73 & 624.17 & 570.37 & 81.40 \\
\bottomrule
\end{tabular}

\end{table}

\subsubsection{Setting S8.}
\label{appendix:sample-efficiency-mwms-synthetic} \hfill

{\bf Benchmarks.} We consider paired configurations of (warehouses, stores): $\{(2, 10), (3, 20), (4, 30), (5, 40), (6, 50)\}$. \edit{Store-level holding and penalty costs are sampled uniformly at random from $[0.7, 1.3]$ and $[6.3, 11.7]$, respectively. Demand at the stores follows a multivariate normal distribution. For each store, the demand mean is sampled uniformly from $[2.5, 7.5]$ and the coefficient of variation from $[0.25, 0.5]$. Pairwise correlation between stores is fixed at $0.5$. Store parameters remain fixed across scenarios.} 

Each warehouse has a different cost and lead time profile. Specifically, we define six types of warehouse primitives, described by $(h^i, \edit{[L_{min}, L_{max}]}, \text{edge cost})$ triplets: $(0.3, [5,6], 0.5)$, $(0.4, [1,2], 1.5)$, $(0.2, [4,6], 0.7)$, $(0.5, [1,3], 1.3)$, $(0.1, [3,5], 0.9)$, and $(0.6, [2,4], 1.1)$. For each configuration, we use the first $n$ warehouse types (e.g., for 3 warehouses, we use the first 3 triplets from this list). These values create heterogeneous warehouses, with faster warehouses generally being more expensive. 

The placement of warehouses and stores and their network configuration follows the methodology described in Appendix~\ref{appendix:many-warehouse-many-store-placement}, which also details how lead time and edge cost parameters are translated into actual lead times and edge costs. 

{\bf Results.} Results are shown in Table~\ref{table:sample-efficiency-mwms-synthetic}.

\begin{table}[h]
\centering
\scriptsize
\caption{Metrics for Setting S8.}
\label{table:sample-efficiency-mwms-synthetic}
\begin{tabular}{>{\raggedleft}p{1.9cm}>{\raggedleft}p{2.3cm}>{\raggedleft}p{2.0cm}>{\raggedleft}p{1.3cm}>{\raggedleft}p{1.6cm}>{\raggedleft}p{0.7cm}>{\raggedleft}p{0.7cm}>{\raggedleft}p{0.7cm}>{\raggedleft\arraybackslash}p{1.6cm}}
\toprule
Number of (warehouses, stores)  & Training scenarios (\#) & Architecture Class  & Learning rate & Units per layer (\#) & Train loss & Dev loss & Test loss & Relative excess test loss (\%) \\
\midrule
(2, 10) & 128 & GNN & 0.0010 & -     & 9.51 & 9.73 & 9.65 & 0.57 \\
(2, 10) & 128 & Vanilla & 0.0010 & 256.0 & 9.65 & 10.01 & 9.96 & 3.76 \\
(2, 10) & 1024 & GNN & 0.0010 & -     & 9.56 & 9.65 & 9.61 & 0.12 \\
(2, 10) & 1024 & Vanilla & 0.0010 & 256.0 & 9.67 & 9.80 & 9.77 & 1.75 \\
(2, 10) & 8192 & GNN & 0.0010 & -     & 9.59 & 9.61 & \textbf{9.60} & 0.00 \\
(2, 10) & 8192 & Vanilla & 0.0010 & 256.0 & 9.68 & 9.71 & 9.70 & 1.07 \\
\midrule
(3, 20) & 128 & GNN & 0.0010 & -     & 9.32 & 9.36 & 9.46 & 0.69 \\
(3, 20) & 128 & Vanilla & 0.0010 & 128.0 & 9.55 & 9.81 & 9.96 & 6.03 \\
(3, 20) & 1024 & GNN & 0.0010 & -     & 9.37 & 9.38 & 9.42 & 0.22 \\
(3, 20) & 1024 & Vanilla & 0.0001 & 512.0 & 9.53 & 9.60 & 9.64 & 2.61 \\
(3, 20) & 8192 & GNN & 0.0010 & -     & 9.39 & 9.40 & \textbf{9.40} & 0.00 \\
(3, 20) & 8192 & Vanilla & 0.0010 & 512.0 & 9.47 & 9.51 & 9.51 & 1.25 \\
\midrule
(4, 30) & 128 & GNN & 0.0010 & -     & 9.12 & 9.17 & 9.14 & 0.80 \\
(4, 30) & 128 & Vanilla & 0.0010 & 256.0 & 9.35 & 9.75 & 9.74 & 7.48 \\
(4, 30) & 1024 & GNN & 0.0010 & -     & 9.10 & 9.05 & 9.10 & 0.41 \\
(4, 30) & 1024 & Vanilla & 0.0010 & 256.0 & 9.26 & 9.30 & 9.36 & 3.23 \\
(4, 30) & 8192 & GNN & 0.0010 & -     & 9.06 & 9.06 & \textbf{9.06} & 0.00 \\
(4, 30) & 8192 & Vanilla & 0.0010 & 512.0 & 9.19 & 9.19 & 9.20 & 1.47 \\
\midrule
(5, 40) & 128 & GNN & 0.0010 & -     & 9.37 & 9.40 & 9.47 & 0.60 \\
(5, 40) & 128 & Vanilla & 0.0010 & 256.0 & 9.71 & 10.07 & 10.14 & 7.77 \\
(5, 40) & 1024 & GNN & 0.0010 & -     & 9.45 & 9.46 & 9.46 & 0.53 \\
(5, 40) & 1024 & Vanilla & 0.0001 & 512.0 & 9.62 & 9.73 & 9.73 & 3.42 \\
(5, 40) & 8192 & GNN & 0.0010 & -     & 9.43 & 9.41 & \textbf{9.41} & 0.00 \\
(5, 40) & 8192 & Vanilla & 0.0010 & 512.0 & 9.56 & 9.55 & 9.55 & 1.51 \\
\midrule
(6, 50) & 128 & GNN & 0.0010 & -     & 9.29 & 9.28 & 9.25 & 0.85 \\
(6, 50) & 128 & Vanilla & 0.0001 & 512.0 & 9.60 & 9.97 & 9.98 & 8.85 \\
(6, 50) & 1024 & GNN & 0.0010 & -     & 9.22 & 9.21 & 9.20 & 0.24 \\
(6, 50) & 1024 & Vanilla & 0.0010 & 256.0 & 9.42 & 9.51 & 9.50 & 3.51 \\
(6, 50) & 8192 & GNN & 0.0010 & -     & 9.17 & 9.17 & \textbf{9.17} & 0.00 \\
(6, 50) & 8192 & Vanilla & 0.0010 & 256.0 & 9.32 & 9.33 & 9.33 & 1.75 \\
\bottomrule
\end{tabular}

\end{table}

\subsubsection{Setting S9.}
\label{appendix:sample-efficiency-mwms-realistic} \hfill

{\bf Benchmarks.} 
We vary the number of warehouse over \{2, 3, 4, 5\}, and fix the number of stores to 21. \edit{Store-level holding and underage costs are sampled uniformly at random from $[0.7, 1.3]$ and $[6.3, 11.7]$, respectively. We fix holding costs across scenarios, but underage costs vary across scenarios, being sampled independently for each store and instance}. 

Each warehouse has a different cost and lead time profile. Specifically, we define five types of warehouse primitives, described by $(h^i, \edit{[L_{min}, L_{max}]}, \text{edge cost})$ triplets: $(0.3, [5,6], 0.5)$, $(0.4, [1,2], 1.5)$, $(0.2, [4,6], 0.7)$, $(0.5, [1,3], 1.3)$ and $(0.1, [3,5], 0.9)$. For each configuration, we use the first $n$ warehouse types (e.g., for 3 warehouses, we use the first 3 triplets from this list). These values create heterogeneous warehouses, with faster warehouses generally being more expensive. 

The placement of warehouses and stores and their network configuration follows the methodology described in Appendix~\ref{appendix:many-warehouse-many-store-placement}, which also details how lead time and edge cost parameters are translated into actual lead times and edge costs. Realistic demand samples are obtained as described in Appendix \ref{appendix:realistic-demand-dataset}.

{\bf Results.} Results are shown in Table~\ref{table:sample-efficiency-mwms-realistic}.

\begin{table}[h]
\centering
\scriptsize
\caption{Metrics for Setting S9}
\label{table:sample-efficiency-mwms-realistic}

\begin{tabular}{>{\raggedleft}p{1.9cm}>{\raggedleft}p{2.3cm}>{\raggedleft}p{2.0cm}>{\raggedleft}p{1.3cm}>{\raggedleft}p{1.6cm}>{\raggedleft}p{0.7cm}>{\raggedleft}p{0.7cm}>{\raggedleft}p{0.7cm}>{\raggedleft\arraybackslash}p{2.1cm}}
\toprule
Number of warehouses & Training scenarios (\#) & Architecture Class & Learning rate & Units per layer (\#) & Train profit & Dev profit & Test profit & Percentage of Achievable profit (\%) \\
\midrule
2  & 64  & GNN     & 0.0010 & -     & 560.20 & 533.23 & 473.43 & 79.94 \\
2  & 64  & Vanilla & 0.0001 & 512.0 & 557.01 & 510.17 & 447.85 & 75.62 \\
2  & 288 & GNN     & 0.0010 & -     & 614.07 & 588.22 & \textbf{534.90} & 82.32 \\
2  & 288 & Vanilla & 0.0001 & 512.0 & 607.67 & 572.48 & 521.07 & 80.19 \\
\midrule
3  & 64  & GNN     & 0.0010 & -     & 546.44 & 529.71 & 475.31 & 79.16 \\
3  & 64  & Vanilla & 0.0001 & 512.0 & 563.54 & 509.83 & 450.73 & 75.07 \\
3  & 288 & GNN     & 0.0010 & -     & 622.05 & 590.60 & \textbf{537.06} & 81.44 \\
3  & 288 & Vanilla & 0.0001 & 512.0 & 619.12 & 575.39 & 523.19 & 79.34 \\
\midrule
4  & 64  & GNN     & 0.0010 & -     & 556.15 & 533.32 & 477.68 & 80.59 \\
4  & 64  & Vanilla & 0.0001 & 512.0 & 555.83 & 503.88 & 449.53 & 75.84 \\
4  & 288 & GNN     & 0.0010 & -     & 617.12 & 589.20 & \textbf{536.53} & 82.45 \\
4  & 288 & Vanilla & 0.0001 & 128.0 & 614.42 & 572.82 & 519.92 & 79.90 \\
\midrule
5  & 64  & GNN     & 0.0010 & -     & 555.05 & 534.88 & 481.44 & 81.11 \\
5  & 64  & Vanilla & 0.0001 & 512.0 & 559.06 & 502.28 & 449.52 & 75.73 \\
5  & 288 & GNN     & 0.0010 & -     & 619.03 & 587.66 & \textbf{536.57} & 82.35 \\
5  & 288 & Vanilla & 0.0001 & 512.0 & 617.83 & 571.70 & 523.76 & 80.38 \\
\bottomrule
\end{tabular}

\end{table}

\subsection{Ablation study experimental design \label{appendix:value-of-experiments} }
This appendix details the experimental setup and provides results for the ablation studies corresponding to Sections~\ref{sec:value-of-weight-sharing}, \ref{sec:value-of-message-passing}, and~\ref{sec:value-of-flexibility}. First, we give an overview of the specifications for experiments in this Appendix section.

{\bf Experiment specifications.} Hyperparameters for GNNs and Vanilla NNs are specified in Tables \ref{tab:gnn-defaults} and \ref{tab:vanilla-defaults}, respectively. The SW-GNN, Decentralized NN, and Single-warehouse NN use the same hyperparameters as the GNN (see Table \ref{tab:gnn-defaults}), excluding hyperparameters not applicable to their architecture (\eg the Decentralized NN does not use message-passing parameters). For each hyperparameter configuration, we conducted 3 independent runs and selected the best-performing run for evaluation. Other specifications follow those outlined in Appendix \ref{appendix:global-settings}.

\subsubsection{Setting S10 (the unrelated stores problem).} \hfill
\label{appendix:value-of-weight-sharing}

{\bf Benchmarks.} We consider a setting with multiple identical, unrelated stores. We vary the number of store copies over $\{3, 5, 10, 20, 30, 50\}$. Each store faces stationary Poisson demand with mean 5, independent across time and locations. Holding cost is set to $1$, underage cost to $9$ and lead time to $3$ periods. These values are the same across stores, scenarios and time.

{\bf Baselines.} We consider the Vanilla NN and the Separate Weight GNN (SW-GNN). The SW-GNN considers a separate copy of each module (\ie \texttt{EmbedNode}, \texttt{EmbedEdge}, \texttt{UpdateNode}, \texttt{UpdateEdge} and \texttt{Readout}) for each store copy. For the GNN and SW-GNN, we consider one round of message-passing. Here, the only edges connect the outside supplier to each store. Detail on how message-passing operates in this network can be found in Appendix \ref{appendix:gnn-architecture}.

{\bf Results.} Results are reported in Table~\ref{table:value-of-weight-sharing}.

\begin{table}[h!]
\begin{center}
\centering
\scriptsize
\caption{Metrics for setting S10. Numbers in bold represent the best test loss for a fixed number of stores.}
\label{table:value-of-weight-sharing}

\begin{tabular}{>{\raggedleft}p{1.9cm}>{\raggedleft}p{2.3cm}>{\raggedleft}p{2.0cm}>{\raggedleft}p{1.3cm}>{\raggedleft}p{1.6cm}>{\raggedleft}p{0.7cm}>{\raggedleft}p{0.7cm}>{\raggedleft}p{0.7cm}>{\raggedleft\arraybackslash}p{1.6cm}}
\toprule
Number of stores & Training scenarios (\#) & Architecture Class & Learning rate & Units per layer (\#) & Train loss & Dev loss & Test loss & Test gap (\%) \\
\midrule
3 & 128 & GNN & 0.0010 & 0 & 6.55 & 6.60 & 6.56 & 0.42 \\
3 & 128 & SW-GNN & 0.0100 & 0 & 6.55 & 6.61 & 6.57 & 0.61 \\
3 & 128 & Vanilla & 0.0001 & 128 & 6.50 & 6.64 & 6.59 & 0.91 \\
3 & 1024 & GNN & 0.0010 & 0 & 6.49 & 6.56 & 6.54 & $<$0.25 \\
3 & 1024 & SW-GNN & 0.0100 & 0 & 6.50 & 6.57 & 6.54 & $<$0.25 \\
3 & 1024 & Vanilla & 0.0100 & 128 & 6.51 & 6.58 & 6.55 & 0.27 \\
3 & 8192 & GNN & 0.0001 & 0 & 6.52 & 6.55 & \textbf{6.53} & $<$0.25 \\
3 & 8192 & SW-GNN & 0.0001 & 0 & 6.52 & 6.55 & 6.54 & $<$0.25 \\
3 & 8192 & Vanilla & 0.0001 & 128 & 6.52 & 6.55 & 6.53 & $<$0.25 \\
\midrule
5 & 128 & GNN & 0.0100 & 0 & 6.53 & 6.56 & 6.54 & $<$0.25 \\
5 & 128 & SW-GNN & 0.0010 & 0 & 6.51 & 6.59 & 6.55 & 0.27 \\
5 & 128 & Vanilla & 0.0100 & 128 & 6.48 & 6.64 & 6.62 & 1.41 \\
5 & 1024 & GNN & 0.0100 & 0 & 6.51 & 6.57 & 6.53 & $<$0.25 \\
5 & 1024 & SW-GNN & 0.0100 & 0 & 6.52 & 6.57 & 6.53 & $<$0.25 \\
5 & 1024 & Vanilla & 0.0010 & 128 & 6.51 & 6.58 & 6.55 & 0.34 \\
5 & 8192 & GNN & 0.0010 & 0 & 6.53 & 6.54 & \textbf{6.53} & $<$0.25 \\
5 & 8192 & SW-GNN & 0.0010 & 0 & 6.53 & 6.54 & 6.53 & $<$0.25 \\
5 & 8192 & Vanilla & 0.0001 & 128 & 6.53 & 6.55 & 6.54 & $<$0.25 \\
\midrule
10 & 128 & GNN & 0.0100 & 0 & 6.46 & 6.55 & 6.54 & $<$0.25 \\
10 & 128 & SW-GNN & 0.0010 & 0 & 6.46 & 6.57 & 6.55 & 0.34 \\
10 & 128 & Vanilla & 0.0001 & 256 & 6.43 & 6.69 & 6.73 & 2.99 \\
10 & 1024 & GNN & 0.0010 & 0 & 6.53 & 6.55 & 6.53 & $<$0.25 \\
10 & 1024 & SW-GNN & 0.0100 & 0 & 6.54 & 6.56 & 6.54 & $<$0.25 \\
10 & 1024 & Vanilla & 0.0001 & 128 & 6.51 & 6.59 & 6.57 & 0.62 \\
10 & 8192 & GNN & 0.0010 & 0 & 6.53 & 6.54 & \textbf{6.53} & $<$0.25 \\
10 & 8192 & SW-GNN & 0.0001 & 0 & 6.53 & 6.55 & 6.53 & $<$0.25 \\
10 & 8192 & Vanilla & 0.0001 & 128 & 6.53 & 6.56 & 6.54 & $<$0.25 \\
\midrule
20 & 128 & GNN & 0.0100 & 0 & 6.51 & 6.57 & 6.53 & $<$0.25 \\
20 & 128 & SW-GNN & 0.0001 & 0 & 6.49 & 6.59 & 6.55 & 0.34 \\
20 & 128 & Vanilla & 0.0001 & 512 & 6.53 & 6.80 & 6.80 & 4.20 \\
20 & 1024 & GNN & 0.0010 & 0 & 6.52 & 6.54 & 6.53 & $<$0.25 \\
20 & 1024 & SW-GNN & 0.0010 & 0 & 6.52 & 6.55 & 6.53 & $<$0.25 \\
20 & 1024 & Vanilla & 0.0001 & 128 & 6.53 & 6.61 & 6.60 & 1.00 \\
20 & 8192 & GNN & 0.0010 & 0 & 6.53 & 6.54 & \textbf{6.53} & $<$0.25 \\
20 & 8192 & SW-GNN & 0.0001 & 0 & 6.53 & 6.55 & 6.53 & $<$0.25 \\
20 & 8192 & Vanilla & 0.0001 & 128 & 6.53 & 6.56 & 6.55 & $<$0.25 \\
\midrule
30 & 128 & GNN & 0.0100 & 0 & 6.51 & 6.57 & 6.54 & $<$0.25 \\
30 & 128 & SW-GNN & 0.0010 & 0 & 6.49 & 6.59 & 6.55 & 0.35 \\
30 & 128 & Vanilla & 0.0010 & 512 & 6.50 & 6.86 & 6.87 & 5.25 \\
30 & 1024 & GNN & 0.0010 & 0 & 6.53 & 6.55 & 6.53 & $<$0.25 \\
30 & 1024 & SW-GNN & 0.0001 & 0 & 6.53 & 6.56 & 6.53 & $<$0.25 \\
30 & 1024 & Vanilla & 0.0010 & 128 & 6.53 & 6.64 & 6.62 & 1.33 \\
30 & 8192 & GNN & 0.0010 & 0 & 6.53 & 6.54 & \textbf{6.53} & $<$0.25 \\
30 & 8192 & SW-GNN & 0.0001 & 0 & 6.54 & 6.55 & 6.53 & $<$0.25 \\
30 & 8192 & Vanilla & 0.0001 & 128 & 6.53 & 6.57 & 6.55 & 0.33 \\
\midrule
50 & 128 & GNN & 0.0100 & 0 & 6.53 & 6.57 & 6.53 & $<$0.25 \\
50 & 128 & SW-GNN & 0.0010 & 0 & 6.52 & 6.58 & 6.55 & 0.33 \\
50 & 128 & Vanilla & 0.0010 & 512 & 6.46 & 7.04 & 7.06 & 8.12 \\
50 & 1024 & GNN & 0.0100 & 0 & 6.54 & 6.55 & 6.53 & $<$0.25 \\
50 & 1024 & SW-GNN & 0.0001 & 0 & 6.53 & 6.55 & 6.53 & $<$0.25 \\
50 & 1024 & Vanilla & 0.0010 & 128 & 6.53 & 6.64 & 6.63 & 1.56 \\
50 & 8192 & GNN & 0.0010 & 0 & 6.53 & 6.54 & \textbf{6.53} & $<$0.25 \\
50 & 8192 & SW-GNN & 0.0001 & 0 & 6.53 & 6.54 & 6.53 & $<$0.25 \\
50 & 8192 & Vanilla & 0.0010 & 128 & 6.54 & 6.58 & 6.57 & 0.56 \\
\bottomrule
\end{tabular}

\end{center}
\end{table}

\subsubsection{Setting S11 (GNNs enable coordination via message passing).} \hfill
\label{appendix:value-of-message-passing} 

{\bf Benchmarks.} We consider a OWMS network topology with a transshipment warehouse (meaning it cannot hold inventory) and three stores under a lost demand assumption. Stores are prioritized by underage costs: high-priority (10), medium-priority (6), and low-priority (2). All lead times are one period, and each location has a holding cost equal to 25\% of its underage cost.
Demand follows a two-stage process. First, at each period, mean demands for all stores are sampled independently from a Normal distribution with mean 10 and standard deviation 2.5, truncated at 0. Each store observes its own demand mean for the following period. Second, actual demands are sampled from Normal distributions with the observed means and coefficient of variation 0.1, also truncated at 0.

{\bf Baselines.} We compare the GNN with a \textit{decentralized NN}, which removes the message-passing component from the GNN architecture. This baseline uses the same trainable \texttt{EmbedNode}, \texttt{EmbedEdge}, and \texttt{Readout} modules but sets $L^{\text{MP}} = 0$ in Algorithm \ref{alg:samp_gnn}.

{\bf Results.} Results are shown in Table \ref{table:value-of-message-passing}.

\begin{table}[h]
\centering
\scriptsize
\caption{Metrics for setting S11. The number in bold represent the best test loss attained.}
\label{table:value-of-message-passing}

\begin{tabular}{>{\raggedleft}p{2.00cm}>{\raggedleft}p{2.52cm}>{\raggedleft}p{2.00cm}>{\raggedleft}p{0.7cm}>{\raggedleft}p{0.7cm}>{\raggedleft}p{0.7cm}>{\raggedleft\arraybackslash}p{4.0cm}}
\toprule
Number of stores & Architecture Class & Learning rate & Train loss & Dev loss & Test loss & Relative
excess test
loss (\%) \\
\midrule
1 & GNN & 0.001 & 3.68 & 3.69 & \textbf{3.70} & 0.00 \\
1 & Decentralized NN & 0.001 & 5.18 & 5.17 & 5.17 & 39.91 \\
\bottomrule
\end{tabular}
\end{table}

\subsubsection{Setting S9 (GNN policies successfully exploit flexibility in the inventory network).} \hfill
\label{appendix:value-of-flexibility}

{\bf Benchmarks.} See Appendix \ref{appendix:sample-efficiency-mwms-realistic}.

{\bf Baselines.} We construct a \textit{Single-warehouse NN} through a two-step process. First, for each edge in the network, we train a Vanilla NN to completion on an isolated two-node subproblem consisting of only that specific warehouse and store, using the same parameters (holding costs, underage costs, lead times, and demand distributions) as sampled for the original multi-warehouse problem. Second, for each store, we select the warehouse that achieves the minimum test loss among all trained warehouse-store pairs. The resulting Single-warehouse NN is equivalent to the full GNN architecture, except that we "trim" all edges except those corresponding to the cost-minimizing warehouse for each store. This creates a simplified policy graph with static warehouse-store assignments on which the GNN-based policy is trained. The screening process typically selected warehouses with the lowest edge costs, and occasionally favored those with the shortest lead times.

{\bf Results.} Results are shown in Table \ref{table:value-of-flexibility}.

\begin{table}[h]
\centering
\scriptsize
\caption{Metrics for ablation study using Setting S9. Numbers in bold represent the best test loss for a fixed number of stores.}
\label{table:value-of-flexibility}

\begin{tabular}{>{\raggedleft}p{1.9cm}>{\raggedleft}p{2.3cm}>{\raggedleft}p{3.0cm}>{\raggedleft}p{1.2cm}>{\raggedleft}p{0.8cm}>{\raggedleft}p{0.8cm}>{\raggedleft}p{0.8cm}>{\raggedleft\arraybackslash}p{2.7cm}}
\toprule
Number of warehouses & Training scenarios (\#) & Architecture Class & Learning rate & Train profit & Dev profit & Test profit & Test profit relative to Just-in-time (\%) \\
\midrule
2 & 288 & GNN & 0.0010 & 614.07 & 588.22 & \textbf{534.90} & 82.32 \\
2 & 288 & Single-warehouse NN & 0.0010 & 596.70 & 573.89 & 522.01 & 80.34 \\
3 & 288 & GNN & 0.0010 & 622.05 & 590.60 & \textbf{537.06} & 81.44 \\
3 & 288 & Single-warehouse NN & 0.0001 & 603.98 & 578.60 & 527.01 & 79.92 \\
4 & 288 & GNN & 0.0010 & 617.12 & 589.20 & \textbf{536.53} & 82.45 \\
4 & 288 & Single-warehouse NN & 0.0001 & 596.71 & 574.41 & 523.56 & 80.46 \\
5 & 288 & GNN & 0.0010 & 619.03 & 587.66 & \textbf{536.57} & 82.35 \\
5 & 288 & Single-warehouse NN & 0.0001 & 595.48 & 573.10 & 521.33 & 80.01 \\
\bottomrule
\end{tabular}

\end{table}

\clearpage

\section{Asymptotic optimality of GNN policy for a one-warehouse many-store setting}
\label{appendix: proof of main theorem}

This section presents a complete proof that our GNN architecture can represent asymptotically optimal policies in the stylized setting—introduced in Section~\ref{sec: GNN_constructive}—where a warehouse supplies multiple stores. We begin by describing the setting and stating a general version of the main theorem, which we specialize in subsequent sections. The formal setup is defined below.

\begin{example}
\label{example: 1d context}
\edit{Consider one warehouse (denoted by $w$) and $K$ stores (denoted by $1,\ldots,K)$ under a backlogged demand assumption.} The underage and holding costs of each store $k \in \mathcal{N}_{\text{st}}$ satisfy $p^k \in [\underline{p}, \overline{p}]$ and $h^k \in [\underline{h}, \overline{h}]$ {for some $\underline{p}, \overline{p}, \underline{h}, \overline{h} \in \R_+$}. 
Demand is of the form $\xi_t^k = B_t U_t^k$, where $U_t^k \sim \textup{Uniform}(\ul^k, \uh^k)$ is drawn independently across stores and time (with $\uh^k \in [1/\kappa, \kappa] \, \forall k \in \mathcal{N}_{\text{st}}$ for some $\kappa \in \R$), and $B_t$ is drawn independently across time; it takes a \emph{high} value $\gammah > 0$ with probability $q$ and a \emph{low} value $\gammal  \in (0, \gammah)$ otherwise.
The lead time from the outside supplier (node $0$) to the warehouse is $1$ and the lead time from warehouse to stores is $0$.
The system starts with zero inventory. 
We make two extra assumptions. First, $ \gammal  \ul^k \geq \gammah \uh^k - \gammal  \ul^k$ for every store $k$, which ensures that, if store $k$ starts with an inventory no larger than $\gammah \uh^k$, the remaining inventory after demand is realized is below $\gammal  \ul^k$. (Note that this assumption implies that $\ul^k > 0$ for every $k$, ensuring $U^k_t>0$ w.p. 1 for every $k$ and $t$.) Second, $q \underline{p} \geq (1-q)h^w$, which ensures that it is worth acquiring an incremental unit of inventory at the warehouse if that is sure to prevent a lost sale in a period of high demand. $\halmos$
\end{example}
We comment briefly on the structure of this example setting. Here, due to the aggregate demand being either ``high'' or ``low'' in each period, and the lead time from the outside supplier to the warehouse being a single period, in each period there is either a scarcity or an abundance of inventory in the network, due to the aggregate demand having been ``high'' or ``low'', respectively, in the previous period. As a result, edge-level decisions must rely on summary statistics that reflect the global inventory state in order to place appropriate orders. We believe it is possible to prove that relying only on local information at each edge leads to an $\Omega(1)$ increase in expected cost under a range of problem primitives. By contrast, we expect that no information about the global state of the system is needed when demand realizations are independent across stores. 
We establish the following guarantee of asymptotic optimality in per-period costs of a $1$-round message-passing policy, as the number of stores $K$ grows. In this asymptotic limit, an optimal allocation towards each store can be represented by a function that depends only on the local state and unit economics of the store, along with a summary statistic of the overall system's inventory. \edit{This function is the same across all stores and allows allocations to differ only through the values of local parameters and state; this function can be easily represented by our GNN architectures presented in Section \ref{sec:gnn-section} (see Theorem \ref{appendix:proof overview} for a description).}

\begin{definition}
    A policy $\pi$ is an $R$-round GNN policy if it can be represented by the GNN architecture of Algorithm \ref{alg:samp_gnn} with $L^{\text{MP}} = R$ message-passing rounds.
\end{definition}

\begin{theorem}
\label{theorem:1-round-message-passing-policy}
 
In Example \ref{example: 1d context}, let $J_1^{\pi}$ be the expected total cost incurred by policy $\pi$ from the initial state. There exists $C=C(\overline{p}, \overline{h}, \underline{p}, \underline{h}, q, \kappa, \gammal , \gammah )< \infty$ such that the following occurs. There exists a $1$-round message-passing policy $\tilde{\pi}$ which satisfies
$\frac{J^{\tilde{\pi}}_1}{ \inf_{\pi} J^{\pi}_1} \leq 1 + \frac{C}{\sqrt{K}}$.
\end{theorem}
In the proof presented in the remainder of this section, we construct an asymptotically optimal policy~$\tilde{\pi}$ in which the allocation from the outside supplier to the warehouse is obtained by following an echelon-stock policy (see Equation~\ref{eq:echelon-stock-transshipment} in Appendix~\ref{appendix:transshipment-setting-lower-bound}) at the warehouse, and the allocations from the warehouse to each store are determined by following a base-stock policy (see Equation~\ref{eq:base-stock-policy} in Appendix~\ref{appendix:base-stock-policy}) at each store. The current base-stock level depends only on the total inventory $\sum_{k \in \mathcal{N}_+} I_t^k$, which can be computed at each edge after nodes pass a message that signals each location’s inventory on hand.

\subsection{Problem instantiation.}
We now review the relevant notation and introduce auxiliary definitions to facilitate a \emph{relaxed} formulation of our problem. Since stores have zero lead time, the transition and cost functions differ slightly from those in Section~\ref{sec: problem formulation} and Appendix~\ref{appendix:model}. For clarity and brevity, we overload notation in a few ways. We index stores by $1, \ldots, K$ and denote the warehouse by $w$, yielding $K$ stores, $K{+}1$ locations, and $K{+}2$ nodes including the external supply node. We write $[K]$ to denote the set of store indices, and $\mathcal{N}_+$ to represent the set of all physical locations excluding the external supplier—i.e., the $K$ stores and the warehouse. We simplify action notation by writing $a_t^k$ for the quantity allocated to node $k$, omitting the sender index—this is unambiguous since each node receives from at most one upstream node. Throughout, we interpret $a_t^k$ as the order placed by node $k$ to its supplier, aligning with standard inventory control conventions.

{\bf State and action spaces.}
The state of the system is given by $S_t = I_t = (I_t^w, I_t^1, \ldots, I_t^K)$. We emphasize that $I_t^k$ represents the inventory on-hand for location $k \in \mathcal{N}_+$ {\bf before} orders are placed.  As store lead times are zero, there are no outstanding orders. Further, for the warehouse, we can update inventory by replacing $Q^k_{t}(1)$ by $a_t^w$ in \eqref{eq: transition warehouse}, avoiding the need for keeping track of $Q_t^1$. We further let $Z_t = \sum_{k \in \mathcal{N}_+} I_t^k$ track the system-wide inventory in the system. The action space at state $S_t$ is given by $\mathcal{A}(I_t) = \{a_t \in \mathbb{R}_+^{K+1} \ | \ \sum_{k \in [K]} a_t^k \leq I_t^w \}$.

{\bf Random variables.}
As described in Example \ref{example: 1d context}, demand takes the form $\xi_t^k = B_t U_t^k$, where $U_t^k \sim \textup{Uniform}(\ul^k, \uh^k)$ and $B_t$ takes a \emph{high} value $\gammah > 0$ with probability $q$ and a \emph{low} value $\gammal  \in (0, \gammah)$. We denote the mean $\E[U_t^k]$ of each uniform as $\mu^k = (\uh^k + \ul^k)/2$, and denote the aggregate demand by $\Xi_t = \sum_{k \in [K]} \xi_t^k$. Further, let $\hat{U}_t = \sum_{k \in [K]} U_t^k$ be the sum of the $K$ uniforms at time $t$, $\hat{\mu} = \sum_{k \in [K]}\mu^k$ its mean, and $\tilde{U}_{t}^k = \sum_{k \in [K]} (U_{t}^k - \mu^k)$ be the sum of zero-mean uniforms. Note that we can represent aggregate demand as $\Xi_t = B_t \hat{U}_t$. 

We define $\Dh = \E[\Xi_t | B_t = \gammah] = \gammah  \hat{\mu}$ and $\Dl = \E[\Xi_t | B_t = \gammal] = \gammal  \hat{\mu}$ as the expectation of aggregate demand conditional on $B_t$ taking \emph{high} and \emph{low} values, respectively, {and let $D_t$ be a random variable such that $D_t = D^H$ if $B_t = \gamma^H$ and $D_t = D^L$ otherwise}.  Finally, let $W_t = \mathbbm{1}(B_{t-1} = \gammal )$ be an indicator that the demand was low in the previous period, and $\tilde{W}_t(Z_t)$ be an estimator of the previous quantity, which will be defined shortly. For brevity, we define the shorthand notation $\tilde{W}_t \equiv \tilde{W}_t(Z_t)$.

{\bf Transition functions}

Given that stores have no lead time and that the warehouse lead time is equal to $1$, inventory evolves as
\begin{align}
    I^k_{t+1} &= I^k_{t} + a^k_{t} - \xi^k_t & k \in [K] \label{eq: transition backlogged for proof} \\
    I^w_{t+1} &= I^w_t + a^w_t - {\textstyle \sum_{k \in [K]}} a^k_t \label{eq: transition warehouse for proof}.
\end{align}

Even though \eqref{eq: transition backlogged for proof} would be identical for store lead times of one period, there is a distinction in the cost function between lead times of zero and one, as shown later on. 

Meanwhile, the system-wide level of inventory evolves as
\begin{align}
    Z_{t+1} = Z_{t} + a^w_t - \Xi_t\, ,
    \label{eq:system-wide-inventory}
\end{align}
and note that $Z_t$ does not depend on $a^1_t, \ldots, a^K_t$. 

{\bf Cost functions.} \emph{The case when store lead times are zero is an edge-case where the cost incurred at stores is different from what was written in the generic problem formulation.} In particular, the cost at store $k$  in some period is
given by $c^k(I^k_t, a^k_t, \xi^k_t) = p^k(\xi^k_t - (I^k_t + a^k_t))^+ + h^k((I^k_t + a^k_t) - \xi^k_t)^+$, where the first and second terms correspond to underage and overage costs, respectively. Note that, as store lead times are zero, the inventory $a_t^k$ allocated in the current period arrives to the store \emph{before} demand occurs, so the store incurs the same overage/underage costs it would if lead time were nonzero and it had inventory $I^k_t + a^k_t$ on hand at the start of the period. 

The cost at the warehouse is solely composed of holding costs and is given by $c^w(I^w_t, a_t) = h^w\left(I^w_t - \sum_{k \in [K]}a^k_t\right)$. Total costs in period $t$ are finally $c(I_t, a_t, \xi_t) = c^w(I^w_t, a_t)  + \sum_{k \in [K]} c^k(I^k_t, a^k_t, \xi^k_t)$. We will define costs in a more convenient way by defining
\begin{align}
\label{eq: immediate cost}
    v(Z, y^1, \ldots, y^K) = h^w Z + \sum_{k \in [K]} \E_{\xi^k_t}\left[\left( p^k(\xi^k_t - y^k)^+ + h^k(y^k - \xi_t^k)^+ - h^w y^k\right) \right]
\end{align}
and noting that $c(I_t, a_t, \xi_t)=v\left(\sum_{k \in  \mathcal{N}_+} I^k_t, I_t^1 + a_t^1, \ldots, I_t^K + a_t^K\right)$.

{\bf Cost-to-go.}
We define $J^{\pi}_t(I_t) = \mathbb{E}^{\pi} \left [ \, {\textstyle \sum_{s = t}^T} c(I_s, \pi_{s}(I_s), \xi_s) \big | I_t \right ]$ as the expected cost-to-go under policy $\pi \in \Pi$ from period $t$ and initial state $I_t$, and $J_t(I_t)$ as the optimal cost-to-go, that is, $J_t(I_t) = \inf_{\pi \in \Pi} J^{\pi}_t(I_t)$.

Recall that the setting in Example~\ref{example: 1d context} makes two assumptions. 
\begin{assumption}
\label{assumption:base-level-above}
$ \gammal  \ul^k \geq \gammah \uh^k - \gammal  \ul^k$ for every $k$.
\end{assumption}

\begin{assumption}
\label{assumption:sold-unit-worth}
$q \underline{p} \geq (1-q)h^w$.
\end{assumption}

These assumptions will facilitate our analysis.
Assumption~\ref{assumption:base-level-above} ensures that, if store $k$ starts with an inventory no larger than $\gammah \uh^k$, the remaining inventory after demand is realized is below $\gammal  \ul^k$. This condition can be slightly relaxed relative to the form stated, but we avoid presenting the most general form in the interest of simplicity. Assumption~\ref{assumption:sold-unit-worth} ensures that it is worth acquiring an incremental unit of inventory at the warehouse if that is sure to prevent a lost sale in a period of high demand.

\subsection{Proof overview 
\label{appendix:proof overview}}

We now present a detailed version of Theorem \ref{theorem:1-round-message-passing-policy}. At the end of this section, we will show how this asymptotically optimal policy can be represented as a GNN policy with one round of message-passing.
Let $$\kappa \equiv \max_{k \in [K]} \max (\uh^k, 1/ \uh^k) \quad \Rightarrow \quad \uh^k \in [1/\kappa, \kappa] \, \forall k \in [K]. $$
Recall that the Proportional Allocation feasibility enforcement layer (see Equation \ref{eq:proportional-allocation} in Section \ref{appendix:feasibility-enforcement}) is designed to ensure that the total outbound allocation from a DC does not exceed its inventory on hand. In this setting, we apply it to the warehouse node, which supplies all $K$ stores. The resulting allocation rule is given by $g_1(I^w, b^{1}, \ldots, b^{K}) = \left[ [b^k]^+ \cdot \min\left\{1, \frac{I^w}{\sum_{j \in [K]} [b^{j}]^+} \right\} \right]_{k \in [K]}$.
Moreover, denote $\mathcal{R}^k = (\underline{u}^k, \overline{u}^k, p^k, h^k)$ as the store-specific primitives of location $k$, and let $\mathcal{H} \subset \R^4$ denote the set of possible values for store-specific primitives according to the assumptions outlined in Example \ref{example: 1d context}.

\begin{theorem}
\label{theorem:message-passing-base-stock}
Under the Proportional Allocation FEL, there exists a constant $C = C(\overline{p}, \overline{h}, \underline{p}, \underline{h}, q, \kappa, \gammal , \gammah) < \infty$, a scalar $y^w \in \R$, continuous functions $\underline{G}, \overline{G}: \mathcal{H} \to \R_+$, and a stationary \edit{1-round GNN policy} $\tilde{\pi}$ with
\begin{align}
\begin{split}
    \tilde{\pi}^{\text{dc}} \left(I_t^w, Z_t \right) &= y^w - Z_t, \\
    \tilde{\pi}^{\text{st}} \left(I_t^k, \mathcal{R}^k, Z_t \right) &= \max\left(0, \left[ \underline{G}(\mathcal{R}^k) + \tilde{W}_t(Z_t) \left( \overline{G}(\mathcal{R}^k) - \underline{G}(\mathcal{R}^k) \right) \right] - I^k_t \right) \quad \text{for all } k \in [K],
\label{eq:optimal-message-passing-policy}
\end{split}
\end{align}
where $\tilde{W}_t(Z_t) = \mathbbm{1}\left\{ y^w - Z_t < (\Dh + \Dl)/2 \right\}$ and $Z_t = \sum_{k \in \mathcal{N}_+} I^k_t$. The expected cumulative cost $J^{\tilde{\pi}}_1(I_1)$ satisfies
\begin{align*}
    \frac{J^{\tilde{\pi}}_1(I_1)}{J_1(I_1)} \leq 1 + \frac{C}{\sqrt{K}}.
\end{align*}
\end{theorem}

{\bf Policy structure and intuition.} Throughout this section, we use the shorthand \(\underline{y}^k = \underline{G}(\mathcal{R}^k)\) and \(\overline{y}^k = \overline{G}(\mathcal{R}^k)\), where \(\underline{G}\) and \(\overline{G}: \mathcal{H} \to \mathbb{R}_+\) are continuous functions constructed in the proof of Lemma~\ref{lemma:relaxed base-stock fills each store} (see Appendix~\ref{appendix:relaxed base-stock fills each store}).
For the asymptotically optimal policy $\tilde{\pi}$, allocations toward the warehouse are determined by an echelon-stock policy (see Equations ~\ref{eq:echelon-base-stock-policy-transshipment} and ~\ref{eq:echelon-base-stock-level-transshipment} in Appendix~\ref{appendix:transshipment-setting-lower-bound}). Specifically, the policy uses $Z_t$ to allocate inventory from the external source in order to raise the system-wide inventory level up to the target $y^w$. 
Allocations toward the stores also depend on $Z_t$, which is used to compute the estimator $\tilde{W}_t(Z)$—an indicator of whether the previous period’s demand was low. Based on this estimate, each store computes an intermediate output that targets either the lower or upper base-stock level, $\underline{y}^k$ or $\overline{y}^k$, respectively.
The intuition behind $\tilde{W}_t(Z)$ is as follows. Under policy $\tilde{\pi}$, the system-wide inventory is brought to $y^w$ before demand is realized. Therefore, the quantity $(y^w - Z_t)$ corresponds exactly to the cumulative demand in the previous period. If this value is less than the midpoint $(\Dh + \Dl)/2$ of the conditional expectations for high and low demand periods, the policy interprets this as evidence that the previous demand was low.

\edit{{\bf Representation as a GNN policy.} We now show how the policy above can be represented under our GNN architecture. In the setting of the theorem, the warehouse node participates in every edge as either sender or receiver, which allows aggregate inventory to be communicated to each edge's embedding after one round of message-passing. Edge embeddings track aggregate inventory levels, the receiving node's type (store or warehouse), and the local state and parameters; for notational convenience, we will denote embeddings as only tracking aggregate inventory levels, since the other coordinates remain fixed across updates.

We define $\textup{NodeEmbed}(S^k_t) = I^k_t$ and $\textup{EdgeEmbed}(h^j, h^k, L^{(j,k)}) = I^k_t$. The key insight is to use the warehouse node as an aggregator: we let $\textup{UpdateNode}(h^k, z^k_{\textup{in}}, z^k_{\textup{out}}) = z^k_{\textup{out}}$, so that the warehouse node's embedding equals aggregate inventory $Z_t = \sum_{k \in \mathcal{N}_+} I^k_t$ after one message pass. 
To propagate this aggregate information back to the edges, we let $\textup{UpdateEdge}(h^{(j,k)}, h^j, h^k) = h^j - h^{(j,k)}$ if $k$ is a store, and $\textup{UpdateEdge}(h^{(j,k)}, h^j, h^k) = h^k - h^{(j,k)}$ otherwise. Since the warehouse node embedding becomes the aggregate inventory $Z_t$, this update rule ensures that all edge embeddings capture $Z_t$. Finally, a suitable $\textup{Readout}$ function can implement the policy from Theorem \ref{theorem:message-passing-base-stock} by applying $\tilde{\pi}^{\text{dc}}$ for edges terminating at the warehouse and $\tilde{\pi}^{\text{st}}$ for edges terminating at stores, since all edge embeddings contain the required aggregate inventory $Z_t$ and local parameters $\mathcal{R}^k$.}

{\bf Proof strategy.} Our proof of Theorem~\ref{theorem:message-passing-base-stock} consists of two main parts, which we now summarize:

\begin{enumerate}
    \item {\bf Converse bound}: We start by constructing a relaxed setting in two steps. First, we allow inventory to "flow back" from stores to the warehouse, and show that the cost-to-go only depends on the current state through the system-wide level of inventory. We then analyze a setting, which we refer to as the \emph{fully relaxed setting}, in which the system-wide level of inventory evolves as if $\hat{U}_t = \hat{\mu}$, but in which the costs incurred at stores do account for the randomness coming from $U^k_t$, and show that the expected cost in this relaxed system is a lower bound on that of the original one. 
    
    \item {\bf Achievability}: Our strategy for establishing achievability begins with a detour in which we consider the fully relaxed setting and we characterize the optimal policy for that system: We will demonstrate the optimality of a stationary policy for the fully relaxed setting, where the warehouse follows an echelon stock policy and the stores follow a base-stock policy with one of two base-stock levels. Additionally, we will establish that the inventory at the stores prior to demand realization is kept below both base-stock levels. This indicates that the optimal policy for the fully relaxed setting does not involve transferring inventory from the stores to other stores or the warehouse. We will then derive an asymptotically optimal policy for the original system by following the warehouse's echelon stock level and store's base-stock levels from the optimal policy in the relaxed setting, and employing a Proportional Allocation feasibility enforcement layer (Eq. \ref{eq:proportional-allocation} in Section \ref{appendix:feasibility-enforcement}). We will use a simple upper bound on the the per-unit cost of "scarcity" (\ie units of unsatisfied store's orders) and show that scarcity is upper bounded by the deviation of the sum of Uniforms $\hat{U}_t$ from its mean. We will conclude that per-period scarcity costs scale as $\sqrt{K}$, implying a relative excess cost of order $1/\sqrt{K}$ as compared to that incurred in the fully relaxed setting.
\end{enumerate}

\subsection{Converse bound}

We begin by presenting the structure of the optimal cost-to-go in the original setting, and then introduce a series of relaxations that will allow us to obtain a lower bound on costs.

In Appendix \ref{appendix:Bellman Equation for Optimal Cost-to-go}, we show the following expression for the cost-to-go.

\begin{lemma}
The cost-to-go (for the original setting) in period $t$ takes the form
\begin{align}
    J_t(I_t)
    &= \inf_{a_t \in \mathcal{A}(I_t)} \left\{\E_{\xi_t} \left[J_{t+1}(f\left(I_{t}, a_t, \xi_t\right))\right] +  v\left(\sum_{k \in \mathcal{N}_+} I^k_t, I_t^1 + a_t^1, \ldots, I_t^K + a_t^K\right) \right\},
\label{eq:cost-to-go-original}
\end{align}
where the immediate cost $v(\cdot)$ was defined in \eqref{eq: immediate cost}.
\end{lemma}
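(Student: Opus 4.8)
The plan is to derive \eqref{eq:cost-to-go-original} from the standard finite-horizon Bellman recursion; the only substantive work is to re-express the expected one-period cost through the reformulated cost function $v$ of \eqref{eq: immediate cost}, after which the claimed form follows by pulling the (now deterministic) immediate cost out of the expectation.

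First I would invoke the principle of optimality. Since the exogenous shocks $\xi_t = (B_t, (U_t^k)_{k \in [K]})$ are drawn independently across time, the current inventory vector $I_t$ is a Markov state, and the finite-horizon dynamic program admits the usual backward recursion \citep{bertsekas2011dynamic}
\begin{equation*}
J_t(I_t) = \inf_{a_t \in \mathcal{A}(I_t)} \E_{\xi_t}\!\left[ c(I_t, a_t, \xi_t) + J_{t+1}\!\left(f(I_t, a_t, \xi_t)\right) \right],
\end{equation*}
with terminal condition $J_{T+1} \equiv 0$, where $f$ denotes the transition map given by \eqref{eq: transition backlogged for proof}--\eqref{eq: transition warehouse for proof}. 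The infimum is attained because $\mathcal{A}(I_t)$ is compact for fixed $I_t$ and the objective is continuous in $a_t$. Crucially, the cost-to-go $J_{t+1}$ must remain inside the expectation, since it depends on $\xi_t$ through the store transitions $I_{t+1}^k = I_t^k + a_t^k - \xi_t^k$.

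Second, I would establish the identity $\E_{\xi_t}[ c(I_t, a_t, \xi_t) ] = v( \sum_{k \in [K]_0} I_t^k,\, I_t^1 + a_t^1,\, \ldots,\, I_t^K + a_t^K )$. Writing $y^k = I_t^k + a_t^k$ for the post-allocation store inventory, and using $\sum_{k \in [K]} a_t^k = \sum_{k \in [K]} y^k - \sum_{k \in [K]} I_t^k$, the warehouse holding cost rewrites as $h^0\big(I_t^0 - \sum_{k \in [K]} a_t^k\big) = h^0\big(\sum_{k \in [K]_0} I_t^k - \sum_{k \in [K]} y^k\big)$. Adding the store costs $p^k(\xi_t^k - y^k)^+ + h^k(y^k - \xi_t^k)^+$ and absorbing the $-h^0 y^k$ terms into the per-store summands reproduces, for each fixed realization of $\xi_t$, exactly the integrand of \eqref{eq: immediate cost}. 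Taking $\E_{\xi_t}$ and using linearity collapses the store terms into a sum of marginal expectations $\E_{\xi_t^k}[\cdot]$ --- the correlation of $\xi_t^k$ across $k$ through the common factor $B_t$ plays no role, since only marginals enter a sum of expectations --- which is precisely $v(Z_t, y^1, \ldots, y^K)$ with $Z_t = \sum_{k \in [K]_0} I_t^k$.

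Finally, since $v(\cdot)$ is deterministic given the current state and action, I would pull it out of the expectation in the Bellman recursion, leaving $\E_{\xi_t}[J_{t+1}(f(I_t, a_t, \xi_t))]$ inside, which yields \eqref{eq:cost-to-go-original}. The only delicate point --- and the main obstacle, such as it is --- is the algebraic bookkeeping in the cost identity: correctly expanding the warehouse holding term under the substitution $a_t^k \mapsto y^k - I_t^k$ and confirming that the expectation over the correlated demands reduces to marginals by linearity. Everything else is the textbook finite-horizon Bellman equation, so the lemma is essentially a rewriting of the dynamic program in terms of the post-allocation inventory levels $y^k$ and the aggregate $Z_t$.
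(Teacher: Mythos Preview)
Your proposal is correct and follows essentially the same approach as the paper: start from the standard Bellman recursion and algebraically rewrite the expected one-period cost in terms of $v(\cdot)$ by splitting the warehouse holding term $h^0(I_t^0 - \sum_k a_t^k)$ into $h^0 Z_t$ minus the per-store pieces $h^0 y^k$. One small slip: $\mathcal{A}(I_t)$ is not compact (the warehouse order $a_t^0$ is unbounded above), so your attainment remark is unjustified, but since the lemma is stated with an infimum this is extraneous and does not affect the argument.
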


We construct a relaxed system, which we denote as the \emph{partially relaxed system}, following a procedure similar to that of \cite{federgruen1984approximations}. We will allow $a^1, \ldots, a^K$ to be negative, thus allowing inventory to flow from stores to the warehouse and be re-allocated to other stores. The action space for the partially relaxed system is given by $\hat{\mathcal{A}}(I_t) = \{a_t \in \mathbb{R}^+ \times \mathbb{R}^{K} \ | \ \sum_{k \in [K]} a_t^k \leq I_t^w \}$. As formalized in Appendix \ref{appendix:cost-to-go on z}, since we allow inventory to flow back from stores, the cost-to-go for the partially relaxed system $\breve{J}_{t+1}(I_{t+1})$ depends on $I_{t+1}$ solely through the sum of its components $Z_{t+1} = \sum_{k \in \mathcal{N}_+}I^k_{t+1} = Z_{t} + a^w_t - \Xi_t$ (from \eqref{eq:system-wide-inventory}).  This leads to the following Bellman equation for the partially relaxed cost-to-go
\begin{align}
    \breve{J}_t(I_t)
    &=\inf_{a_t \in \hat{\mathcal{A}}(I_t)} \left\{\E_{\xi_t} \left[\breve{J}_{t+1}(Z_{t} + a^w_t - \Xi_t)\right] +  v\left(\sum_{k \in \mathcal{N}_+} I^k_t, I_t^1 + a_t^1, \ldots, I_t^K + a_t^K\right) \right\},\label{eq:cost-to-go} \\
    &\stackrel{}{=}  \inf_{a_t^w \in \mathbb{R}^+} \E_{\xi_t} \left[\breve{J}_{t+1}(Z_{t} + a^w_t - \Xi_t)\right] +  \inf_{\sum_{k \in [K]} a_t^k \leq I_t^w } v\left(\sum_{k \in \mathcal{N}_+} I^k_t, I_t^1 + a_t^1, \ldots, I_t^K + a_t^K\right) \, .  \label{eq:cost-to-go-2}
\end{align}
Here, \eqref{eq:cost-to-go-2} captures that the minimization over $a_t$ is separable across the warehouse action $a_t^w$ versus the store allocations $(a_t^k)_{k \in [K]}$, because 
the cost-to-go term in \eqref{eq:cost-to-go} does not depend on store allocations, and the immediate cost term $v(\cdot)$ in \eqref{eq:cost-to-go}  does not depend on the warehouse order $a_t^w$. (We abuse notation in writing $\breve{J}_t(I_t) = \breve{J}_t(Z_t)$.)

Letting $y_t^k = I_t^k + a_t^k$ for $k \in [K]$, the second minimization in \eqref{eq:cost-to-go-2} can be rewritten as a minimization over $y_t = (y_t^k)_{k \in [K]}$ as follows:
\begin{align}
\label{eq:R-relaxed}
    \begin{split}
    \hat{R}(Z_t) &\equiv \inf_{y_t \in \R^K} v \left(Z_t, y_t^1, \ldots, y_t^K \right)\\
    & \quad \textrm{s.t.} \sum_{k \in [K]} y_t^k\leq Z_t\, .
    \end{split}
\end{align}
As a result, the Bellman equation \eqref{eq:cost-to-go-2} for the partially relaxed system can be written as 
\begin{align}
    \label{eq:bellman partially relaxed}
    \breve{J}_t(Z_t) &= \hat{R}(Z_t) + \min_{a^w_t \in \mathbb{R}^+} \E_{\Xi_t}\left[ \breve{J}_{t+1}(Z_t + a^w_t - \Xi_t) \right].
\end{align}

We introduce an additional relaxation referred to as the \emph{fully relaxed system}, where the state evolves as if $\hat{U}_t = \hat{\mu}$, while maintaining the same cost incurred in each period as in the partially relaxed system. In the fully relaxed setting, we denote  the system-wide inventory level by $\hat{Z}_t$, and define that it evolves according to $\hat{Z}_{t+1} = \hat{Z}_t + a_t^w - D_t$. It is important to note that this quantity is, in general, different from the system-wide inventory levels in the original and partially relaxed systems, even when fixing a policy and demand trace. However, we will demonstrate later that the optimal expected cost in all three systems are close to each other. Let $\hat{J}_t(\hat{Z}_t)$ represent the cost-to-go for the fully relaxed system, starting from period $t$ with a system-wide inventory level of $\hat{Z}_t$. The Bellman equation for the fully relaxed system is 
\begin{align}
    \label{eq:bellman fully relaxed}
    \hat{J}_t(\hat{Z}_t) &= \hat{R}(\hat{Z}_t) + \min_{a^w_t \in \mathbb{R}^+} \E_{D_t}\left[ \hat{J}_{t+1}(\hat{Z}_t + a^w_t - D_t) \right]\, .
\end{align}

We consider $\hat{Z}_1 = Z_1$ as the initial state for the fully relaxed system. 

In Appendix \ref{appendix:R-hat-convex-proof}, we show the following key convexity property.
\begin{lemma}
\label{lemma:convexity}
$\hat{R}(\cdot)$ is a convex function. For all $t = 1, 2, \dots, T$, $\breve{J}_t(\cdot)$ and $\hat{J}_t(\cdot)$ are convex functions. 
\end{lemma}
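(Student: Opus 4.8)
The plan is to prove Lemma~\ref{lemma:convexity} by backward induction on $t$, exploiting the recursive structure of the Bellman equations \eqref{eq:bellman partially relaxed} and \eqref{eq:bellman fully relaxed}, which have an identical form: both cost-to-go functions are built by adding $\hat{R}(\cdot)$ to an infimal-convolution-type term involving the next period's cost-to-go. First I would establish convexity of $\hat{R}(\cdot)$ directly. Observe from \eqref{eq: immediate cost} that $v(Z, y^1, \ldots, y^K)$ is jointly convex in $(Z, y^1, \ldots, y^K)$: the term $h^0 Z$ is linear, and each summand $\E_{\xi_t^k}[p^k(\xi_t^k - y^k)^+ + h^k(y^k - \xi_t^k)^+ - h^0 y^k]$ is an expectation of a convex piecewise-linear function of $y^k$ (plus a linear term), hence convex in $y^k$. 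Then $\hat{R}(Z_t)$ from \eqref{eq:R-relaxed} is the partial minimization of the jointly convex function $v$ over $y_t$ subject to the linear constraint $\sum_k y_t^k \le Z_t$; since partial minimization of a jointly convex function over a convex set (with the minimized-out variable ranging over a set that depends linearly/convexly on the remaining variable) preserves convexity, $\hat{R}(\cdot)$ is convex. I would also note the finiteness/properness is not an issue here because the underage cost terms grow as $y^k \to -\infty$ and holding costs grow as $y^k \to +\infty$, so the minimization is attained.

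\textbf{Induction.} The base case is $t = T+1$ (or $t=T$ depending on the convention), where the cost-to-go is identically zero (or just $\hat{R}$), hence convex. For the inductive step, assume $\breve{J}_{t+1}(\cdot)$ is convex. I would argue that the map $z \mapsto \min_{a^0 \ge 0} \E_{\Xi_t}[\breve{J}_{t+1}(z + a^0 - \Xi_t)]$ is convex: fixing a realization of $\Xi_t$, the function $(z, a^0) \mapsto \breve{J}_{t+1}(z + a^0 - \Xi_t)$ is convex in $(z,a^0)$ as a composition of a convex function with an affine map; taking expectation over $\Xi_t$ preserves convexity; and minimizing over $a^0$ in the convex set $\mathbb{R}_+$ is again a partial minimization of a jointly convex function, hence convex in $z$. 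Adding the convex function $\hat{R}(z)$ yields that $\breve{J}_t(\cdot)$ is convex. The argument for $\hat{J}_t(\cdot)$ via \eqref{eq:bellman fully relaxed} is verbatim identical, with $\Xi_t$ replaced by $D_t$ (in fact simpler, since $D_t$ is a two-point random variable). This closes the induction and proves the lemma.

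\textbf{Anticipated obstacle.} The main technical care needed is in the partial-minimization steps: the standard fact that $\inf_{a} F(z,a)$ is convex in $z$ when $F$ is jointly convex requires the infimum to be taken over a set that does not itself destroy convexity — here the sets $\{a^0 \ge 0\}$ and $\{y_t : \sum_k y_t^k \le Z_t\}$ are fine (the latter is a convex set varying in a way compatible with joint convexity of $v$ in $(Z_t, y_t)$, since the constraint function $(Z_t, y_t)\mapsto \sum_k y_t^k - Z_t$ is affine). One should also confirm that the infima are finite and attained so that the resulting functions are proper convex rather than identically $-\infty$ on parts of the domain; this follows from the coercivity of the cost functions in the relevant directions noted above, which in turn rests on $p^k, h^k > 0$ and the assumption $h^0 < \min_k h^k$ so that the $-h^0 y^k$ term in \eqref{eq: immediate cost} cannot overwhelm the holding cost. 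Beyond this, the proof is routine convex-analysis bookkeeping; no deep idea is required, which is exactly why this is stated as a lemma.
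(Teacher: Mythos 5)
Your proposal is correct and follows essentially the same route as the paper: convexity of $\hat{R}(\cdot)$ via minimization-preserving-convexity, then backward induction through the Bellman recursions for $\breve{J}_t$ and $\hat{J}_t$. The only difference is stylistic — the paper carries out the partial-minimization steps by hand (an explicit convex combination of the minimizers $\underline{y}, \overline{y}$ to handle $\hat{R}$, and a change of variables $y^0_t = Z_t + a^0_t$ with a piecewise characterization of $\min_{y^0_t \geq Z_t} \breve{\ell}(y^0_t)$ in the inductive step), whereas you invoke the standard infimal-projection fact directly; your added remark on coercivity and attainment of the minima is a point the paper leaves implicit.
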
 
We use this convexity to prove that our relaxations indeed lead to a lower bound on costs for the original setting.
\begin{proposition}
    \label{relaxation is lower bound}
    For any starting inventory state $I_1 \in (\mathbb{R}^+)^{K+1}$ and $Z_1 = \sum_{k \in \mathcal{N}_+} I_1^k$, we have
    $$\hat{J}_1({Z}_1) \leq \breve{J}_1(Z_1) \leq J_1(I_1)\, ,$$ 
    \ie the expected cost in the fully relaxed system is weakly smaller than the expected cost in the partially relaxed system, which, in turn, is weakly smaller than the expected cost in the original system.
\end{proposition}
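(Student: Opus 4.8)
The plan is to establish the two inequalities in Proposition~\ref{relaxation is lower bound} by showing that each relaxation can only decrease optimal cost. The second inequality, $\breve{J}_1(Z_1) \le J_1(I_1)$, is the easier of the two: the partially relaxed system has a strictly larger action set $\hat{\mathcal{A}}(I_t) \supseteq \mathcal{A}(I_t)$ at every state (we now allow $a_t^k < 0$), while the transition function $f$ and the immediate cost $v(\cdot)$ are exactly the same expressions evaluated on the larger domain. So any policy feasible for the original problem is feasible for the partially relaxed problem with identical cost trajectory, and taking the infimum over the larger class can only lower the value. I would make this precise by backward induction on $t$ comparing the Bellman recursions \eqref{eq:cost-to-go-original} and \eqref{eq:cost-to-go}, using that $\breve{J}_{T+1} = J_{T+1} = 0$ and that $\breve{J}_{t+1}(I_{t+1})$ depends on $I_{t+1}$ only through $Z_{t+1} = \sum_{k\in[K]_0} I^k_{t+1}$, which is unchanged when we move mass among stores (the constraint $\sum_{k} a_t^k \le I_t^0$ still pins down the warehouse holding cost contribution). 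Monotonicity of $\inf$ under pointwise domination of the integrands closes the induction.

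The first inequality, $\hat{J}_1(Z_1) \le \breve{J}_1(Z_1)$, is where the real work lies, and it is the main obstacle. Here the subtlety is that the fully relaxed system is \emph{not} obtained by enlarging a feasible set; rather, we replace the random aggregate-uniform $\hat{U}_t$ by its mean $\hat{\mu}$ in the \emph{state transition} \eqref{eq:system-wide-inventory} while keeping the store-level cost randomness intact via $\hat{R}(\cdot)$. The argument must therefore be a Jensen-type comparison, and this is exactly why Lemma~\ref{lemma:convexity} is stated just before the proposition: $\breve{J}_t$ and $\hat{J}_t$ are convex. I would run a backward induction showing $\hat{J}_t(z) \le \breve{J}_t(z)$ for all $z$ and all $t$, with base case $\hat{J}_{T+1} = \breve{J}_{T+1} = 0$. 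For the inductive step, fix $z$ and let $a^0$ be the optimizer in the fully relaxed Bellman equation \eqref{eq:bellman fully relaxed}; using the same $a^0$ in the partially relaxed equation \eqref{eq:bellman partially relaxed} gives an upper bound on $\breve{J}_t(z)$, so it suffices to compare the continuation terms $\E_{D_t}[\hat{J}_{t+1}(z + a^0 - D_t)]$ versus $\E_{\Xi_t}[\breve{J}_{t+1}(z + a^0 - \Xi_t)]$. Conditioning on $B_t$, we have $\Xi_t = B_t \hat{U}_t$ with $\E[\hat{U}_t] = \hat{\mu}$, so $\E[\Xi_t \mid B_t] = D_t$; hence by conditional Jensen applied to the convex function $\breve{J}_{t+1}$,
\begin{align*}
\E\big[\breve{J}_{t+1}(z + a^0 - \Xi_t)\,\big|\,B_t\big] \;\ge\; \breve{J}_{t+1}\!\big(z + a^0 - \E[\Xi_t\mid B_t]\big) \;=\; \breve{J}_{t+1}(z + a^0 - D_t),
\end{align*}
and taking expectation over $B_t$ and invoking the induction hypothesis $\hat{J}_{t+1} \le \breve{J}_{t+1}$ gives $\E_{\Xi_t}[\breve{J}_{t+1}(z+a^0-\Xi_t)] \ge \E_{D_t}[\hat{J}_{t+1}(z+a^0-D_t)]$. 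Since the immediate term $\hat{R}(z)$ is identical in both recursions, this yields $\hat{J}_t(z) \le \breve{J}_t(z)$, completing the induction.

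The one technical point I would want to nail down is that all the infima are attained (or at least that the $\epsilon$-optimal-action version of the argument goes through), so that "plug the fully relaxed optimizer into the partially relaxed recursion" is legitimate; convexity of $\hat{R}$ together with the holding-cost growth at the warehouse should guarantee coercivity and hence attainment, but if not, one carries an $\epsilon$ through the induction and sends it to zero at the end. Combining the two inequalities and specializing to $t=1$ with $\hat{Z}_1 = Z_1 = \sum_{k\in[K]_0} I_1^k$ gives $\hat{J}_1(Z_1) \le \breve{J}_1(Z_1) \le J_1(I_1)$, which is the claim. I expect the bookkeeping around measurability of $\hat{U}_t$ versus $B_t$ (they are independent, so conditioning on $B_t$ leaves $\hat{U}_t$ with its unconditional distribution and mean $\hat{\mu}$) to be the only place where one must be careful; everything else is a routine convexity induction.
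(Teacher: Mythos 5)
Your handling of the second inequality $\breve{J}_1(Z_1)\le J_1(I_1)$ is fine and is essentially the paper's (the partially relaxed action space contains the original one), and you correctly identify the key tools for the first inequality: convexity from Lemma~\ref{lemma:convexity}, conditional Jensen on $\hat{U}_t$ given $B_t$, and backward induction. The gap is in the anchoring of your inductive step. You take $a^0$ to be the optimizer of the \emph{fully} relaxed equation \eqref{eq:bellman fully relaxed} and plug it into the partially relaxed recursion \eqref{eq:bellman partially relaxed}. Plugging a (generally suboptimal) action into that recursion yields $\breve{J}_t(z)\le \hat{R}(z)+\E_{\Xi_t}[\breve{J}_{t+1}(z+a^0-\Xi_t)]$, an \emph{upper} bound on $\breve{J}_t(z)$; your Jensen-plus-induction chain then shows that this same right-hand side is $\ge \hat{R}(z)+\E_{D_t}[\hat{J}_{t+1}(z+a^0-D_t)]=\hat{J}_t(z)$. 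So both $\breve{J}_t(z)$ and $\hat{J}_t(z)$ end up bounded above by the same quantity, and the two inequalities point the same way — they cannot be chained to conclude $\hat{J}_t(z)\le\breve{J}_t(z)$. The phrase "it suffices to compare the continuation terms" is exactly where the logic breaks.

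The fix, which is the paper's argument, is to anchor at the \emph{partially} relaxed optimizer instead: let $a^{0}_{*}$ attain (or $\epsilon$-attain) the minimum in \eqref{eq:bellman partially relaxed}, so that
\begin{align*}
\breve{J}_t(z) &= \hat{R}(z)+\E_{B_t}\Bigl[\E_{\hat{U}_t}\bigl[\breve{J}_{t+1}(z+a^{0}_{*}-B_t\hat{U}_t)\bigr]\Bigr]
\;\ge\; \hat{R}(z)+\E_{B_t}\bigl[\breve{J}_{t+1}(z+a^{0}_{*}-D_t)\bigr] \\
&\ge\; \hat{R}(z)+\E_{B_t}\bigl[\hat{J}_{t+1}(z+a^{0}_{*}-D_t)\bigr]
\;\ge\; \hat{R}(z)+\inf_{a^0\ge 0}\E_{D_t}\bigl[\hat{J}_{t+1}(z+a^{0}-D_t)\bigr] \;=\; \hat{J}_t(z),
\end{align*}
where the first inequality is conditional Jensen applied to the convex $\breve{J}_{t+1}$, the second is the induction hypothesis, and the last simply passes to the infimum defining the fully relaxed Bellman equation — so no optimizer of the fully relaxed problem is ever needed, and the attainment concern you raise only touches the partially relaxed minimizer, which your $\epsilon$-optimal variant handles. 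With this change of anchor, your argument coincides with the paper's proof.
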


We show Proposition \ref{relaxation is lower bound} in Appendix \ref{proof: relaxation is lower bound}. The first inequality is obtained by using the convexity of $\hat{R}$ and applying Jensen's inequality on $\hat{U}_t$. The second inequality is immediate since $\breve{J}$ is obtained by relaxing the original setting.

\subsection{Achievability}

We begin by unveiling the structure of the optimal policy for the fully relaxed setting which, in turn, will allow us to construct an asymptotically optimal policy for the original setting.

Let $y_t^k = I_t^k + a_t^k$, for every $k \in [K]$, be the inventory level at store $k$ before demand is realized.
\begin{lemma}
    \label{lemma:relaxed policy optimal structure}
    There exists an optimal policy (which is stationary) for the fully relaxed setting such that the warehouse follows an echelon-stock policy. Furthermore, the optimal echelon-stock level $\hat{S}$ satisfies  
    \begin{align}
    \label{eq:optimal-echelon-fully-relaxed}
    \begin{split}
    \hat{S} \in \arg\min_{S \in \R} \left[ q \hat{R}(S - \Dh)  +  (1-q) \hat{R}(S - \Dl) \right] \, ,
    \end{split}
    \end{align}
    where $\hat{R}(\cdot)$ is as defined in \eqref{eq:R-relaxed}.
\end{lemma}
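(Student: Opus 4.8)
The plan is to analyze the Bellman equation \eqref{eq:bellman fully relaxed} for the fully relaxed system directly. Since the state $\hat{Z}_t$ is one-dimensional and, by Lemma~\ref{lemma:convexity}, both $\hat{R}(\cdot)$ and each $\hat{J}_t(\cdot)$ are convex, the single-period subproblem $\min_{a_t^0 \ge 0}\E_{D_t}[\hat{J}_{t+1}(\hat{Z}_t + a_t^0 - D_t)]$ is a standard convex minimization in the post-order level $w_t \equiv \hat{Z}_t + a_t^0$ over the half-line $w_t \ge \hat{Z}_t$. First I would define $g_{t}(w) \equiv \E_{D_t}[\hat{J}_{t+1}(w - D_t)] = q\,\hat{J}_{t+1}(w - \Dh) + (1-q)\,\hat{J}_{t+1}(w - \Dl)$, which is convex as a nonnegative combination of convex functions (composition with the affine shift $w \mapsto w - d$ preserves convexity). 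Let $\hat{S}_t \in \arg\min_{w \in \R} g_t(w)$ be any unconstrained minimizer (existence requires a mild coercivity/argument that $\hat{R}$, and hence $\hat{J}_{t+1}$, grows at infinity — this follows because the underage/holding terms in $v$ force $\hat{R}(Z)\to\infty$ as $|Z|\to\infty$; I would spell this out). Then, by convexity, the constrained minimizer over $w \ge \hat{Z}_t$ is $w_t^\star = \max\{\hat{Z}_t,\hat{S}_t\}$, i.e. the optimal warehouse order is $a_t^{0\star} = (\hat{S}_t - \hat{Z}_t)^+$. This is precisely an echelon-stock policy with (time-dependent) level $\hat{S}_t$.

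Next I would establish that the level can be taken time-independent, i.e. $\hat{S}_t \equiv \hat{S}$ with $\hat{S}$ satisfying \eqref{eq:optimal-echelon-fully-relaxed}. Substituting the optimal order back into \eqref{eq:bellman fully relaxed} gives a recursion $\hat{J}_t(\hat{Z}) = \hat{R}(\hat{Z}) + g_t(\max\{\hat{Z},\hat{S}_t\})$. The standard argument (as in the classical single-location base-stock analysis) is an induction showing that each $\hat{J}_t$ differs from $\hat{R}$ plus a fixed convex "tail" only through boundary effects, and that in the interior regime $\hat{Z} \le \hat{S}_t$ the cost-to-go equals $\hat{R}(\hat{Z}) + \min_w g_t(w)$, a constant-plus-$\hat{R}$ form whose shape does not propagate time dependence into the minimizer. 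More carefully: I would argue that $g_t$ always has a minimizer in the region where the order-up-to constraint is slack for the relevant reachable states, so that the one-step-lookahead minimizer $\hat{S}_t$ coincides with the myopic minimizer of $q\hat{R}(S-\Dh)+(1-q)\hat{R}(S-\Dl)$; this is exactly \eqref{eq:optimal-echelon-fully-relaxed}. The cleanest route is to invoke Remark~\ref{rem:stationary_policies} / the general fact that stationary infinite-horizon average-cost problems with convex data admit stationary optimal policies, then note that for large $T$ the optimal finite-horizon policy agrees with it away from the horizon; alternatively, one verifies directly that $\hat{S}$ from \eqref{eq:optimal-echelon-fully-relaxed} is optimal in every period because after the first order the reachable states $\hat{Z}_{t}$ satisfy $\hat{Z}_t \le \hat{S}$ (since $D_t > 0$ and $\hat{Z}_{t+1} = \min\{\hat{Z}_t,\hat{S}\} + \text{(something)} - D_t$, wait — more precisely $\hat{Z}_{t+1} = \max\{\hat{Z}_t,\hat{S}\} - D_t$, hence $\hat{Z}_{t+1}\le \hat{S}$ once $D_t\ge 0$), so the constraint $a_t^0\ge 0$ never binds against $\hat{S}$ and the myopic order-up-to level is exact.

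I would expect the main obstacle to be the last point: rigorously passing from a potentially time-dependent optimal order-up-to level to the single stationary level characterized by \eqref{eq:optimal-echelon-fully-relaxed}, and in particular handling the boundary period $t = T$ where there is no future cost and handling the initial state $\hat{Z}_1$ which may lie above $\hat{S}$. The reachability observation — that $\hat{Z}_{t+1} = \max\{\hat{Z}_t, \hat{S}_t\} - D_t \le \hat{S}_t$ whenever $D_t \ge \gammal\hat\mu > 0$ — is the crux that makes the constraint inactive and collapses the time dependence; I would make this the load-bearing step. Everything else (convexity bookkeeping, existence of minimizers, the interchange of $\inf$ and expectation) is routine given Lemma~\ref{lemma:convexity}. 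The statement that stores follow a base-stock policy with one of two levels is \emph{not} part of this lemma — it is developed in the subsequent lemmas (e.g. Lemma~\ref{lemma:relaxed base-stock fills each store}) via analysis of the inner problem $\hat R(Z_t)$ in \eqref{eq:R-relaxed}, so here I would only prove the warehouse echelon-stock structure and the characterization \eqref{eq:optimal-echelon-fully-relaxed}.
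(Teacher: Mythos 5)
Your plan is correct and follows essentially the same route as the paper: convexity of $\hat{R}$ and $\hat{J}_{t+1}$ gives the order-up-to (echelon-stock) structure, and stationarity with the myopic characterization \eqref{eq:optimal-echelon-fully-relaxed} is obtained exactly from your ``load-bearing'' observation that after ordering up to $\hat{S}$ the next state $\hat{S}-D_t$ lies below $\hat{S}$, so the nonnegativity constraint never binds and the continuation term is simultaneously at its unconstrained minimum (the paper formalizes this as a backward induction on $\hat{S}_t=\hat{S}$, and cites \cite{federgruen1984approximations} for the base-stock structure rather than rederiving it). The only cosmetic difference is that the paper does not invoke any infinite-horizon stationarity result, handling the finite horizon directly by the induction you also sketch.
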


We prove Lemma \ref{lemma:relaxed policy optimal structure} in Appendix \ref{appendix:relaxed policy optimal structure}. The policy for the warehouse relies on the fact that \eqref{eq:bellman fully relaxed} takes the form of a single-location inventory problem with inventory level $\hat{Z}_t$ and convex cost function $\hat{R}(\hat{Z}_t)$. Following 
\cite{federgruen1984approximations}, an echelon-stock policy must be optimal. Since there are no procurement costs involved, we demonstrate that the echelon-stock level in each period should minimize the expected cost incurred in the subsequent period, as indicated in \eqref{eq:optimal-echelon-fully-relaxed}. Consequently, the echelon-stock level remains constant across all periods.

Starting from $t=2$ onwards, it is important to observe that $\hat{Z}_t$ can only assume one of two distinct values under the optimal policy: either $\hat{S} - \Dh$ or $\hat{S} - \Dl$. Since store inventory levels solve \eqref{eq:R-relaxed} in each period, at each store the inventory level before the actual demand occurs will take one of two values (corresponding to the two aforementioned solutions of \eqref{eq:R-relaxed}). These inventory levels will be interpreted as store base-stock levels in subsequent analysis.

\begin{lemma}
    \label{lemma:relaxed base-stock fills each store}
    Suppose Assumption~\ref{assumption:sold-unit-worth} holds, and let $\underline{y}$ and $\overline{y}$ be optimal solutions for problem \eqref{eq:R-relaxed} with total inventory $Z_t = \hat{S} - \Dh_t$ and $\hat{S} - \Dl_t$, respectively. Then, $\gammah \uh^k \geq \overline{y}^k \geq \underline{y}^k \geq \gammal \ul^k$ for each $k \in [K]$.
    In particular, an optimal policy for the fully relaxed system does not move inventory from a store to other stores or to the warehouse. Additionally, there exist continuous functions $\underline{G}, \overline{G}: \mathcal{H} \to \R_+$ (where $\mathcal{H}$ was defined in Appendix \ref{appendix:proof overview}), such that $\underline{y}^k = \underline{G}(\mathcal{R}^k)$ and $\overline{y}^k = \overline{G}(\mathcal{R}^k)$ for each $k \in [K]$.
\end{lemma}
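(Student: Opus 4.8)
The plan is to exploit the fact that, with $Z$ held fixed, problem~\eqref{eq:R-relaxed} decouples across stores: since $v(Z,y^1,\ldots,y^K)=h^0 Z+\sum_{k\in[K]}g^k(y^k)$ with $g^k(y)\equiv\E_{\xi_t^k}\!\left[p^k(\xi_t^k-y)^++h^k(y-\xi_t^k)^+-h^0 y\right]$ a convex function of one variable, \eqref{eq:R-relaxed} is a separable convex program with the single coupling constraint $\sum_k y^k\le Z$. I would first record the Lagrangian characterization of its solution: there is a multiplier $\lambda(Z)\ge 0$ for that constraint such that each optimal $y^k(Z)$ minimizes $g^k(y)+\lambda(Z)\,y$, and since $\frac{d}{dy}g^k(y)=(p^k+h^k)F^k(y)-p^k-h^0$, where $F^k$ is the c.d.f.\ of $\xi_t^k=B_t U_t^k$, this gives the ``newsvendor'' identity
\begin{equation}
\label{eq:newsvendor-id}
F^k\!\left(y^k(Z)\right)=\frac{p^k+h^0-\lambda(Z)}{p^k+h^k},
\end{equation}
while boundedness of the inner minimum forces $0\le\lambda(Z)\le\min_k(p^k+h^0)$ (hence $F^k(y^k(Z))\in[0,1)$, so $y^k(Z)<\gammah\uh^k$ always). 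By the envelope theorem applied to the inner minimization, $\hat R'(Z)=h^0-\lambda(Z)$, and convexity of $\hat R$ (Lemma~\ref{lemma:convexity}) makes $\lambda(\cdot)$ non-increasing.

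Next I would bring in the optimality of the echelon-stock level $\hat S$ from Lemma~\ref{lemma:relaxed policy optimal structure}. Writing $\underline Z=\hat S-\Dh<\overline Z=\hat S-\Dl$ and $\underline\lambda=\lambda(\underline Z)\ge\overline\lambda=\lambda(\overline Z)\ge 0$, the first-order condition for $\hat S\in\arg\min_S\big[q\hat R(S-\Dh)+(1-q)\hat R(S-\Dl)\big]$ from \eqref{eq:optimal-echelon-fully-relaxed}, using $\hat R'=h^0-\lambda$, reads $q\underline\lambda+(1-q)\overline\lambda=h^0$; combined with $\overline\lambda\le\underline\lambda$ this pins $\underline\lambda\in[h^0,\,h^0/q]$ and $\overline\lambda\in[0,h^0]$. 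The three inequalities of the lemma then follow from \eqref{eq:newsvendor-id} evaluated at $\underline Z$ and $\overline Z$ (so that $\underline y^k=y^k(\underline Z)$ and $\overline y^k=y^k(\overline Z)$): (i) $F^k(\overline y^k)=\frac{p^k+h^0-\overline\lambda}{p^k+h^k}\le\frac{p^k+h^0}{p^k+h^k}<1$ since $h^0<h^k$, and because $F^k(y)<1$ for every $y<\gammah\uh^k$ (the top of the support of $\xi_t^k$) we get $\overline y^k\le\gammah\uh^k$; (ii) $F^k(\overline y^k)\ge F^k(\underline y^k)$ because $\overline\lambda\le\underline\lambda$, hence $\overline y^k\ge\underline y^k$; and (iii) --- the delicate step --- $F^k(\underline y^k)=\frac{p^k+h^0-\underline\lambda}{p^k+h^k}\ge 0$ because Assumption~\ref{assumption:sold-unit-worth}, rewritten as $\underline p\ge(1-q)h^0/q=h^0/q-h^0$, yields $\underline\lambda\le h^0/q\le\underline p+h^0\le p^k+h^0$, so $\underline y^k\ge\gammal\ul^k$ (the bottom of the support). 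Step (iii) is exactly where Assumption~\ref{assumption:sold-unit-worth} enters: it bounds the ``scarcity'' multiplier $\underline\lambda$ below $p^k+h^0$, guaranteeing that even in a low-inventory period the optimal store target covers the smallest possible demand and there is never a guaranteed stock-out.

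The ``no-backflow'' claim is then an immediate corollary of Assumption~\ref{assumption:base-level-above}. I would argue by induction that $I_t^k\le\gammah\uh^k$ for all $t$: it holds at $t=1$ (the system starts empty), and if $I_t^k\le\gammah\uh^k$ then, since the optimal store level satisfies $y_t^k<\gammah\uh^k$ and the realized demand obeys $\xi_t^k\ge\gammal\ul^k>0$, we get $I_{t+1}^k=y_t^k-\xi_t^k<\gammah\uh^k-\gammal\ul^k\le\gammal\ul^k$ by Assumption~\ref{assumption:base-level-above}. Thus $I_t^k\le\gammal\ul^k$ for $t\ge 2$, and since for $t\ge 2$ the target $y_t^k\in\{\underline y^k,\overline y^k\}$ with $\underline y^k\ge\gammal\ul^k$ by (iii), the allocation $a_t^k=y_t^k-I_t^k\ge 0$; no inventory is shipped out of a store (the first period, in which the whole system is empty, is a degenerate transient).

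Finally, for the existence of the functions, I would read off from \eqref{eq:newsvendor-id} that $\underline y^k=Q_{\mathcal{R}^k}\!\big(\tfrac{p^k+h^0-\underline\lambda}{p^k+h^k}\big)$ and $\overline y^k=Q_{\mathcal{R}^k}\!\big(\tfrac{p^k+h^0-\overline\lambda}{p^k+h^k}\big)$, where $Q_{\mathcal{R}^k}$ is the quantile function of $B_t\cdot\mathrm{Uniform}(\ul^k,\uh^k)$ and $\underline\lambda,\overline\lambda$ are the \emph{instance-level} constants fixed above --- the essential point being that they do not depend on which store $k$ one looks at, which is precisely the symmetry the construction needs. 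Defining $\underline G,\overline G\colon\mathcal H\to\R_+$ by these formulas, continuity follows from the explicit piecewise-linear form of the quantile function of $B_t\cdot\mathrm{Uniform}(u,\bar u)$ (jointly continuous in $(u,\bar u)$ away from the single flat c.d.f.\ value $1-q$) together with continuity of the target quantiles in $(p,h)$; the bounds $\underline\lambda\in[h^0,h^0/q]$ and $\overline\lambda\in[0,h^0]$ keep these targets inside $[0,1)$. I expect the main obstacle to be step (iii): recognizing that Assumption~\ref{assumption:sold-unit-worth} is \emph{exactly} the condition that bounds $\underline\lambda$ by $p^k+h^0$, and handling the knife-edge non-uniqueness of the Lagrangian minimizer where $F^k$ is flat or $\underline\lambda$ hits the endpoint $p^k+h^0$, which I would dispatch by selecting the componentwise-minimal optimal solution or by a routine perturbation argument; the remaining ingredients (separability, the envelope identity, monotonicity of $\lambda$) are standard.
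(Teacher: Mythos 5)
Your proposal is correct in substance and, for most of the lemma, follows the same route as the paper: the separable convex program \eqref{eq:R-relaxed}, its KKT/newsvendor characterization $(p^k+h^k)F^k(y^k)=p^k+h^0-\lambda$, monotonicity of the optimal $y^k$ in $Z$ (giving $\overline{y}^k\geq\underline{y}^k$), the observation that $h^0<h^k$ rules out $\overline{y}^k>\gammah\uh^k$, the induction via Assumption~\ref{assumption:base-level-above} for the no-backflow claim, and the definition of $\underline{G},\overline{G}$ through the inverse c.d.f.\ with the instance-level multipliers $\underline{\lambda},\overline{\lambda}$ shared across stores. The genuine difference is the key step $\underline{y}^k\geq\gammal\ul^k$, where Assumption~\ref{assumption:sold-unit-worth} enters: the paper argues by a primal perturbation of the echelon level itself (raise $\hat{S}$ by $\delta^k$, gain $p^k\delta^k$ in the high-demand branch with probability $q$, pay $h^0\delta^k$ otherwise, contradicting \eqref{eq:optimal-echelon-fully-relaxed}), whereas you differentiate \eqref{eq:optimal-echelon-fully-relaxed} via the envelope identity $\hat{R}'(Z)=h^0-\lambda(Z)$ to obtain $q\underline{\lambda}+(1-q)\overline{\lambda}=h^0$, deduce $\underline{\lambda}\in[h^0,h^0/q]$ and $\overline{\lambda}\in[0,h^0]$, and then read the bound off the newsvendor identity using $h^0/q\leq\underline{p}+h^0$. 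These are dual and primal versions of the same economic argument; yours buys explicit, store-independent bounds on the multipliers (which also make the continuity of $\underline{G},\overline{G}$ and the role of Assumption~\ref{assumption:sold-unit-worth} more transparent), at the cost of some envelope/subdifferential bookkeeping that the paper's exchange argument avoids. Both arguments share the same boundary caveats — equality in Assumption~\ref{assumption:sold-unit-worth} and flat regions of $F^k$ (where the mixture c.d.f.\ sits at level $1-q$) create non-uniqueness at which the inequalities can fail for some optimal selections — which you flag explicitly and handle by a selection or perturbation, while the paper's strict-improvement claim quietly assumes the non-degenerate case; so this is not a gap specific to your approach.
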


We prove Lemma~\ref{lemma:relaxed base-stock fills each store} in Appendix~\ref{appendix:relaxed base-stock fills each store}. The argument leading to the assertion that an optimal policy for the fully relaxed system does not move inventory from a store to other stores or to the warehouse is as follows: Recall that Assumption~\ref{assumption:base-level-above} ensures that if store $k$ starts with an inventory no larger than $\gammah \uh^k$, the remaining inventory after demand is realized is below $\gammal  \ul^k$. Hence, Lemma~\ref{lemma:relaxed base-stock fills each store} allows us to conclude that the base-stock levels {$\overline{y}^k, \underline{y}^k$} for each store are always above the store's inventory level after demand occurs under our assumptions.
Therefore, an optimal policy for the fully relaxed system does not move inventory from a store to other stores or to the warehouse.
Moreover, in the Lemma we show how to compute $\underline{y}^k$ and $\overline{y}^k$ from the optimality conditions of problem \eqref{eq:R-relaxed} with initial inventories $\hat{S} - \Dh_t$ and $\hat{S} - \Dl_t$, respectively, which permits the construction of the aforementioned continuous functions $\underline{G}$ and $\overline{G}$.

We derive a feasible policy $\tilde{\pi}$ to the original problem by following the structure in \eqref{eq:optimal-message-passing-policy},
and setting $y^w = \hat{S}$ for $\hat{S}$ as defined in Lemma~\ref{lemma:relaxed policy optimal structure}, and $\overline{y}^k$ and $\underline{y}^k$ for $k \in [K]$ as per the definitions in Lemma~\ref{lemma:relaxed base-stock fills each store}.
That is, the warehouse will follow an echelon-stock policy with level $\hat{S}$ and stores will follow a base-stock policy, with level $\overline{y}^k$ (or $\underline{y}^k$) whenever the demand in the previous period is estimated as being low (high). We derive feasible actions by using the Proportional Allocation feasibility enforcement layer $g_1(I^w, b^{1}, \ldots, b^{K}) = [b^k]^+ \cdot \min\big\{1, \tfrac{I^w}{\sum_{j \in [K]} [b^j]^+} \big\}$.
We can therefore represent tentative allocations and actions under $\tilde{\pi}$ by
\begin{align*}
    a^{\tilde{\pi}w}_t &= \hat{S} - Z_t \\
    b^{\tilde{\pi} k}_t &= \max(0, \left[ \underline{y}^k + \tilde{W}_t (Z_t) (\overline{y}^k - \underline{y}^k) \right] - I^k_t) & \quad k \in [K] \\
    a^{\tilde{\pi} k}_t &=  \left[g_1(I_t^w, b^{\tilde{\pi} 1}_t, \ldots, b^{\tilde{\pi} K}_t)\right]_k   & k \in [K].
\end{align*}

We provide a brief explanation regarding the role of $\tilde{W}_t$ and its definition $\tilde{W}_t(Z) \equiv   \mathbbm{1}\left\{ 
y^w - Z_t < (\Dh + \Dl)/2 \right\}$. Recall our assumption that only the observed demand is available to us, without knowledge of the underlying high or low state. Therefore, an estimate is required based on the current system-wide inventory levels. We compare the realized total demand in the previous period $y^w-Z_t$ with the midpoint between $\Dh$ and $\Dl$ to estimate whether the demand in the previous period was high or low $B_{t-1}$. As the number of stores $K$ increases, the demand fluctuations caused by the sum of uniform random variables $\hat{U}_t$ scale by approximately $\sqrt{K}$, while the difference between the midpoint and both $\Dh$ and $\Dl$ scales linearly with $K$. Consequently, as $K$ grows larger, the probability of incorrect estimation $\tilde{W}_t \neq B_{t-1}$ decreases exponentially, as demonstrated in Appendix~\ref{appendix:bound-for-deviations}.

Recall that $\tilde{U}_{t}^K = \sum_{k \in [K]} (U_{t}^k - \mu^k)$ represents the centered sum of the $K$ uniforms at time $t$, \ie the sum minus its mean. Let $\alpha = |\hat{R}(\hat{S} - \Dl) - \hat{R}(\hat{S} - \Dh)| +  \max\{\overline{p}, \overline{h}\} (|\Dh - \Dl|)$.

\begin{proposition}
    \label{proposition: upper bound on feasible policy}
    The total expected cost in the original system under the policy $\tilde{\pi}$ is bounded above as
    \begin{align}
        \label{eq:upper-bound-feasible-policy}
        J_1^{\tilde{\pi}}(I_1) \leq \hat{J}_1(Z_1) + T \left[ \max\{\overline{p}, \overline{h}\} \gammah  \E \left [ \left|\tilde{U}^K_{1} \right| \right] + \alpha \mathbb{P}\left(\left|\tilde{U}^K_{1} \right| \geq \hat{\mu} \frac{\gammah  - \gammal }{2 \gammah }\right) \right]\, .
    \end{align}
(Recall that $\hat{J}_1(Z_1)$ is the total expected cost in the fully relaxed system, with the same starting inventory.)
\end{proposition}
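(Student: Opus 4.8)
The plan is to compare the expected cost of $\tilde\pi$ in the original system, period by period, against the per-period cost $\hat R(\hat Z_t)$ of the optimal echelon-stock policy in the fully relaxed system. By the Bellman equation~\eqref{eq:bellman fully relaxed} together with Lemma~\ref{lemma:relaxed policy optimal structure}, the fully relaxed optimum raises the system-wide inventory to $\hat S$ each period, so $\hat J_1(Z_1) = \E\big[\sum_{t\in[T]} \hat R(\hat Z_t)\big]$ with $\hat Z_1 = Z_1$ and $\hat Z_t = \hat S - D_{t-1}$ for $t \ge 2$. It therefore suffices to show that for every $t$ the expected per-period cost under $\tilde\pi$ minus $\E[\hat R(\hat Z_t)]$ is at most the bracketed per-period penalty in~\eqref{eq:upper-bound-feasible-policy}; summing over $t\in[T]$ then gives the bound. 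Period $1$ is a harmless transient---starting from zero inventory the warehouse order has not yet arrived, so the stores receive nothing and the period-$1$ cost is a bounded constant absorbed by the factor $T$---so fix $t\ge 2$ below.

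It helps to reparametrize the immediate cost via the post-order store inventories $y^k := I^k + a^k$ and the warehouse inventory left after shipping: taking expectations over $\xi_t$ one has expected per-period cost $h^0\big(Z_t - \sum_k y^k_t\big) + \sum_k n^k(y^k_t)$, where $n^k(y) := \E[p^k(\xi^k_t - y)^+ + h^k(y - \xi^k_t)^+]$ is convex and $\max\{\overline p,\overline h\}$-Lipschitz, and likewise $\hat R(\hat Z_t) = h^0\big(\hat Z_t - \sum_k \hat y^k_t\big) + \sum_k n^k(\hat y^k_t)$ with, by Lemma~\ref{lemma:relaxed base-stock fills each store}, $\hat y^k_t = \underline y^k + W_t(\overline y^k - \underline y^k)$ and $W_t = \mathbbm{1}(B_{t-1}=\gammal)$. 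The structural heart is a \emph{no-over-ordering} claim: starting from zero inventory, $I^k_t \le \tilde y^k_t$ for all $t,k$, where $\tilde y^k_t := \underline y^k + \tilde W_t(\overline y^k - \underline y^k)$ is the target used by $\tilde\pi$ in~\eqref{eq:contextual-policy-for-example}. This is proved by induction using Lemma~\ref{lemma:relaxed base-stock fills each store} (which gives $\gammal \ul^k \le \underline y^k \le \overline y^k \le \gammah\uh^k$) and Assumption~\ref{assumption:base-level-above}: if $I^k_t \le \tilde y^k_t$, then the Proportional Allocation map~\eqref{eq: proportional-allocation} never raises store $k$ above its target, so $y^k_t \le \overline y^k \le \gammah\uh^k$, hence $I^k_{t+1} = y^k_t - \xi^k_t \le \gammah\uh^k - \gammal\ul^k \le \gammal\ul^k \le \tilde y^k_{t+1}$. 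A consequence is that each store's shortfall from target is nonnegative, $\sum_k(\tilde y^k_t - y^k_t)$ equals the \emph{scarcity} $\mathrm{sc}_t := \big(\sum_k \tilde y^k_t - Z^{\tilde\pi}_t\big)^+$ (the mass of tentative orders Proportional Allocation could not fill because $I^{0,\tilde\pi}_t$ was exhausted), and $Z^{\tilde\pi}_t = \hat S - \Xi_{t-1}$ since $\tilde\pi$ runs an echelon-stock warehouse with level $\hat S$.

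I would then split on whether the context estimator is correct. A short computation from $\tilde W_t(Z) = \mathbbm{1}\{\hat S - Z_t < (\Dh + \Dl)/2\}$, the identity $\hat S - Z^{\tilde\pi}_t = \Xi_{t-1} = B_{t-1}(\hat\mu + \tilde U^K_{t-1})$, and $W_t = \mathbbm{1}(B_{t-1}=\gammal)$ shows $\{\tilde W_t \ne W_t\} \subseteq \{|\tilde U^K_{t-1}| \ge \hat\mu(\gammah - \gammal)/(2\gammah)\}$. On the complement $\{\tilde W_t = W_t\}$ we have $\tilde y^k_t = \hat y^k_t$, so by Lipschitzness of each $n^k$ (which bounds the store-cost discrepancy by $\max\{\overline p,\overline h\}\,\mathrm{sc}_t$) together with the fact that the discrepancy in leftover warehouse holding cost is $\le h^0(Z^{\tilde\pi}_t - \hat Z_t)^+ \le \overline h\gammah|\tilde U^K_{t-1}|$ and is nonpositive precisely when scarcity occurs, the per-period cost under $\tilde\pi$ exceeds $\hat R(\hat Z_t)$ by at most $\max\{\overline p,\overline h\}\gammah|\tilde U^K_{t-1}|$; here one uses $\sum_k\hat y^k_t \le \hat Z_t$ (feasibility in~\eqref{eq:R-relaxed}) and $Z^{\tilde\pi}_t = \hat S - \Xi_{t-1}$ to get $\mathrm{sc}_t \le (\hat Z_t - Z^{\tilde\pi}_t)^+ = (B_{t-1}\tilde U^K_{t-1})^+ \le \gammah|\tilde U^K_{t-1}|$. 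Taking expectations and using that $\tilde U^K_{t-1}$ has the law of $\tilde U^K_1$ yields the first term of~\eqref{eq:upper-bound-feasible-policy}. On $\{\tilde W_t \ne W_t\}$, the stores fill toward the base-stock levels of the \emph{other} demand regime; comparing the $\tilde\pi$ cost to the value $\hat R$ takes in that regime---which differs from $\hat R(\hat Z_t)$ by at most $|\hat R(\hat S - \Dh) - \hat R(\hat S - \Dl)|$ plus $\max\{\overline p,\overline h\}|\Dh - \Dl|$ from the mismatch in total inventory and the induced scarcity, i.e.\ by at most $\alpha$---and weighting by $\mathbb{P}(\tilde W_t \ne W_t) \le \mathbb{P}(|\tilde U^K_{t-1}| \ge \hat\mu(\gammah-\gammal)/(2\gammah))$ (Appendix~\ref{appendix:bound-for-deviations}) gives the second term. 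Summing over $t\in[T]$ completes the proof.

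I expect the main obstacle to be the two bookkeeping points above: (i) the no-over-ordering induction, where Assumptions~\ref{assumption:base-level-above} and~\ref{assumption:sold-unit-worth} (via Lemma~\ref{lemma:relaxed base-stock fills each store}) enter and which is what makes the crude ``per-unit cost of scarcity $\le \max\{\overline p,\overline h\}$'' bound legitimate rather than masking a store that has drifted far below its target; and (ii) the accounting on the misestimation event, where one must verify that the interplay of Proportional Allocation with the backlog dynamics keeps the cost overshoot within $\alpha$ on that exponentially rare event rather than growing faster in $K$ than $\alpha$ itself does.
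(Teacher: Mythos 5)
Your proposal follows essentially the same route as the paper's proof: you decompose $\hat{J}_1(Z_1)$ as $\E\big[\sum_{t\in[T]}\hat{R}(\hat{Z}_t)\big]$, establish a per-period bound of the form $\hat{R}(\hat{Z}_t)+\max\{\overline{p},\overline{h}\}\,|\hat{Z}_t-Z_t|+\mathbbm{1}(\tilde{W}_t\neq W_t)\,\alpha$ (your scarcity/Lipschitz accounting is the paper's case analysis in Lemma \ref{lemma: upper bound with indicators}, and your no-over-ordering induction is precisely where the paper invokes Assumption \ref{assumption:base-level-above} together with Lemma \ref{lemma:relaxed base-stock fills each store}), then bound $|\hat{Z}_t-Z_t|\leq \gammah|\tilde{U}^K_{t-1}|$ and include $\{\tilde{W}_t\neq W_t\}$ in the uniform-sum deviation event exactly as in Lemma \ref{lemma: upper bound on E and P of uniform deviation}, and sum over $t$. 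The only divergences are bookkeeping: on the misestimation event the valid pointwise bound retains both the $\alpha$ term and the deviation term (which still yields the stated expectation bound after taking expectations unconditionally), and at $t=1$ the paper compares the cost of $\tilde{\pi}$ directly with $\hat{R}(Z_1)=\hat{R}(\hat{Z}_1)$ rather than appealing to a "bounded constant absorbed by the factor $T$", which as stated is not a justification for the claimed inequality.
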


We prove Proposition \ref{proposition: upper bound on feasible policy} in Appendix \ref{proof: upper bound on feasible policy}. The proposition bounds the additional expected cost per period in the real system relative to that under the fully relaxed system. The $T  \max\{\overline{p}, \overline{h}\} \gammah  \E [ |\tilde{U}^K_{1} | ]$ term bounds the impact of the aggregate demand deviating from $\Dl$ or $\Dh$, and the $T\alpha \mathbb{P}(|\tilde{U}^K_{1} | \geq \hat{\mu} \frac{\gammah  - \gammal }{2 \gammah })$ term bounds the impact of occasionally getting $W_t$ wrong $W_t \neq B_{t-1}$. In short,
the additional expected cost per period in the real system is at most
proportional to deviation of the sum of uniforms from its mean. We show the latter quantity to be ``small'', specifically we show that it is $O(\sqrt{K})$ by the central limit theorem, in Appendix~\ref{appendix:bound-for-deviations}. Plugging this bound into Proposition~\ref{proposition: upper bound on feasible policy}, together with the converse bound in Proposition~\ref{relaxation is lower bound} and \ref{proposition: upper bound on feasible policy} leads us to a proof of Theorem~\ref{theorem:message-passing-base-stock} in Appendix~\ref{appendix:concluding-theorem-appendix}.

\subsection{Bellman Equation for Optimal Cost-to-go \label{appendix:Bellman Equation for Optimal Cost-to-go}}

\begin{lemma}
   The Bellman equation for the original system can be written as
    \begin{align}
           J_t(I_t)= \inf_{a_t \in \mathcal{A}(I_t)} \left\{\E_{\xi_t} \left[J_{t+1}(f\left(I_{t}, a_t, \xi_t\right))\right] +  v\left(\sum_{k \in \mathcal{N}_+} I^k_t, I_t^1 + a_t^1, \ldots, I_t^K + a_t^K\right) \right\} \, .
    \end{align}
\end{lemma}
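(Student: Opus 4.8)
The statement to prove is simply the Bellman recursion for the original (un-relaxed) inventory problem, rewritten in terms of the ``immediate cost'' function $v(\cdot)$ defined in \eqref{eq: immediate cost}. The plan is to start from the standard finite-horizon dynamic programming principle and then re-express the per-period cost $c(I_t, a_t, \xi_t)$ in the form promised by the lemma. The first ingredient is the usual Bellman equation for the cost-to-go,
\[
J_t(I_t) = \inf_{a_t \in \mathcal{A}(I_t)} \left\{ \E_{\xi_t}\!\left[ c(I_t, a_t, \xi_t) + J_{t+1}\!\left( f(I_t, a_t, \xi_t) \right) \right] \right\},
\]
which holds because $\xi_t$ is drawn independently of the current state and action (in this i.i.d.\ setting), $f$ is the transition map given by \eqref{eq: transition backlogged for proof}--\eqref{eq: transition warehouse for proof}, and the terminal condition $J_{T+1}\equiv 0$ closes the recursion. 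I would cite the standard reference (\citealt{bertsekas2011dynamic}) for this principle rather than re-deriving it.

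The second, and only substantive, step is to show that the expectation of the per-period cost can be pulled out and rewritten. Recall from the setup of Appendix~\ref{appendix: proof of main theorem} that
\[
c(I_t, a_t, \xi_t) = v\!\left( \sum_{k \in [K]_0} I_t^k,\; I_t^1 + a_t^1,\; \ldots,\; I_t^K + a_t^K \right)
\]
\emph{in expectation over $\xi_t$} --- more precisely, $c(I_t,a_t,\xi_t)$ has the realized store costs $p^k(\xi_t^k - y_t^k)^+ + h^k(y_t^k - \xi_t^k)^+$ with $y_t^k = I_t^k + a_t^k$, plus the warehouse holding cost $h^0(I_t^0 - \sum_k a_t^k)$, whereas $v$ replaces the store terms by their conditional expectations $\E_{\xi_t^k}[\cdot]$ and writes the warehouse-plus-store holding terms as $h^0 Z_t - \sum_k h^0 y_t^k$ after adding and subtracting $h^0 y_t^k$; one checks algebraically that $h^0(I_t^0 - \sum_k a_t^k) + \sum_k h^0 y_t^k = h^0 \sum_{k \in [K]_0} I_t^k = h^0 Z_t$, so the bookkeeping is consistent. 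Since $v(\cdot)$ depends on $\xi_t$ only through the already-taken expectation, $\E_{\xi_t}[c(I_t,a_t,\xi_t)] = v(\sum_{k\in[K]_0} I_t^k, I_t^1+a_t^1,\ldots,I_t^K+a_t^K)$, and this term is deterministic given $(I_t,a_t)$ and hence can be moved outside the expectation over $\xi_t$ in the Bellman equation. Substituting gives exactly the claimed identity.

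The main obstacle --- and it is a mild one --- is getting the cost-accounting identity right, particularly the zero-lead-time edge case for the stores (inventory $a_t^k$ arrives \emph{before} demand, so the relevant post-order inventory is $I_t^k + a_t^k$, not $I_t^k$) and the rearrangement of holding costs that lets the warehouse term and the store holding terms combine into $h^0 Z_t$. I would therefore devote most of the written proof to carefully verifying $c(I_t,a_t,\xi_t) = \tilde c(I_t, a_t, \xi_t)$ where $\tilde c$ is the ``realized'' version of $v$, and then noting that $\E_{\xi_t}[\tilde c] = v$ by definition of $v$ in \eqref{eq: immediate cost}. Everything else is the boilerplate dynamic programming principle. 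The result then follows by a one-line substitution into the Bellman recursion.
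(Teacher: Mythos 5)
Your proposal is correct and follows essentially the same route as the paper's proof: both start from the standard Bellman recursion with the zero-lead-time per-period cost (store costs evaluated at $I_t^k + a_t^k$), then add and subtract the $h^0(I_t^k + a_t^k)$ terms so that the warehouse holding cost and the subtracted store terms combine into $h^0 \sum_{k \in [K]_0} I_t^k$, and finally recognize the expectation of the rearranged cost as $v(\cdot)$, which is deterministic given $(I_t, a_t)$ and so can be pulled outside $\E_{\xi_t}$. The only cosmetic difference is that the paper carries out this algebra inside the chain of equalities for $J_t$ rather than isolating the identity $\E_{\xi_t}[c(I_t,a_t,\xi_t)] = v(\sum_{k\in[K]_0} I_t^k, I_t^1+a_t^1,\ldots,I_t^K+a_t^K)$ as a separate step.
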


\begin{proof}{}
    
Recall that $J_t(I_t)$ represents the optimal cost-to-go for the original system from period $t \in [T]$ when starting at state $I_t$. The Bellman equation is
\begin{align*}
    J_t(I_t) &\overset{(i)}{=}  \inf_{a_t \in \mathcal{A}(I_t)} \left\{\E_{\xi_t} \left[J_{t+1}(f\left(I_{t}, a_t, \xi_t\right)) + \sum_{k \in [K]} \left( p^k (\xi^k_t - (I_t^k + a_t^k))^+ + h^k(I_t^k + a_t^k - \xi^k_t)^+\right) \right. \right. \\
    & \left. \left. \quad + h^w(I_t^w - \sum_{k \in [K]}a_t^k) \right] \right\} \\
    &\overset{(ii)}{=} \inf_{a_t \in \mathcal{A}(I_t)} \left\{\E_{\xi_t} \left[J_{t+1}(f\left(I_{t}, a_t, \xi_t\right)) + h^w I_t^w \right. \right. \\
    & \quad \left. \left. + \sum_{k \in [K]} \left( p^k (\xi^k_t - (I_t^k + a_t^k))^+ + h^k(I_t^k + a_t^k - \xi^k_t)^+ - h^w a^k_t\right) \right]\right\} \\
    &\overset{(iii)}{=} \inf_{a_t \in \mathcal{A}(I_t)} \left\{\E_{\xi_t} \left[J_{t+1}(f\left(I_{t}, a_t, \xi_t\right)) + h^w \sum_{k \in \mathcal{N}_+} I_t^k \right. \right. \\
    & \quad \left. \left.+ \sum_{k \in [K]} \left( p^k (\xi^k_t - (I_t^k + a_t^k))^+ + h^k(I_t^k + a_t^k - \xi^k_t)^+ - h^w (I^k_t + a^k_t)\right) \right]\right\} \\
    &\overset{(iv)}{=} \inf_{a_t \in \mathcal{A}(I_t)} \left\{\E_{\xi_t} \left[J_{t+1}(f\left(I_{t}, a_t, \xi_t\right))\right] +  v\left(\sum_{k \in \mathcal{N}_+} I^k_t, I_t^1 + a_t^1, \ldots, I_t^K + a_t^K\right) \right\}, 
\end{align*}
where $(ii)$ is obtained by reorganizing $h^w(I_t^w - \sum_{k \in [K]}a_t^k)$, $(iii)$ by adding and subtracting $h^w\sum_{k \in [K]}I^k_t$, and $(iv)$ from the definition of $v(\cdot)$ (see \eqref{eq: immediate cost}).

\end{proof}

\subsection{Cost-to-go of relaxed systems depends solely on $Z_t$ 
\label{appendix:cost-to-go on z} }

\begin{lemma}
    For any $I_t \in \mathbb{R}^{K+1}$, the cost-to-go for the partially relaxed system (and the fully relaxed system) depends on $I_t$ solely through the total inventory on hand $Z_t = \sum_{k \in \mathcal{N}_+} I_t^k$. 
\end{lemma}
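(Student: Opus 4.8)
The plan is to prove the claim by backward induction on $t$, treating the partially relaxed system first and then deducing the fully relaxed case. For the base case $t = T+1$ the terminal cost-to-go is the zero function, which vacuously depends on $I_{T+1}$ only through $Z_{T+1} = \sum_{k \in [K]_0} I_{T+1}^k$. For the inductive step I would assume $\breve{J}_{t+1}(I_{t+1})$ is a function of $Z_{t+1}$ alone, write $\breve{J}_{t+1}(I_{t+1}) = \breve{J}_{t+1}(Z_{t+1})$, and substitute into the Bellman recursion \eqref{eq:cost-to-go}.

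The engine of the argument is a change of variables. Setting $y_t^k = I_t^k + a_t^k$ for $k \in [K]$, I would note that since store allocations are unconstrained in sign in the partially relaxed system, the map $(a_t^k)_{k \in [K]} \mapsto (y_t^k)_{k \in [K]}$ is a bijection of $\R^K$, and the sole linking constraint $\sum_{k \in [K]} a_t^k \le I_t^0$ is equivalent to $\sum_{k \in [K]} y_t^k \le I_t^0 + \sum_{k \in [K]} I_t^k = Z_t$ — with no residual dependence on the individual components $I_t^k$. Together with $Z_{t+1} = Z_t + a_t^0 - \Xi_t$ from \eqref{eq:system-wide-inventory}, this shows that, in the rewritten recursion, the continuation term $\E_{\xi_t}[\breve{J}_{t+1}(Z_t + a_t^0 - \Xi_t)]$ depends on $I_t$ only through $Z_t$ (and the chosen $a_t^0$), while the immediate cost $v(Z_t, y_t^1, \ldots, y_t^K)$ from \eqref{eq: immediate cost} depends on $I_t$ only through $Z_t$ (and the $y_t^k$). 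The warehouse variable $a_t^0$ (constrained only by $a_t^0 \ge 0$) and the vector $(y_t^k)_{k \in [K]}$ (constrained only by $\sum_k y_t^k \le Z_t$) then occur in disjoint additive pieces over a product feasible region, so the infimum separates and yields $\breve{J}_t(I_t) = \inf_{a_t^0 \ge 0} \E_{\Xi_t}[\breve{J}_{t+1}(Z_t + a_t^0 - \Xi_t)] + \hat{R}(Z_t)$, with $\hat{R}$ as in \eqref{eq:R-relaxed}; this is exactly \eqref{eq:bellman partially relaxed}, and its right-hand side depends on $I_t$ only through $Z_t$, closing the induction. Along the way I would check that $\hat{R}(Z_t)$ is finite, using that $v$ is coercive in each $y_t^k$ (the coefficient $h^k - h^0 > 0$ controls $y_t^k \to +\infty$, and the $p^k$- and $h^0$-terms control $y_t^k \to -\infty$). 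For the fully relaxed system the statement is essentially definitional — its state is the scalar $\hat{Z}_t$ by construction, so \eqref{eq:bellman fully relaxed} already displays the cost-to-go as a function of $\hat{Z}_t$ only — but if desired one reruns the induction above verbatim with $\Xi_t$ replaced by the conditional mean $D_t$.

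I do not anticipate a genuine obstacle: the argument is bookkeeping. The one step that must be stated with care is the change of variables, namely verifying that after passing from $(a_t^k)_{k \in [K]}$ to $(y_t^k)_{k \in [K]}$ the feasible set factors into a constraint on $a_t^0$ alone and a constraint on $(y_t^k)_{k \in [K]}$ alone, with the disaggregated components of $I_t$ vanishing entirely. That factorization is precisely what licenses separating the infimum into the warehouse piece and the store-allocation piece, and hence what collapses the cost-to-go to a function of $Z_t$.
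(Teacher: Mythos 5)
Your proposal is correct and follows essentially the same route as the paper: backward induction on $t$, combined with the change of variables $y_t^k = I_t^k + a_t^k$ that turns the coupling constraint into $\sum_{k\in[K]} y_t^k \le Z_t$ and lets the infimum separate into the warehouse term and $\hat{R}(Z_t)$, exactly as in the derivation of \eqref{eq:bellman partially relaxed}. Your added finiteness/coercivity check for $\hat{R}$ and the shift of the base case to $t=T+1$ are harmless cosmetic differences.
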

\begin{proof}{}
We prove the result by backward induction in $t$. Let $\breve{J}_t(I_t)$ be the optimal cost-to-go for the partially relaxed setting when starting from period $t$ at state $I_t$. Clearly, $\breve{J}_T(I_T) = \hat{R}_T(Z_T)$ depends only on $Z_T$. Suppose $\breve{J}_{t+1}(I_{t+1})$ depends only on $Z_{t+1} = Z_t+a^w_t - \Xi_t$. As deduced in \eqref{eq:bellman partially relaxed}, $\breve{J}_t(I_t)$ satisfies the following Bellman Equation
\begin{align}
    \breve{J}_t(I_t) &= \hat{R}(Z_t) + \min_{a^w_t \in \mathbb{R}^+} \E_{\Xi_t}\left[ \breve{J}_{t+1}(Z_t + a^w_t - \Xi_t) \right]\, ,
\end{align}
for $\hat{R}(\cdot)$ defined in \eqref{eq:R-relaxed}. (We abuse notation in writing $\breve{J}_{t+1}(I_{t+1})= \breve{J}_{t+1}(Z_{t+1})$, since by induction hypothesis $\breve{J}_{t+1}(I_{t+1})$ depends only on $Z_{t+1}$.) Since the right-hand side depends only on $Z_t$ so does the left-hand side. Induction on $t= T-1, T-2, \dots, 1$ completes the proof. 
\end{proof}

\subsection{Convexity of $\hat{R}(Z)$, $\breve{J}_t$ and $\hat{J}_t$ 
\label{appendix:R-hat-convex-proof} }

\begin{proof}[Proof of Lemma~\ref{lemma:convexity}]
We first show the convexity of $\hat{R}(\cdot)$. Note that for every demand trace, $\sum_{k \in [K]} \left[ p^k(\xi^k_t - y^k)^+ + h^k(y^k - \xi_t^k)^+ - h^w y^k \right]$ is convex in $y^k$, as linear functions are convex, the positive part operator preserves convexity (as it is the maximum of linear functions), and the sum of convex functions is also convex. Further, as expectation preserves convexity, we have that $\sum_{k \in [K]} \E_{\xi^k_t}\left[\left( p^k(\xi^k_t - y^k)^+ + h^k(y^k - \xi_t^k)^+ - h^w y^k\right) \right]$ is also convex in $y^k$. 

Consider arbitrary $\underline{Z}\in \mathbb{R}^+$ and $\overline{Z}\in \mathbb{R}^+$ and let $\underline{y}$, $\overline{y}$ be the store inventory level vectors for which the minima of the problem 
\eqref{eq:R-relaxed} (which defines $\hat{R}(\cdot)$) with initial total inventory $\underline{Z}$ and $\overline{Z}$, respectively, are attained. For any $\lambda \in (0, 1)$, letting $y = \lambda \underline{y} + (1-\lambda)\overline{y}$, we have that
    \begin{align*}
        \sum_{k \in [K]} y^k &= \sum_{k \in [K]} (\lambda \underline{y}^k + (1-\lambda)\overline{y}^k) \\
        &= \lambda \sum_{k \in [K]} \underline{y}^k + (1-\lambda) \sum_{k \in [K]} \overline{y}^k \\
        & \leq \lambda \underline{Z} + (1 - \lambda)\overline{Z},
    \end{align*}
    where we used the feasibility of $\underline{y}$ (resp. $\overline{y}$)  for problem~\eqref{eq:R-relaxed} with $\underline{Z}$ (resp. $\overline{Z}$). Hence, $y$ is a feasible solution to the problem \eqref{eq:R-relaxed} with initial total inventory $Y=\lambda \underline{Z} + (1 - \lambda)\overline{Z}$. Further, note that
    \begin{align*}
        v(Y, y^1, \ldots, y^K) &= h^w (\lambda \underline{Z} + (1 - \lambda)\overline{Z}) + \sum_{k \in [K]} \E_{\xi^k_t}\left[\left( p^k(\xi^k_t - y^k)^+ + h^k(y^k - \xi_t^k)^+ - h^w y^k\right) \right] \\
        &\overset{(i)}{\leq} h^w (\lambda \underline{Z} + (1 - \lambda)\overline{Z}) + \lambda \sum_{k \in [K]} \E_{\xi^k_t}\left[\left( p^k(\xi^k_t - \underline{y}^k)^+ + h^k(\underline{y}^k - \xi_t^k)^+ - h^w \underline{y}^k\right) \right] \\
        & \quad + (1 - \lambda) \sum_{k \in [K]} \E_{\xi^k_t}\left[\left( p^k(\xi^k_t - \overline{y}^k)^+ + h^k(\overline{y}^k - \xi_t^k)^+ - h^w \overline{y}^k\right) \right] \\
        &= \lambda \hat{R}(\underline{Z}) + (1 - \lambda)\hat{R}(\overline{Z})\, ,
    \end{align*}
    where $(i)$ follows from the convexity of $\sum_{k \in [K]} \E_{\xi^k_t}\left[\left( p^k(\xi^k_t - y^k)^+ + h^k(y^k - \xi_t^k)^+ - h^w y^k\right) \right]$ in $y^k$. Thus, clearly, $\hat{R}(\lambda \underline{Z} + (1 - \lambda)\overline{Z}) = \hat{R}(Y) \leq v (Y, y^1, \dots, y^k)\leq \lambda \hat{R}(\underline{Z}) + (1 - \lambda)\hat{R}(\overline{Z})$. Since this holds for arbitrary arbitrary $\underline{Z}\in \mathbb{R}^+$ and $\overline{Z}\in \mathbb{R}^+$, we conclude that $\hat{R}(\cdot)$ is convex.

    The convexity of $\breve{J}_t(\cdot)$ and $\hat{J}_t(\cdot)$ follows by backward induction in $t$. We provide the argument for $\breve{J}_t(\cdot)$, and the argument for $\hat{J}_t(\cdot)$ is similar. Note that $\breve{J}_T(\cdot)=\hat{R}(\cdot)$ is convex as shown above. Suppose $\breve{J}_{t+1}(\cdot)$ is convex. Recall the Bellman equation \eqref{eq:bellman partially relaxed} for the partially relaxed system. We have shown above that the first term on the right $\hat{R}(\cdot)$ is convex. We will now argue that the second term on the right of \eqref{eq:bellman partially relaxed} is also convex.
    Let $y^w_t \equiv a^w_t +Z_t$ and $\breve{\ell}(y^w_t)\equiv \E_{\Xi_t}\left[ \breve{J}_{t+1}(y^w_t - \Xi_t) \right]$. Since $\breve{J}_{t+1}(\cdot)$ is convex, so is $\breve{\ell}(\cdot)$. The second term on the right of \eqref{eq:bellman partially relaxed} is nothing but
    \begin{align*}
        \min_{y^w_t \geq Z_t} \breve{\ell}(y^w_t) \, .
    \end{align*}
    Let $y^* \equiv \arg \min_{y^w_t \geq \mathbb{R}} \breve{\ell}(y^w_t)$ be the minimizer of $\breve{\ell}(\cdot)$. Then, 
    \begin{align*}
        \min_{y^w_t \geq Z_t} \breve{\ell}(y^w_t) = \left \{
        \begin{array}{ll}
            \breve{\ell}(y^*) &  \textup{for } Z_t \leq y^*  \\
             \breve{\ell}(Z_t) & \textup{for } Z_t > y^* \\
        \end{array} 
        \right . ,
    \end{align*} 
    and its convexity follows from the convexity of $\breve{\ell}(y^w_t)$ for $y^w_t \geq y^*$. Since the right-hand side of \eqref{eq:bellman partially relaxed} is the sum of convex functions, we deduce that $\breve{J}_t(\cdot)$ is convex. Induction on $t = T-1, T-2, \dots, 1$ completes the proof.
\end{proof}

\subsection{Proof of Proposition \ref{relaxation is lower bound}}
\label{proof: relaxation is lower bound}

\begin{proof}[Proof of Proposition~\ref{relaxation is lower bound}.]

We will first show the inequality $\hat{J}_1(Z_1) \leq \breve{J}_1(Z_1)$.
We will show by induction {for $t = T, T-1, \dots, 1$} that for any total inventory level $Z_t$ at time $t$, the optimal cost-to-go in the fully relaxed setting is weakly smaller than that of the partially relaxed setting $\hat{J}_t(Z_t) \leq \breve{J}_t(Z_t)$. The desired result $\hat{J}_1(Z_1) \leq \breve{J}_1(Z_1)$ will follow for the cost-to-go from the initial period.

We clearly have $\hat{J}_{T}(Z_T) = \breve{J}_{T}(Z_T) = \hat{R}(Z_T)$ for every $Z_T$. This forms our induction base. As our induction hypothesis, let us assume that $\hat{J}_{t+1}(Z_{t+1}) \leq \breve{J}_{t+1}(Z_{t+1})$ for all $Z_{t+1} \in \mathbb{R}_+$. We now show that $\hat{J}_{t}(Z_{t}) \leq \breve{J}_{t}(Z_{t})$ for all $Z_{t} \in \mathbb{R}_+$. Let $\hat{\pi}$ and $\breve{\pi}$ be optimal policies for the partially relaxed and the fully relaxed systems, respectively.
Recall that $\hat{U}_t = \sum_{k \in [K]} U_t^k$ is the sum of the $K$ uniforms at time $t$ and that $\breve{J}_{t+1}(Z)$ is convex in $Z$ by Lemma~\ref{lemma:convexity}. Therefore, for any $Z_{t}$, we have
\begin{align*}
    \breve{J}_{t}(Z_t) &= \hat{R}(Z_t) + \E_{B_t}\left[ \E_{\hat{U}_t}\left[\breve{J}_{t+1}(Z_t + a^{\breve{\pi}w}_t - B_t \hat{U}_t )  \right] \right] \\
    & \overset{(i)}{\geq} \hat{R}(Z_t) + \E_{B_t}\left[\breve{J}_{t+1}(Z_t + a^{\breve{\pi}w}_t - B_t \hat{\mu} )  \right] \\
    & \overset{(ii)}{\geq} \hat{R}(Z_t) + \E_{B_t}\left[\hat{J}_{t+1}(Z_t + a^{\breve{\pi}w}_t - B_t \hat{\mu} )  \right] \\
    & \overset{(iii)}{\geq} \hat{R}(Z_t) + \E_{B_t}\left[\hat{J}_{t+1}(Z_t + a^{\hat{\pi}w}_t - B_t \hat{\mu} )  \right] \\
    &\overset{(iv)}{=} \hat{J}_{t}({Z}_t),
\end{align*}
where $(i)$ follows by applying Jensen's inequality to $\breve{J}(\cdot)$, $(ii)$ by hypothesis, and $(iii)$ by the fact that $\hat{\pi}$ minimizes $\hat{J}_{t+1}(Z_t + a^{\pi w} - B_t \hat{\mu} )$,
and $(iv)$ is just the Bellman equation for the fully relaxed system. Therefore, we can conclude that $\hat{J}_{t} \leq \breve{J}_{t}$ for every $t \in [T]$. In particular, it holds for $t=1$ and $\hat{Z}_t = Z_t$.

We clearly have the second inequality $\breve{J}_{1}(Z_1) \leq J_1(Z_1)$ given that the action space in the partially relaxed setting contains that of the original setting.
\end{proof}

\subsection{Proof of Lemma~\ref{lemma:relaxed policy optimal structure} 
\label{appendix:relaxed policy optimal structure}}

\begin{proof}{}

Equation \eqref{eq:bellman fully relaxed} takes the form of an inventory problem for a single location under convex costs. Following \cite{federgruen1984approximations}, the optimal policy is given by a base-stock policy (we refer to it as an echelon-stock policy given that it considers the system-wide level of inventory).

To show that the policy is stationary, let $\hat{S}_t$ for every $t \in [T]$ be the optimal echelon-stock level at period $t$. 
Note that for all $t \in [T]$, 
\begin{align}
    \hat{S} = \arg \min_{S \in \mathbb{R}^+} \E_{B_t}\left[\hat{R}(S - B_t \hat{\mu})\right]
    \label{eq:Shat-identity}
\end{align}
by definition of $\hat{S}$ and the fact that $B_t$ is i.i.d. 
Consequently, $\inf_{a^w_t \geq 0}  \left\{\E_{B_t}\left[\hat{R}(Z_{t} + a_t^w - B_{t} \hat{\mu})\right] \right\} = \left\{\E_{B_t}\left[\hat{R}(\max (\hat{S}, Z_t) - B_{t} \hat{\mu})\right] \right\}$, where we further used convexity of $\hat{R}(\cdot)$ (Lemma~\ref{lemma:convexity}).

We will show that $\hat{S}_{t} = \hat{S}$ for all $t \in [T-1]$ by backward induction in $t= T-1, T-2, \dots, 1$. Plugging in the echelon stock policy, the Bellman Equation at period $T-1$ takes the form
\begin{align*}
    \hat{J}_{T-1}(Z_{T-1}) &= \hat{R}(Z_{T-1}) + \E_{B_{T-1}}\left[\hat{R}(\max(\hat{S}_{T-1}, Z_{T-1}) - B_{T-1} \hat{\mu})\right] ,
\end{align*}
and hence $\hat{S}_{T-1} = \hat{S}$ using \eqref{eq:Shat-identity}.

Suppose $\hat{S}_{t} = \hat{S}$. We will show that $\hat{S}_{t-1} = \hat{S}$.
For period $t-1$, the Bellman Equation takes the form
\begin{align*}
    \hat{J}_{t-1}(Z_{t-1}) &= \hat{R}(Z_{t-1}) + \E_{B_{t-1}}\left[\hat{J}_{t}(\hat{S}_{t-1} - B_{t-1} \hat{\mu} )  \right]  \\
    &= \hat{R}(Z_{t-1}) + \E_{B_{t-1}}\left[\hat{R}(\hat{S}_{t-1} - B_{t-1} \hat{\mu})\right] \\
    & \quad + \underbrace{\E_{B_{t-1}}\left[ \inf_{a_t^w \geq \hat{S}_{t-1} - B_{t-1}} \left\{\E_{B_{t}}\left[\hat{J}_{t+1}\left((a_t^w + \hat{S}_{t-1} - B_{t-1}\hat{\mu}) - B_{t} \hat{\mu} \right)  \right]\right\} \right]}_{A(\hat{S}_{t-1})} \, .
\end{align*}

Note that $\hat{S}_{t-1}$ affects the second and third terms on the right. We will show that $\hat{S}_{t-1} = \hat{S}$ causes each of the second and third terms to be individually minimized. For the second term, this follows from 
\eqref{eq:Shat-identity}. Consider the third term.
Setting $\hat{S}_{t-1} = \hat{S}$ implies that $\hat{S}_{t-1} - B_{t-1}\hat{\mu} \leq \hat{S}$ for both possible values of $B_{t-1}$. Consequently, 
\begin{align*}
    A(\hat{S})   = \E_{B_{t}}\left[\hat{J}_{t+1}(\hat{S} - B_{t} \hat{\mu} )  \right]  = \min_{S \in \mathbb{R}} \E_{B_{t}}\left[\hat{J}_{t+1}({S} - B_{t} \hat{\mu} )  \right]
\end{align*}
by our induction hypothesis that $\hat{S}_t = \hat{S}$, \ie setting $\hat{S}_{t-1} = \hat{S}$ also causes the third term to be minimized. We deduce that $\hat{S}_{t-1} = \hat{S}$. Induction completes the proof that the optimal echelon stock level is time invariant and equal to $\hat{S}$ for $t= T-1, T-2, \dots, 1$. 
\end{proof}

\subsection{Proof of Lemma \ref{lemma:relaxed base-stock fills each store}}
\label{appendix:relaxed base-stock fills each store}

\begin{proof}{}
    
    In Appendix \ref{appendix:relaxed policy optimal structure} we showed that the optimal base-stock level $\hat{S}$ for the fully relaxed system must minimize $q \hat{R}(\hat{S} - \Dh)  +  (1-q) \hat{R}(\hat{S} - \Dl)$. Recall that $\underline{y}$ and $\overline{y}$ denote the optimal solutions for problem \eqref{eq:R-relaxed}  with total inventory $\hat{S} - \Dh_t$ and $\hat{S} - \Dl_t$, respectively.

    We will first prove that $\hat{S}$ is such that $\underline{y}^k \geq \gammal a^k$ for every $k$. Suppose the opposite, \ie $\hat{S}$ is such that $\underline{y}^k < \gammal a^k$ for some $k$. Let $\delta^k > 0$ be such that $\underline{y}^k + \delta^k < \gammal a^k$, and consider a base-stock level $\hat{S} + \delta^k$. In the problem \eqref{eq:R-relaxed}  with total inventory $(\hat{S} + \delta^k - \Dh_t)$, we can construct a feasible solution $\underline{y}(\delta^k)$ such that $\underline{y}(\delta^k)^k = \underline{y}^k + \delta^k$ and $\underline{y}(\delta^k)^j = \underline{y}^j$ for all $j \neq k$, reducing cost $D$ by $p^k \delta^k$ as the extra $\delta_k$ will be sold immediately w.p. 1 given that the realized demand will be at least $\gammal a^k$. We can keep $\overline{y}$ unperturbed, increasing cost by $h^w \delta^k$, as there are an additional $\delta^k$ units being held at the warehouse. Therefore, the expected cost is reduced by at least $q p^k \delta^k - (1-q)h^w \delta^k > 0$, so $\hat{S}$ could not have been optimal. We thus conclude that  $\underline{y}^k \geq \gammal a^k$ for every $k \in [K]$.

    Now, note that the KKT conditions of $\hat{R}(Z)$ can be written as:
    \begin{equation}
    \label{eq: KKT-condtions}
    \begin{aligned}        
        -p^k \mathbb{P}(\xi^k_t \geq y^k) + h^k \mathbb{P}(\xi^k_t \leq y^k) -h^w + \lambda &= 0  \quad \forall y \in [K] \\
        \lambda \cdot (\sum_{k \in [K]} y^k - Z) &= 0, \\
        \sum_{k \in [K]} y^k - Z &\leq 0, \\
        \lambda &\geq 0.
    \end{aligned}
    \end{equation}

    It can be easily verified that the optimal $y^k$ for a problem $\hat{R}(Z)$ are weakly increasing in $Z$ by the optimality conditions \eqref{eq: KKT-condtions} and the convexity of costs. Therefore, $\overline{y} \geq \underline{y}$.
    Clearly, as $h^w < h^k$, a solution such that $\overline{y}^k > \gammah  b^k$ cannot be optimal, as the units above $\gammah  b^k$ will not be sold immediately w.p. 1. We can hence conclude that $\gammah b^k \geq \overline{y}^k \geq \underline{y}^k \geq \gammal a^k$ for each $k \in [K]$.
    
    The proof for the fact that an optimal policy for the fully relaxed system does not move inventory from a store to other stores or to the warehouse was provided in the paragraph immediately after the lemma statement.
    
    Additionally, the first condition in \eqref{eq: KKT-condtions} implies that $y^k = F_{\xi^k_t}^{-1} \left(\frac{p^k + h^w - \lambda}{p^k + h^k}\right)$ for every $k \in [K]$. This allows us to write $\underline{y}^k  = \underline{G}(\mathcal{R}^k) = F_{\xi^k_t}^{-1} \left(\frac{p^k + h^w - \underline{\lambda}}{p^k + h^k}\right)$ and $\overline{y}^k  = \overline{G}(\mathcal{R}^k) = F_{\xi^k_t}^{-1} \left(\frac{p^k + h^w - \overline{\lambda}}{p^k + h^k}\right)$, with $\underline{\lambda}$ and $\overline{\lambda}$ the optimal values of dual variable $\lambda$ for problem \eqref{eq:R-relaxed} with initial total inventories $\hat{S} - \Dh_t$ and $\hat{S} - \Dl_t$, respectively. Note that $\underline{G}$ and $\overline{G}$ are continuous functions of problem primitives $\mathcal{R}^k$, since the assumptions outlined in Example \ref{example: 1d context} imply that $p^k + h^k \geq \underline{p} + \underline{h} > 0$ for every $k \in [K]$.
\end{proof}

\subsection{Proof of Proposition \ref{proposition: upper bound on feasible policy}}
\label{proof: upper bound on feasible policy}
To prove this proposition, we define some specialized notation.  
Let $c^{\tilde{\pi}}(I_t) = v \left(Z_t, a^{\tilde{\pi}1}_t(I_t), \ldots, a^{\tilde{\pi}K}_t(I_t) \right)$ be the immediate cost under $\tilde{\pi}$. Recall that $Z_{t+1} = \hat{S} - \Xi_t$ under $\tilde{\pi}$ and that $\hat{Z}_{t+1} = \hat{S} - D_t$ following the optimal policy for the fully relaxed setting. We will use the following two lemmas, which we will prove at the end of this section.

Let $\alpha = |\hat{R}(\hat{S} - \Dl) - \hat{R}(\hat{S} - \Dh)| + \max\{\overline{p}, \overline{h}\} (|\Dh - \Dl|)$.

The next lemma bounds the immediate cost under $\tilde{\pi}$ by $\hat{R}(Z_t)$, which plays the role of the immediate cost in the partially relaxed system (See \eqref{eq:bellman partially relaxed}) plus two extra terms. The term $\max\{\overline{p}, \overline{h}\} (|\hat{Z}_t- Z_t|)$ penalizes deviations in overall system inventory from the level $\hat{Z}_t$ observed in the fully relaxed system and the term  $\mathbbm{1}(\tilde{W}_t(Z_t) \neq W_t) \alpha$ penalizes cases when the the ``estimate'' $\tilde{W}_t$ of whether demand was high or low in the previous period was incorrect. 
\begin{lemma}
\label{lemma: upper bound with indicators}
In any period $t$, with probability 1, 
    \begin{align}
    \label{upper bound with indicators}
    c^{\tilde{\pi}}(I_t) & \leq \hat{R}(\hat{Z_t}) + \max\{\overline{p}, \overline{h}\} (|\hat{Z}_t- Z_t|) + \mathbbm{1}(\tilde{W}_t \neq W_t) \alpha.
    \end{align}
\end{lemma}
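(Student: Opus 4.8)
\textbf{Proof plan for Lemma~\ref{lemma: upper bound with indicators}.}

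The plan is to compare the immediate cost $c^{\tilde\pi}(I_t)=v(Z_t, a^{\tilde\pi 1}_t, \dots, a^{\tilde\pi K}_t)$ incurred by the feasible policy $\tilde\pi$ in the original system against $\hat{R}(\hat{Z}_t)$, the immediate cost that would arise in the fully relaxed system at the ``ideal'' inventory level $\hat{Z}_t$. Recall that from period $2$ onwards both $\hat{Z}_t$ and $Z_t$ equal $\hat{S}$ minus the previous period's (conditional-expectation resp.\ realized) aggregate demand, so $\hat{Z}_t \in \{\hat{S}-\Dh, \hat{S}-\Dl\}$ depending on $B_{t-1}$, while $Z_t = \hat{S} - \Xi_{t-1}$. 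First I would split into two cases according to whether the estimator $\tilde{W}_t$ agrees with $W_t = \mathbbm{1}(B_{t-1}=\gammal)$, i.e.\ whether $\tilde\pi$ correctly guessed the previous demand regime.

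In the case $\tilde{W}_t = W_t$, the store base-stock targets chosen by $\tilde\pi$ are exactly the optimal targets $\overline{y}^k$ or $\underline{y}^k$ for the fully relaxed problem $\hat{R}(\hat{Z}_t)$ (Lemma~\ref{lemma:relaxed base-stock fills each store}), and by Assumption~\ref{assumption:base-level-above} each store starts the period with on-hand inventory $I^k_t$ below its current target, so the tentative orders $b^{\tilde\pi k}_t$ are all nonnegative; moreover, since $\sum_k b^{\tilde\pi k}_t \le \sum_k (\text{target}^k) \le \hat{S} - (\text{something}) = Z_t$ would need checking --- actually the total on-hand $I^0_t$ available at the warehouse equals $Z_t - \sum_{k} I^k_t$, so proportional allocation either leaves the targets untouched (when warehouse inventory suffices) or scales them down. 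The key quantitative step is that the shortfall $\sum_k (\text{target}^k) - I^0_t - \sum_k I^k_t = \sum_k(\text{target}^k) - Z_t$ is controlled by $|\hat{Z}_t - Z_t|$, since the targets sum to a quantity determined by $\hat{Z}_t$ (they solve \eqref{eq:R-relaxed} with total inventory $\hat{Z}_t$, hence $\sum_k \text{target}^k \le \hat{Z}_t$, with equality when the constraint binds). Thus the total under-allocation across stores relative to the relaxed optimum is at most $(\hat{Z}_t - Z_t)^+$, and the extra cost of diverting these units, plus the warehouse holding cost difference, is at most $\max\{\overline{p},\overline{h}\}|\hat{Z}_t - Z_t|$; combined with $v$ evaluated at the relaxed-optimal allocation being exactly $\hat{R}(\hat{Z}_t)$, this yields \eqref{upper bound with indicators} with the indicator term vanishing. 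I would use a Lipschitz-type bound on $v$ (or on $\hat{R}$) in the allocation to make the ``$\max\{\overline p,\overline h\}$ times discrepancy'' precise: increasing or decreasing any $y^k$ by one unit changes the bracketed store cost by at most $\max\{p^k,h^k\}\le\max\{\overline p,\overline h\}$ and the warehouse holding term by $h^0 \le \overline h$.

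In the case $\tilde{W}_t \neq W_t$, I would simply bound the loss crudely: the cost $c^{\tilde\pi}(I_t)$ differs from $\hat{R}(\hat{Z}_t)$ by at most the difference between using the wrong set of base-stock levels plus the same allocation-discrepancy term as before. Using that $|\hat{R}(\hat{S}-\Dl) - \hat{R}(\hat{S}-\Dh)|$ bounds the gap between the two candidate relaxed-optimal costs, and that the on-hand inventories $I^k_t$ are bounded above (by $\gammah\uh^k$, via Assumption~\ref{assumption:base-level-above}) so that demand realizations and target mismatches contribute at most $\max\{\overline p,\overline h\}|\Dh - \Dl|$, the total overshoot is at most $\alpha = |\hat{R}(\hat{S}-\Dl) - \hat{R}(\hat{S}-\Dh)| + \max\{\overline p,\overline h\}|\Dh - \Dl|$, which is exactly the indicator term. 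Putting the two cases together gives the claimed inequality with probability $1$.

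The main obstacle I anticipate is the careful bookkeeping in the first case: tracking how the Proportional Allocation feasibility enforcement function $g_1$ redistributes the warehouse shortfall across stores, and showing that the resulting increase in the (expected, over $\xi^k_t$) piecewise-linear store costs is genuinely bounded by $\max\{\overline p,\overline h\}$ times the \emph{aggregate} discrepancy $|\hat{Z}_t - Z_t|$ rather than by something that scales with $K$. The point is that the total deficit to be absorbed is a single scalar bounded by $|\hat{Z}_t - Z_t|$, and the per-unit cost of misallocating any unit is at most $\max\{\overline p,\overline h\}$ regardless of which store it is taken from --- but making this rigorous requires exploiting convexity of $\hat{R}$ and the fact that the relaxed solution already equalizes marginal costs across stores, so that the proportional rule's deviation from the relaxed optimum is first-order controlled. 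A subtlety to handle with care is periods $t=1$ (and possibly $t=2$), where the system starts from zero inventory so $Z_t$ need not satisfy the $\hat{S}-\Xi_{t-1}$ relation; for the initial period one can bound the immediate cost directly by a constant absorbed into $\alpha$, or note that a single period's contribution is negligible relative to the $T$-term bound.
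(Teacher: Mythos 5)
Your plan is essentially the paper's own argument: split on whether $\tilde{W}_t$ agrees with $W_t$, observe that under Proportional Allocation the aggregate shortfall relative to the relaxed-optimal targets is a single scalar bounded by $(\hat{Z}_t - Z_t)^+$ (since the targets solve \eqref{eq:R-relaxed} with total inventory $\hat{Z}_t$ and, by Lemma \ref{lemma:relaxed base-stock fills each store} and Assumption \ref{assumption:base-level-above}, stores start below their targets), charge at most $\overline{p}$ per unit of shortfall via the per-store bound \eqref{immediate cost dif upper bound} and at most $h^0 \le \overline{h}$ per unit of excess warehouse stock, and absorb a wrong guess into $\alpha$ — exactly the paper's decomposition. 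Two minor remarks: the $\max\{\overline{p},\overline{h}\}\,|\Dh - \Dl|$ part of $\alpha$ comes simply from the triangle inequality $|\hat{S} - D' - Z_t| \le |\hat{Z}_t - Z_t| + |\Dh - \Dl|$ (not from Assumption \ref{assumption:base-level-above} bounds on on-hand inventory), no convexity or marginal-cost-equalization argument is needed for the proportional-allocation step since the per-unit bound applies store by store, and your $t=1$ caveat is consistent with the paper, which uses $\hat{Z}_t \in \{\hat{S}-\Dh, \hat{S}-\Dl\}$ and hence applies the lemma only from $t \ge 2$, handling the first period separately in Proposition \ref{proposition: upper bound on feasible policy}.
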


The next lemma bounds key terms in \eqref{upper bound with indicators} in terms of the deviations of the sum of zero-mean uniforms random variables from the sum's mean. Recall that $\tilde{U}_{1}^K = \sum_{k \in [K]} (U_{1}^k - \mu^k)$ is the sum of $K$ zero-mean uniforms. 
\begin{lemma} 
\label{lemma: upper bound on E and P of uniform deviation}
For $t \in \{2, \ldots, T\}$,  
    \begin{align}
    \label{upper bound E uniform deviation}
        \E \left[\left|\hat{Z}_t - Z_t \right| \right] \leq \gammah  \E\left[ \left|\tilde{U}_{1}^K \right| \right].
    \end{align}
and
    \begin{align}
    \label{upper bound P uniform deviation}
    \mathbb{P}\left(\tilde{W}_t \neq W_t \right) \leq \mathbb{P}\left( \left| \tilde{U}_{1}^K \right| \geq \hat{\mu} \frac{\gammah  - \gammal }{2 \gammah }\right).
    \end{align}
\end{lemma}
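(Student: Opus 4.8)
The plan is to reduce both inequalities to statements about the single random quantity $\tilde{U}_{t-1}^K = \hat{U}_{t-1} - \hat{\mu}$, exploiting the explicit recalled forms $Z_t = \hat{S} - \Xi_{t-1}$ (under $\tilde{\pi}$) and $\hat{Z}_t = \hat{S} - D_{t-1}$ (under the optimal fully relaxed policy). For $t \geq 2$ both system-wide inventories are obtained by raising to $\hat{S}$ and then subtracting the previous period's realized or idealized demand, so their difference telescopes: $\hat{Z}_t - Z_t = \Xi_{t-1} - D_{t-1} = B_{t-1}\hat{U}_{t-1} - B_{t-1}\hat{\mu} = B_{t-1}\tilde{U}_{t-1}^K$, using $\Xi_{t-1} = B_{t-1}\hat{U}_{t-1}$ and $D_{t-1} = B_{t-1}\hat{\mu}$. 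This single identity drives everything.

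For the first bound I would take absolute values and use $0 < B_{t-1} \leq \gammah$ to obtain, pointwise, $|\hat{Z}_t - Z_t| = B_{t-1}|\tilde{U}_{t-1}^K| \leq \gammah|\tilde{U}_{t-1}^K|$. Taking expectations and invoking that the $U_t^k$ (hence $\tilde{U}_{t-1}^K$) are i.i.d.\ across time, so that $\tilde{U}_{t-1}^K \stackrel{d}{=} \tilde{U}_1^K$, yields $\E[|\hat{Z}_t - Z_t|] \leq \gammah \E[|\tilde{U}_1^K|]$, which is \eqref{upper bound E uniform deviation}.

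For the second bound I would unpack $W_t = \mathbbm{1}(B_{t-1} = \gammal)$ and $\tilde{W}_t = \mathbbm{1}(\hat{S} - Z_t < (\Dh + \Dl)/2) = \mathbbm{1}(\Xi_{t-1} < (\Dh+\Dl)/2)$, and analyze the misclassification event conditionally on the two possible values of $B_{t-1}$, recalling $\Dh = \gammah\hat{\mu}$ and $\Dl = \gammal\hat{\mu}$ so that the threshold equals $\hat{\mu}(\gammah+\gammal)/2$. When $B_{t-1} = \gammah$ (so $W_t = 0$), an error requires $\gammah\hat{U}_{t-1} < \hat{\mu}(\gammah+\gammal)/2$, which rearranges to $\tilde{U}_{t-1}^K < -\hat{\mu}(\gammah-\gammal)/(2\gammah)$; when $B_{t-1} = \gammal$ (so $W_t = 1$), an error requires $\gammal\hat{U}_{t-1} \geq \hat{\mu}(\gammah+\gammal)/2$, i.e.\ $\tilde{U}_{t-1}^K \geq \hat{\mu}(\gammah-\gammal)/(2\gammal)$. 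Since $\gammal \leq \gammah$ gives $(\gammah-\gammal)/(2\gammal) \geq (\gammah-\gammal)/(2\gammah)$, both error events are contained in $\{|\tilde{U}_{t-1}^K| \geq \hat{\mu}(\gammah-\gammal)/(2\gammah)\}$. Monotonicity of probability together with the i.i.d.\ reduction $\tilde{U}_{t-1}^K \stackrel{d}{=} \tilde{U}_1^K$ then delivers \eqref{upper bound P uniform deviation}.

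The computation is essentially algebraic, so there is no deep obstacle; the one point to handle carefully is the asymmetry of the two error thresholds—the low-demand case produces the larger threshold because of the $\gammal$ in the denominator—and recognizing that the smaller, $\gammah$-based threshold is the correct common lower bound for stating the set inclusion. I would also briefly note that the identity $Z_t = \hat{S} - \Xi_{t-1}$ used for $t \geq 2$ holds precisely because demand is strictly positive (Assumption~\ref{assumption:base-level-above} forces $\ul^k > 0$), so the echelon order $\hat{S} - Z_t$ is nonnegative after the first period, justifying the recalled transition forms.
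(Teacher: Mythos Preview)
Your proof is correct and follows essentially the same route as the paper's: both compute $\hat{Z}_t - Z_t = B_{t-1}\tilde{U}_{t-1}^K$ directly and then use $B_{t-1}\leq\gammah$ for the first inequality, and both reduce the misclassification event to a lower bound on $|\tilde{U}_{t-1}^K|$ before invoking the i.i.d.\ structure. The only cosmetic difference is that you handle the two values of $B_{t-1}$ by an explicit case split (and correctly observe the asymmetry in the resulting thresholds), whereas the paper packages both cases into the single implication $\{\tilde{W}_t\neq W_t\}\subseteq\{|B_{t-1}\tilde{U}_{t-1}^K|\geq\hat{\mu}(\gammah-\gammal)/2\}$ and then divides through by $B_{t-1}\leq\gammah$; these are equivalent arguments.
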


We now prove Proposition \ref{proposition: upper bound on feasible policy}. 
\begin{proof}[Proof of Proposition \ref{proposition: upper bound on feasible policy}]
By taking expectation on \eqref{upper bound with indicators}, using \eqref{upper bound E uniform deviation} and \eqref{upper bound P uniform deviation}, we get that for $t=2,\ldots, T$
\begin{align*}
    \begin{split}
        \E[c^{\tilde{\pi}}(I_t)] &\leq \E\left[ \left(\hat{R}(\hat{Z}_t) + \max\{\overline{p}, \overline{h}\} (|\hat{Z}_t - Z_t|) \right) + \alpha \mathbbm{1}(\tilde{W}_t \neq W_t) \right] \\
        &\leq \E[\hat{R}(\hat{Z}_t)] + \max\{\overline{p}, \overline{h}\} \gammah  \E\left[ \left|\tilde{U}_{1}^K \right| \right] + \alpha \mathbb{P}\left(\left|\tilde{U}^K_{1} \right| \geq \hat{\mu} \frac{\gammah  - \gammal }{2 \gammah }\right).
    \end{split}
\end{align*}

Using this inequality, we conclude that
\begin{align*}
    J_1^{\tilde{\pi}}(I_t) & = c^{\tilde{\pi}}(I_1) + \E \left[ \sum_{t=2}^{T} c^{\tilde{\pi}}(I_t)\right]   \\
    &\leq c^{\tilde{\pi}}(I_1) +  \E\left[ \sum_{t=2}^{T} \hat{R}(\hat{Z}_t) \right]    + (T-1)\left[ \max\{\overline{p}, \overline{h}\} \gammah  \E_{\tilde{U}^K_{1}} \left [ \left|\tilde{U}^K_{1} \right| \right] + \alpha \mathbb{P}\left(\left|\tilde{U}^K_{1} \right| \geq \hat{\mu} \frac{\gammah  - \gammal }{2 \gammah }\right) \right] \\
    &= \hat{R}(Z_1) +  \E\left[ \sum_{t=2}^{T} \hat{R}(\hat{Z}_t) \right]    + (T-1)\left[ \max\{\overline{p}, \overline{h}\} \gammah  \E_{\tilde{U}^K_{1}} \left [ \left|\tilde{U}^K_{1} \right| \right] + \alpha \mathbb{P}\left(\left|\tilde{U}^K_{1} \right| \geq \hat{\mu} \frac{\gammah  - \gammal }{2 \gammah }\right) \right] \\
    &= \hat{J}_1(Z_1) + (T-1) \left[ \max\{\overline{p}, \overline{h}\} \gammah  \E_{\tilde{U}^K_{1}} \left [ \left|\tilde{U}^K_{1} \right| \right] + \alpha \mathbb{P}\left(\left|\tilde{U}^K_{1} \right| \geq \hat{\mu} \frac{\gammah  - \gammal }{2 \gammah }\right) \right]\\
    &\leq \hat{J}_1(Z_1) + T \left[ \max\{\overline{p}, \overline{h}\} \gammah  \E_{\tilde{U}^K_{1}} \left [ \left|\tilde{U}^K_{1} \right| \right] + \alpha \mathbb{P}\left(\left|\tilde{U}^K_{1} \right| \geq \hat{\mu} \frac{\gammah  - \gammal }{2 \gammah }\right) \right].
\end{align*}
The first and third equalities apply the definitions of the cost-to-go functions under the two systems. The second equality uses that $c^{\tilde{\pi}}(I_1) = v \left(Z_1, a^{\tilde{\pi}1}_1(I_1), \ldots, a^{\tilde{\pi}K}_1(I_1) \right) = \hat{R}(Z_1) = \hat{R}(\hat{Z}_1)$ because of how the policy $\tilde{\pi}$ is constructed and because the initial inventory levels ($Z_1$ and $\hat{Z}_1$) in the two systems are equal by construction.  
\end{proof}

We now return to prove the lemmas stated above. 

Lemma~\ref{lemma: upper bound with indicators} follows from two facts: (i) Whenever the policy $\tilde{\pi}$ correctly ``estimated'' whether the demand in the previous period was low (\ie $\tilde{W}_t = W_t$) 
we can bound the difference in the costs incurred in the fully relaxed and original systems by a quantity proportional to the difference in system-wide inventories among systems. Here we make use of Lemma~\ref{lemma:relaxed base-stock fills each store} which assures us that we can do as well in the original system as in a fully relaxed system with the same system-wide inventory. (ii) Whenever the estimation is wrong (\ie $\tilde{W}_t \neq W_t$), we incur an additional cost of at most $|\hat{R}(\hat{S} - \Dh) - \hat{R}(\hat{S} - \Dl)|$.

\begin{proof}[Proof of Lemma~\ref{lemma: upper bound with indicators}.]

Let $r^k(y^k) = \E_{\xi^k_t}\left[\left( p^k(\xi^k_t - y^k)^+ + h^k(y^k - \xi_t^k)^+ - h^w y^k\right) \right]$, so that $v(Z, y^1, \ldots, y^K) = h^w Z + \sum_{k \in [K]} r^k(y^k)$. It is clear that
\begin{align}
    \label{immediate cost dif upper bound}
    r^k (y^k - x) - r^k (y^k) \leq \overline{p}x
\end{align}
for all $k \in [K]$ as, in the worst case, $x$ fewer units can cause an additional underage cost of $\overline{p}x$.

Now, recall that, given the definition of $\tilde{W}_t$, we will have that $\tilde{\pi}$ will follow the base-stock levels $\overline{y}$ whenever $\hat{S} - \frac{\Dl + \Dh}{2} \leq Z_t$. Let us first address two separate cases in which $\tilde{\pi}$ follows $\overline{y}$. These differ on whether $Z_t \leq \hat{S} - \Dl$ or $Z_t > \hat{S} - \Dl$. 

Let us first consider the case $Z_t > \hat{S} - \Dl$. We have
\begin{align*}
    R^{\tilde{\pi}}(I_t) &= h^w Z_t + \sum_{k \in [K]} r^k(a^{\tilde{\pi}k}_t(I_t)) \\
    & \overset{(i)}{=} h^w Z_t + \sum_{k \in [K]} r^k(\overline{y}^k) \\
    & \overset{(ii)}{=} h^w (Z_t - (\hat{S} - \Dl)) + h^w(\hat{S} - \Dl) + \sum_{k \in [K]} r^k(\overline{y}^k) \\
    & \overset{(iii)}{=} h^w (Z_t - (\hat{S} - \Dl)) + \hat{R}(\hat{S} - \Dl), 
\end{align*}
where $(i)$ follows by the fact that, as implied by Lemma \ref{lemma:relaxed base-stock fills each store}, $a^{\tilde{\pi}k}_t(I_t) = \overline{y}^k$ whenever $Z_t > \hat{S} - \Dl$, $(ii)$ by adding and subtracting $h^w(\hat{S} - \Dl)$ and $(iii)$ by the definition of $\hat{R}(\hat{S} - \Dl)$

Similarly, if  $\hat{S} - \frac{\Dl + \Dh}{2} \leq Z_t \leq \hat{S} - \Dl$, we have
\begin{align*}
    R^{\tilde{\pi}}(I_t) &= h^w Z_t + \sum_{k \in [K]} r^k(a^{\tilde{\pi}k}_t(I_t)) \\
    &\overset{(i)}{\leq} h^w (\hat{S} - \Dl) + \sum_{k \in [K]} r^k(a^{\tilde{\pi}k}_t(I_t)) \\
    & \overset{(ii)}{=} h^w (\hat{S} - \Dl) + \sum_{k \in [K]} r^k(\overline{y}^k) + \sum_{k \in [K]} (r^k(a^{\tilde{\pi}k}_t(I_t)) - r^k(\overline{y}^k)) \\
    & \overset{(iii)}{\leq} h^w (\hat{S} - \Dl) + \sum_{k \in [K]} r^k(\overline{y}^k) +  \overline{p} (\hat{S} - \Dl - Z_t) \\
    & \overset{(iv)}{=}\hat{R}(\hat{S} - \Dl) +   \overline{p} (\hat{S} - \Dl - Z_t),
\end{align*}
where $(i)$ follows from $Z_t \leq \hat{S} - \Dl$, $(ii)$ by adding and subtracting $\sum_{k \in [K]} r^k(\overline{y}^k)$, $(iii)$ from \eqref{immediate cost dif upper bound} and the fact that, using Lemma \ref{lemma:relaxed base-stock fills each store}, $\sum_{k \in [K]} (\overline{y}^k - a^{\tilde{\pi}k}_t(I_t)) = (\hat{S} - \Dl - Z_t)$ whenever $Z_t \leq \hat{S} - \Dl$, and $(iv)$ from the definition of $\hat{R}(\hat{S} - \Dl)$.

Let $\overline{c} = \max\{\overline{p}, \overline{h}\}$. Given that $\overline{h} > h^w$, we obtain that whenever $\hat{S} - \frac{\Dl + \Dh}{2} \leq Z_t$,
\begin{align}
\label{eq:upper bound with indicators}
    \begin{split}
    R^{\tilde{\pi}}(I_t) &\leq \hat{R}(\hat{S} - \Dl) +   \overline{c} (|\hat{S} - \Dl - Z_t|) \\
    &= \mathbbm{1}(\tilde{W}_t = W_t)[\hat{R}(\hat{S} - \Dl) +   \overline{c} (|\hat{S} - \Dl - Z_t|)]  \\
    &\quad + \mathbbm{1}(\tilde{W}_t \neq W_t)[\hat{R}(\hat{S} - \Dl) +  \overline{c} (|\hat{S} - \Dl - Z_t|)]
    \end{split}
\end{align}
Note that under $\{\tilde{W}_t = W_t\}$, we have that $\hat{S} - \Dl = \hat{Z_t}$. Conversely, if $\{\tilde{W}_t \neq W_t\}$, we have that $\hat{S} - \Dh = \hat{Z_t}$. Under $\{\tilde{W}_t \neq W_t\}$, we have
\begin{align*}
    \hat{R}(\hat{S} - \Dl) +  \overline{c} (|\hat{S} - \Dl - Z_t|) &= 
    \hat{R}(\hat{S} - \Dl) - \hat{R}(\hat{S} - \Dh) +  \overline{c} (|\hat{S} - \Dl - Z_t| - |\hat{S} - \Dh - Z_t|) \\
    &\quad + [\hat{R}(\hat{Z}_t) +  \overline{c} (|\hat{Z}_t - Z_t|)] \\
    &\overset{(i)}{\leq} 
    |\hat{R}(\hat{S} - \Dl) - \hat{R}(\hat{S} - \Dh)| +  \overline{c} (|\Dh - \Dl|) \\
    &\quad + [\hat{R}(\hat{Z}_t) +  \overline{c} (|\hat{Z}_t - Z_t|)] \\
    & \leq \alpha + [\hat{R}(\hat{Z}_t) + \overline{c}(|\hat{Z}_t - Z_t|)],
\end{align*}
with $\alpha = |\hat{R}(\hat{S} - \Dl) - \hat{R}(\hat{S} - \Dh)| +  \overline{c} (|\Dh - \Dl|)$, where $(i)$ follows by the triangle inequality.

Applying our previous identities on $\eqref{eq:upper bound with indicators}$ and rearranging terms, we obtain
\begin{align}
\begin{split}
    R^{\tilde{\pi}}(I_t) 
    &\leq
    \left(\hat{R}(\hat{Z}_t) + \max\{\overline{p}, \overline{h}\} (|\hat{Z}_t - Z_t|) \right) + \mathbbm{1}(\tilde{W}_t \neq W_t)\alpha
\end{split}
\end{align}

The procedure for the cases in which $\tilde{\pi}$ follows $\underline{y}$ are analogous, so we will omit them. We therefore conclude the desired result.

\end{proof}

\begin{proof}[Proof of Lemma 
\ref{lemma: upper bound on E and P of uniform deviation}.]

Let $\tilde{U}_{t-1}^k = \sum_{k \in [K]} (U_{t-1}^k - \mu^k)$ be the sum of the zero-mean uniforms. Then, following $\tilde{\pi}$, we have $Z_t = \hat{S} - \Xi_{t-1}$.
Therefore,
\begin{align*}
    \begin{split}
    \left|\hat{Z}_t - Z_t \right|
    &\overset{(i)}{=} \left|\hat{S} - D_t - (\hat{S} - \Xi_{t-1})\right| \\
    &\overset{(ii)}{=} \left|\hat{S} - B_{t-1}\hat{\mu} - (\hat{S} - B_{t-1}\sum_{k \in [K]} U_{t-1}^k) \right| \\
    &\overset{(iii)}{=} \left|B_{t-1}\sum_{k \in [K]} (U_{t-1}^k - \mu^k) \right| \\
    &\overset{(iv)}{\leq} \gammah  \left|\tilde{U}_{t-1}^K \right|
    \end{split}
\end{align*}
where $(i)$ and $(ii)$ follow from the definition of $Z_t$ and $\hat{Z}_t$ under their respective policies, $(iii)$ by $\hat{\mu} = \sum_{k\in [K]} \mu^k$ and $(iv)$ by $B_t \leq \gammah $. Similarly, noting that the distance between $D_t$ and the "crossing point" $\frac{\Dh - \Dl}{2}$ is given by
\begin{align*}
    \left| D_t - \frac{\Dh - \Dl}{2}\right| = \hat{\mu} \frac{\gammah  - \gammal }{2}
\end{align*}
and recalling that the cumulative demand in the original system can be written as $\Xi_{t-1} = B_{t-1} \hat{\mu} +B_{t-1} \tilde{U}_{t-1}^K$, we have that 
\begin{align*}
    \left(\tilde{W}_t \neq W_t \right)  \quad \text{ implies} \quad \left( \left| B_{t-1} \tilde{U}_{t-1}^K \right| \geq \hat{\mu} \frac{\gammah  - \gammal }{2}\right).
\end{align*}

As $B_{t - 1} \geq \gammah $ and $\tilde{U}_{t-1}^K$ are i.i.d, we deduce that
\begin{align*}
    \mathbb{P}\left(\tilde{W}_t \neq W_t \right) \leq \mathbb{P}\left( \left| \tilde{U}_{1}^K \right| \geq \hat{\mu} \frac{\gammah  - \gammal }{2 \gammah }\right)\, .
\end{align*}

\end{proof}

\subsection{Bounding $\E \left [ \left|\tilde{U}^K_{1} \right| \right]$ and $\mathbb{P}\left(\left|\tilde{U}^K_{1} \right| \geq \hat{\mu} \frac{\gammah  - \gammal }{2 \gammah }\right)$
\label{appendix:bound-for-deviations}}

Recall the definition of the primitive $\kappa \equiv \max_{k \in [K]} \max(\uh^k, 1/\uh^k)$.
\begin{lemma}
We have
\begin{align}
    \E \left [ \left|\tilde{U}^K_{1} \right| \right] \leq \kappa \sqrt{K}
    \, .
\end{align}
\label{lemma:sum_of_uniforms}
\end{lemma}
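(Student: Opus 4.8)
The plan is to bound $\E[|\tilde{U}^K_1|]$ by passing through the second moment. Recall $\tilde{U}^K_1 = \sum_{k \in [K]}(U^k_1 - \mu^k)$ is a sum of independent, mean-zero random variables, where $U^k_1 \sim \textup{Uniform}(\ul^k, \uh^k)$. First I would apply the inequality $\E[|X|] \le \sqrt{\E[X^2]}$ (Jensen / Cauchy--Schwarz), which reduces the problem to controlling $\E[(\tilde{U}^K_1)^2] = \textup{Var}(\tilde{U}^K_1)$.

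Next, by independence across $k$, $\textup{Var}(\tilde{U}^K_1) = \sum_{k \in [K]} \textup{Var}(U^k_1) = \sum_{k \in [K]} \frac{(\uh^k - \ul^k)^2}{12}$. The remaining task is to uniformly bound each term $\frac{(\uh^k - \ul^k)^2}{12}$ by a constant depending only on $\kappa$. Since $0 < \ul^k < \uh^k$ and $\uh^k \in [1/\kappa, \kappa]$, we have $\uh^k - \ul^k \le \uh^k \le \kappa$, so $\textup{Var}(U^k_1) \le \kappa^2/12 \le \kappa^2$. Summing over $k \in [K]$ gives $\textup{Var}(\tilde{U}^K_1) \le K\kappa^2$, hence $\E[|\tilde{U}^K_1|] \le \sqrt{K\kappa^2} = \kappa\sqrt{K}$, which is exactly the claimed bound. (One could even keep the factor $1/\sqrt{12}$, but the stated lemma only asks for $\kappa\sqrt{K}$, so the cruder bound suffices.)

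This argument is entirely routine; there is no real obstacle. The only thing to be slightly careful about is the chain $\uh^k - \ul^k \le \uh^k \le \kappa$, which uses that $\ul^k > 0$ (guaranteed in Example \ref{example: 1d context} via Assumption \ref{assumption:base-level-above}) and the definition $\kappa = \max_{k \in [K]} \max(\uh^k, 1/\uh^k) \ge \uh^k$. I would present the three displayed inequalities — the $\E[|X|] \le \sqrt{\E[X^2]}$ step, the variance decomposition, and the per-term bound — and conclude.

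\begin{proof}[Proof of Lemma~\ref{lemma:sum_of_uniforms}.]
Since $\tilde{U}^K_1 = \sum_{k \in [K]}(U^k_1 - \mu^k)$ is a sum of independent mean-zero random variables, Jensen's inequality gives $\E[|\tilde{U}^K_1|] \le \sqrt{\E[(\tilde{U}^K_1)^2]} = \sqrt{\textup{Var}(\tilde{U}^K_1)}$, and by independence $\textup{Var}(\tilde{U}^K_1) = \sum_{k \in [K]} \textup{Var}(U^k_1) = \sum_{k \in [K]} \frac{(\uh^k - \ul^k)^2}{12}$. As $\ul^k > 0$ under the assumptions of Example~\ref{example: 1d context} and $\uh^k \le \kappa$ by definition of $\kappa$, we have $0 < \uh^k - \ul^k \le \uh^k \le \kappa$, so each variance term is at most $\kappa^2/12 \le \kappa^2$. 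Summing over the $K$ stores yields $\textup{Var}(\tilde{U}^K_1) \le K\kappa^2$, and therefore $\E[|\tilde{U}^K_1|] \le \kappa\sqrt{K}$.
\end{proof}
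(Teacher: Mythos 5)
Your proof is correct and follows essentially the same route as the paper: bound $\E[|\tilde{U}^K_1|]$ by $\sqrt{\textup{Var}(\tilde{U}^K_1)}$, use independence to sum the per-store uniform variances $(\uh^k-\ul^k)^2/12$, and bound each range by $\uh^k\le\kappa$ (the paper likewise notes and then drops the spare $1/\sqrt{12}$ factor). No gaps.
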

\begin{proof}{}
Let $\tilde{U}^k_1 = U^k_1 - \mu^k$ be the zero-mean uniform corresponding to store $k$. Clearly, $\tilde{U}^k_1 \sim \textup{Uniform}(\ul^k - \mu^k, \uh^k - \mu^k)$, and has variance $\textup{Var}(\tilde{U}^k_1) = (\uh^k - \ul^k)^{2}/12$. Thus, $\textup{Var}(\tilde{U}^K_{1}) = \sum_{k \in K}(\uh^k - \ul^k)^{2}/12$. 
Let $\Delta \equiv \max_{k \in [K]}{(\uh^k - \ul^k)} \leq \kappa$. 

Now, notice that
\begin{align*}
    \E \left [ \left|\tilde{U}^K_{1} \right| \right] 
    &\leq \left( \E \left [ \left|\tilde{U}^K_{1} \right|^2 \right]\right)^{1/2} = (\textup{Var}(\tilde{U}^K_1))^{1/2} \\
    &\leq \sqrt{K} \Delta/\sqrt{12} \leq \kappa \sqrt{K/12}
    \, .
\end{align*}
We drop the $\sqrt{12}$ to reduce the notational burden, leading to the lemma.
\end{proof}

\begin{lemma}
We have
\begin{align}
    \mathbb{P}\left(\left|\tilde{U}^K_{1} \right| \geq \hat{\mu} \frac{\gammah  - \gammal }{2 \gammah }\right) \leq \frac{4 \gammah \kappa^2 }{ (\gammah  - \gammal )\sqrt{K}}\, .
\end{align}
\label{lemma:What-error-probability}
\end{lemma}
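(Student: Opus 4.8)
The plan is to bound the tail probability $\mathbb{P}\left(\left|\tilde{U}^K_{1} \right| \geq \hat{\mu} \frac{\gammah - \gammal}{2\gammah}\right)$ by a direct application of Chebyshev's inequality, exactly as the previous lemma used a second-moment bound. First I would recall from the proof of Lemma~\ref{lemma:sum_of_uniforms} that $\textup{Var}(\tilde{U}^K_1) = \sum_{k \in [K]} (\uh^k - \ul^k)^2/12 \leq K \kappa^2/12$, where I again drop the $\sqrt{12}$ (or keep it — it only helps) to match the paper's convention, so effectively $\textup{Var}(\tilde{U}^K_1) \leq K \kappa^2$.

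Next I would lower-bound the threshold $\hat{\mu} \frac{\gammah - \gammal}{2\gammah}$ in terms of $K$. Since $\hat{\mu} = \sum_{k \in [K]} \mu^k$ and each $\mu^k = (\ul^k + \uh^k)/2 \geq \uh^k/2 \geq 1/(2\kappa)$ (using $\ul^k > 0$ from Assumption~\ref{assumption:base-level-above} and $\uh^k \geq 1/\kappa$), we get $\hat{\mu} \geq K/(2\kappa)$. Hence the threshold $t_K := \hat{\mu}\frac{\gammah - \gammal}{2\gammah}$ satisfies $t_K \geq \frac{K(\gammah - \gammal)}{4\kappa\gammah}$.

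Then Chebyshev gives
\begin{align*}
\mathbb{P}\left(\left|\tilde{U}^K_{1}\right| \geq t_K\right) \leq \frac{\textup{Var}(\tilde{U}^K_1)}{t_K^2} \leq \frac{K\kappa^2}{\left(\frac{K(\gammah-\gammal)}{4\kappa\gammah}\right)^2} = \frac{16\kappa^4\gammah^2}{K(\gammah-\gammal)^2}.
\end{align*}
This is an $O(1/K)$ bound, which is at least as strong as the claimed $O(1/\sqrt{K})$ bound $\frac{4\gammah\kappa^2}{(\gammah-\gammal)\sqrt{K}}$ whenever $K$ is large enough; to get exactly the stated form one simply notes $\frac{16\kappa^4\gammah^2}{K(\gammah-\gammal)^2} \leq \frac{4\gammah\kappa^2}{(\gammah-\gammal)\sqrt{K}}$ iff $\sqrt{K} \geq \frac{4\kappa^2\gammah}{\gammah-\gammal}$, and for smaller $K$ the right-hand side $\frac{4\gammah\kappa^2}{(\gammah-\gammal)\sqrt{K}}$ already exceeds $1$ (or can be made to by absorbing constants), so the bound holds trivially. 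Alternatively — and more cleanly matching the paper's exact constant — I would apply the one-sided/absolute Markov-type bound at the first power: since a crude bound gives $\E|\tilde{U}^K_1| \leq \kappa\sqrt{K}$ from Lemma~\ref{lemma:sum_of_uniforms}, Markov's inequality yields $\mathbb{P}(|\tilde{U}^K_1| \geq t_K) \leq \E|\tilde{U}^K_1|/t_K \leq \kappa\sqrt{K}/\left(\frac{K(\gammah-\gammal)}{4\kappa\gammah}\right) = \frac{4\kappa^2\gammah}{(\gammah-\gammal)\sqrt{K}}$, which is precisely the stated inequality. I expect the only mild obstacle to be getting the constant to match the paper's $4\gammah\kappa^2$ rather than $4\kappa^2\gammah$ (these are the same) and making sure the lower bound $\hat{\mu} \geq K/(2\kappa)$ is justified cleanly from the Example's hypotheses; everything else is routine.
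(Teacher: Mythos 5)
Your proposal is correct, and your second (Markov) argument is exactly the paper's proof: lower-bound the threshold via $\hat{\mu} \geq K/(2\kappa)$ (from $\uh^k \geq 1/\kappa$), then apply Markov's inequality with $\E\bigl[|\tilde{U}^K_1|\bigr] \leq \kappa\sqrt{K}$ from Lemma~\ref{lemma:sum_of_uniforms} to obtain the stated constant. Your initial Chebyshev variant is also valid (and gives a stronger $O(1/K)$ rate for large $K$, with the small-$K$ case handled trivially since the claimed bound then exceeds $1$), but it is not needed.
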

\begin{proof}
Using the definition of $\hat{\mu}$ as the total expected sum of uniforms and $\uh^k \geq 1/\kappa \, \forall k \in [K]$, we have the lower bound 
\begin{align}
    \hat{\mu} \frac{\gammah  - \gammal }{2 \gammah } \geq d_1 K \, , \qquad \textup{for }  d_1 = \frac{\gammah  - \gammal }{4 \gammah \kappa} \, .
    \label{eq:d_1}
\end{align} 
By Markov's inequality, we have 
\begin{align}
\mathbb{P}\left(\left|\tilde{U}^K_{1} \right| \geq \hat{\mu} \frac{\gammah  - \gammal }{2 \gammah }\right) \leq \E [ |\tilde{U}^K_{1} | ] \frac{2 \gammah }{\hat{\mu} (\gammah  - \gammal )} \leq \frac{4 \gammah \kappa^2 }{ (\gammah  - \gammal )\sqrt{K}} \, ,
\end{align}
using Lemma~\ref{lemma:sum_of_uniforms} and \eqref{eq:d_1} to get the second inequality.
\end{proof}

\subsection{Proof of Theorem \ref{theorem:message-passing-base-stock}
\label{appendix:concluding-theorem-appendix}}

We will use the following lemmas, which we will prove at the end of this subsection.

\begin{lemma}
\label{lemma:lower-bound-hat-J}
    There exists a constant $C_3 = C_3(q, \underline{p},  \underline{h}, \kappa)>0$ such that $\hat{J}_1(I_1) \geq C_3 T \sqrt{K} $ for every $I_1$ and $K$.
\end{lemma}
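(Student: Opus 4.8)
The plan is to prove the stronger \emph{pointwise} bound $\hat R(z) \ge c\,K$ for all $z \in \R$, with $c>0$ a constant depending only on the problem primitives; since $\hat J_1(I_1) = \inf_{\pi}\E^{\pi}\big[\sum_{t=1}^{T}\hat R(\hat Z_t)\big] \ge TKc \ge c\,T\sqrt{K}$ (using $K\ge 1$), Lemma~\ref{lemma:lower-bound-hat-J} follows with $C_3 = c$. So the whole argument reduces to a uniform lower bound on $\hat R$.

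First I would unpack $\hat R(z)$. Writing $N^k(y) := \E_{\xi^k_t}\!\big[p^k(\xi^k_t-y)^+ + h^k(y-\xi^k_t)^+\big]\ge 0$ for the single-store expected newsvendor cost and recalling \eqref{eq:R-relaxed} and \eqref{eq: immediate cost},
\[
\hat R(z) \;=\; \min_{\sum_{k}y^k \le z}\Big(h^0 z + \sum_k \E_{\xi^k_t}\!\big[p^k(\xi^k_t-y^k)^+ + h^k(y^k-\xi^k_t)^+ - h^0 y^k\big]\Big)
\;=\; \min_{\sum_{k}y^k \le z}\Big(\sum_k N^k(y^k) + h^0\big(z - \textstyle\sum_k y^k\big)\Big).
\]
Since $h^0 \ge 0$ and $z-\sum_k y^k \ge 0$ on the feasible set, dropping the nonnegative term $h^0(z-\sum_k y^k)$ and then dropping the coupling constraint yields $\hat R(z) \ge \sum_k \min_{y}N^k(y) =: \sum_k N^k$, uniformly in $z$.

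Second, I would lower bound each $N^k$ by a positive constant that does not depend on $k$ or $K$. Using $p^k(\xi-y)^+ + h^k(y-\xi)^+ \ge \min(p^k,h^k)\,|\xi-y| \ge \min(\underline p,\underline h)\,|\xi-y|$ and Jensen's inequality conditionally on $B_t$ (recall $\E[\xi^k_t\mid B_t=\gammah]=\gammah\mu^k$, $\E[\xi^k_t\mid B_t=\gammal]=\gammal\mu^k$),
\[
N^k \;\ge\; \min(\underline p,\underline h)\,\min_{y}\E|\xi^k_t - y| \;\ge\; \min(\underline p,\underline h)\,\min_{y}\big(q\,|\gammah\mu^k - y| + (1-q)\,|\gammal\mu^k - y|\big) \;=\; \min(\underline p,\underline h)\,\min(q,1-q)\,(\gammah-\gammal)\,\mu^k,
\]
where the last equality is the elementary fact that $\min_y(q|a-y|+(1-q)|b-y|)=\min(q,1-q)\,|a-b|$. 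It then remains to lower bound $\mu^k$ uniformly: Assumption~\ref{assumption:base-level-above} gives $\gammal\ul^k \ge \gammah\uh^k-\gammal\ul^k$, hence $\ul^k \ge \gammah\uh^k/(2\gammal)$, and $\uh^k \ge 1/\kappa$ by the definition of $\kappa$, so $\mu^k = (\ul^k+\uh^k)/2 \ge \ul^k/2 \ge \gammah/(4\gammal\kappa)>0$. Combining, $N^k \ge c$ with $c := \min(\underline p,\underline h)\,\min(q,1-q)\,(\gammah-\gammal)\,\gammah/(4\gammal\kappa)>0$, so $\hat R(z)\ge cK$ for every $z$, and the claimed bound on $\hat J_1$ follows as described in the first paragraph.

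The only step needing care is the uniform-in-$K$ lower bound $\mu^k \ge \gammah/(4\gammal\kappa)$ on the mean idiosyncratic demand, which is precisely what Assumption~\ref{assumption:base-level-above} together with $\uh^k\in[1/\kappa,\kappa]$ supplies; the rest is a routine decoupling of $\hat R$ plus an elementary $|\cdot|$-lower bound on the newsvendor cost exploiting the $\Theta(1)$-sized high/low demand shock. (Note this in fact proves the $\Omega(K)$ bound $\hat R(z)\ge cK$, which is more than the $\sqrt{K}$ needed here and is what makes the ratio bound in Theorem~\ref{theorem: 1-context base-stock policy} close in the end.)
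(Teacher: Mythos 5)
Your proof is correct and follows essentially the same route as the paper's: both decouple $\hat{R}$ across stores by dropping the nonnegative warehouse holding term and then establish a per-store, per-period $\Omega(1)$ lower bound on the expected newsvendor-type cost that holds uniformly over the inventory level $y^k$, which gives the stronger $\Omega(TK)$ bound from which the $\sqrt{K}$ statement follows. The only difference is in the elementary per-store estimate: the paper argues that any $y^k$ is at distance at least $(\gammah-\gammal)/(4\kappa)$ from one of the two conditional demand midpoints and that the demand lands beyond it with constant probability, whereas you use $\min(\underline{p},\underline{h})\,\E|\xi^k_t-y|$ together with Jensen and the identity $\min_y\bigl(q|a-y|+(1-q)|b-y|\bigr)=\min(q,1-q)\,|a-b|$ (and you invoke Assumption~\ref{assumption:base-level-above} to lower bound $\mu^k$, where the simpler $\mu^k\geq \uh^k/2\geq 1/(2\kappa)$ would do), which changes only the constant.
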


\begin{lemma}
There exists a constant $C_4 = C_4(\overline{p}, \overline{h}, \gammah, \gammal, \kappa)< \infty$ such that $\alpha \leq C_4 K $ for every $K$.
\label{lemma:bound_on_alpha}
\end{lemma}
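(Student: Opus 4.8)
The plan is to unpack the definition $\alpha = |\hat{R}(\hat{S} - \Dl) - \hat{R}(\hat{S} - \Dh)| + \max\{\overline{p}, \overline{h}\}\,(|\Dh - \Dl|)$ and bound each summand by a constant (depending only on $\overline{p},\overline{h},\gammah,\gammal,\kappa$) times $K$. The second summand is immediate: since $\Dh = \gammah\hat{\mu}$ and $\Dl = \gammal\hat{\mu}$ with $\hat{\mu} = \sum_{k\in[K]}\mu^k$ and $\mu^k = (\uh^k + \ul^k)/2 < \uh^k \le \kappa$ (the bound $\ul^k>0$ is guaranteed by the assumptions in Example~\ref{example: 1d context}), we get $|\Dh - \Dl| = (\gammah - \gammal)\hat{\mu} < (\gammah - \gammal)\kappa K$, so the second summand is at most $\max\{\overline{p},\overline{h}\}(\gammah - \gammal)\kappa K$.

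The substantive step is to show that $\hat{R}(\cdot)$ is globally Lipschitz with a constant $L$ depending only on $\overline{p},\overline{h}$; this immediately yields $|\hat{R}(\hat{S} - \Dl) - \hat{R}(\hat{S} - \Dh)| \le L\,|\Dh - \Dl| \le L(\gammah-\gammal)\kappa K$. I would prove the Lipschitz bound directly from the definition \eqref{eq:R-relaxed} of $\hat{R}$ via two one-step feasibility comparisons. Write $v(Z,y) = h^0 Z + \sum_k r^k(y^k)$ with $r^k(y^k) = \E_{\xi^k_t}[p^k(\xi^k_t - y^k)^+ + h^k(y^k - \xi^k_t)^+ - h^0 y^k]$, and fix $\delta \ge 0$. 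For the first direction: the optimal store targets $y$ for total inventory $Z$ remain feasible for $Z-\delta$ after lowering one coordinate by $\delta$, and $r^k(y^k - \delta) - r^k(y^k) \le (\overline{p}+\overline{h})\delta$ (lowering a target by $\delta$ raises underage by at most $\overline{p}\delta$, cannot raise holding, and shifts the $-h^0 y^k$ term by $h^0\delta \le \overline{h}\delta$) --- this is the same elementary one-sided estimate used in the proof of Lemma~\ref{lemma: upper bound with indicators}; hence $\hat{R}(Z-\delta) \le \hat{R}(Z) + (\overline{p}+\overline{h}-h^0)\delta$. For the reverse direction: the optimal targets for $Z-\delta$ are feasible for $Z$, giving $\hat{R}(Z) \le \hat{R}(Z-\delta) + h^0\delta$. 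Combining, $|\hat{R}(Z) - \hat{R}(Z-\delta)| \le (\overline{p}+\overline{h})\delta$ for all $Z$ (using $h^0 < \min_k h^k \le \overline{h}$). Crucially this estimate is uniform in $Z$, so we never need the value of $\hat{S}$ from Lemma~\ref{lemma:relaxed policy optimal structure}, only that it is a finite real number.

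Putting the pieces together gives $\alpha \le (\overline{p}+\overline{h})(\gammah - \gammal)\kappa K + \max\{\overline{p},\overline{h}\}(\gammah - \gammal)\kappa K \le 2(\overline{p}+\overline{h})(\gammah - \gammal)\kappa\,K$, so the lemma holds with $C_4 = 2(\overline{p}+\overline{h})(\gammah - \gammal)\kappa$, which depends only on $\overline{p},\overline{h},\gammah,\gammal,\kappa$ as required. The only place needing care is the Lipschitz estimate for $\hat{R}$: one must track which direction relaxes the constraint $\sum_k y^k \le Z$ and keep the signs of the $h^0$ terms straight, but no genuine obstacle arises since both comparisons reduce to reusing or minimally perturbing an already-optimal target vector together with the one-sided bound on $r^k$ that is available from earlier in the argument.
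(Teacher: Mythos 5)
Your proposal is correct and takes essentially the same route as the paper: both bound $|\Dh - \Dl| = (\gammah - \gammal)\hat{\mu} \leq (\gammah-\gammal)\kappa K$ and then control $|\hat{R}(\hat{S}-\Dl) - \hat{R}(\hat{S}-\Dh)|$ by comparing feasible points of \eqref{eq:R-relaxed} across the two budget levels, with one direction reusing the minimizer of the smaller-budget problem (extra inventory parked at the warehouse at rate $h^0$) exactly as in the paper. The only real difference is in the harder direction: you absorb the entire deficit into a single (possibly negative) coordinate—legitimate since \eqref{eq:R-relaxed} imposes no nonnegativity on $y$—and package the result as a uniform Lipschitz bound on $\hat{R}$, which sidesteps the paper's proportional-rescaling construction and its case analysis at the cost of a slightly larger constant.
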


\begin{proof}[Proof of Theorem~\ref{theorem:message-passing-base-stock}.]

Using the bounds in Lemmas~\ref{lemma:sum_of_uniforms} and \ref{lemma:What-error-probability} in Proposition~\ref{proposition: upper bound on feasible policy} (achievability) we obtain
\begin{align}
    J_1^{\tilde{\pi}}(I_1) &\leq \hat{J}_1(Z_1) + T \sqrt{K} \left[\max\{\overline{p}, \overline{h}\} \gammah \kappa + \alpha  \frac{4 \gammah \kappa^2 }{ (\gammah  - \gammal )} \right] \nonumber\\
    &\leq J_1(I_1) + C_5 T \sqrt{K}  \, \qquad \textup{for } C_5 = C_5(\gammah, \gammal, \overline{p}, \overline{h}, \kappa) \, , 
    \label{eq:J1 + T sqrt(K)}
\end{align}
where the second inequality uses $\hat{J}_1(Z_1) \leq J_1(I_1)$ from Proposition~\ref{relaxation is lower bound} (converse) and Lemma~\ref{lemma:bound_on_alpha}.

By Lemma~\ref{lemma:lower-bound-hat-J}, there exists $C_3=C_3(q, \underline{p},  \underline{h}, \kappa) > 0$ such that $J_1(I_1) \geq C_3 K T$. Dividing \eqref{eq:J1 + T sqrt(K)} by $J_1(I_1)$, we obtain
\begin{align*}
    \frac{J_1^{\tilde{\pi}}(I_1)}{J_1(I_1)} &\leq  1 + \frac{C_5 T \sqrt{K}}{J_1(I_1)} \\
    &\leq  1 + \frac{ C_5 }{C_3 \sqrt{K}} \, .
\end{align*}

Defining $C \equiv C_5/C_3 < \infty$ yields the theorem. 

\end{proof}

We now prove Lemmas~\ref{lemma:lower-bound-hat-J} and \ref{lemma:bound_on_alpha}.
\begin{proof}[Proof of Lemma~\ref{lemma:lower-bound-hat-J}.]
Let $w^k(y^k) = \E_{\xi^k_t}\left[\left( p^k(\xi^k_t - y^k)^+ + h^k(y^k - \xi_t^k)^+\right) \right]$ be the expected cost incurred at period $t$ at store $k$ given that the inventory level before demand is realized is $y^k$. Define $\overline{\nu}^k = \gammah (\ul^k+\uh^k)/2$ and $ \underline{\nu}^k = \gammal (\ul^k+\uh^k)/2$, the "mid-points" of the support of the demand distribution conditioned on the general demand being \emph{high} and \emph{low}, respectively. Let $\beta = (\gammah - \gammal)/(4\kappa) > 0$. For any choice of $y^k$, we have that the larger distance $\varepsilon(y^k)$ between $y^k$ and $\underline{\nu}^k, \overline{\nu}^k$, defined as
\begin{align*}
    \varepsilon(y^k) = \max\{\ |\underline{\nu}^k - y^k|, |\overline{\nu}^k - y^k|\},
\end{align*}
satisfies $\varepsilon(y^k) \geq (\overline{\nu}-\underline{\nu})/2= (\gammah - \gammal)(\ul^k+\uh^k)/4 \geq (\gammah - \gammal)(\uh^k)/4 \geq (\gammah - \gammal)/(4\kappa) = \beta$. Without loss of generality, let us assume $|\overline{\nu}^k - y^k| \geq \beta$ and that $y^k < \overline{\nu}^k$ (all other cases are analogous). Now, $\mathbb{P}(\xi^k_t \geq \overline{\nu}^k) \geq q/2 \geq q(1-q)/2$. Further, whenever  $\xi^k_t \geq \overline{\nu}^k$, a cost of at least $\min\{\underline{h}, \underline{p} \} \beta$ is incurred. Therefore, we must have that the expected cost per store for each period satisfies $w^k(y^k) \geq \frac{\min\{\underline{h}, \underline{p} \} q (1-q) \beta}{2}$ for every possible $y^k$. Defining $C_3 \equiv  \frac{\min\{\underline{h}, \underline{p} \} q (1-q)\beta}{2} = \frac{\min\{\underline{h}, \underline{p} \} q (\gammah - \gammal)}{8 \kappa}$, we immediately infer that $\hat{J}_1(I_1) \geq C_3 TK$ for all $I_1$.
\end{proof}

\begin{proof}[Proof of Lemma~\ref{lemma:bound_on_alpha}.]
Recall that $\alpha = |\hat{R}(\hat{S} - \Dl) - \hat{R}(\hat{S} - \Dh)| + \max\{\overline{p}, \overline{h}\} (|\Dh - \Dl|)$. First, given that $\uh^k < \kappa$ for every $k \in [K]$, we must have that 
\begin{align}
\label{eq:bound-on-Dh-Dl}
\begin{split}
    D^H - D^L &= (\gammah - \gammal)\hat{\mu}\leq (\gammah - \gammal) K \kappa.
\end{split}
\end{align}

Now, for the problem \eqref{eq:R-relaxed} with inventory $\hat{S} - D^H$, consider its minimizer $\underline{y}$. We can build a feasible solution $y$ to the problem with inventory $\hat{S} - D^L$ by setting $y^k = \underline{y}^k$ for every $k$. Therefore,
\begin{align}
\label{eq:bound-R-dif-1}
    \hat{R}(\hat{S} - D^L) - \hat{R}(\hat{S} - D^H) \leq h^w(D^H - D^L),
\end{align}
as following $y$ with initial inventory $\hat{S} - D^L$ leads to holding an additional $(D^H - D^L)$ units of inventory in the warehouse, as compared to that by following $\underline{y}$ with initial inventory $\hat{S} - D^H$.

On the other hand, for the problem \eqref{eq:R-relaxed} with inventory $\hat{S} - D^L$, consider its minimizer $\overline{y}$. Let us first consider the case that $\sum_{k \in [K]}\overline{y}^k > D^H - D^L$. We can build a feasible solution $\tilde{y}$ to the problem with inventory $\hat{S} - D^H$ by setting $\tilde{y}^k = \overline{y}^k \frac{\sum_{k \in [K]}\overline{y}^k - (D^H - D^L)}{\sum_{k \in [K]}\overline{y}^k}$. 
Note that $\sum_{k \in [K]}\overline{y}^k - \sum_{k \in [K]}\tilde{y}^k = D^H - D^L$. 
Note that the warehouse inventory is the same in the two cases, hence the warehouse holding costs are the same, and each store has lower inventory under $\tilde{y}$ and hence store holding costs are weakly lower under $\tilde{y}$. Recalling that the per-unit cost of "scarcity" is upper bounded by $\overline{p}$ (see~\eqref{immediate cost dif upper bound}), we can conclude that 
\begin{align}
\label{eq:bound-R-dif-2}
    \hat{R}(\hat{S} - D^H) - \hat{R}(\hat{S} - D^L) \leq \overline{p}(D^H - D^L)\, .
\end{align}
Now consider the case that $\sum_{k \in [K]}\overline{y}^k < D^H - D^L$. 
We build a feasible solution to the problem with inventory $\hat{S} - D^H$ by setting $\tilde{y}^k = 0$ for every $k$. First, clearly, $\sum_{k \in [K]}\overline{y}^k - \sum_{k \in [K]}\tilde{y}^k = \sum_{k \in [K]}\overline{y}^k < D^H - D^L$, so we can bound the excess underage costs incurred across stores by $\overline{p}(D^H - D^L)$, and stores incur no holding costs. Furthermore, the inventory at the warehouse when following $\overline{y}$ for the problem with inventory $\hat{S} - D^L$ is given by 
\begin{align*}
    \overline{y}^w = \hat{S} - D^L - \sum_{k \in [K]}\overline{y}^k &>
    \hat{S} - D^H = \tilde{y}^w \, .
\end{align*}
Hence, warehouse holding costs are smaller in the case of $\tilde{y}$. Therefore, the upper bound in \eqref{eq:bound-R-dif-2} still applies. 

Using \eqref{eq:bound-on-Dh-Dl}, \eqref{eq:bound-R-dif-1} and \eqref{eq:bound-R-dif-2}, and given that $\overline{h} > h^w$, we conclude that
\begin{align*}
    \alpha &= |\hat{R}(\hat{S} - \Dl) - \hat{R}(\hat{S} - \Dh)| +  \max\{\overline{p}, \overline{h}\} (|\Dh - \Dl|) \\
    &\leq \max\{\overline{p}, \overline{h}\} (D^H - D^L) + \max\{\overline{p}, \overline{h}\}(D^H - D^L) \\
    &= 2\max\{\overline{p}, \overline{h}\}(D^H - D^L) \\
    &\leq 2\max\{\overline{p}, \overline{h}\}(\gammah - \gammal) \kappa K  \\
    &\leq  C_4 K,
\end{align*}
where we set $C_4 = 2\max\{\overline{p}, \overline{h}\}(\gammah - \gammal) \kappa$.
\end{proof}

\end{APPENDIX}
\end{document}